\definecolor{weborange}{rgb}{.8,.3,.3}
\definecolor{webblue}{rgb}{0,0,.8}
\definecolor{internallinkcolor}{rgb}{0,.5,0}
\definecolor{externallinkcolor}{rgb}{0,0,.5}
\providecommand{\remove}[1]{}
\providecommand{\remove}[1]{}
\newcommand{\authnote}[2]{{\bf [{\color{red} #1's Note:} {\color{blue} #2}]}}
\newcommand{\authnote}[2]{}
	\titleclass{\subsubsubsection}{straight}[\subsection]
	\newcounter{subsubsubsection}[subsubsection]
	\renewcommand\thesubsubsubsection{\thesubsubsection.\arabic{subsubsubsection}}
	\renewcommand\paragraph{\@startsection{paragraph}{5}{\z@}%
		{3.25ex \@plus1ex \@minus.2ex}%
		{-1em}%
		{\normalfont\normalsize\bfseries}}
	\renewcommand\subparagraph{\@startsection{subparagraph}{6}{\parindent}%
		{3.25ex \@plus1ex \@minus .2ex}%
		{-1em}%
		{\normalfont\normalsize\bfseries}}
	\def\toclevel@subsubsubsection{4}
	\def\toclevel@paragraph{5}
	\def\toclevel@paragraph{6}
	\def\l@subsubsubsection{\@dottedtocline{4}{7em}{4em}}
	\def\l@paragraph{\@dottedtocline{5}{10em}{5em}}
	\def\l@subparagraph{\@dottedtocline{6}{14em}{6em}}
\newenvironment{algorithm}{\begin{mybox} \vspace{-.1in}\begin{algo}}{ \vspace{-.1in} \end{algo}\end{mybox}}
\newenvironment{mybox}{\begin{center}\begin{tabular}{|p{0.97\linewidth}|c|}   \hline} {  \\ \hline \end{tabular} \end{center}}			
\let\originalleft\left
\let\originalright\right
\renewcommand{\left}{\mathopen{}\mathclose\bgroup\originalleft}
\renewcommand{\right}{\aftergroup\egroup\originalright}
	\newcommand{\aka} {also known as,\xspace}
	\newcommand{\abs}[1]{\left\lvert #1 \right\rvert}
	\newcommand{\ceil}[1]{\left\lceil #1 \right\rceil}
	\newcommand{\signn}[1]{\sign\paren{#1}}
	\newcommand{\ip}[1]{\iprod{#1}}
	\newcommand{\iprod}[1]{\langle #1 \rangle}
	\newcommand{\set}[1]{\ens{#1}}
	\newcommand{\paren}[1]{\left(#1\right)}
	\newcommand{\floor}[1]{\left \lfloor#1 \right \rfloor}
	\newcommand{\norm}[1]{\left\lVert#1\right\rVert}
	\newcommand{\eqdef}{:=}
	\newcommand{\N}{{\mathbb{N}}}
	\newcommand{\zo}{\set{0,1}}
	\newcommand{\oo}{\set{-1,1}}
	\newcommand{\condition}{\;\ifnum\currentgrouptype=16 \middle\fi|\;}
	\newcommand{\eps}{\varepsilon}
	\newcommand{\la}{\gets}
	\newcommand{\argmin}{\operatorname*{argmin}}
	\newcommand{\MathAlg}[1]{\mathsf{#1}}
	\newcommand{\sign}{\MathAlg{sign}}
	\newcommand{\Ensuremath}[1]{\ensuremath{#1}\xspace}
	\newcommand{\tth}[1]{\Ensuremath{#1^{\rm th}}}
	\newcommand{\ith}{\tth{i}}
	\newcommand{\jth}{\tth{j}}
	\renewcommand{\cref}{\Cref}
	\newtheorem{theorem}{Theorem}[section]
	\newaliascnt{lemma}{theorem}
	\newtheorem{lemma}[lemma]{Lemma}
	\crefname{lemma}{Lemma}{Lemmas}
	\newaliascnt{observation}{theorem}
	\newtheorem{observation}[observation]{Observation}
	\crefname{observation}{Observation}{Observation}
	\newaliascnt{claim}{theorem}
	\newtheorem{claim}[claim]{Claim}
	\crefname{claim}{Claim}{Claims}
	\newaliascnt{corollary}{theorem}
	\crefname{corollary}{Corollary}{Corollaries}
	\newaliascnt{construction}{theorem}
	\crefname{construction}{Construction}{Constructions}
	\newaliascnt{fact}{theorem}
	\newtheorem{fact}[fact]{Fact}
	\crefname{fact}{Fact}{Facts}
	\newaliascnt{proposition}{theorem}
	\newtheorem{proposition}[proposition]{Proposition}
	\crefname{proposition}{Proposition}{Propositions}
	\newaliascnt{conjecture}{theorem}
	\crefname{conjecture}{Conjecture}{Conjectures}
	\newaliascnt{definition}{theorem}
	\newtheorem{definition}[definition]{Definition}
	\crefname{definition}{Definition}{Definitions}
	\newaliascnt{notation}{theorem}
	\crefname{notation}{Notation}{Notation}
	\newaliascnt{assertion}{theorem}
	\crefname{assertion}{Assertion}{Assertion}
	\newaliascnt{assumption}{theorem}
	\crefname{assumption}{Assumption}{Assumption}
	\newaliascnt{remark}{theorem}
	\newtheorem{remark}[remark]{Remark}
	\crefname{remark}{Remark}{Remarks}
	\newaliascnt{question}{theorem}
	\newtheorem{question}[question]{Question}
	\crefname{question}{Question}{Question}
	\newaliascnt{example}{theorem}
	\crefname{exmaple}{Example}{Examples}
	\crefname{equation}{Equation}{Equations}
	\newaliascnt{proto}{theorem}
	\newtheorem{proto}[proto]{Protocol}
	\crefname{proto}{protocol}{protocols}
	\newaliascnt{algo}{theorem}
	\newtheorem{algo}[algo]{Algorithm}
	\crefname{algo}{algorithm}{algorithms}
	\newaliascnt{func}{theorem}
	\newtheorem{func}[func]{Function}
	\crefname{func}{function}{functions}
	\newaliascnt{expr}{theorem}
	\newtheorem{expr}[expr]{Experiment}
	\crefname{experiment}{experiment}{experiments}
	\newaliascnt{gm}{theorem}
	\newtheorem{gm}[gm]{Game}
	\crefname{game}{game}{games}
	\newcommand{\stepref}[1]{Step~\ref{#1}}
	\def\FullBox{$\Box$}
	\def\qed{\ifmmode\qquad\FullBox\else{\unskip\nobreak\hfil
			\penalty50\hskip1em\null\nobreak\hfil\FullBox
			\parfillskip=0pt\finalhyphendemerits=0\endgraf}\fi}
	\def\qedsketch{\ifmmode\Box\else{\unskip\nobreak\hfil
			\penalty50\hskip1em\null\nobreak\hfil$\Box$
			\parfillskip=0pt\finalhyphendemerits=0\endgraf}\fi}
	\newcommand{\eex}[2]{\Ex_{#1}\left[#2\right]}
	\newcommand{\ex}[1]{\Ex\left[#1\right]}
	\newcommand{\Ex}{{\mathrm E}}
	\renewcommand{\Pr}{{\mathrm {Pr}}}
	\newcommand{\pr}[1]{\Pr\left[#1\right]}
	\newcommand{\ppr}[2]{\Pr_{#1}\left[#2\right]}
	\newcommand{\Ac}{\mathsf{A}}
	\newcommand{\PAP}{\mathsf{PAP}}
	\newcommand{\Fc}{\mathsf{F}}
	\newcommand{\Gc}{\mathsf{G}}
	\newcommand{\Mc}{\mathsf{M}}
	\newcommand{\Bc}{\mathsf{B}}
	\newcommand{\ens}[1]{\{#1\}}
	\newcommand{\size}[1]{\left|#1\right|}
	\newcommand{\tP}{\widetilde{P}}
	\newcommand{\cW}{{\cal{W}}}
	\def\cD{{\cal D}}
	\def\cE{{\cal E}}
	\def\cH{{\cal H}}
	\def\cI{{\cal I}}
	\def\cJ{{\cal J}}
	\def\cK{{\cal K}}
	\def\cN{{\cal N}}
	\def\cP{{\cal P}}
	\def\cS{{\cal S}}
	\def\cT{{\cal T}}
	\def\cW{{\cal W}}
	\def\cX{{\cal X}}
	\def\cY{{\cal Y}}
	\def\cZ{{\cal Z}}
	\def\bbI{{\mathbb I}}
	\def\bbN{{\mathbb N}}
	\def\bbR{{\mathbb R}}
	\newcommand{\Tableofcontents}{
		\thispagestyle{empty}
		\pagenumbering{gobble}
		\clearpage
		\tableofcontents
		\thispagestyle{empty}
		\clearpage
		\pagenumbering{arabic}
	}
	\newcommand{\indic}[1]{\mathds{1}{\set{#1}}}
	\newcommand{\pt}[1]{\boldsymbol{#1}}
	\newcommand{\px} {\pt{x}}
	\newcommand{\y} {\pt{y}}
	\newcommand{\pz} {\pt{z}}
	\newcommand{\pv} {\pt{v}}
	\newcommand{\tv} {\tilde{v}}
	\newcommand{\hPi} {\widehat{\Pi}}
	\newcommand{\tPi} {\widetilde{\Pi}}
	\newcommand{\hU} {\widehat{U}}
	\newcommand{\hV} {\widehat{V}}
	\newcommand{\bq}{\bold{q}}
	\newcommand{\bs}{\bold{s}}
	\newcommand{\by}{\bold{y}}
	\newcommand{\bY}{\bold{Y}}
	\newcommand{\bb}{\bold{b}}
	\newcommand{\bp}{\bold{p}}
	\newcommand{\bP}{\bold{P}}
	\newcommand{\bJ}{\bold{J}}
	\newcommand{\btP}{\bold{\tP}}
	\newcommand{\bX}{\bold{X}}
	\newcommand{\bE}{\bold{E}}
	\newcommand{\bZ}{\bold{Z}}
	\newcommand{\ba}{\bold{a}}
	\newcommand{\bi}{\bold{i}}
	\newcommand{\bA}{\bold{A}}
	\newcommand{\bx}{\bold{x}}
	\newcommand{\bB}{\bold{B}}
	\newcommand{\bv}{\bold{v}}
	\newcommand{\bN}{\bold{N}}
	\newcommand{\bw}{\bold{w}}
	\newcommand{\bt}{\bold{t}}
	\newcommand{\bu}{\bold{u}}
	\newcommand{\bz}{\bold{z}}
	\newcommand{\bF}{\bold{F}}
	\newcommand{\bPi}{\bold{\Pi}}
	\newcommand{\btPi}{\bold{\tPi}}
	\newcommand{\bhPi}{\bold{\hPi}}
	\newcommand{\btv}{\bold{\tv}}
	\newcommand{\Lap}{{\rm Lap}}
	\newcommand{\EstSubspace}{\mathsf{EstSubspace}}
	\newcommand{\Agg}{\mathsf{Agg}}
	\newcommand{\NaiveAgg}{\mathsf{Naive\_Agg}}
	\newcommand{\SSAgg}{\mathsf{SS\_Agg}}
	\newcommand{\AlgFriendlyAvg}{\mathsf{FriendlyAvg}}
	\newcommand{\FCAverage}{\mathsf{FC\_Average}}
	\let\xx@thm\@thm
	\newcommand{\hSigma}{\widehat{\Sigma}}
	\newcommand{\HG}{\mathcal{HG}}
        \renewcommand{\epsilon}{\varepsilon}
\newcommand{\ignore}[1]{}
\title{On Differentially Private Subspace Estimation in a Distribution-Free Setting}
\author{Anonymized for Submission}
\author{
Eliad Tsfadia\thanks{Department of Computer Science, Georgetown University. E-mail: \texttt{eliadtsfadia@gmail.com}.  Work supported by a gift to Georgetown University.}
}
\begin{document}

\maketitle

\begin{abstract}
	
Private data analysis faces a significant challenge known as the curse of dimensionality, leading to increased costs. However, many datasets possess an inherent low-dimensional structure. For instance, during optimization via gradient descent, the gradients frequently reside near a low-dimensional subspace.
If the low-dimensional structure could be privately identified using a small amount of points, we could avoid paying for the high ambient dimension.

On the negative side, \citet{DTTZ14} proved that privately estimating subspaces, in general, requires an amount of points that has a polynomial dependency on the dimension. However, their bound do not rule out the possibility to reduce the number of points for ``easy'' instances. Yet, providing a measure that captures how much a given dataset is ``easy'' for this task turns out to be challenging, and was not properly addressed in prior works.

Inspired by the work of \citet{SS21}, we provide the first measures that quantify ``easiness'' as a function of multiplicative singular-value gaps in the input dataset, and support them with new upper and lower bounds. In particular, our results determine the first type of gaps that are sufficient and necessary for privately estimating a subspace with an amount of points that is independent of the dimension. Furthermore, we realize our upper bounds using a practical algorithm and demonstrate its advantage in high-dimensional regimes compared to prior approaches.

\end{abstract}

\Tableofcontents

\section{Introduction}

Differentially private (DP) \cite{DMNS06} algorithms typically exhibit a significant dependence on the dimensionality of their input, as their error or sample complexity tends to grow polynomially with the dimension. 
This cost of dimensionality is inherent in many problems, as \cite{BUV14,SU17,DworkSSUV15} showed that any method that achieves lower error rates is vulnerable to tracing attacks (\aka membership inference attacks).
However, these lower bounds consider algorithms that guarantee accuracy for worst-case inputs and do not rule out the possibility of achieving higher accuracy for ``easy'' instances.

\paragraph{Example: DP averaging.} As a simple prototypical example, consider the task of DP averaging. In this task, the input dataset consists of $d$-dimensional points $x_1,\dots,x_n \in \mathbb{R}^d$, and the goal is to estimate their average $\frac{1}{n}\sum_{i=1}^n x_i$ using a DP algorithm while minimizing the $\ell_2$ additive error. One natural way to capture input ``easiness'' for this task is via the maximal $\ell_2$ distance between any two points (i.e., points that are closer to each other are considered ``easier''). 
Indeed, \cite{FriendlyCore22,PTU24} showed that if the points are $\gamma$-close to each other, and we aim for an accuracy of $\lambda \gamma$ (i.e., an accuracy that is \emph{proportional} to the ``easiness'' parameter $\gamma$), then it is sufficient and necessary to use $n = \tilde{\Theta}(\sqrt{d}/\lambda)$ points. Equivalently, if we aim for an accuracy of $\alpha$, then by applying these results with $\lambda = \alpha/\gamma$, we obtain that the answer is $n = \tilde{\Theta}(\gamma \sqrt{d}/\alpha)$.
This, in particular, implies that when $\gamma \leq \alpha/\sqrt{d}$ (i.e., the points are very close to each other), then $\tilde{O}(1)$ points are sufficient, but for $\gamma = \alpha/d^{1/2-\Omega(1)}$, a polynomial dependency on $d$ is necessary in general.

\paragraph{DP subspace estimation.} In this work, we consider the more complex problem of DP subspace estimation: Given a dataset $X=(x_1,\dots,x_n) \in (\mathbb{R}^d)^n$ of unit norm points and a parameter $k$, estimate the top-$k$ subspace of $Span\set{x_1,\ldots,x_n}$. 
The main goal of this work is to answer the following meta question:

\begin{question}\label{question:meta}
	How should we quantify how ``easy'' a given dataset is for DP subspace estimation?
\end{question}

Since the dimension $d$ is very large in many settings, we aim at providing tight measures that smoothly eliminate the dependency on $d$ as a function of input ``easiness''. In particular, we want to be able to identify when we can avoid paying on the ambient dimension $d$, and when a polynomial dependency on $d$ is unavoidable.

\subsection{Motivation: DP-SGD}\label{sec:intro:motivation}

To motivate the problem, consider the task of privately training large neural networks.  The most commonly used tool to perform such a private training is the differentially-private stochastic gradient descent (DP-SGD) \cite{AbadiDPSGD16,BassilyDPSGD14,SongDPSGD13} – a private variant of SGD that perturbs each gradient update with random noise vector drawn from an isotropic Gaussian distribution. However, this approach does not differentiate between ``easy'' gradients and ``hard'' ones, which results with high error rates when the ambient dimension - the number of parameters in the model - is large. However, empirical evidence and theoretical analysis indicate that while training some deep learning models, the gradients tend to live near a low-dimensional subspace \cite{ACGMMTZ16,LXTSG18,GARD18,LFLY18,LGZCB20,ZWB21,FT20,LLHplus22,GAWplus22,KRRT20}. In particular, \cite{GARD18} showed that in some cases, the low dimension is the number of classes in the dataset, and the gradients tend to be close and well-spread inside this subspace. If we could exploit such a low-dimensional structure into an (inexpensive) private and useful projection matrix, we could reduce the error of DP-SGD by making it dependent solely on the low dimension.

We start by defining the setting of DP subspace estimation more formally.

\subsection{Subspace Estimation}

We consider the setting of \cite{DTTZ14}. That is, the input dataset consists of $n$ points of unit norm $x_1,\ldots,x_n \in \cS_d \eqdef \set{v \in \bbR^d \colon \norm{v}_2 =1 }$ and a parameter $k$, and the goal is to output a $k$-dimensional projection matrix $\Pi$ such that $\Pi \cdot X^T$ is ``close'' to $X^T$ as possible, where $X$ denotes the $n\times d$ matrix whose rows are $x_1,\ldots,x_n$. We measure the accuracy of our estimation using the ``usefulness'' definition of \cite{DTTZ14}:

\begin{definition}[$\alpha$-useful]\label{def:intro:useful}
	We say that a rank-$k$ projection matrix $\Pi$ is $\alpha$-useful for a matrix $X \in (\cS_d)^n$ if for any $k$-rank projection matrix $\Pi'$:
	\begin{align*}
		\norm{\Pi \cdot X^T}_F^2 \geq \norm{\Pi' \cdot X^T}_F^2 - \alpha \cdot n,
	\end{align*}
	where $\norm{\cdot}_F$ denotes the Frobenius norm.\footnote{The Frobenius norm of a matrix $A = (a_i^j)_{i \in [n], j \in [d]}$ is defined by $\norm{A}_F = \sqrt{\sum_{i\in [n], j \in [d]}\paren{a_i^j}^2}$.}
\end{definition}

Observe that any projection matrix is $1$-useful for any $X$ (because  $\norm{X}_F^2 =  \sum_{i=1}^n \norm{x_i}_2^2 = n$). Therefore, we will be interested in smaller values of $\alpha$ (e.g., $0.001$).

\subsection{Prior Works}\label{sec:intro:prior_work}

Without privacy restrictions, we can find a $0$-useful (i.e., optimal) solution using Singular-Value Decomposition (SVD). The SVD of $X$ is $X = U \Sigma V^T$, where $U\in \mathbb{R}^{n\times n}$ and $V \in \mathbb{R}^{d \times d}$ are unitary matrices, and $\Sigma$ is an $n \times d$ diagonal matrix which has values $\sigma_1 \geq \ldots \geq \sigma_{\min\set{n,d}} \geq 0$ along the diagonal. The top-$k$ rows subspace of $X$ is given by the span of the first $k$ columns of $V$, and it can be computed, e.g., by applying Principal Component Analysis (PCA) on the covariance matrix $A = X^T X$ (the eigenvectors of $A$ are the columns of $V$). 

With differential privacy, however, the problem is much harder, and \cite{DTTZ14} showed a lower bound of $n\geq \tilde{\Omega}(k \sqrt{d})$ for computing a $0.001$-useful $k$-rank projection matrix under $(1, \Omega(1/n^2))$-DP. This bound, however, only holds for algorithms that provide accuracy for worst-case instances and does not rule out the possibility of achieving high accuracy with smaller values of $n$ for input points that are very close to being in a $k$-dimensional subspace (i.e., ``easy" instances). 

Perhaps the easiest instances are those that \emph{exactly} lie in a $k$-dimensional subspace, and are well-spread within it (i.e., there is no $(k-1)$-dimensional subspace that contains many points). Indeed, \cite{SS21} and \cite{AL22} developed $(\eps,\delta)$-DP algorithms for such instances that precisely recover the subspace using only $n = \tilde{\Theta}_{\eps, \delta}(k)$ points. However, while these algorithms are robust to changing a few points, they are very brittle if we change all the points by a little bit.

One approach to smoothly quantify how much a dataset is ``easy'' is to consider the \emph{additive-gap} $\sigma_k^2 - \sigma_{k+1}^2$. Indeed,  \cite{DTTZ14, GGB18} present $(\eps,\delta)$-DP algorithms that output $0.001$-useful projection using $n  = \tilde{\Theta}_{\eps,\delta}\paren{\frac{k \sqrt{d}}{\sigma_k^2 - \sigma_{k+1}^2}}$ points. Yet, the downside of such additive-gap based approaches is their inherent dependency on the dimension $d$. Even in the extreme case where the points exactly lie in a $k$-dimensional subspace and well-spread within it, the additive gap $\sigma_k^2 - \sigma_{k+1}^2$ is at most $n/k$, which still results with a polynomial dependency on $d$.

The only existing approach to eliminate the dependency on $d$ in some non-exact cases is the one of \cite{SS21} (their ``approximate'' case). Rather than quantifying easiness as a function of the input dataset, they consider a setting where the points are sampled i.i.d. from some distribution, and implicitly measure how ``easy'' a distribution is according to some stability notion.
In particular, they show that a $d$-dimensional Gaussian $\mathcal{N}(\vec{0}, \Sigma)$ with a \emph{multiplicative-gap} $\frac{\sigma_{k+1}(\Sigma)}{\sigma_k(\Sigma)} \leq \tilde{\Theta}_{\eps,\delta,k}\left(\frac{1}{d^2}\right)$ is ``stable'' enough for estimating the top-$k$ subspace of $\Sigma$ with sample complexity that is independent of $d$.
While \cite{SS21} do not provide an answer to \cref{question:meta}, they inspired our work to consider multiplicative singular-value gaps  \emph{in the input dataset} as a measure for easiness.

\subsection{Defining Subspace Estimators}\label{sec:intro:def-subspace-est}

Towards answering \cref{question:meta}, we consider mechanisms $\Mc$ that are parameterized by $k$, $\lambda$, and $\beta$, and satisfy the following utility guarantee: Given a dataset $X = (x_1,\ldots,x_n) \in (\cS_d)^n$ and a value $\gamma$ as inputs, such that $X$ is ``$\gamma$-easy'' for $k$-subspace estimation, then with probability at least $\beta$ over a random execution of $\Mc(X, \gamma)$, the output $\Pi$ is an $\lambda \gamma$-useful rank-$k$ projection matrix for $X$.\footnote{Similarly to the DP averaging example, we consider algorithms which guarantee accuracy that is proportional to the ``easiness'' parameter $\gamma$, and we measure the ``quality'' of the estimations by the parameter $\lambda$.} In \cref{def:intro:subspace-est}, the ``$\gamma$-easy" property is abstracted by a predicate $f$. We also allow an additional parameter $\gamma_{\max}$ to relax the utility for non-easy instances (i.e., we would not require a utility guarantee for instances that are not ``$\gamma_{\max}$-easy''). Furthermore, we only focus on cases in which $X$ has at least $k$ significant directions, which is formalized by requiring that $\sigma_{k}^2(X) \geq 0.01 \cdot n/k$ (we refer to \cref{remark:large_sigma_k} for how our upper bounds, stated in \cref{thm:intro:upper_bound:weak}, can handle smaller values of $\sigma_k$ using an additional parameter or different assumptions).

\begin{definition}[$(k,\lambda, \beta, \gamma_{\max}, f)$-Subspace Estimator]\label{def:intro:subspace-est}
	Let $n,k,d \in \bbN$ s.t. $k \leq \min\set{n,d}$,  let $\beta \in (0,1]$, and let $f\colon (\cS_d)^n \times [0,1] \rightarrow \zo$ be a predicate.  We say that $\Mc \colon (\cS_d)^n \times [0,1] \rightarrow \bbR^{d \times d}$ is an $(k, \lambda, \beta, \gamma_{\max}, f)$-subspace estimator, if for every $X \in (\cS_d)^{n}$ and $\gamma \leq \gamma_{\max}$ with $\sigma_{k}^2(X) \geq 0.01 \cdot n/k$ and $f(X,\gamma) = 1$, it holds that
	\begin{align*}
		\ppr{\Pi \sim \Mc(X, \gamma)}{\Pi\text{ is }(\lambda \gamma)\text{-useful for }X} \geq \beta.
	\end{align*}
\end{definition}

\subsection{Quantifying Easiness -  Our Approach}\label{sec:intro:our-approach}

In this work, we develop two types of smooth measures (captured by the predicate $f$ in \cref{def:intro:subspace-est}) for input ``easiness'', which are translated to the following two types of subspace estimators:

\begin{definition}[$(k, \lambda, \beta, \gamma_{\max})$-Weak Subspace Estimator]\label{def:intro:weak-subspace-est}
	$\Mc \colon (\cS_d)^n \times [0,1] \rightarrow \bbR^{d \times d}$ is called an \emph{$(k, \lambda, \beta, \gamma_{\max})$-Weak Subspace Estimator}, if it is $(k, \lambda, \beta, \gamma_{\max}, f)$-Subspace Estimator for the predicate $f(X,\gamma)$ that outputs $1$ iff $\sum_{i=k+1}^n \sigma_{i}^2(X) \leq \gamma^2 \sigma_k^2(X)$.
\end{definition}

\begin{definition}[$(k, \lambda, \beta, \gamma_{\max})$-Strong Subspace Estimator]\label{def:intro:strong-subspace-est}
	$\Mc \colon (\cS_d)^n \times [0,1] \rightarrow \bbR^{d \times d}$ is called an \emph{$(k, \lambda, \beta, \gamma_{\max})$-Strong Subspace Estimator}, if it is $(k, \lambda, \beta, \gamma_{\max}, f)$-Subspace Estimator for the predicate $f(X,\gamma)$ that outputs $1$ iff $\sigma_{k+1}(X) \leq \gamma \sigma_k(X)$.
\end{definition}

In both cases, we define $f$ based on multiplicative singular-value gaps in the input dataset, but the difference is what type of gap the value $\gamma$ bounds: Strong estimators depend solely on the gap $\sigma_{k+1}/\sigma_k$ without taking into account smaller singular values. Weak estimators, on the other, depend on the gap $\sqrt{\sum_{i=k+1}^{\min\set{n,d}} \sigma_i^2}/\sigma_k$. Note that a strong estimator is, in particular, a weak one (with the same parameters).  Also note that both measures smoothly converge to the exact $k$-subspace case: When each gap tends to zero, the points tend to be closer to a $k$-dimensional subspace.

We provide new upper and lower bounds for both types of estimators.

\subsubsection{Our Upper Bounds}

\begin{theorem}[Weak estimator]\label{thm:intro:upper_bound:weak}
	There exists an $(k, \lambda, \beta = 0.9, \gamma_{\max} = \Theta(\min\set{\frac1{\lambda}, 1}))$-weak subspace estimator $\Mc \colon (\cS_d)^n \times [0,1] \rightarrow \bbR^{d \times d}$ with $n = \tilde{O}_{\eps,\delta}\paren{k + \frac{\min\set{k^2 \sqrt{d}, \: kd}}{\lambda}}$ such that $\Mc(\cdot, \gamma)$ is $(\eps,\delta)$-DP for every $\gamma \in [0,1]$.
\end{theorem}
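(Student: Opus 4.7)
The plan is to realize the estimator by applying the symmetric Gaussian mechanism to the empirical Gram matrix $A \eqdef X^T X$ and outputting the top-$k$ eigenspace of the noisy release. Concretely, $\Mc(X,\gamma)$ computes $\tilde{A} = A + N$, where $N$ is a symmetric matrix whose upper-triangular entries are i.i.d.\ $\mathcal{N}(0,\sigma^2)$ with $\sigma^2 = \Theta(\log(1/\delta)/\epsilon^2)$, and then returns the projection $\tilde{\Pi}$ onto the top-$k$ eigenspace of $\tilde{A}$. Privacy is a standard symmetric Gaussian-mechanism argument: swapping one unit-norm row changes $A$ by at most $\sqrt{2}$ in Frobenius norm, so the chosen $\sigma$ yields $(\epsilon,\delta)$-DP for every $\gamma$ (the algorithm does not even need to read $\gamma$ for the privacy guarantee).

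For utility, I would work with the loss $L \eqdef \tr(\Pi^* A) - \tr(\tilde{\Pi} A)$, where $\Pi^*$ is the top-$k$ projection of $A$. The maximality of $\tilde{\Pi}$ on $\tilde{A}$ gives the standard identity $L \leq \tr((\tilde{\Pi} - \Pi^*)N)$, while decomposing $L$ in the eigenbasis of $A$ yields the lower bound $L \geq (\sigma_k^2(X) - \sigma_{k+1}^2(X)) \cdot \norm{\sin\Theta(V_k,\tilde{V}_k)}_F^2$. The key structural point is that the weak-easiness predicate together with $\sigma_k^2(X) \geq 0.01\,n/k$ and $\gamma_{\max} = \Theta(\min\{1/\lambda,1\})$ guarantees a $\gamma$-free spectral gap
\begin{align*}
\sigma_k^2(X) - \sigma_{k+1}^2(X) \;\geq\; (1-\gamma^2)\,\sigma_k^2(X) \;=\; \Omega(n/k).
\end{align*}
I would then upper bound $\tr((\tilde{\Pi}-\Pi^*)N)$ in two complementary ways. (i) The H\"older-style bound $|\tr((\tilde{\Pi}-\Pi^*)N)| \leq \norm{\tilde{\Pi}-\Pi^*}_1 \cdot \norm{N}_{\text{op}}$, refined via $\norm{\tilde{\Pi}-\Pi^*}_1 \leq \sqrt{2k}\,\norm{\tilde{\Pi}-\Pi^*}_F \leq 2\sqrt{k}\,\norm{\sin\Theta}_F$ and chained with the $L$-lower bound, produces a self-improving quadratic inequality $L \lesssim \sqrt{L/\text{gap}}\cdot\sqrt{k}\cdot\norm{N}_{\text{op}}$; solving it with $\norm{N}_{\text{op}} = \tilde{O}(\sqrt{d}/\epsilon)$ gives the $n = \tilde{O}(k^2\sqrt{d}/\lambda)$ regime. (ii) The Cauchy--Schwarz bound $|\tr((\tilde{\Pi}-\Pi^*)N)| \leq \norm{\tilde{\Pi}-\Pi^*}_F \cdot \norm{N}_F$ combined with the $L$-lower bound and $\norm{N}_F = \tilde{O}(d/\epsilon)$ yields the alternative $n = \tilde{O}(kd/\lambda)$ regime, which dominates when $k \geq \sqrt{d}$. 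The additive $\tilde{O}(k)$ in the sample complexity absorbs privacy-analysis constants as well as the degenerate ``very small $\gamma$'' case, where the problem essentially reduces to exact rank-$k$ recovery and can be handled using the $\tilde{O}_{\epsilon,\delta}(k)$-sample algorithm of \citet{SS21}.

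The main obstacle I anticipate is closing the utility bound \emph{uniformly} in $\gamma\in[0,\gamma_{\max}]$ without paying a $1/\gamma$ factor in $n$. The Davis--Kahan-type bounds above naturally pick up a $1/\gamma$ or $1/\sqrt{\gamma}$ factor from the target accuracy being $\lambda\gamma n$, and these factors must be fully absorbed by the $\gamma$-free gap $\Omega(n/k)$ together with the cap $\gamma_{\max} \leq 1/\lambda$ that keeps $\lambda\gamma \leq 1$. Carefully choosing between bounds (i) and (ii) across the range of $\gamma$, closing the self-improving quadratic inequality without losing polylogarithmic factors, and smoothly interpolating with the exact-recovery case as $\gamma \to 0$, are the delicate steps I expect to require the most care.
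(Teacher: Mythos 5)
Your proposal takes a genuinely different route from the paper (which uses sample-and-aggregate with randomly-partitioned subsets of size $\tilde{\Theta}(k)$ plus FriendlyCore averaging), but it has a real gap that you already half-see and that does not close.

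The problem is that the Gaussian noise $N$ added to $A = X^T X$ has a $\gamma$-independent scale: $\|N\|_{\rm op} = \tilde{O}(\sqrt{d}/\epsilon)$ and $\|N\|_F = \tilde{O}(d/\epsilon)$ regardless of $\gamma$. Tracing your self-improving bound explicitly, from $L \lesssim \sqrt{k}\,\sqrt{L/\text{gap}}\,\|N\|_{\rm op}$ and $\text{gap} = \Omega(n/k)$ you get $L \lesssim k^2\|N\|_{\rm op}^2 / n = \tilde{O}(k^2 d / (\epsilon^2 n))$, which is $\gamma$-\emph{free}. Matching this against the target $\lambda\gamma n$ forces $n \gtrsim \tilde{O}(k\sqrt{d}/(\epsilon\sqrt{\lambda\gamma}))$, i.e.\ a $1/\sqrt{\gamma}$ factor that blows up as $\gamma \to 0$. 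Route (ii) has the same defect. Capping $\gamma_{\max}$ at $1/\lambda$ does not help because the estimator must work uniformly for \emph{all} $\gamma \le \gamma_{\max}$ with a single fixed $n$; the requirement only gets harder as $\gamma$ shrinks. In the limiting case $\gamma = 0$ the predicate means $X$ has rank exactly $k$ and the required accuracy is $\lambda\cdot 0 = 0$: a noisy release of $A$ has an absolutely continuous output distribution, so it hits the exactly-optimal projection with probability zero, not $0.9$. Your proposed patch --- fall back to the \cite{SS21} exact-recovery algorithm for ``very small $\gamma$'' --- does not close the gap: that algorithm requires $\sigma_{k+1}(X) = 0$ exactly and, as the paper notes, is brittle under any perturbation of all points. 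So there is a whole interval of small positive $\gamma$ where neither branch works, and no smooth interpolation exists.

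The paper avoids this entirely by making the noise scale with $\gamma$. It randomly partitions the rows into $t$ subsets of size $m = \tilde{\Theta}(k)$, computes a non-private top-$k$ projection per subset, and uses Lemma~\ref{lemma:Pi-distance-bounds}(\ref{item:Pi-F-diff}) to show these sub-projections concentrate within Frobenius radius $O(\gamma_2)$ of the true $\Pi$. FriendlyCore averaging (\cref{fact:FriendlyCoreAvg}) then has error proportional to the diameter $\xi = \Theta(\gamma)$ of this cluster, so the final error is $\propto \lambda\gamma$ with $n = t\cdot m$ depending on $\lambda, k, d$ but not on $\gamma$; at $\gamma = 0$ the sub-projections are all identical and the aggregator returns the exact answer. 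The $kd/\lambda$ and $k^2\sqrt{d}/\lambda$ regimes come from two aggregation variants (\cref{alg:NaiveAgg} vectorizing each $d\times d$ projection, vs.\ \cref{alg:SSAgg} projecting $\Theta(k)$ Gaussian reference points), not from two tail bounds on the same fixed noise. If you want to salvage a noise-injection approach, the noise magnitude itself has to be made data-dependent and calibrated to $\gamma$, which is precisely what the paper's random-subset-plus-FriendlyCore construction implicitly does.
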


\begin{theorem}[Strong estimator]\label{thm:intro:upper_bound:strong}
	There exists an $(k, \lambda, \beta = 0.8, \gamma_{\max} = \tilde{\Theta}(\min \set{\frac1{\lambda}, \: \frac{\lambda^2}{k^2d}}))$-strong subspace estimator $\Mc \colon (\cS_d)^n \times [0,1] \rightarrow \bbR^{d \times d}$ with $n = \widetilde{O}_{\eps,\delta}\paren{k + \frac{k^3 d}{\lambda^2}}$ such that $\Mc(\cdot, \gamma)$ is $(\eps,\delta)$-DP for every $\gamma \in [0,1]$.
\end{theorem}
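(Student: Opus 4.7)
I would derive \cref{thm:intro:upper_bound:strong} from \cref{thm:intro:upper_bound:weak} via a reduction that translates the strong-gap setting into a weak-gap one. The key observation is that the strong-gap assumption $\sigma_{k+1}(X) \leq \gamma \sigma_k(X)$ bounds every later singular value by $\gamma\sigma_k(X)$, so
\[
\sum_{i=k+1}^{\min\set{n,d}} \sigma_i^2(X) \;\leq\; (\min\set{n,d}-k)\,\gamma^2\,\sigma_k^2(X).
\]
In addition, the standing promise $\sigma_k^2(X) \geq 0.01\, n/k$ together with $\sum_i \sigma_i^2(X) = n$ gives the dimension-free alternative $\sum_{i>k}\sigma_i^2(X) \leq 100\,k\,\sigma_k^2(X)$. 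Combining them yields an effective weak-gap bound $\tilde{\gamma}^2 := \min\set{(\min\set{n,d}-k)\,\gamma^2,\; 100\,k}$, so that $X$ automatically satisfies the weak-estimator hypothesis with parameter $\tilde{\gamma}$.

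I would then invoke \cref{thm:intro:upper_bound:weak} on the same $X$ with gap parameter $\tilde{\gamma}$ and quality parameter $\tilde{\lambda} := \lambda\gamma/\tilde{\gamma}$, so that the guaranteed $\tilde{\lambda}\,\tilde{\gamma}$-usefulness of the output coincides with the target $\lambda\gamma$-usefulness for $X$. Since $\tilde{\gamma}$ and $\tilde{\lambda}$ depend only on the public quantities $(\gamma,\lambda,k,n,d)$ and not on the private input, differential privacy of the resulting mechanism follows immediately from the DP guarantee of the weak estimator together with post-processing.

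Plugging these choices into the weak-estimator sample complexity $\tilde{O}_{\eps,\delta}\!\left(k + k^2\sqrt{d}/\tilde{\lambda}\right)$ gives an implicit inequality in $n$, since $\tilde{\gamma}$, and hence $\tilde{\lambda}$, depend on $\min\set{n,d}$. Solving this inequality should produce the stated $n = \tilde{O}(k + k^3 d/\lambda^2)$ bound. The admissible range $\gamma_{\max} = \tilde{\Theta}(\min\set{1/\lambda,\,\lambda^2/(k^2 d)})$ will arise from simultaneously enforcing (i) $\tilde{\gamma} \leq \gamma_{\max}^{\text{weak}} = \Theta(\min\set{1/\tilde{\lambda},1})$, and (ii) the natural lower bound on $\tilde{\lambda}$ coming from the validity of the weak estimator; each constraint translates into an upper bound on $\gamma$, and the minimum yields the two terms of $\gamma_{\max}$.

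The main obstacle I anticipate is the delicate parameter calibration. A naive use of the bound $\tilde{\gamma} \leq \sqrt{\min\set{n,d}}\cdot\gamma$ seems to yield only $n = \tilde{O}(k + k^4 d/\lambda^2)$; shaving the extra factor of $k$ down to the claimed $k^3 d/\lambda^2$ appears to require exploiting the alternative dimension-free bound $\tilde{\gamma}^2 \leq 100\,k$ in the relevant regime, and/or splitting the analysis according to whether $\min\set{n,d}\,\gamma^2$ or $k$ dominates inside $\tilde{\gamma}^2$. Aligning both the sample-complexity bound and the $\gamma_{\max}$ cutoffs with the stated constants will therefore require a careful case split on the relative sizes of $\lambda, k, d$, and $\gamma$.
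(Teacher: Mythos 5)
Your reduction idea is plausible as a first pass, but it does not actually prove the stated bound, and the fix you suggest does not close the gap. Let me be concrete about where it breaks and what the paper does instead.

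Your reduction bounds the tail energy via $\sum_{i>k}\sigma_i^2(X) \leq (\min\set{n,d})\gamma^2\sigma_k^2(X)$, so the effective weak parameter is $\tilde\gamma \leq \sqrt{\min\set{n,d}}\,\gamma$, and matching $\tilde\lambda\tilde\gamma=\lambda\gamma$ forces $\tilde\lambda=\lambda/\sqrt{\min\set{n,d}}$. Plugging this into the weak-estimator rate $n=\tilde O(k+k^2\sqrt d/\tilde\lambda)$ and solving the implicit inequality gives $n=\tilde O(k+k^4 d/\lambda^2)$, as you suspected. The alternative bound $\tilde\gamma^2\leq 100k$ does not help: when $\gamma$ is small enough for the weak-estimator admissibility condition $\tilde\gamma\leq\Theta(\min\set{1/\tilde\lambda,1})$ to be met (which is precisely the regime where the $k^3 d/\lambda^2$ term governs), one has $\sqrt{\min\set{n,d}}\,\gamma\ll\sqrt{100k}$, so the $100k$ branch never activates. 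Case-splitting on $\lambda,k,d$ does not rescue this; the $k^4$ is a genuine loss of the black-box route.

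The paper avoids this loss by \emph{not} reducing to the weak case. It proves \cref{thm:intro:upper_bound:strong} by running the same sample-and-aggregate scheme (\cref{alg:EstSubspace} with \cref{alg:SSAgg}) but applying \cref{lemma:technique:Pi-distance-bounds}(\ref{item:technique:S-diff}) rather than item (\ref{item:technique:F-diff}). The crucial observation there is a refinement your reduction cannot see: write $X=Y+Z$ with $Y$ the rank-$k$ head and $Z$ the tail. When you pass to a uniformly random $m$-row submatrix $\bX'=\bY'+\bZ'$, the spectral norm of the tail does not grow --- $\norm{\bZ'}\leq\norm{Z}=\sigma_{k+1}$ --- while $\sigma_k(\bY')$ concentrates near $\sqrt{m/n}\,\sigma_k$. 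Applying \cref{cor:proj-diff} then gives $\norm{\Pi-\bPi'}\leq O\paren{\sqrt{n/m}\,\gamma_1}=O(\sqrt t\,\gamma_1)$ with $m=\tilde\Theta(k)$ and $t=n/m$, whereas the worst-case Frobenius route gives $\norm{\Pi-\bPi'}\leq\norm{\Pi-\bPi'}_F\leq O(\gamma_2)\leq O(\sqrt n\,\gamma_1)$. This is a $\sqrt{m}=\tilde\Theta(\sqrt k)$ improvement in the diameter $\xi$ fed to FriendlyCore, and since the FriendlyCore requirement on $t$ scales like $(\xi/(\lambda\gamma))^2$ while $n=\tilde\Theta(kt)$, the $\sqrt k$ saving in $\xi$ becomes exactly the missing factor of $k$ in $n$, yielding $n=\tilde O(k+k^3 d/\lambda^2)$. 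So the correct proof requires the spectral-norm sampling lemma, not a reduction to the Frobenius/weak guarantee.

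Finally, your privacy argument does go through (the partition is oblivious and the aggregation is DP, so $\Mc(\cdot,\gamma)$ is $(\eps,\delta)$-DP by post-processing), and your treatment of $\gamma_{\max}$ as two competing constraints is qualitatively the right picture, but both should be re-derived from the direct spectral analysis rather than from the reduction.
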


Both of our estimators provide a useful projection by outputting a matrix that is close (in Frobenius norm) to the projection matrix of the top-$k$ rows subspace.
Their running time is $\frac{n}{m} \cdot T(m, d, k) + \tilde{O}(d k n)$ for some $m = \tilde{\Theta}(k)$, where $T(m,d,k)$ denotes the running time required to compute (non-privately) a projection matrix to the top-$k$ rows subspace of an $m \times d$ matrix.
We refer to \cref{sec:techniques:UB} for a detailed overview.

For simplifying the presentation and the formal utility guarantees, we assume that our algorithms know the values of $\gamma$ (the bound on the multiplicative-gap) and of $k$ beforehand. Yet, we show that both assumptions are not inherent, and we refer to \cref{remark:unknown_gamma_k} for additional details. 

We also remark that in both theorems, it is possible to increase the confidence $\beta$ to any constant smaller than $1$ without changing the asymptotic cost.

\subsubsection{Our Lower Bounds}

\begin{theorem}[Lower bound for weak estimators]\label{thm:intro:lower_bound:weak}
	If $\Mc \colon (\cS_d)^n \times [0,1] \rightarrow \bbR^{d \times d}$ is a \emph{$(k, \lambda, \beta= 0.1, \gamma_{\max} = \Theta(\frac1{\lambda}))$-weak subspace estimator} for $1\leq \lambda \leq \Theta(\frac{d}{k \log k})$ and $\Mc(\cdot, \gamma)$ is $\paren{1,\frac{1}{50nk}}$-DP for every $\gamma \in [0,1]$, then $n \geq \tilde{\Omega}\paren{\frac{\sqrt{k d}}{\lambda}}$.
\end{theorem}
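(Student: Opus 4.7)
The plan is to reduce to known lower bounds for differentially private mean estimation of concentrated data, applied to $k$ clusters in parallel. I would construct a hard instance as follows: fix $\epsilon = \Theta(1/(\lambda\sqrt{k}))$ and pick any orthonormal set $u_1,\dots,u_k \in \bbR^d$ with orthogonal complement $V_2 \subseteq \bbR^d$. Partition $[n]$ into $k$ clusters of size $n/k$, and for each $i$ in cluster $t$, set $x_i = \sqrt{1-\epsilon^2}\, u_t + \epsilon\, v_i$, where $v_i$ is drawn uniformly from the unit sphere of $V_2$. A direct singular-value computation shows $\sigma_k^2(X) \approx (1-\epsilon^2) n/k$ and $\sum_{i>k}\sigma_i^2(X)\approx \epsilon^2 n$, so the multiplicative ratio is $\gamma \approx \epsilon\sqrt{k} = \Theta(1/\lambda) \leq \gamma_{\max}$ and the instance is $\gamma$-weak-easy per \cref{def:intro:weak-subspace-est}.

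Next, I would argue that any $(\lambda\gamma)$-useful (i.e., $\Theta(1)$-useful) rank-$k$ projection $\Pi_W$ must be close to $V_1 := \operatorname{span}(u_1,\dots,u_k)$, in the sense that there is a correspondence between some orthonormal basis of $W$ and the centers $u_t$ under which the average squared distance is $O(\lambda\gamma) = O(1)$. This follows from a Davis--Kahan--style perturbation bound that exploits the relative gap $\sigma_k^2(X) \gg \sigma_{k+1}^2(X)$ in our construction, combined with the Frobenius usefulness guarantee. Building on this, I would set up a reduction: given a weak subspace estimator $\Mc$ with $n$ samples, construct an estimator for the mean of an unknown unit vector $u^*$ from $n/k$ samples $y_i = \sqrt{1-\epsilon^2}\, u^* + \epsilon\, v_i$. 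The reduction draws a uniformly random target index $t^* \in [k]$, embeds the input samples into cluster $t^*$ of a freshly-sampled $X$ (drawing the other $k-1$ clusters independently), runs $\Mc(X,\gamma)$, and post-processes the output $W$ to extract a canonical direction associated with $t^*$. Since only cluster $t^*$ depends on the input, the reduction inherits the $\paren{1, 1/(50nk)}$-DP guarantee of $\Mc$.

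Finally, invoking the lower bound for differentially private mean estimation of unit vectors with pairwise diameter $O(\epsilon)$ \cite{FriendlyCore22, PTU24}, any such estimator requires $n/k \geq \tilde\Omega(\epsilon \sqrt{d})$, which yields $n \geq \tilde\Omega(k\epsilon\sqrt{d}) = \tilde\Omega(\sqrt{kd}/\lambda)$. The main technical obstacle is the extraction step in the reduction: the Frobenius-norm usefulness is an aggregate guarantee over all $k$ clusters and does not single out cluster $t^*$; furthermore, the extraction must depend only on $W$ (not on the raw samples) to preserve privacy, and $W$ is invariant under relabeling of the clusters. The plausible resolution is to match an orthonormal basis of $W$ to the cluster centers via a canonical (e.g., permutation-symmetric) procedure and leverage the randomness of $t^*$ to turn the average per-cluster accuracy into a constant-probability per-target accuracy; carefully tracking the constants through this step is where the bulk of the technical work lies.
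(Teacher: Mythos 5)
Your proposal shares the paper's high-level blueprint — a $k$-cluster hard instance where one cluster carries the sensitive information, a demonstration that the instance is $\gamma$-easy, and an extraction of a direction attributable to the sensitive cluster — but there is a genuine gap at the step you flag as ``where the bulk of the technical work lies,'' and your proposed resolution points in the wrong direction. You suggest a ``canonical (e.g., permutation-symmetric) procedure'' to match a basis of $W$ to the cluster centers. But the extractor in the \cite{PTU24} framework is explicitly \emph{allowed} to use the secret randomness shared with the generator (here: the planted index $s$ and the per-cluster column permutations $P_1,\dots,P_k$), so no permutation-symmetric matching is needed; more importantly, a symmetric matching by itself cannot create the sign-agreement with the sensitive codebook's marked columns that the fingerprinting argument requires. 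What the paper actually does (Lemma~\ref{lemma:LB:weak}, Claims~\ref{claim:u_strongly_agrees} and~\ref{claim:LB:final}) is: (i) show via a per-cluster Frobenius decomposition that a $0.001$-useful $\tPi$ must be useful for at least one cluster, and by the uniform choice of $s$ it is useful for the planted one with probability $\geq \beta/k$; (ii) deduce that $\operatorname{span}(\tPi P_s^T)$ contains a vector close (in $\ell_2$) to the cluster's padding direction $v_s$; and (iii) use a net-plus-indistinguishability argument in the style of \cite{DTTZ14} to lift sign agreement on half the padding coordinates to sign agreement on the FPC-marked columns. Your sketch has no analogue of (ii)--(iii), and without the padding structure there is no mechanism forcing agreement on the marked columns, so the fingerprinting conclusion does not follow.

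Two smaller issues. First, you carry out the singular-value analysis for a clean Gaussian-like model ($x_i = \sqrt{1-\epsilon^2}\,u_t + \epsilon\,v_i$ with $v_i$ spherical), but the hard instance the reduction must actually feed to $\Mc$ is the normalized, padded-and-permuted FPC codebook; the paper's Claim~\ref{claim:large_spec_gap:weak} redoes this analysis for that distribution (via hypergeometric concentration for near-orthogonality of the padding directions and Gram--Schmidt bookkeeping), and this step is not substitutable with the toy computation. Second, framing the argument as a black-box reduction to a ``mean estimation lower bound'' is imprecise: the relevant statement in \cite{PTU24} is not a stand-alone accuracy lower bound but the FPC/leaking framework (Lemma~\ref{lemma:framework}), which requires you to exhibit a neighboring-preserving generator and an extractor producing a vector strongly-correlated with the codebook; your sketch would still have to be rephrased in that language, at which point it essentially becomes the paper's proof once the extraction step above is filled in.
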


\begin{theorem}[Lower bound for strong estimators]\label{thm:intro:lower_bound:strong}
	If $\Mc \colon (\cS_d)^n \times [0,1] \rightarrow \bbR^{d \times d}$ is a \emph{$(k, \lambda, \beta = 0.1, \gamma_{\max} = \Theta\paren{\frac1{\lambda}})$-strong subspace estimator} for $1\leq \lambda \leq \Theta(\frac{d}{\log k})$ and $\Mc(\cdot, \gamma)$ is $\paren{1,\frac{1}{50nk}}$-DP for every $\gamma$, then $n \geq \tilde{\Omega}\paren{\frac{k\sqrt{d}}{\lambda}}$.
\end{theorem}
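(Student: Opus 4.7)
The plan is to prove this bound via a fingerprinting/tracing attack, following the template of \cite{DTTZ14}'s $\tilde{\Omega}(k\sqrt{d})$ worst-case lower bound, but specialized to a hard distribution that satisfies the strong-easy constraint $\sigma_{k+1}(X) \leq \gamma \sigma_k(X)$. The high-level structure should closely mirror the proof of \cref{thm:intro:lower_bound:weak}, except that a more stringent hard distribution -- controlling every single tail singular value instead of only their aggregate $\ell_2$ mass -- should buy the extra $\sqrt{k}$ over the weak bound.

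The first step is to construct the hard family. I would draw a uniformly random $k$-dimensional subspace $V \subset \bbR^d$ with orthonormal basis $v_1,\dots,v_k$, fix an angle $\theta$ with $\sin\theta = \Theta\bigl(\gamma\sqrt{(d-k)/k}\bigr)$ for some $\gamma = \Theta(1/\lambda)$, and draw each data point as $x = \cos\theta \cdot v_J + \sin\theta \cdot w$, where $J$ is uniform on $[k]$ and $w$ is uniform on $\cS_d \cap V^\perp$. Standard concentration on the empirical singular values should yield $\sigma_j^2(X) \approx \cos^2\theta \cdot n/k$ for $j \leq k$ and $\sigma_{k+1}^2(X) \approx \sin^2\theta \cdot n/(d-k)$ with high probability, so that $X$ simultaneously satisfies $\sigma_k^2(X) \geq 0.01 \cdot n/k$ and the strong-easy predicate of \cref{def:intro:strong-subspace-est}. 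The range $\lambda \leq \tilde{\Theta}(d/\log k)$ in the theorem is essentially the regime in which $\sin\theta \leq 1$ stays valid while leaving enough signal for the tracing step below.

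The second step is to turn $\lambda\gamma$-usefulness of the output $\Pi$ into Frobenius closeness to the hidden projection $\Pi_V = \sum_j v_j v_j^T$. Since the spectral gap is $\sigma_k^2(X) - \sigma_{k+1}^2(X) = \Omega(n/k)$ for small $\theta$, a Davis--Kahan / Wedin-type inequality should give $\norm{\Pi - \Pi_V}_F^2 \lesssim \lambda\gamma k$, so $\Pi$ is an accurate estimator of $\Pi_V$ whenever $\lambda\gamma \ll 1$. The third step is the tracing/fingerprinting attack in the spirit of \cite{BUV14,SU17,DworkSSUV15}: for each $j \in [k]$ and each data point $x_i$, I would define a statistic measuring the alignment of $x_i$ with $\Pi v_j$, and use the $\paren{1, \frac{1}{50nk}}$-DP guarantee to upper bound its expected change when $x_i$ is replaced by an independent challenge. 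The accuracy extractable per direction should translate into a $\tilde{\Omega}(\sqrt{d}/\lambda)$-sample requirement, and summing the contributions across $k$ orthogonal directions of $V$ should give the claimed $n \geq \tilde{\Omega}\paren{k\sqrt{d}/\lambda}$.

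The main obstacle I foresee is in this last step: the $k$ per-direction fingerprinting arguments must combine without interference, so that the $k$ contributions genuinely add up to the full $k$-factor rather than collapsing to $\sqrt{k}$ as in the weak case. This is precisely where the strong condition pays off, because forcing each tail direction to be individually small (rather than only the aggregate $\ell_2$ tail mass) makes the hard distribution's noise isotropic inside $V^\perp$ and rules out the correlated-noise constructions that saturate only the weak bound. Additional technical care will be needed to make the singular-value concentration uniform over the random rotation of $V$ and to calibrate the test statistic so that it carries the full $\sqrt{d}/\lambda$ worth of information per direction.
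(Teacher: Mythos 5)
Your approach differs substantially from the paper's. The paper works inside the \cite{PTU24} lower-bound framework: the generator (\cref{alg:G}) draws $k$ independent Tardos-style fingerprinting codebooks from $\cD(n_0,d_0)$, plants the challenge codebook at a secret index $s\in[k]$, applies padding-and-permuting with a large padding $\ell$ tuned by $\alpha \approx 1/(\lambda^2 \polylog)$ to each block separately, and vertically concatenates. The factor $k$ in the bound comes mechanically from the block structure: each codebook contributes $n_0=n/k$ rows while $d_0=\Theta(\alpha d)$, and the framework then yields $n/k = n_0 \geq \tilde\Omega(\sqrt{d_0})$. The only extra work relative to the weak bound is \cref{claim:large_spec_gap:strong}, proved via a sub-exponential tail bound on $\norm{\bA_t u}_2^2$ and a net argument, showing the resulting instances satisfy the pointwise gap $\sigma_{k+1}(\bY)\leq\gamma\sigma_k(\bY)$ rather than only the aggregate tail bound. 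The extraction step (\cref{alg:F}) is also nontrivial: it picks a vector in $Span(\tPi P_s^T)$ that maximizes agreement on \emph{half} of the padding indices, and then argues via a net + permutation-indistinguishability argument that this vector must also agree with the marked columns it has not seen.

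Your construction -- a random $k$-dimensional $V$, points $x=\cos\theta\,v_J+\sin\theta\, w$ with $w$ uniform on $\cS_d\cap V^\perp$ -- does make singular-value concentration simpler (isotropy inside $V^\perp$ replaces the sub-exponential/net argument the paper needs). But there is a genuine gap in the fingerprinting step. Classical re-identification arguments of the \cite{BUV14,DworkSSUV15} flavor hinge on a distribution with a hidden per-coordinate parameter (Tardos's $p^j$) such that a fingerprinting lemma forces any accurate answer to reveal individual rows; your isotropic $V^\perp$-noise has no such per-coordinate structure, so it is not clear what per-direction statistic could be extracted from the output $\Pi$, nor that its expectation under replacement is controlled. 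You flag "making the $k$ contributions add without interference" as an obstacle, but this is not merely a technical wrinkle: there is no concrete extraction mechanism proposed, and the step is exactly where the $k$ factor must be earned. By contrast, the paper earns the $k$ factor structurally by using $n/k$ rows per codebook and extracting from only one secretly chosen block, sidestepping the cross-direction interference issue entirely. As written, the proposal does not constitute a proof; to salvage it you would either need to show your geometric distribution admits a fingerprinting lemma (which I doubt in this form), or move to a padded-and-permuted codebook construction as the paper does.

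Two smaller issues: your rationale for the upper bound on $\lambda$ ("ensures $\sin\theta \le 1$") gives the wrong direction of inequality -- the actual constraint (Theorem~\ref{thm:lower_bound:strong} has $\lambda^2 \leq d/(c\log k)$) is needed so the padding parameter $\alpha$ is at least $\Theta(\log k/d)$, which is where the near-orthogonality of the $k$ padding directions enters. And the explanation that the strong condition "rules out correlated-noise constructions" conflates the adversary's noise with the hard distribution; the actual gain is that one can use less padding per block (larger $\alpha$, larger $d_0$) because only $\sigma_{k+1}$, not the full tail mass, must be small.
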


Our lower bounds are more technically involved, and use a novel combination of generating hard-instances using the tools from \cite{PTU24} for proving smooth lower bounds, and extracting sensitive vectors from useful projection matrices using ideas from the lower bound of  \cite{DTTZ14}.  Both lower bounds are proven by generating hard-instances that are ``$\gamma$-easy'' for $\gamma = \frac1{1000 \lambda}$.
We refer to \cref{sec:techniques:LB} for a detailed overview.

We remark that \cite{PTU24} recently proved a similar lower bound for the special case of $k=1$ (estimating the top-singular vector). However, their result strongly relies on the similarity between averaging and estimating top-singular vector in their hard instances, which does not hold for the case $k \geq 2$.

\cref{table:bounds} summarizes our bounds for $k \leq \sqrt{d}$.

\begin{table}[]
	\centering
	\begin{tabular}{c|c|c|}
		\cline{2-3}
		\multicolumn{1}{l|}{}             & \multicolumn{1}{l|}{Weak Estimator} & \multicolumn{1}{l|}{Strong Estimator} \\ \hline
		\multicolumn{1}{|c|}{Upper Bound} & $\tilde{O}_{\eps,\delta}\paren{k + \frac{k^2 \sqrt{d}}{\lambda}}$           & $\widetilde{O}_{\eps,\delta}\paren{k + \frac{k^3 d}{\lambda^2}}$             \\ \hline
		\multicolumn{1}{|c|}{Lower Bound} & $\tilde{\Omega}\paren{\frac{\sqrt{k d}}{\lambda}}$          & $\tilde{\Omega}\paren{\frac{k\sqrt{d}}{\lambda}}$             \\ \hline
	\end{tabular}
	\caption{Our bounds on $n$ for subspace estimation (ignoring restrictions on $\gamma_{\max}$ and $\lambda$).}
	\label{table:bounds}
\end{table}

\subsubsection{Implications}

We offer two formulations which have the property we seek: If we aim for an error $\alpha$, and the dataset is ``$\gamma$-easy'' for a very small $\gamma$, we take $\lambda = \alpha/\gamma$ to reduce the number of necessary and sufficient points.

For strong estimators, the rate $n = n(\lambda)$ in \cref{thm:intro:upper_bound:strong} does not match the corresponding lower bound \cref{thm:intro:lower_bound:strong}, and \cref{thm:intro:upper_bound:strong} is limited to small values of $\gamma$.\footnote{If we take $\lambda = \alpha/\gamma$ (i.e., aiming for an error $\alpha$), the utility restriction on $\gamma_{\max}$ in  \cref{thm:intro:upper_bound:strong} implies that $\gamma$ should be smaller than $\Theta\paren{\frac{\alpha^{2/3}}{k^{2/3} d^{1/3}}}$. \label{small_gamma_max}} Yet, for small values of $k$, the strong-estimator bounds do imply that in order to privately compute an $0.001$-useful rank-$k$ projection with number of points that is independent of $d$, it is sufficient and necessary to have a gap $\sigma_{k+1}/\sigma_k$ of at most $\propto 1/\sqrt{d}$.  \cref{table:bounds:gamma} summarizes our bounds for the special case of outputing $0.001$-useful projection matrix, using our two different types of input ``easiness''.

\begin{table}[]
	\centering
	\begin{tabular}{c|c|c|}
		\cline{2-3}
		\multicolumn{1}{l|}{}             & \multicolumn{1}{l|}{Weak Estimator} & \multicolumn{1}{l|}{Strong Estimator} \\ \hline
		\multicolumn{1}{|c|}{Upper Bound} & $\tilde{O}_{\eps,\delta}\paren{k + \gamma_1 k^2 \sqrt{d}}$           & $\widetilde{O}_{\eps,\delta}\paren{k + \gamma_2^2 k^3 d}$             \\ \hline
		\multicolumn{1}{|c|}{Lower Bound} & $\tilde{\Omega}\paren{\gamma_1\sqrt{k d}}$          & $\tilde{\Omega}\paren{\gamma_2 k\sqrt{d}}$             \\ \hline
	\end{tabular}
	\caption{Our bounds on $n$ for computing $0.001$-useful projection, where $\gamma_1 = \sqrt{\sum_{i=k+1}^{\min\set{n,d}} \sigma_i^2}/\sigma_k$ and $\gamma_2 = \sigma_{k+1}/\sigma_k$ denote here the ``easiness'' parameters for weak and strong estimators, respectively. As mentioned in \cref{small_gamma_max}, our upper bound for strong estimators only hold for $\gamma_2 \leq \Theta(k^{-2/3} d^{-1/3})$.}
	\label{table:bounds:gamma}
\end{table}

\subsection{Empirical Evaluation}

We believe that private subspace estimation of easy instances could find practical applications. Therefore, we made an effort to realize our upper bounds using a \emph{practical} algorithm. In \cref{sec:experiments} we empirically compared our method to the additive-gap based approach of \cite{DTTZ14} for the task of privately estimating the empirical mean of points that are close to a small dimensional subspace, demonstrating the advantage of our approach in high-dimensional regimes.

\subsection{Other Related Work}

A closely related line of work is on Private PCA. 
\cite{DTTZ14} consider the simple algorithm that adds independent Gaussian noise to each entry of the covariance matrix $A = \sum_{i=1}^n x_i^T x_i \in \mathbb{R}^{d\times d}$, and then performs analysis on the noisy matrix. This method, predating the development of differential privacy \cite{BDMN05}, was later analyzed under differential privacy by \cite{MM09} and \cite{CSS13}. This simple algorithm is versatile and several bounds are provided for the accuracy of the noisy PCA. The downside of this is that a polynomial dependence on the ambient dimension $d$ is inherent for any instances (including ``easy" ones). While this approach has a variant that improves the accuracy of estimating the top-$k$ subspace as a function of the additive gap $\sigma_k^2 - \sigma_{k+1}^2$, it does not prevent the polynomial dependency on the dimension $d$ even for very easy instances.

Techniques from dimensionality reduction have been applied by \cite{HR12} and \cite{ABU18} to privately compute a low-rank approximation to the input matrix $X$. Similarly, \cite{HR13} and \cite{HP14} utilize the power iteration method with noise injected at each step to compute low-rank approximations to $X$. Despite their effectiveness, these methods, relying on noise addition, require sample complexity to grow polynomially with the ambient dimension to achieve meaningful guarantees.

Another approach, employed by \cite{BBDS12} and \cite{She19}, involves approximating the covariance matrix $A$ using dimensionality reduction. They show that the
dimensionality reduction step itself provides a privacy guarantee (whereas the aforementioned results did not exploit this and relied on noise added at a later stage).

\cite{CSS12,KT13,WSCHT16} apply variants of the exponential mechanism \cite{MT07} to privately select a low-rank approximation to the covariance matrix $A$. This method is nontrivial to implement and analyse,
but it ultimately requires the sample complexity to grow polynomially in the ambient dimension.

\cite{GGB18} exploit smooth sensitivity \cite{NRS07} to release a lowrank approximation to the matrix $A$. This allows adding less noise than worst case sensitivity,
under an eigenvalue gap assumption. However, the sample complexity $n$ is polynomial in the dimension $d$.

Another related area involves estimating the parameters of unbounded Gaussians \cite{KSSU22,AL22,KMV22,FriendlyCore22}. Notably, \cite{KSSU22} used the subspace learning algorithm of \cite{SS21} to efficiently learn the covariance matrix.

A recent popular trend in DP learning is to utilize a few \emph{public} examples to enhance accuracy. 
This has led to methods for private ML which project the sensitive gradients
onto a subspace estimated from the public gradients. By using a small amount of i.i.d. public
data, \cite{ZWB21} demonstrate that this approach can improve the accuracy of differentially private
stochastic gradient descent in high-privacy regimes and achieve a dimension-independent error rate.
Similarly, \cite{YZCL21} proposed GEP, a method that utilizes public data to identify the most useful
information carried by gradients, and then splits and clips them separately.
These works underscore the importance of identifying the subspace of gradients in private ML.

\subsection{Limitations and Future Directions}

From a theoretical perspective, our work is the first to provide proper measures for how ``easy'' a given dataset is which smoothly eliminates the dependency on the dimension $d$. 
Yet, closing the gap between our upper and lower bounds is still left open.
Specifically, for weak estimator, there is still a gap of $k^{1.5}$ between \cref{thm:intro:upper_bound:weak,thm:intro:lower_bound:weak}. For strong estimators, the upper-bound rate $n = n(\lambda)$ (\cref{thm:intro:upper_bound:strong}) does not align with the one of the lower bound (\cref{thm:intro:lower_bound:strong}), and it is left open to relax the restriction on $\gamma_{\max}$. One possible reason for some of these gaps (especially the dependency in $k$) is that our upper bounds follow the approach of \cite{SS21} to estimate (under some matrix norm) the projection matrix to the top-$k$ rows subspace (we do it in Frobenius norm). While estimating the projection matrix itself provides, in particular, a useful solution (\cref{prop:F-dist-implies-estimator}), the opposite direction is not true in general, and it could be possible to reduce the sample complexity by focusing on $\alpha$-usefulness (\cref{def:intro:useful}) directly, or alternatively, providing stronger lower bounds for estimating the projection matrix.

From a more practical standpoint, we empirically demonstrate the advantage of our approach in high-dimension regimes when the data is very close to a low-dimensional structure, which is directly translated to an advantage in private mean estimation of such instances. The downside of our approach is that it requires the points to be very close to a $k$-dimensional structure in order to be effective, which might not be sufficient for typical training scenarios in deep learning. It would be intriguing to explore if there is a connection between training parameters (e.g., the network structure) to the phenomena of gradients that are close to a low-dimensional subspace (mentioned in \cref{sec:intro:motivation}). If we could boost this closeness to regimes where our method achieves high accuracy, we could generate drastically improved private models. On the other hand, if we cannot do it, then our lower bounds indicate that improving DP-SGD via private subspace estimation might not be the right approach, and we should focus on different approaches for this task.

\subsection{Paper Organization}

In \cref{sec:techniques} we present proof overview for our results.
Notations, definitions and general statements used throughout the paper are given in \cref{sec:prelim}.
Our upper bounds (\cref{thm:intro:upper_bound:weak,thm:intro:upper_bound:strong}) are proven in \cref{sec:UB}. Our lower bounds (\cref{thm:intro:lower_bound:weak,thm:intro:lower_bound:strong}) are proven in \cref{sec:LB}. The empirical evaluation is provided in \cref{sec:experiments}.

\section{Techniques}\label{sec:techniques}

In this section we provide proofs overview for our upper bounds and lower bounds.

\subsection{Upper Bounds}\label{sec:techniques:UB}

Both of our estimators (\cref{thm:intro:upper_bound:weak,thm:intro:upper_bound:strong}) follow the same structure, but with different parameters.
Similarly to \cite{SS21}, our algorithms follow the sample-and-aggregate approach of \cite{NRS07}. That is, 
given a dataset $X = (x_1,\ldots,x_n) \in (\cS_d)^n$, we partition the rows into $t$ subsets, compute (non-privately) a projection matrix to the top-$k$ rows subspace of each subset, and then privately aggregate the projection matrices $\Pi_1,\ldots,\Pi_t$.
For doing that, we need to argue that $\Pi_1,\ldots,\Pi_t$ are expected to be close to each other. In the Gaussian setting of \cite{SS21}, this holds by concentration properties of Gaussian distributions. In our setting, however, it is unreasonable to expect that arbitrary partitions will lead to similar subspaces. For instance, consider the matrix $X$ whose first $n/k$ rows are $e_1 = (1,0,\ldots,0)$, the next $n/k$ rows are $e_2 = (0,1,\ldots,0)$, and so forth until $e_k$. Even though $X$ is rank-$k$ and has $\sigma_1^2 = \ldots = \sigma_k^2 = n/k$,  if we simply partition the rows according to their order, then most of those subsets will induce a rank-$1$ matrix which clearly does not represent the original matrix $X$. Therefore, we must consider a more clever partition that will guarantee a good representation of the top-$k$ rows subspace of $X$ in each subset.

There is an extensive line of works who aim for methods to choose a small subset of rows that provides a good low-approximation for the original matrix (e.g., see \cite{Mahoney2011} for a survey). Yet, most of these methods are data-dependent, and therefore seem less applicable for privacy. 

In this work we show that by simply using a \emph{uniformly random} partition into subsets of size $\tilde{\Theta}(k)$, then w.h.p. each subset induces a projection matrix that is close to the projection matrix of the top-$k$ rows subspace of $X$:

\begin{lemma}\label{lemma:technique:Pi-distance-bounds}
	Let $X = (x_1,\ldots, x_n) \in (\cS_d)^n$ with singular values $\sigma_1 \geq \ldots \geq \sigma_{\min\set{n,d}}\geq 0$, and let $\gamma_1 = \frac{\sigma_{k+1}}{\sigma_k}$ and $\gamma_2 = \frac{\sqrt{\sum_{i=k+1}^{\min\set{n,d}} \sigma_i^2}}{\sigma_k}$. Let $\bX' \in (\cS_d)^m$ be a uniformly random $m$-size subset of the rows of $X$ (without replacement). Let $\Pi$ and $\bPi'$ be the projection matrices to the top-$k$ rows subspace of $X$ and $\bX'$, respectively.  
	Assuming that $\sigma_{k}^2 \geq 0.01 n/k$, then the following holds for $m = \tilde{\Theta}(k)$:
	\begin{enumerate}

		\item If $\gamma_1 \leq \frac{m}{2n}$, then $\pr{\norm{\Pi - \bPi'} \leq O\paren{\sqrt{\frac{n}{m}} \cdot \gamma_1}} \geq 0.9.\:$ ($\norm{\cdot}$ denotes the Spectral norm\footnote{The spectral norm of a matrix $A \in \bbR^{n\times d}$ is defined by $\norm{A} = \sup_{x \in \cS_d} \norm{A x}_2$ and is equal to $\sigma_1(A)$.}).\label{item:technique:S-diff}
		
		\item If $\gamma_2 \leq 0.1$, then $\pr{\norm{\Pi - \bPi'}_F \leq O(\gamma_2)} \geq 0.9.\:$\footnote{The Frobenius norm of a matrix $A \in \bbR^{n \times d}$ is equal to $\sqrt{\sigma_1(A)^2 + \ldots + \sigma_{\min\set{n,d}}(A)^2}$.}\label{item:technique:F-diff}
	\end{enumerate}
\end{lemma}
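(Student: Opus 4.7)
The plan is to decompose both $X$ and $\bX'$ into rank-$k$ ``signal'' plus orthogonal ``noise'' parts, lower-bound the $k$-th singular value of the signal part of the subsample via matrix concentration, upper-bound the noise of the subsample in the appropriate norm, and then apply a Wedin $\sin\Theta$-type theorem.

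First, I would write the truncated SVD decomposition $X = X_k + E$, where the rows of $X_k$ lie in the top-$k$ row subspace of $X$ (the range of $\Pi$) and the rows of $E$ lie in its orthogonal complement, so that $X_k^T X_k$ has spectrum $\sigma_1^2,\ldots,\sigma_k^2$ and $E^T E$ has spectrum $0,\sigma_{k+1}^2,\ldots,\sigma_{\min\{n,d\}}^2$. Restricting to the sampled rows yields the analogous decomposition $\bX' = \bX'_k + \bE$. Crucially, every row of $\bX'_k$ is a row of $X_k$, and therefore lies in the range of $\Pi$; so as soon as $\bX'_k$ has rank $k$, its top-$k$ right-singular subspace coincides exactly with that of $X$, i.e.\ with the range of $\Pi$.

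The main quantitative step is to lower-bound $\sigma_k(\bX'_k)$. The matrix $\bX'_k{}^T \bX'_k = \sum_{i \in S}(X_k)_i(X_k)_i^T$ is a sum over a uniformly random $m$-subset $S$ of $[n]$ (without replacement) of PSD rank-one matrices of spectral norm at most $\|(X_k)_i\|_2^2 \le 1$; its expectation is $(m/n)X_k^T X_k$, whose least eigenvalue is $(m/n)\sigma_k^2 \ge 0.01\,m/k$ by the hypothesis $\sigma_k^2 \ge 0.01\,n/k$. A matrix Chernoff bound for sampling without replacement then yields, for $m = \tilde{\Theta}(k)$,
\begin{align*}
  \sigma_k^2(\bX'_k) \;\geq\; \tfrac{1}{2}\cdot\tfrac{m}{n}\cdot\sigma_k^2
\end{align*}
with probability at least $0.95$. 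This is precisely where the hypothesis $\sigma_k^2 \ge 0.01\,n/k$ is used, and it is the main technical obstacle: it dictates the required sample size $m = \tilde{\Theta}(k)$.

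It remains to bound the noise in the appropriate norm. For the spectral case, removing rows cannot increase the spectral norm of a matrix, so $\|\bE\| \le \|E\| = \sigma_{k+1}$ deterministically. For the Frobenius case, $\|\bE\|_F^2 = \sum_{i\in S}\|E_i\|_2^2$ is a sum of bounded scalars with mean $(m/n)\|E\|_F^2 = (m/n)\sum_{i\geq k+1}\sigma_i^2$, and a standard scalar Bernstein/Chernoff bound gives $\|\bE\|_F \le O(\sqrt{m/n})\cdot \sqrt{\sum_{i\geq k+1}\sigma_i^2}$ with probability at least $0.95$. Since $\sigma_{k+1}(\bX'_k) = 0$, and the hypotheses $\gamma_1 \leq m/(2n)$ in item~\ref{item:technique:S-diff} and $\gamma_2 \leq 0.1$ in item~\ref{item:technique:F-diff} ensure that $\|\bE\| \ll \sigma_k(\bX'_k)$ on the high-probability events above, Wedin's $\sin\Theta$ theorem applied to the perturbation $\bX' = \bX'_k + \bE$ yields
\begin{align*}
  \|\Pi - \bPi'\| \;\leq\; O\!\left(\frac{\|\bE\|}{\sigma_k(\bX'_k)}\right) \;\leq\; O\!\left(\frac{\sigma_{k+1}}{\sqrt{m/n}\,\sigma_k}\right) \;=\; O\!\left(\sqrt{n/m}\cdot\gamma_1\right),
\end{align*}
and analogously $\|\Pi-\bPi'\|_F \leq O(\|\bE\|_F/\sigma_k(\bX'_k)) = O(\gamma_2)$. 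A union bound over the two (or three) concentration events yields the stated probability $0.9$.
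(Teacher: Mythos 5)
Your overall strategy tracks the paper's closely: write $X = X_k + E$ with $X_k$ the rank-$k$ signal and $E$ the orthogonal noise, subsample both, lower-bound $\sigma_k$ of the subsampled signal, upper-bound the subsampled noise, and close with a Wedin/Cai--Zhang perturbation bound (the paper's Proposition~\ref{cor:proj-diff}). However, there is a genuine gap in your Frobenius-norm step. You claim that a scalar Bernstein/Chernoff bound for sampling without replacement yields $\|\bE\|_F^2 \le O(m/n)\,\|E\|_F^2$ with probability $\ge 0.95$. This does not hold uniformly over the regime $\gamma_2 \le 0.1$: the quantity $\|\bE\|_F^2 = \sum_{i\in S}\|E_i\|_2^2$ is a sum of $[0,1]$-valued terms whose mean is $m\mu = (m/n)\|E\|_F^2 = (m/n)\gamma_2^2\sigma_k^2$, and this mean can be arbitrarily small (take $\gamma_2 \to 0$). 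Bernstein for such a sum gives failure probability on the order of $\exp(-\Theta(m\mu))$, which is not small when $m\mu \ll 1$, so the ``with probability $0.95$'' claim fails. Since you only need a one-sided upper bound on a nonnegative random variable, the right tool is Markov's inequality, which gives $\Pr[\|\bE\|_F^2 \ge C(m/n)\|E\|_F^2] \le 1/C$ unconditionally; this is exactly what the paper does, and with it your calculation of the final $O(\gamma_2)$ bound goes through.

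A brief remark on the other concentration step, where you and the paper diverge: you lower-bound $\sigma_k(\bX'_k)$ with a matrix Chernoff bound for sampling without replacement applied to $\bX'_k{}^{T}\bX'_k$ restricted to $\mathrm{Span}(v_1,\ldots,v_k)$, using that the summands have spectral norm $\le 1$ and that $\lambda_{\min}$ of the expectation is $(m/n)\sigma_k^2 \ge 0.01\,m/k$. The paper instead applies a scalar Bernstein bound (Proposition~\ref{prop:sigma-concent}) at the $k$ vectors $v_1,\ldots,v_k$ and takes a union bound. Your route is the more self-contained way to control $\sigma_k$ here, since bounding $\|\bX'v_i\|$ at the $k$ basis vectors alone controls only the diagonal of the restricted Gram matrix, not its smallest eigenvalue. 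In either case, the hypothesis $\sigma_k^2 \ge 0.01\,n/k$ is precisely what puts $\mu_{\min}/R = \Omega(m/k)$ and makes $m = \tilde\Theta(k)$ the right sample size.
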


Namely, \cref{item:technique:S-diff} bounds the expected spectral norm distance of the projection matrices using the first type gap $\sigma_{k+1}/\sigma_{k}$ (which is used in the analysis of our strong estimator), and \cref{item:technique:F-diff} bounds the expected Frobenius norm distance using the second type gap $\sqrt{\sum_{i=k+1}^{\min\set{n,d}} \sigma_i^2}/\sigma_k$ (which is used in the analysis of our weak estimator). 
We prove \cref{lemma:technique:Pi-distance-bounds} in \cref{sec:key-property}.

The next step is to aggregate the non-private projection matrices $\Pi_1,\ldots,\Pi_t$ into a private one $\tPi$.
We consider two types of aggregations. The first one simply treats each matrix as a $d^2$ vector and privately estimate the average of $\Pi_1,\ldots,\Pi_t$.
The second type (which outperforms the first one in most cases) follows a similar high-level structure of \cite{SS21}. That is, to sample i.i.d.\ reference points $p_1,\ldots,p_q \sim \cN(\vec{0},\bbI_{d\times d})$ for $q = \Theta(k)$, privately average the $qd$-dimensional points $\set{(\Pi_j p_1,\ldots, \Pi_j p_q)}_{j=1}^t$ for obtaining a private $\tP \in \bbR^{q\times d}$ (whose \ith row estimates the projection of $p_i$ onto the top-$k$ rows subspace of $X$), and then compute the projection matrix of the top-$k$ rows subspace of $\tP$. But unlike \cite{SS21} who perform this step using a histogram-based averaging that has the same flavor of \cite{KV18}, we perform this step using FriendlyCore \cite{FriendlyCore22} that simplifies the construction and makes it practical in high dimensional regimes. We remark that in both aggregation types, we need a DP averaging algorithm that is resilient to a constant fraction of outliers (say, $20\%$) since both items in \cref{lemma:technique:Pi-distance-bounds} only guarantee that the expected number of outliers is no more than $10\%$. Fortunately, FriendlyCore can be utilized for such regimes of outliers (see \cref{sec:prelim:FriendlyCore} for more details). 

A few remarks are in order.

\begin{remark}
	The first aggregation type (which privately estimate the average of $\Pi_1,\ldots,\Pi_t$ directly) outperforms the second type only for our weak estimator in the regime $k \geq \sqrt{d}$ (as it inherently posses larger dependency in the dimension).
\end{remark}

\remove{
	\begin{remark}\label{remark:Z}
		The differences between the rates $n = n(\lambda)$ of weak and strong subspace estimators comes from the additional $\sqrt{\frac{n}{m}}$ factor in \cref{item:technique:S-diff} of \cref{lemma:technique:Pi-distance-bounds}. This difference arises from the following reason: If we have a matrix $Z \in \bbR^{n \times d}$ and we sample a random $m$-size subset of rows to obtain a matrix $\bZ' \in \bbR^{m \times d}$, then it can be easily shown that the Frobenius norm is expected to be reduced in the same rate, i.e., $\ex{\norm{\bZ'}_F^2} = \frac{m}{n} \norm{Z}_F^2$. Yet, this is not the case for spectral norm, and there are $Z$'s which have $\ex{\norm{\bZ'}} =  \norm{Z}$ (e.g., the identity matrix).  In our analysis, we write the SVD of $X$ in the form $X = \sum_{i=1}^{\min\set{n,d}} \sigma_i u_i v_i^T$ and use $Z =  \sum_{i=k+1}^{\min\set{n,d}} \sigma_i u_i v_i^T$. We pay the $\sqrt{\frac{n}{m}}$ factor in our spectral norm distance analysis because of this reason.
	\end{remark}
}

\begin{remark}\label{remark:large_sigma_k}
	We could avoid the requirement $\sigma_k^2 \geq 0.01 n/k$ by adding an additional parameter $\eta$ such that $\sigma_k^2 \geq \eta \cdot n/k$, and using subsets of size $m = \tilde{\Theta}(k/\eta)$ (which would increase $n$ by the same factor of $1/\eta$). For readability purposes, we chose to avoid this additional parameter. Because our algorithms provide useful projection by estimating the actual top-$k$ projection\remove{(see \cref{remark:accuracy})}, then such a requirement is unavoidable if we would like to provide utility guarantees only as a function of the singular values.\footnote{To illustrate why $\sigma_1,\ldots,\sigma_k$ should be large, consider a matrix $X$ whose first $n-k+1$ rows are $e_1$, and the next $k-1$ rows are $e_2,\ldots,e_k$. This matrix has $\sigma_2 = \ldots = \sigma_k = 1$, and even though it is a rank-$k$ matrix, it is clearly impossible to output a projection matrix that reveals any of the directions $e_2,\ldots,e_k$ under DP.} In fact, any assumption that would imply that two random subsets induce similar top-$k$ projection matrices would suffice for the utility analysis.
\end{remark}

\begin{remark}\label{remark:unknown_gamma_k}
	
	To eliminate the known-$\gamma$ assumption, we can replace the FriendlyCore-averaging step \cite{FriendlyCore22} (that requires to know the diameter) with their \emph{unknown} diameter variant that gets two very rough bounds $\xi_{\min}$ and $\xi_{\max}$, and performs a private binary search for estimating a good diameter $\xi \in [\xi_{\min}, \xi_{\max}]$ in a preprocessing phase. This step only replaces the dependency on $d$ with $d + \log\log(\xi_{\max}/\xi_{\min})$ in the asymptotic sample complexity (section 5.1.2. in \cite{FriendlyCore22}) and is very practical. In fact, we use this method in our empirical evaluation in \cref{sec:experiments}.
	
	For handling unknown values of $k$, note that our algorithms provide useful utility guarantees (compared to the additive-gap based ones) only when $\sigma_{k+1}^2 \ll 1$. So in cases where $\sigma_k^2 \geq 2\cdot \log(k/\beta)/\eps'$ for some (fixed) privacy budget $\eps' < \eps$ (say, $\eps' = \eps/10$),  we can privately determine w.p. $1-\beta$ the right value of $k$ using a simple $\eps'$-DP method. The main observation is that the $\ell_1$ sensitivity of the vector $(\sigma_1^2,\ldots,\sigma_n^2)$ is at most $2$ (\cite{ADKMV19}), yielding that we can privately compute $(\sigma_1'^2, \ldots, \sigma_n'^2) = (\sigma_1^2 + \Lap(2/\eps'), \ldots, \sigma_n^2 + \Lap(2/\eps'))$, and then perform analysis on $(\sigma_1'^2, \ldots, \sigma_n'^2)$ to set $k$ as the first index $i$ where $\sigma_i^2 \geq \log(i/\beta)/\eps'$ and $\sigma_{i+1}^2 < \log(i/\beta)/\eps'$.
\end{remark}

\subsection{Lower Bounds}\label{sec:techniques:LB}

Our lower bounds (\cref{thm:intro:lower_bound:weak,thm:intro:lower_bound:strong}) use the recent framework of \cite{PTU24} for generating smooth lower bounds for DP algorithms using Fingerprinting Codes (FPC), but require technically involved analysis due to the complex structure of this problem for $k \geq 2$.

Roughly speaking, let $\cD$ be a distribution over $\oo^{n_0 \times d_0}$ that induces an optimal FPC codebook with $d_0 = \tilde{O}(n_0^2)$ (e.g., \cite{Tardos08,PTU24}). The connection between FPC and DP (first introduced by \cite{BUV14}) is that any DP algorithm, given a random codebook $X = (x_i^j)_{i\in[n_0], j \in [d_0]} \sim \cD$ as input, cannot output a vector $q=(q^1,\ldots,q^{d_0}) \in \oo^{d_0}$  that ``agrees" with most of the ``marked" columns of $X$ (Formally, for $b \in \oo$, a columns $x^j = (x_1^j,\ldots,x_n^j)$ is called $b$-marked if $x_1^j = \ldots =x_n^j = b$, and $q$ agrees with it if $q^j = b$).

Now consider a DP mechanism $\Mc \colon \cX^n \rightarrow \cW$ that satisfies some non-trivial accuracy guarantee.
\cite{PTU24} reduces the task of lower bounding $n$ to the following task: (1) Generate from an FPC codebook $X\in \oo^{n_0 \times d_0}$ hard instances $Y \in \cX^n$ for $\Mc$, and (2) Extract from the output $w \sim \Mc(Y)$ a vector $q \in \oo^{d_0}$ that agrees with most of the marked columns of  $X$ ($n_0$ and $d_0$ are some functions of $n$, $\cX$ and the weak accuracy guarantee of $\Mc$). \cite{PTU24} proved that if there exists such generating algorithm $\Gc$ and extracting algorithm $\Fc$ (which even share a random secret that $\Mc$ does not see) such that $\Gc$ is \emph{neighboring-preserving} (i.e., maps neighboring databases to neighboring databases), then it must hold that $n_0 \geq \tilde{\Omega}(\sqrt{d_0})$ (Otherwise, $\Mc$ cannot be DP).

\paragraph{Warm-up: DP averaging.} We first sketch how \cite{PTU24} applied their framework with $n_0 = n$ and $d_0 = \Theta(d/\lambda^2)$ for proving a lower bound for the simpler problem of DP averaging. In this setting, we are given a mechanism that guarantees $\lambda \gamma$-accuracy ($\ell_2$ additive error) for $\gamma$-easy instances (i.e., points that are $\gamma$-close to each other in $\ell_2$ norm). The generator $\Gc$, given an FPC codebook $X\in \oo^{n_0 \times d_0}$, uses the \emph{padding-and-permuting} technique: It pads $\ell \approx 10^4 \lambda^2 d_0$ $1$-marked columns and $\ell$ $(-1)$-marked columns, and then permutes all the $d = d_0 + 2\ell$ columns of the new codebook $X'$ using a random permutation $\pi\colon [d] \rightarrow [d]$ that is shared with the extractor $\Fc$. The input $Y$ to the algorithm would be the \emph{normalized} rows of $X'$ which are $\frac1{100 \lambda}$-close to each other in $\ell_2$ norm, so the mechanism has to output an $\frac1{100}$-accurate solution $w$. In particular, after rounding $w$ to $\oo^d$, the coordinates of $w$ must agree with a vast majority of the marked columns, and also with a vast majority of the original marked columns that are located within $\pi(1),\ldots,\pi(d_0)$ as it cannot distinguish between them and the other marked columns (because $\pi$ is hidden from it). The extractor $\Fc$, given $w$ and $\pi$, rounds $w$ to $\oo^d$ and outputs $q = (w^{\pi(1)}, \ldots, w^{\pi(d)})$ which agrees with most of the marked columns of $X$. Hence, we obtain the lower bound of $n \geq \tilde{\Omega}(\sqrt{d_0}) = \tilde{\Omega}(\sqrt{d}/\lambda)$.

\paragraph{DP Subspace Estimation} In our case, we are given a (weak or strong) subspace estimator $\Mc \colon (\cS_d)^n \rightarrow \bbR^{d\times d}$ that outputs an $\lambda \gamma$-useful rank-$k$ projection matrix if $\sigma_{k+1}(X) \leq \gamma \sigma_{k}(X)$ (or $\sqrt{\sum_{i=k+1}^{\min\set{n,d}} \sigma_i(X)^2} \leq \gamma \sigma_{k}(X)$). We prove our lower bounds by applying the framework with $n_0 = n/k$ and $d_0 = \Theta(\alpha d)$, for some parameter $\alpha = \alpha(\lambda)$ that will depend on the type of $\Mc$ we consider.
In order to generate hard instances $Y \in (\cS_d)^n$ for $\Mc$ given an FPC codebook $X \sim \cD\:\:$ ($X \in \oo^{(n/k)\times d_0}$), we use a variation of the approach to \cite{DTTZ14}. Namely, our generator $\Gc$ samples $k$ independent FPC codebooks $A_1,\ldots,A_k \sim \cD$ where it plants $A_i= X$ for a random $i \la [k]$. Then for each $j \in [k]$, it applies (independently) the padding-and-permuting technique of \cite{PTU24} where it pads $\ell$ $1$-marked columns and $\ell$ $(-1)$-marked columns for $\ell \approx \frac{d_0}{2 \alpha}$, and permute all the columns. This induces $k$ matrices $B_1, \ldots,B_k \in \oo^{(n/k) \times d}$ (for $d = d_0 + 2\ell$) such that each $B_j$ is ``almost''  rank-$1$ and their vertical concatenation $B\in \oo^{n \times d}$ is almost rank-$k$. It provides $Y = \frac1{\sqrt{d}} B$ as the input for $\Mc$. See \cref{hard-instances} for graphical illustrations.

\begin{figure*}
		\centerline{
			\includegraphics[scale=.40]{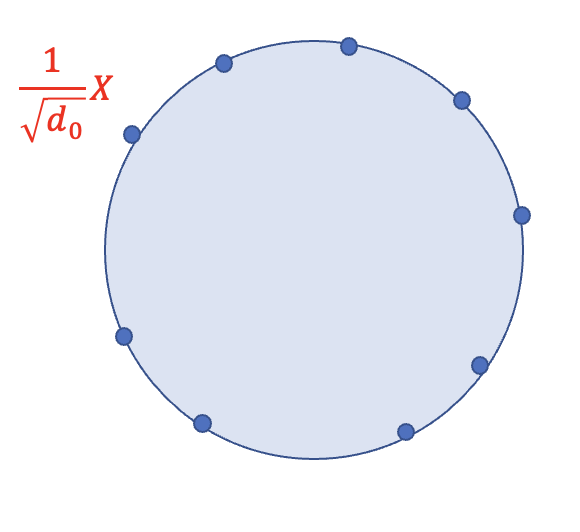}
			\includegraphics[scale=.40]{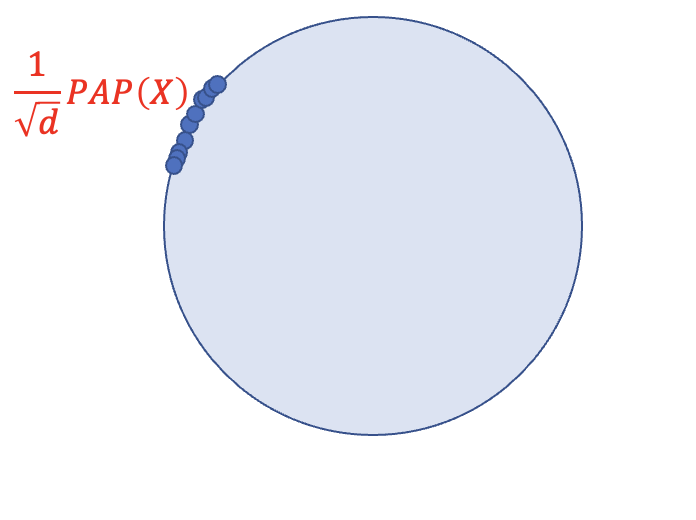}
			\includegraphics[scale=.40]{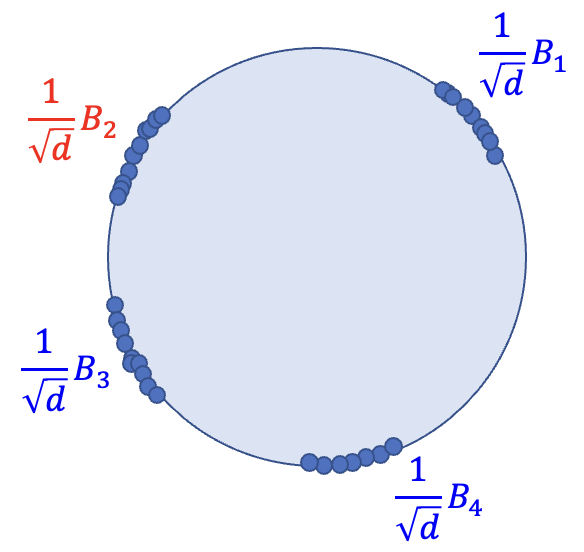}		
		}
		\caption{From Left to Right: (1) The normalized rows of the fingerprinting codebook $X$ are well-spread on the $d_0$-dimensional unit sphere. (2) Applying the padding-and-permuting (PAP) technique makes the normalized points very close to each other on the $d$-dimensional unit sphere ($d \gg d_0$). (3) We create hard instances for DP subspace estimation using $k$-independent (normalized) PAP-FPC codebooks $B_1,\ldots,B_k$, where $PAP(X)$ is planted in one of the $B_i$'s (in this example, in $B_2$). Reducing $\alpha$ (i.e., increasing the padding length) makes the points in each group $B_i$ closer to each other, which in particular, increases the closeness to a $k$-dimensional subspace.}
		\label{hard-instances}
	\end{figure*}

We remark that at this step, the main difference from \cite{DTTZ14} (who implicitly follow a similar paradigm) is that they use a fixed padding length of $\ell = 15 d_0$ that suffices for the robustness properties that they need. On the other hand, we use \cite{PTU24}'s observation that increasing the padding can handle low-accuracy regimes of many problems, and indeed we use the padding length $\ell$ to increase the $k$'th singular value gap, which will be a function of the quality parameter $\lambda$.

The next step is to choose the right value of $\alpha = \alpha(\lambda)$ such that $\Mc$, on input $Y$, will have to output a useful projection matrix.
We show that the input matrix $Y$ has w.h.p. $\sigma_1(Y)^2 \geq \ldots \geq \sigma_k(Y)^2 \geq (1 - O(\alpha)) \cdot \frac{n}{k}$, which yields that $\sum_{i=k+1}^{\min\set{n,d}} \sigma_i(Y)^2 \leq O(\alpha) n$.
If $\Mc$ is a weak estimator, then we simply use $\alpha = \Theta\paren{\frac{1}{\lambda^2 k}}$ to guarantee that $\sum_{i=k+1}^{\min\set{n,d}} \sigma_i(Y)^2 \leq \gamma^2\cdot \sigma_k(Y)^2$ for $\gamma = \frac1{1000 \lambda}$, which yields that by the utility guarantee of $\Mc$, we get an $0.001$-useful projection matrix.
If $\Mc$ is a strong estimator, then we use $\alpha = \Theta\paren{\frac{1}{\lambda^2}}$ (i.e., we decrease the padding length by a factor of $k$). Yet, in order to meet the requirements of $\Mc$, we must argue that w.h.p., $\sigma_{k+1}(Y)^2 \leq \tilde{O}(\alpha) \cdot \frac{n}{k}$, and this is more complex than the previous case. Here we use more internal properties of the fingerprinting distribution $\cD$. Namely, that in \cite{PTU24}'s construction (which is also true for \cite{Tardos08}'s one), each entry of the codebook matrix has expectation $0$ and the columns of the matrix are independent. Using known concentration bounds, this allows us to argue that if we pick a unit vector $v$ that is orthogonal to the top-$k$ rows subspace of $Y$, then with probability at least $1 - \exp(-\tilde{\Omega}(d))$ it holds that $\norm{Y v}_2^2 \leq \tilde{O}(\alpha)\cdot \frac{n}{k}$. Since $\sigma_{k}^2$ is bounded by the supremum of $\norm{Y v}_2^2$ under such unit vectors, we conclude the proof of this part using a net argument.

Finally, the last step, which is not trivial for $k \geq 2$, is to extract from an $0.001$-useful projection matrix $\tPi$ for $Y$, a vector $q \in \oo^{d_0}$ that with noticeable probability, strongly agrees with the marked columns of the original codebook $X \in \oo^{n_0 \times d_0}$. For that, our extractor $\Fc$ uses the random permutations and the random location $i$ (which are part of the shared secret between the generator and the extractor) and follows the strategy of  \cite{DTTZ14}. That is, it applies the \ith invert permutation over the columns of $\tPi$ (denote the resulting matrix by $\tPi_i$), chooses a vector $u \in Span(\tPi_i)$ that has the maximal agreement with \emph{half} of the padding bits, and then simply outputs its first $d_0$ coordinates after rounding to $\oo$. The intuition is that an $0.001$-useful projection matrix must be also $0.001$-useful for at least one of the parts $Y_j = \frac1{\sqrt{d}}B_j$. Since all the $Y_j$'s have the same distribution and the location $i$ (where the original $X$ is planted) is hidden from $\Mc$, then it must be $0.001$-useful for $Y_i$ with probability at least $\beta/k$ (where $\beta$ denotes the success probability of $\Mc$). Given that this event occurs, the usefulness of $\tPi$ implies that there must exists a vector in $Span(\tPi_i)$ that strongly agrees with half of the padding locations. But because all the marked columns (that includes the padding locations) are indistinguishable from the eyes of $\Mc$ who computed $\tPi$, then a similar agreement must hold for the marked columns of $X$.  

\section{Preliminaries}\label{sec:prelim}

\subsection{Notations}
We use calligraphic letters to denote sets and distributions, uppercase for matrices and datasets, boldface for random variables, and lowercase for vectors, values and functions. 
For $n \in \bbN$, let $[n] = \set{1,2,\ldots,n}$. 
Throughout this paper, we use $i \in [n]$ as a row index, and $j \in [d]$ as a column index (unless otherwise mentioned).

For a matrix $X = (x_i^j)_{i\in [n], j \in [d]}$, we denote by $x_i$ the \ith row of $X$ and by $x^j$ the \jth column of $X$. A column vector $x \in \bbR^n$ is written as $(x_1,\ldots,x_n)$ or $x = x_{1\ldots n}$, and a row vector $y \in \bbR^d$ is written as $(y^1,\ldots,y^d)$ or $y^{1\ldots d}$. In this work we consider mechanisms who receive an
$n \times d$ matrix $X$ as input, which is treated as the dataset $X = (x_1,\ldots,x_n)$ where the rows of $X$ are the elements (and therefore, we sometimes write $X \in (\bbR^d)^n$ instead of $X \in \bbR^{n \times d}$ to emphasize it).

For a vector $x \in \bbR^d$ we define $\norm{x}_2 = \sqrt{\sum_{i=1}^n x_i^2}$ (the $\ell_2$ norm of $x$), and for a subset $\cS \subseteq [d]$ we define $x_{\cS}=(x_i)_{i \in \cS}$, and in case $x$ is a row vector we write $x^{\cS}$. Given two vectors $x = (x_1,\ldots,x_n), y = (y_1,\ldots,y_n)$, we define $\ip{x,y} = \sum_{i=1}^n x_i y_i$ (the inner-product of $x$ and $y$). We denote by $\cS_d$ the set of $d$-dimensional unit vectors, that is, $\cS_d = \set{v \in \bbR^d \colon \norm{v}_2 = 1}$.
For a matrix $X = (x_i^j)_{i \in [n], j \in [d]}$ we let $\norm{X} = \max_{v \in \cS_d} \norm{X v}_2$ (the \emph{spectral norm} of $X$) and let $\norm{X} _F= \sqrt{\sum_{i \in [n], j \in [d]} (x_i^j)^2}$ (the Forbenius norm of $X$).
For a matrix $X = (x_i^j)_{i\in [n], j \in [d]} \in \oo^{n \times d}$ and $b \in \oo$, we define the $b$-marked columns of $X$ as the subset $\cJ^b_X \subseteq [d]$ defined by $\cJ^b_X = \set{j \in [d] \colon x_i^j = b \text{ for all }i\in[n]}$.

For $d \in \bbN$ we denote by $\cP_d$ the set of all $d \times d$ permutation matrices. For a permutation matrix $P \in \cP_d$ and $i \in [d]$ we denote by $P(i)$ the index $j \in [d]$ for which $e_i P= e_j$ (where $e_i$ and $e_j$ are the corresponding one-hot row vectors), and for $\cI \subseteq [d]$ we denote $P(\cI) = \set{P(i) \colon i\in \cI}$. 

For $d,k \in \bbN$ we denote by $\cW_{d,k}$ the set of all $d \times d$ projection matrices of rank $k$. For a matrix $A \in \bbR^{n \times d}$ we denote by $Span(A)$ the columns subspace of $A$ (and therefore the rows subspace of $A$ is $Span(A^T)$).

For $z \in \bbR$,  we define $\signn{z} \eqdef \begin{cases} 1 & z \geq 0 \\ -1 & z<0\end{cases}$ and for $v = (v^1,\ldots,v^d) \in \bbR^d$ we define $\signn{v} \eqdef (\signn{v^1},\ldots,\signn{v^d}) \in \oo^d$.

\subsection{Distributions and Random Variables}\label{sec:prelim:dist}

Given a distribution $\cD$, we write $x \sim \cD$ to denote that $x$ is sampled according to $\cD$.
For a set $\cS$, we write $x \la \cS$ to denote that $x$ is sampled from the uniform distribution over $\cS$.

\subsection{Singular Value Decomposition (SVD)}

For a matrix $X \in \bbR^{n \times d}$, the singular value decomposition of $X$ is defined by $X = U \Sigma V^T$, where $U \in \bbR^{n \times n}$ and  $V \in \bbR^{d \times d}$ are unitary matrices. The matrix $\Sigma \in \bbR^{n \times d}$ is a diagonal matrix with non-negative entries $\sigma_1 \geq \ldots \geq \sigma_{\min\set{n,d}} \geq 0$ along the diagonal, called the singular values of $X$. The SVD of $X$ can also be written in the form $X = \sum_{i=1}^{\min\set{n,d}} \sigma_i u_i v_i^T$, where $u_i$ and $v_i$ are the \ith columns of $U$ and $V$ (respectively). It holds that $\norm{X}_F^2 = \sum_{i} \sigma_i^2$ and $\norm{X} = \sigma_1$. We define the top-$k$ rows subspace of $X$ as the subspace spawned by the first $k$ columns of $V$.


\begin{fact}[Min-Max principle for singular values]\label{fact:min-max}
	For every matrix $X  \in \bbR^{n \times d}$ and $i \in [\min\set{n,d}]$ it holds that
	\begin{align*}
		\sigma_i(X) = \max_{\dim(\cE) = i} \: \min_{v \in \cS_d \cap \cE} \norm{X v}_2 = \min_{\dim(\cE) = i} \: \max_{v \in \cS_d \cap \cE} \norm{X v}_2.
	\end{align*}
\end{fact}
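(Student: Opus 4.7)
The plan is to reduce the identity to a direct computation in the orthonormal basis of right singular vectors supplied by the SVD. Writing $X = U\Sigma V^T$ with $V = [v_1 \mid \cdots \mid v_d]$ and using that $U$ is unitary, I would first record the master identity
\begin{align*}
\norm{Xv}_2^2 \;=\; \norm{\Sigma V^T v}_2^2 \;=\; \sum_{j=1}^{\min\{n,d\}} \sigma_j^2 \, \langle v, v_j\rangle^2,
\end{align*}
valid for every $v \in \bbR^d$. When $v$ is a unit vector, the coefficients $\langle v, v_j\rangle^2$ form a sub-probability distribution on $\{1,\ldots,\min\{n,d\}\}$ (with total mass $\le 1$, equal to $1$ when $d \leq n$), so $\norm{Xv}_2^2$ is a convex combination of $\sigma_1^2, \ldots, \sigma_{\min\{n,d\}}^2$ driven entirely by which right singular directions $v$ excites. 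Both identities then reduce to choosing subspaces that force this excitation to live in a prescribed initial or terminal segment of the singular spectrum.

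For the max-min identity, the lower bound is witnessed by $\cE^\ast = \mathrm{span}(v_1, \ldots, v_i)$: any unit $v \in \cE^\ast$ satisfies $\norm{Xv}_2^2 = \sum_{j=1}^i \sigma_j^2 \langle v, v_j\rangle^2 \geq \sigma_i^2 \sum_{j=1}^i \langle v, v_j\rangle^2 = \sigma_i^2$, so the inner min over this subspace is at least $\sigma_i$. For the matching upper bound I would use the standard dimension-counting step: for any $i$-dimensional $\cE$, since $i + (d-i+1) = d+1 > d$, the intersection $\cE \cap \mathrm{span}(v_i, v_{i+1}, \ldots, v_d)$ contains a unit vector $v$, and for such $v$ the master identity yields $\norm{Xv}_2^2 \leq \sigma_i^2 \sum_{j\ge i}\langle v, v_j\rangle^2 \leq \sigma_i^2$, so the inner min over $\cE$ is at most $\sigma_i$.

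The min-max identity is proved by the entirely symmetric argument, exchanging the roles of the initial and terminal segments of the right-singular basis: the witness subspace (upper-bounding $\max$) is the one concentrated on singular directions of index $\geq i$, and the matching lower bound comes from intersecting an arbitrary candidate subspace with $\mathrm{span}(v_1, \ldots, v_i)$ via the same pigeonhole in dimensions, obtaining a unit vector $v$ with $\norm{Xv}_2^2 \geq \sigma_i^2$.

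I do not anticipate any real obstacle here; the whole fact is a one-step consequence of SVD together with a subspace-dimension pigeonhole. The only bookkeeping point to watch is the case $n < d$, where the sum in the master identity truncates at $n$ and the ``missing'' coordinates contribute zero weight — this only helps the upper bounds, so the same chain of inequalities goes through unchanged.
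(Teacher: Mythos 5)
The paper states this as a background fact with no proof, so there is no in-paper argument to compare against. Your argument for the max-min characterization is correct and standard: use the SVD to reduce $\norm{Xv}_2^2$ to $\sum_j \sigma_j^2\langle v,v_j\rangle^2$, exhibit the witness subspace $\mathrm{span}(v_1,\dots,v_i)$, and close with the dimension-count pigeonhole $i+(d-i+1)>d$.

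However, you should have flagged that the min-max half of the statement \emph{as printed in the paper} is false. The paper asserts $\sigma_i(X)=\min_{\dim(\cE)=i}\max_{v\in\cS_d\cap\cE}\norm{Xv}_2$, but the Courant--Fischer characterization you actually prove requires $\dim(\cE)=d-i+1$. Concretely, for $X=\mathrm{diag}(2,1)$ and $i=1$ the right-hand side with $\dim(\cE)=1$ equals $\min_{u\in\cS_2}\norm{Xu}_2=1=\sigma_2$, not $\sigma_1=2$. Your own argument silently uses the correct constraint: your witness $\mathrm{span}(v_i,\dots,v_d)$ has dimension $d-i+1$, and your pigeonhole $\dim(\cE)+i>d$ only holds when $\dim(\cE)\geq d-i+1$. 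So what you prove is correct, but it is not the statement as written, and you should have called out the typo rather than quietly substituting the right dimension. (For what it is worth, the fact is never explicitly cited; where the paper implicitly appeals to a min-max bound --- bounding $\sigma_{k+1}^2(\bY)$ by $\sup_{u\in\bar{\bE}\cap\cS_d}\norm{\bY u}_2^2$ with $\dim(\bar{\bE})=d-k$ in the strong lower bound --- it uses the correct codimension, so the typo is harmless downstream.)
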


\subsection{Concentration Bounds}


\begin{fact}[\cite{MonSmith90}]\label{fact:ip-concentration}
	Let $\bx = (\bx_1,\ldots,\bx_n)$ where the $\bx_i$'s are i.i.d. random variables with $\pr{\bx_i=1} = \pr{\bx_i = -1} = 1/2$, and let $v \in \bbR^n$.
	Then 
	\begin{align*}
		\forall t \geq 0: \quad \pr{\size{\ip{\bx,v}} > t\cdot \norm{v}_2} \leq 2 \exp(-t^2/2).
	\end{align*}
\end{fact}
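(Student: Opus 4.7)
The plan is to prove this standard Khintchine--Hoeffding-type bound via the Chernoff method applied to the moment generating function of a Rademacher sum. First I would reduce to the case $\norm{v}_2 = 1$ by homogeneity: the event $\size{\ip{\bx,v}} > t \norm{v}_2$ is unchanged under positive scaling of $v$, and if $v = 0$ the statement is vacuous. I would also handle the two-sided bound by a union bound, so it suffices to show $\pr{\ip{\bx,v} > t} \leq \exp(-t^2/2)$ under $\norm{v}_2 = 1$.

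Next I would apply the exponential Markov inequality: for any $\lambda > 0$,
\begin{align*}
\pr{\ip{\bx,v} > t} \leq e^{-\lambda t} \cdot \ex{e^{\lambda \ip{\bx,v}}}.
\end{align*}
Since the coordinates $\bx_i$ are independent, the moment generating function factorizes as
\begin{align*}
\ex{e^{\lambda \ip{\bx,v}}} = \prod_{i=1}^n \ex{e^{\lambda v_i \bx_i}} = \prod_{i=1}^n \cosh(\lambda v_i).
\end{align*}
The key analytic inequality is $\cosh(x) \leq e^{x^2/2}$ for all $x \in \bbR$, which follows by comparing Taylor coefficients: $\cosh(x) = \sum_{k \geq 0} \frac{x^{2k}}{(2k)!}$ while $e^{x^2/2} = \sum_{k \geq 0} \frac{x^{2k}}{2^k k!}$, and $(2k)! \geq 2^k k!$ term by term.

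Plugging this in gives $\ex{e^{\lambda \ip{\bx,v}}} \leq \prod_i e^{\lambda^2 v_i^2/2} = e^{\lambda^2 \norm{v}_2^2 / 2} = e^{\lambda^2/2}$, so
\begin{align*}
\pr{\ip{\bx,v} > t} \leq e^{-\lambda t + \lambda^2/2}.
\end{align*}
Optimizing over $\lambda > 0$ by setting $\lambda = t$ yields the bound $e^{-t^2/2}$. Combining with the symmetric bound for $\pr{\ip{\bx,v} < -t}$ (obtained by replacing $v$ with $-v$) gives the factor of $2$ in the final inequality. There is no real obstacle here; the only point requiring a line of justification is the subgaussian moment inequality $\cosh(x) \leq e^{x^2/2}$, which I would dispatch with the Taylor-series comparison above.
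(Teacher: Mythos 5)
Your proof is correct and complete. The paper does not prove this statement at all; it states it as a Fact with a citation to Montgomery--Smith (1990) and relies on the external reference. What you have written is the standard Chernoff/Hoeffding derivation for Rademacher sums: reduce to a unit vector by homogeneity, bound the moment generating function coordinate-wise using $\cosh(x) \le e^{x^2/2}$ (verified by Taylor-coefficient comparison $(2k)! \ge 2^k k!$), optimize the exponent at $\lambda = t$, and symmetrize. This is exactly the textbook argument and it fully justifies the stated bound, including the factor of $2$ from the union bound over the two tails and the trivial $t = 0$ case.
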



\begin{fact}[Bernstein's Inequality for sampling without replacement (\cite{BardenetM15}, Proposition 1.4)]\label{fact:Bernstein}
	Let $\bx_1,\ldots,\bx_m$ be a random sample drawn without replacement from $\set{w_1,\ldots,w_n}$ for $n \geq m$.
	Let $a = \min_{i \in [n]} w_i$, $b = \max_{i \in [n]} w_i$, $\mu = \frac1n \sum_{i=1}^n w_i$ and $\sigma^2 = \frac1{n} \sum_{i=1}^n (w_i-\mu)^2$.
	Then for every $t\geq 0$,
	\begin{align*}
		\pr{\size{\sum_{i=1}^m \bx_i -m\cdot  \mu} \geq t} \leq 2\cdot \exp\paren{-\frac{t^2}{m \sigma^2 + (b-a) t/3}}.
	\end{align*}
\end{fact}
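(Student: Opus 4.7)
The plan is to reduce to the i.i.d.\ setting via Hoeffding's classical reduction (1963), and then apply the standard Bernstein inequality. Concretely, I would proceed in three steps.

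\textbf{Step 1 (Hoeffding reduction).} First, I would establish that for every convex function $\phi\colon \bbR \to \bbR$ and $m \leq n$,
$$\ex{\phi\paren{\sum_{i=1}^m \bx_i}} \leq \ex{\phi\paren{\sum_{i=1}^m \bx'_i}},$$
where $\bx'_1,\ldots,\bx'_m$ are drawn i.i.d.\ from the uniform distribution on $\set{w_1,\ldots,w_n}$. The classical argument expresses the without-replacement sum as an expectation, over uniformly random $m$-subsets of an i.i.d.\ with-replacement sample of size $n$, of partial sums, and invokes Jensen's inequality. Applied to $\phi(x) = e^{\lambda x}$, this reduces the moment generating function (MGF) bound for $\sum_i \bx_i$ to the i.i.d.\ case.

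\textbf{Step 2 (Bernstein MGF in the i.i.d.\ case).} Each $\bx'_i$ has mean $\mu$, variance $\sigma^2$, and range contained in $[a,b]$. The standard single-variable Bernstein estimate, based on the Taylor inequality $e^u - 1 - u \leq u^2/(2(1 - u/3))$ valid for $0 < u < 3$, gives
$$\ex{\exp\paren{\lambda \sum_{i=1}^m (\bx'_i - \mu)}} \;\leq\; \exp\paren{\frac{m \sigma^2 \lambda^2 / 2}{1 - (b-a)\lambda / 3}}$$
for every $\lambda \in (0, 3/(b-a))$. Combined with Step 1, the same MGF bound holds for the sum without replacement.

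\textbf{Step 3 (Chernoff optimization and two-sided tail).} Apply Markov's inequality to $e^{\lambda(\sum_i \bx_i - m\mu)}$ and optimize the free parameter by choosing $\lambda = t / (m\sigma^2 + (b-a) t/3)$; this yields the stated upper-tail bound. Applying the same argument to $-\bx_i$ (which shares the mean, variance, and range parameters up to sign) gives the lower-tail bound, and a union bound contributes the factor of $2$.

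\textbf{Main obstacle.} The conceptual content sits entirely in Step 1: Hoeffding's reduction is the key device that lets us sidestep the negative correlations inherent to sampling without replacement, converting the problem to a completely standard i.i.d.\ MGF calculation. Once that identity is in place, Steps 2--3 are routine specializations of the usual Chernoff/Bernstein recipe, the only delicate point being the sharp Taylor bound that produces the specific constant $1/3$ in the linear term of the denominator.
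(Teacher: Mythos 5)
This statement is a citation, not something the paper proves: the paper invokes it as \cite{BardenetM15}, Proposition~1.4 and gives no argument of its own. Your Hoeffding-reduction strategy (reduce the without-replacement MGF to the i.i.d.\ one via Jensen applied to conditional expectation, then run the standard Bernstein/Chernoff machinery) is precisely how Bardenet and Maillard derive that proposition, so the approach is sound and is the standard one.

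One concrete arithmetic point to flag. Carrying out your Step~3 with the MGF bound from Step~2, and plugging in $\lambda^\ast = t/(m\sigma^2 + (b-a)t/3)$, one obtains
\begin{align*}
-\lambda^\ast t + \frac{m\sigma^2 (\lambda^\ast)^2/2}{1 - (b-a)\lambda^\ast/3}
= -\frac{t^2}{2\paren{m\sigma^2 + (b-a)t/3}},
\end{align*}
so the upper-tail probability is bounded by $\exp\paren{-\frac{t^2}{2(m\sigma^2 + (b-a)t/3)}}$. This matches the form actually proved in \cite{BardenetM15}, but it carries an extra factor of~$2$ in the denominator relative to the statement copied into the paper, which reads $\exp\paren{-\frac{t^2}{m\sigma^2 + (b-a)t/3}}$. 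In other words, your proof correctly derives the Bardenet--Maillard bound, not the (slightly stronger, and apparently mistyped) bound as written here; you should not claim that your optimization ``yields the stated upper-tail bound'' verbatim. This is a constant-factor slip in the paper's transcription of the fact rather than a gap in your argument, and it is harmless for the way the fact is used downstream (in \cref{prop:sigma-concent}), but it is worth being precise about which constant your derivation actually produces.
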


The following proposition is used for proving \cref{lemma:technique:Pi-distance-bounds}.

\begin{proposition}\label{prop:sigma-concent}
	Let $X = (x_1, \ldots,x_n) \in (\cS_d)^n$ with singular values $\sigma_1 \geq \ldots \geq \sigma_{\min\set{n,d}} \geq 0$.
	Let $\bX' = (\bx'_1,\ldots,\bx'_m)$  be the random matrix that is generated by taking a uniformly random $m$-size subset of the rows of $X$.
	Let $v \in \cS_d$ and $a^2 = \norm{X v}_2^2 = \sum_{i=1}^n \ip{x_i ,v}^2$. Assuming that $m \geq \frac{2 n \ln(1/\beta)}{9 a^2}$, it holds that  
	\begin{align*}
		\pr{\size{\norm{\bX' v}_2^2  - \frac{m}{n} a^2} \geq \sqrt{2 \frac{m}{n} a^2 \ln(1/\beta)}} \leq 2\beta.
	\end{align*}
\end{proposition}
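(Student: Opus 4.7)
The plan is to reduce the claim to a direct application of Bernstein's inequality for sampling without replacement (\cref{fact:Bernstein}), by viewing $\norm{\bX' v}_2^2$ as the sum of a random subsample of the scalar quantities $w_i = \ip{x_i, v}^2$.

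First I would set $w_i \eqdef \ip{x_i, v}^2$ for $i\in [n]$. Since the rows are unit vectors and so is $v$, Cauchy--Schwarz gives $0 \leq w_i \leq 1$. Moreover, $\sum_{i=1}^n w_i = \norm{X v}_2^2 = a^2$, so the population mean is $\mu = a^2/n$. Writing the subsample $\bX' = (\bx'_1,\ldots,\bx'_m)$, we have $\norm{\bX' v}_2^2 = \sum_{t=1}^m \ip{\bx'_t, v}^2$, which is exactly the sum of $m$ values drawn without replacement from $\set{w_1,\ldots,w_n}$. Its expectation is $m\mu = \frac{m}{n} a^2$, matching the centering term in the statement.

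Next I would bound the population variance appearing in \cref{fact:Bernstein}. Using $w_i \in [0,1]$,
\[
\sigma^2 \;=\; \frac1n \sum_{i=1}^n (w_i - \mu)^2 \;\leq\; \frac1n \sum_{i=1}^n w_i^2 \;\leq\; \frac1n \sum_{i=1}^n w_i \;=\; \frac{a^2}{n},
\]
and the range is $b - a \leq 1$. Plugging $t = \sqrt{2\tfrac{m}{n} a^2 \ln(1/\beta)}$ into Bernstein yields
\[
\Pr\!\left[\Bigl|\norm{\bX' v}_2^2 - \tfrac{m}{n} a^2\Bigr| \geq t\right] \;\leq\; 2\exp\!\left(-\frac{t^2}{m\sigma^2 + (b-a)t/3}\right) \;\leq\; 2\exp\!\left(-\frac{t^2}{\tfrac{m a^2}{n} + t/3}\right).
\]

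Finally, I would verify that the denominator is at most $t^2/\ln(1/\beta)$, which amounts to showing $\tfrac{m a^2}{n} + t/3 \leq 2\tfrac{m a^2}{n}$, i.e.\ $t/3 \leq \tfrac{m a^2}{n}$. Squaring, this becomes $2\tfrac{m}{n} a^2 \ln(1/\beta) \leq \tfrac{9 m^2 a^4}{n^2}$, equivalently $m \geq \tfrac{2 n \ln(1/\beta)}{9 a^2}$, which is precisely the hypothesis of the proposition. No step here is a serious obstacle; the only subtlety is choosing the variance bound $\sigma^2 \leq a^2/n$ (rather than the cruder $\sigma^2 \leq 1$) so that the assumed lower bound on $m$ is exactly what is needed to absorb the linear term $(b-a)t/3$ into the Gaussian term.
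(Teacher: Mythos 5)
Your proof is correct and follows essentially the same route as the paper: cast $\norm{\bX' v}_2^2$ as a without-replacement sample sum of the bounded scalars $w_i = \ip{x_i,v}^2$, bound the population variance by $a^2/n$ using $w_i^2 \leq w_i$, and apply \cref{fact:Bernstein} with $t = \sqrt{2\tfrac{m}{n}a^2\ln(1/\beta)}$, using the hypothesis on $m$ to absorb the linear term $t/3$ into the Gaussian term. The only cosmetic difference is that you bound the variance via $\frac1n\sum(w_i-\mu)^2 \leq \frac1n\sum w_i^2$ directly, while the paper expands the square explicitly; both give the same bound.
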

\begin{proof}
	The random variable $\norm{\bX' v}_2^2$ is distributed as the sum of a uniformly random $m$-size subset of $(\ip{x_1,v}^2,\ldots,\ip{x_n,v}^2)$ (each element is bounded in $[0,1]$). In the notations of \cref{fact:Bernstein}, it holds that $\mu = \frac1n \sum_{i=1}^n \ip{x_i,v}^2 = a^2/n$, and 
	\begin{align*}
		\sigma^2 
		&= \frac1n \sum_{i=1}^n (\ip{x_i,v}^2 - a^2/n)^2\\
		&= \frac1n \sum_{i=1}^n (\ip{x_i,v}^4 - 2 \ip{x_i,v}^2 a^2/n + a^4/n^2)\\
		&= \frac1n \sum_{i=1}^n \ip{x_i,v}^4 - a^4/n^2\\
		&\leq \frac1n  \sum_{i=1}^n \ip{x_i,v}^2 = a^2/n,
	\end{align*}
	where the last inequality holds since $\ip{x_i,v}^2 \leq 1$.
	
	Let $t =  \sqrt{2 \frac{m}{n} a^2 \ln(1/\beta)}$. By \cref{fact:Bernstein},
	\begin{align*}
		\pr{\size{\norm{\bX' v}_2^2 - \frac{m}{n} a^2} \geq t} 
		&\leq 2\cdot \exp\paren{-\frac{t^2}{\frac{m}{n} a^2 + t/3}}\\
		&= 2\cdot  \exp\paren{-\frac{2 \frac{m}{n} a^2 \ln(1/\beta)}{\frac{m}{n} a^2 +  \sqrt{2 \frac{m}{n} a^2 \ln(1/\beta)}/3}}\\
		&\leq 2\cdot  \exp\paren{-\frac{2 \frac{m}{n} a^2 \ln(1/\beta)}{2 \frac{m}{n} a^2}}\\
		&\leq 2 \beta,
	\end{align*}
	where the penultimate inequality holds by the assumption on $m$.
\end{proof}

\subsubsection{Hypergeometric Distributions}

\begin{definition}\label{def:Hyper}
	For $n \in \N$, $m \in [n]$ and $w\in \set{-n,\dots,n}$, define the \emph{Hypergeometric} probability distribution $\HG_{n,m,w}$ 
	as the output of the following process: Take a vector $v \in \oo^n$ with $\sum_{i=1}^n v_i = w$, choose a uniformly random subset $\cI \subseteq [n]$ of size $m$, and output $\sum_{i \in \cI} v_i$.
\end{definition}

\begin{fact}[\cite{scala2009hypergeometric}, Equations 10 and 14]\label{fact:hyperHoeffding}
	If $\bx \sim \HG_{n,m,w}$ then 
	\begin{align*}
		\forall t\geq 0: \quad \pr{{\abs{\bx-\mu}} \geq t} \leq e^{-\frac{t^2}{2\ell}},
	\end{align*}
	where $\mu = \ex{\bx} = \frac{m  \cdot w}{n}$.
\end{fact}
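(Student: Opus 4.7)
The plan is to follow the classical argument of Hoeffding (1963), reducing sampling without replacement to sampling with replacement. Writing $\bx = \sum_{i=1}^n v_i \cdot \1\{i \in \cI\}$ where $v \in \oo^n$ satisfies $\sum_i v_i = w$ and $\cI$ is a uniformly random $m$-subset of $[n]$, the central tool I would invoke is Hoeffding's comparison theorem: for every convex function $f \colon \bbR \to \bbR$, one has $\Ex[f(\bx)] \leq \Ex[f(\bx')]$, where $\bx'$ is the sum of $m$ i.i.d.\ draws (with replacement) from the empirical distribution of $v$'s coordinates. This reduces the concentration question for $\HG_{n,m,w}$ to the standard concentration question for a sum of $m$ i.i.d.\ bounded random variables.

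I would then instantiate the comparison with the convex function $f(x) = e^{s(x-\mu)}$ for $s>0$. Each summand of $\bx'$ minus its mean $w/n$ is zero-mean and supported in an interval of length at most $2$, so Hoeffding's lemma gives $\Ex[e^{s(v_{\bi_j} - w/n)}] \leq e^{s^2/2}$, and by independence $\Ex[e^{s(\bx' - \mu)}] \leq e^{s^2 m / 2}$. Combined with the comparison inequality this yields $\Ex[e^{s(\bx - \mu)}] \leq e^{s^2 m/2}$, after which a Markov/Chernoff step gives $\Pr[\bx - \mu \geq t] \leq e^{-s t + s^2 m / 2}$; optimizing at $s = t/m$ produces the one-sided bound $\Pr[\bx - \mu \geq t] \leq e^{-t^2/(2m)}$. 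The lower tail follows by applying the same argument to $-\bx$, and the two-sided inequality in the statement matches this bound under the natural reading $\ell = m$ (the symbol $\ell$ is not introduced in the Fact, and I treat it as a typo for $m$, possibly absorbing a factor of $2$ from the union bound into the exponent as in some Serfling-type references).

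The main obstacle, and really the only nontrivial ingredient, is Hoeffding's comparison theorem itself. I would prove it by the standard pairwise-swap argument: condition on the multiset of sampled indices, and observe that for any two positions in the sample the conditional distribution under sampling without replacement is majorized (in the convex order) by the corresponding distribution under sampling with replacement, so that convexity of $f$ gives the inequality on each pair and hence in expectation after averaging over orderings. A cleaner alternative that bypasses this step is to view $\bx$ as a Doob martingale with respect to the sequential revelation of the sample, note that each increment is bounded by $1$ in absolute value, and apply Azuma--Hoeffding directly; this gives the same sub-Gaussian tail with variance proxy at most $m$ and matches the stated bound.
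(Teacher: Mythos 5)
The paper does not prove this statement; it cites it directly from Skala (2009), which is a survey of hypergeometric tail bounds, so there is no in-paper proof to compare against. Your main argument --- reduce to sampling with replacement via Hoeffding's convex-order comparison theorem, apply Hoeffding's lemma to the MGF of each i.i.d.\ summand, and Chernoff-optimize at $s = t/m$ --- is precisely the classical route of Hoeffding (1963) that the cited survey reproduces, so on that front you have reconstructed the right proof. You are also correct that the symbol $\ell$ is undefined in the Fact and that the only reading that makes the statement true (and matches the paper's later invocation with $\HG_{2\ell,\cdot,\cdot}$) is $\ell = m$, the sample size; the factor of $2$ you lose when passing from the one-sided bound $\Pr[\bx - \mu \geq t] \leq e^{-t^2/(2m)}$ to the two-sided version is a genuine sloppiness of the Fact as stated, not of your argument, and is immaterial to the downstream use where it is absorbed into the union-bound constants.

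Your proposed ``cleaner alternative'' via Azuma--Hoeffding, however, contains a real gap: the Doob martingale $M_j = \Ex[\bx \mid i_1,\ldots,i_j]$ obtained by revealing the sample one element at a time does \emph{not} have increments bounded by $1$ in absolute value. A concrete counterexample: take $n=4$, $m=2$, $v = (1,1,-1,-1)$, so $w=0$ and $\mu=0$. Then $M_1 = v_{i_1} + \tfrac{0 - v_{i_1}}{3} = \tfrac{2}{3} v_{i_1}$, and $M_2 = v_{i_1} + v_{i_2}$, so $\abs{M_2 - M_1} = \abs{\tfrac{1}{3} v_{i_1} + v_{i_2}}$, which equals $\tfrac{4}{3} > 1$ when $v_{i_1} = v_{i_2}$. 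Getting the sharp sub-Gaussian constant out of a martingale argument for sampling without replacement requires a more careful, $j$-dependent bound on the increments (or a different conditioning scheme), and does not fall out of a naive ``each increment is at most $1$'' assertion. Since this was offered only as an aside, it does not undermine your primary derivation, but as written that paragraph would not survive scrutiny.
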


%
%
%
%

\subsubsection{Sub-Exponential Distributions}

\begin{definition}[Sub-Exponential Random Variable and Norm]\label{def:sub-exp}
	We say that a random variable $\bx \in \bbR$ is \emph{sub-exponential} if there exists $t > 0$ such that $ \ex{e^{\size{\bx}/t}} \leq 2$. 
	The \emph{sub-exponential} norm of $\bx$, denoted $\norm{\bx}_{\psi_1}$, is
	\begin{align*}
		\norm{\bx}_{\psi_1} = \inf\set{t > 0 \colon \ex{e^{\size{\bx}/t}} \leq 2}.
	\end{align*} 
\end{definition}

\begin{fact}[Bernstein's inequality (Theorem 2.8.1 in \cite{Vershynin_2018})]\label{fact:sub-exp-concent}
	Let $\bx_1,\ldots,\bx_n$ be independent, mean zero, sub-exponential random variables. Then
	\begin{align*}
		\forall t \geq 0: \quad \pr{\size{\sum_{i=1}^n \bx_i} \geq t} \leq 2\exp\paren{-\Omega\paren{\min\paren{\frac{t^2}{\sum_{i=1}^n \norm{\bx_i}_{\psi_1}^2}, \: \frac{t}{\max_i \norm{\bx_i}_{\psi_1}}}}}.
	\end{align*}
\end{fact}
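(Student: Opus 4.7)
The plan is to prove Bernstein's inequality for sub-exponential sums via the classical Cram\'er--Chernoff moment generating function (MGF) approach, which for sub-exponential random variables naturally produces a bound that interpolates between sub-Gaussian behavior for moderate $t$ and pure exponential behavior for large $t$.

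First I would establish the key single-variable MGF bound: there exist absolute constants $c_1, C_1 > 0$ such that for every mean-zero sub-exponential random variable $\bx$ with $K := \norm{\bx}_{\psi_1}$ and every $\lambda \in \R$ with $|\lambda| \leq c_1/K$,
\[
  \ex{e^{\lambda \bx}} \leq \exp\paren{C_1 \lambda^2 K^2}.
\]
This is the crux. From $\ex{e^{|\bx|/K}} \leq 2$ (the definition) one can extract the moment growth $\ex{|\bx|^p}^{1/p} \leq C K p$ for every $p \geq 1$ by expanding the exponential termwise and applying Markov. Then I would write $\ex{e^{\lambda \bx}} = 1 + \lambda \ex{\bx} + \sum_{p \geq 2} \lambda^p \ex{\bx^p}/p!$, kill the linear term using the mean-zero assumption, and bound the tail using $|\ex{\bx^p}| \leq (CKp)^p$. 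Combining with $p^p \leq e^p\, p!$ converts the series into a geometric one in $|\lambda| K$; for $|\lambda| K$ sufficiently small, the sum is of order $\lambda^2 K^2$, and $1+O(\lambda^2 K^2) \leq \exp(O(\lambda^2 K^2))$ gives the claim.

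Next I would apply the standard Chernoff trick. For any $\lambda \in (0, c_1/\max_i K_i]$, with $K_i := \norm{\bx_i}_{\psi_1}$, independence yields
\[
  \pr{\sum\nolimits_i \bx_i \geq t} \;\leq\; e^{-\lambda t} \prod_i \ex{e^{\lambda \bx_i}} \;\leq\; \exp\paren{-\lambda t + C_1 \lambda^2 \sum\nolimits_i K_i^2}.
\]
The unconstrained minimizer of the exponent is $\lambda^{\ast} = t/(2 C_1 \sum_i K_i^2)$, which produces two regimes. (a) If $\lambda^{\ast} \leq c_1/\max_i K_i$ (the \emph{sub-Gaussian regime}, i.e., $t$ is moderate), take $\lambda = \lambda^{\ast}$ to obtain a bound $\exp(-t^2/(4 C_1 \sum_i K_i^2))$. (b) Otherwise (the \emph{exponential regime}, where $t$ is large), set $\lambda = c_1/\max_i K_i$; the linear term dominates the quadratic one, giving a bound $\exp(-c_1 t/(2 \max_i K_i))$ up to absolute constants. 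Together the two cases yield $\exp(-\Omega(\min(t^2/\sum_i K_i^2,\, t/\max_i K_i)))$. Finally, applying the same argument to $-\bx_i$ (noting $\norm{-\bx_i}_{\psi_1} = \norm{\bx_i}_{\psi_1}$) and union-bounding over the two tails produces the factor of~$2$.

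The main obstacle will be executing the MGF bound cleanly, specifically translating the definition $\ex{e^{|\bx|/K}} \leq 2$ into the quadratic estimate $\ex{e^{\lambda \bx}} \leq \exp(C_1 \lambda^2 K^2)$ without losing control of the absolute constants and without forgetting the restriction $|\lambda| \leq c_1/K$. A convenient workaround is to route through the equivalent moment characterization $\ex{|\bx|^p}^{1/p} \leq c K p$, which sidesteps the direct MGF manipulation; since the final statement only promises $\Omega(\cdot)$ in the exponent, all explicit constants can be absorbed at the end.
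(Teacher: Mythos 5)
Your proposal is correct, and it is essentially the proof of Theorem~2.8.1 in Vershynin's book, which is precisely what the paper cites here: the paper states this as an imported \emph{Fact} with a bibliographic reference and does not reprove it, so there is no internal proof to compare against. The three ingredients you identify — the single-variable MGF bound $\ex{e^{\lambda\bx}}\leq\exp(C_1\lambda^2K^2)$ valid for $\size{\lambda}\leq c_1/K$ (obtained via the moment characterization $\ex{\size{\bx}^p}^{1/p}\leq CKp$), the Chernoff bound with independence turning the product of MGFs into $\exp(-\lambda t+C_1\lambda^2\sum_iK_i^2)$, and the two-regime optimization over $\lambda\in(0,c_1/\max_iK_i]$ followed by symmetrization for the two-sided bound — are exactly the steps of that textbook proof, and your case split (using $\lambda^*$ when it is feasible, else clamping to $c_1/\max_iK_i$ and noting the quadratic term is then at most half the linear term) is carried out correctly.
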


\subsection{Nets}

\begin{definition}[$\gamma$-Net]\label{def:net}
	Let $\cT$ be a subspace of $\bbR^d$. Consider a subset $\cK \subset T$ and let $\gamma > 0$. A subset $\cN \subseteq \cK$ is called an \emph{$\gamma$-net} of $\cK$ if  every point in $\cK$ is within distance $\gamma$ of some point of $\cN$, i.e., 
	\begin{align*}
		\forall x \in \cK \: \exists y \in \cN \: : \: \norm{x - y}_2 \leq \gamma
	\end{align*}
\end{definition}

\begin{fact}[Extension of Corollary 4.2.13 in \cite{Vershynin_2018}]\label{fact:net}
	If $E$ is a subspace with $\dim(E) = k$, then there exists an $\gamma$-net of size $\paren{3/\gamma}^k$ to the unit sphere in $E$ (i.e., to $E \cap \cS_d$).
\end{fact}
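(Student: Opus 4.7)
The plan is to use the standard volume/packing argument inside the $k$-dimensional subspace $E$. First I would construct a maximal $\gamma$-separated subset $\cN \subseteq E \cap \cS_d$: that is, a subset such that $\norm{x - y}_2 > \gamma$ for every distinct $x, y \in \cN$, chosen so that no further point of $E \cap \cS_d$ can be added while preserving this property. Such a maximal set exists by a simple greedy/Zorn argument. By maximality, for every $x \in E \cap \cS_d$ there must be some $y \in \cN$ with $\norm{x-y}_2 \leq \gamma$ (otherwise we could add $x$ to $\cN$), so $\cN$ is automatically a $\gamma$-net of $E \cap \cS_d$.

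Next I would bound $\size{\cN}$ by comparing Euclidean volumes inside $E$, which is isometric to $\bbR^k$. The open balls of radius $\gamma/2$ centered at the points of $\cN$ are pairwise disjoint, because the centers are more than $\gamma$ apart. Each such ball is contained in the ball of radius $1 + \gamma/2$ around the origin in $E$, because the centers lie on the unit sphere. Taking the $k$-dimensional Lebesgue measure and letting $V_k$ denote the volume of the unit ball in $\bbR^k$, disjointness and containment yield
\begin{equation*}
\size{\cN} \cdot V_k \cdot (\gamma/2)^k \;\leq\; V_k \cdot (1 + \gamma/2)^k,
\end{equation*}
so $\size{\cN} \leq (1 + 2/\gamma)^k$.

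Finally, I would conclude by a mild simplification of this bound. For $\gamma \leq 1$ we have $(1 + 2/\gamma) \leq 3/\gamma$, giving $\size{\cN} \leq (3/\gamma)^k$ as claimed. For $\gamma > 1$ the statement is trivial: any single unit vector forms a $\gamma$-net, and $(3/\gamma)^k \geq 1$ whenever $\gamma \leq 3$, while for $\gamma > 3$ the bound still holds after replacing $\cN$ by a one-point set together with the convention that the empty product-style bound is interpreted loosely. There is no real obstacle in the argument; the only care needed is the $\gamma \leq 1$ versus $\gamma > 1$ case split when passing from $(1 + 2/\gamma)^k$ to the cleaner $(3/\gamma)^k$, and the observation that the argument is intrinsic to $E$ (so the ambient dimension $d$ never enters the bound).
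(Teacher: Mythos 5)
Your volume/packing argument (maximal $\gamma$-separated subset, disjoint $\gamma/2$-balls inside a $(1+\gamma/2)$-ball, compare $k$-dimensional Lebesgue volumes) is exactly the standard proof of Vershynin's Corollary 4.2.13, which the paper cites as a Fact without reproving, so your core argument is correct and matches the intended route. The one soft spot is your disposal of $\gamma > 1$: there the stated bound can actually fail --- for $k=1$ and $\gamma = 1.8$ the unit sphere in a one-dimensional subspace is the two-point set $\set{v,-v}$ at mutual distance $2 > \gamma$, so any $\gamma$-net requires at least $2$ points while $(3/\gamma)^1 \approx 1.67 < 2$ --- but the fact carries an implicit restriction to $\gamma \leq 1$, which is the only regime in which the paper ever invokes it and which your volume argument handles cleanly.
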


\subsection{Projections}

Recall that $\cW_{d,k}$ denotes the set of all $d \times d$ rank-$k$ projection matrices.
For a matrix $A \in \bbR^d$, we denote that $\Pi_A$ the projection matrix onto the subspace spawned by the columns of $A$ (in case $A$ is a unitary matrix, $\Pi_A = AA^T$).

\begin{fact}[Theorem 1 and Lemma 1 in \cite{CZ16}]\label{fact:pert_bound}
	Let $X, Y, Z \in \bbR^{n \times d}$ such that $X = Y + Z$.
	Let $[U \: U_{\perp}] \Sigma [V \: V_{\perp}]^T$ be the SVD of $Y$, and let $[\hU \: \hU_{\perp}] \hSigma [\hV \: \hV_{\perp}]^T$ be the SVD of $X$, 
	where $U,V,\hU,\hV$ denote the first $k$ columns of $[U \: U_{\perp}], [V \: V_{\perp}], [\hU \: \hU_{\perp}],  [\hV \: \hV_{\perp}]$ (respectively).
	Let $Z_{12} = \Pi_{U} Z \Pi_{V_{\perp}}$ and $Z_{21} = \Pi_{U_{\perp}} Z \Pi_V$.  In addition, let $z_{ij} = \norm{Z_{ij}}$, let $\alpha = \sigma_{\min}(U^T X V)$ (i.e., the smallest singular value larger than $0$), and $\beta = \norm{U_{\perp}^T X V_{\perp}}$. If $\alpha^2 > \beta^2 + \min\set{z_{12}^2,z_{21}^2}$, then
	\begin{align*}
		\norm{\Pi_{V} - \Pi_{\hV}} \leq 2 \cdot  \frac{\alpha z_{12} + \beta z_{21}}{\alpha^2 - \beta^2 - \min\set{z_{12}^2,z_{21}^2}}
	\end{align*}
	and
	\begin{align*}
		\norm{\Pi_{V} - \Pi_{\hV}}_F \leq \sqrt{2}\cdot  \frac{\alpha \norm{Z_{12}}_F + \beta \norm{Z_{21}}_F}{\alpha^2 - \beta^2 - \min\set{z_{12}^2,z_{21}^2}}.
	\end{align*}
\end{fact}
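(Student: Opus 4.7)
The plan is to prove this as a refined sin-theta theorem in the Davis-Kahan-Wedin tradition, with the key refinement being to separate the contributions of the two off-diagonal blocks $Z_{12}$ and $Z_{21}$ rather than bundling them into a single $\norm{Z}$ estimate. The first step is to pass from projections to principal angles via the standard identities $\norm{\Pi_V - \Pi_{\hV}} = \norm{V_\perp^T \hV}$ and $\norm{\Pi_V - \Pi_{\hV}}_F = \sqrt{2}\, \norm{V_\perp^T \hV}_F$. Writing $\hV = V P_1 + V_\perp P_2$ with $P_1 = V^T \hV$ and $P_2 = V_\perp^T \hV$, the goal becomes to bound $\norm{P_2}$ and $\norm{P_2}_F$.

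The structural heart of the argument is the $2 \times 2$ block decomposition of $X$ in the bases $[U\: U_\perp]$ and $[V\: V_\perp]$: setting $A = U^T X V$, $D = U_\perp^T X V_\perp$, $B = U^T X V_\perp$, $C = U_\perp^T X V$, the SVD of $Y$ forces $U^T Y V_\perp = 0$ and $U_\perp^T Y V = 0$, so $B = U^T Z V_\perp$ and $C = U_\perp^T Z V$, giving $\norm{B} = z_{12}$, $\norm{C} = z_{21}$, while $\sigma_{\min}(A) = \alpha$ and $\norm{D} = \beta$ by assumption. Imposing $X^T X \hV = \hV \hSigma^2$ and reading off the $V_\perp$-block yields the Sylvester equation
\begin{align*}
	(B^T B + D^T D) P_2 - P_2 \hSigma^2 = -(B^T A + D^T C) P_1.
\end{align*}
Cauchy interlacing gives $\sigma_{\min}(\hSigma^2) \geq \alpha^2$, while $\norm{B^T B + D^T D} \leq z_{12}^2 + \beta^2$, so the Sylvester operator has spectral separation at least $\alpha^2 - \beta^2 - z_{12}^2$. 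The symmetric argument applied to $X X^T$ (bounding $\Pi_U - \Pi_{\hU}$ first) yields the same bound with $z_{21}^2$ in place of $z_{12}^2$, hence the $\min\set{z_{12}^2, z_{21}^2}$ in the final denominator.

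To turn the gap into the bound one estimates the right-hand side: naively $\norm{B^T A + D^T C} \leq \norm{A} z_{12} + \beta z_{21}$, but the rate-optimal bound requires replacing $\norm{A}$ by $\alpha$. This is achieved by coupling with the dual identity $X \hV = \hU \hSigma$ (which expresses $A P_1 + B P_2 = Q_1 \hSigma$ for $Q_1 = U^T \hU$), letting one substitute out the $\norm{A}$ factor via $\hSigma$ whose smallest singular value is $\alpha$. The same estimate in Frobenius norm gives $\alpha \norm{Z_{12}}_F + \beta \norm{Z_{21}}_F$, since multiplying by matrices of spectral norms $\alpha$ and $\beta$ scales Frobenius norm by exactly those factors. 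The constants $2$ (resp.\ $\sqrt{2}$) then come from the projection-to-sin-theta identity together with $\norm{P_1^{-1}} \leq 2$, which holds under the gap hypothesis $\alpha^2 > \beta^2 + \min\set{z_{12}^2, z_{21}^2}$. The main obstacle I anticipate is precisely this replacement of $\norm{A}$ by $\alpha$: the standard Davis-Kahan argument does not do this, and achieving it --- together with the matching $\beta z_{21}$ term on the same footing --- is the heart of the Cai-Zhang refinement and the most delicate part of the bookkeeping.
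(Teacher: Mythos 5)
This ``Fact'' is not proved in the paper at all --- it is imported verbatim as Theorem~1 and Lemma~1 of Cai--Zhang, so there is no internal proof to compare your sketch against; I can only assess the sketch on its own terms. Structurally you have the right skeleton: pass from $\norm{\Pi_V - \Pi_{\hV}}$ (resp.\ $\sqrt{2}$ times the Frobenius version) to $\norm{V_\perp^T \hV}$, decompose $X$ into the $2\times 2$ blocks $A,B,C,D$ in the $[U\:U_\perp],[V\:V_\perp]$ bases, note that the SVD of $Y$ forces $B=U^TZV_\perp$ and $C=U_\perp^TZV$, and extract a Sylvester-type relation for $P_2 = V_\perp^T\hV$ from the eigenvector equation of $X^TX$. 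These are indeed the structural ingredients of the Cai--Zhang argument.

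The gap is exactly where you flag it, and it cannot be deferred: the Sylvester equation $(B^TB+D^TD)P_2 - P_2\hSigma^2 = -(B^TA+D^TC)P_1$ together with the standard gap estimate yields a numerator $\norm{A}\,z_{12} + \beta z_{21}$ with $\norm{A}=\sigma_{\max}(U^TXV)$, whereas the claimed bound has $\alpha=\sigma_{\min}(U^TXV)$, which is strictly smaller. Your one sentence about ``coupling with the dual identity $X\hV = \hU\hSigma$'' names the right object but does not close the gap: substituting $X\hV=\hU\hSigma$ into the Sylvester relation introduces $\hSigma$, whose \emph{largest} singular value is not controlled by $\alpha$, so $\norm{A}$ does not simply become $\alpha$. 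The actual mechanism in Cai--Zhang plays the two dual relations $X\hV=\hU\hSigma$ and $X^T\hU = \hV\hSigma$ against each other to jointly control $P_2 = V_\perp^T\hV$ and $Q_2 = U_\perp^T\hU$, which is a different manipulation from the Sylvester gap estimate, not a patch on it. Two further unsupported steps: the bound $\norm{P_1^{-1}}\le 2$ is asserted ``under the gap hypothesis'' but not derived --- one needs an a priori bound on $\sin\theta_{\max}$ even to know $P_1 = V^T\hV$ is invertible --- and the claim that the $XX^T$ argument ``yields the same bound with $z_{21}^2$ in place of $z_{12}^2$'' is not a proof of the $\min\set{z_{12}^2,z_{21}^2}$ denominator: that symmetric argument bounds $\norm{\Pi_U-\Pi_{\hU}}$, and transferring it to a bound on $\norm{\Pi_V-\Pi_{\hV}}$ requires an explicit step you have not supplied.
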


\begin{proposition}\label{cor:proj-diff}
	Let $X,Y, Z, V, \hV$ as in \cref{fact:pert_bound} such that $Y$ has rank-$k$ and $Span(Y^T)$ is orthogonal to $Span(Z^T)$. If $\sigma_k(Y)^2 \geq 2 \norm{Z}^2$, then
	\begin{enumerate}
		\item $\norm{\Pi_{V} - \Pi_{\hV}} \leq 4 \cdot \frac{ \norm{Z}}{\sigma_k(Y)}$, and\label{item:proj-diff:S-norm}
		\item $\norm{\Pi_{V} - \Pi_{\hV}}_F \leq 2\sqrt{2} \cdot \frac{ \norm{Z}_F}{\sigma_k(Y)}$.\label{item:proj-diff:F-norm}
	\end{enumerate}
\end{proposition}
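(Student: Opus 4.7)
The plan is to derive both bounds as direct consequences of \cref{fact:pert_bound}, where the orthogonality assumption $Span(Y^T)\perp Span(Z^T)$ collapses several terms to zero and the remaining quantities admit clean estimates.

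First I would translate the orthogonality hypothesis into statements about the singular vectors. Since $Y$ has rank exactly $k$, its top-$k$ right singular vectors $V$ span $Span(Y^T)$, so $Span(Z^T)\subseteq Span(V_\perp)$. In particular every row of $Z$ is orthogonal to every column of $V$, giving $ZV=0$, hence $\Pi_U Z\Pi_V = 0$. In the notation of \cref{fact:pert_bound} this yields $Z_{21}=\Pi_{U_\perp}Z\Pi_V=0$ and therefore $z_{21}=0$ and $\|Z_{21}\|_F=0$. On the other hand $Z_{12}=\Pi_U Z \Pi_{V_\perp}$ satisfies $z_{12}=\|Z_{12}\|\leq \|Z\|$ and $\|Z_{12}\|_F\leq \|Z\|_F$ by the sub-multiplicativity of the relevant norms under orthogonal projection.

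Next I would compute $\alpha$ and $\beta$. Using $X=Y+Z$ together with $ZV=0$, I get $U^TXV=U^TYV=\mathrm{diag}(\sigma_1(Y),\dots,\sigma_k(Y))$, so $\alpha=\sigma_{\min}(U^TXV)=\sigma_k(Y)$. For $\beta$, the identity $Y=U\Sigma V^T$ with $\Sigma\in\mathbb{R}^{k\times k}$ gives $U_\perp^T Y V_\perp = 0$, so $U_\perp^T X V_\perp = U_\perp^T Z V_\perp$ and $\beta \leq \|Z\|$. Combining these with the hypothesis $\sigma_k(Y)^2\geq 2\|Z\|^2$ gives
\begin{align*}
\alpha^2-\beta^2-\min\{z_{12}^2,z_{21}^2\} \;=\; \sigma_k(Y)^2-\beta^2 \;\geq\; \sigma_k(Y)^2-\|Z\|^2 \;\geq\; \tfrac{1}{2}\sigma_k(Y)^2,
\end{align*}
which in particular verifies the positivity condition required to invoke \cref{fact:pert_bound}.

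Plugging into the spectral-norm bound of \cref{fact:pert_bound} and using $z_{21}=0$ gives
\begin{align*}
\|\Pi_V-\Pi_{\hV}\| \;\leq\; 2\cdot\frac{\alpha\, z_{12}+\beta\, z_{21}}{\alpha^2-\beta^2-\min\{z_{12}^2,z_{21}^2\}} \;\leq\; 2\cdot\frac{\sigma_k(Y)\cdot\|Z\|}{\tfrac{1}{2}\sigma_k(Y)^2} \;=\; \frac{4\|Z\|}{\sigma_k(Y)},
\end{align*}
establishing \cref{item:proj-diff:S-norm}. The Frobenius bound follows identically, using $\|Z_{21}\|_F=0$ and $\|Z_{12}\|_F\leq \|Z\|_F$:
\begin{align*}
\|\Pi_V-\Pi_{\hV}\|_F \;\leq\; \sqrt{2}\cdot\frac{\sigma_k(Y)\cdot\|Z\|_F}{\tfrac{1}{2}\sigma_k(Y)^2} \;=\; \frac{2\sqrt{2}\,\|Z\|_F}{\sigma_k(Y)}.
\end{align*}
The only real work is the bookkeeping above; there is no genuine obstacle, since once the orthogonality is used to kill $Z_{21}$ and reduce $\alpha$ to $\sigma_k(Y)$, \cref{fact:pert_bound} delivers both inequalities immediately.
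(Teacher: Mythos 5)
Your proof is correct and follows essentially the same route as the paper: use the orthogonality to deduce $ZV=0$, hence $Z_{21}=0$, identify $\alpha=\sigma_k(Y)$, bound $\beta\le\|Z\|$ and $z_{12}\le\|Z\|$ (resp.\ $\|Z_{12}\|_F\le\|Z\|_F$), and plug into \cref{fact:pert_bound}. Your write-up is in fact slightly more explicit than the paper's, which stops at ``immediately follows by applying \cref{fact:pert_bound}'' where you carry out the final denominator estimate $\alpha^2-\beta^2\ge\tfrac12\sigma_k(Y)^2$ and the resulting arithmetic.
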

\begin{proof}
	Note that $Span(V) = Span(Y^T)$.
	Therefore $Z V = 0$, which implies that $Z_{21} = 	\Pi_{U_{\perp}} Z \Pi_V = 0$.
	Compute
	\begin{align*}
		\alpha 
		&= \sigma_{\min}(U^T X V) \\
		&= \sigma_{\min}(U^T Y V + U^T \underbrace{Z V}_0)\\
		&= \sigma_k(Y).
	\end{align*}
	\begin{align*}
		\beta
		&= \norm{U_{\perp}^T X V_{\perp}}\\
		&= \norm{U_{\perp}^T \underbrace{Y V_{\perp}}_0 + U_{\perp}^T Z V_{\perp}}\\
		&\leq \norm{U_{\perp}} \cdot \norm{Z} \cdot \norm{V_{\perp}}\\
		&\leq \norm{Z}.
	\end{align*}
	\begin{align*}
		\norm{Z_{12}} = \norm{\Pi_U \cdot Z \cdot \Pi_{V_{\perp}}} \leq \norm{Z},
	\end{align*}
	\begin{align*}
		\norm{Z_{12}}_F \leq  \norm{\Pi_U \cdot Z \cdot \Pi_{V_{\perp}}}_F \leq \norm{Z}_F,
	\end{align*}
	where the last inequalities in the above two equations hold since for any matrix $A$ and projection matrices $\Pi_1, \Pi_2$ it holds that $\norm{\Pi_1 A \Pi_2} \leq \norm{A}$ and $\norm{\Pi_1 A \Pi_2}_F \leq \norm{A}_F$.
	The proof now immediately follows by applying \cref{fact:pert_bound}.
\end{proof}

\begin{proposition}\label{prop:P-Pprime}
	Let $P \in \bbR^{n \times d}$ be a rank-$k$ matrix and let $P'\in \bbR^{n \times d}$ such that $\norm{P-P'}_{F} \leq \alpha$.
	Let $\Pi$ be the projection matrix onto $Span(P^T)$, and let $\Pi'$ be the projection onto the top-$k$ rows subspace of $P'$.
	If $\sigma_k(P) \geq 2\alpha$, then
	\begin{align*}
		\norm{\Pi - \Pi'}_F \leq 2\sqrt{2} \cdot \frac{\alpha}{\sigma_k(P) - \alpha}.
	\end{align*}
\end{proposition}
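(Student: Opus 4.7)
The plan is to reduce the proposition to Proposition~\ref{cor:proj-diff} by decomposing $P'$ into a component living in $Span(P^T)$ and a component orthogonal to it. Concretely, I will set $Y := P'\Pi$ and $Z := P' - Y = P'(\bbI - \Pi)$. Using the fact that $P\Pi = P$ (since the rows of $P$ lie in $Span(P^T)$) and hence $P(\bbI-\Pi)=0$, these can be rewritten as $Y = P + (P'-P)\Pi$ and $Z = (P'-P)(\bbI-\Pi)$. The rows of $Y$ lie in $Span(P^T)$ and the rows of $Z$ lie in the orthogonal complement of $Span(P^T)$, so once $Y$ is shown to be rank-$k$ with $Span(Y^T) = Span(P^T)$, we will have both the structural hypotheses of Proposition~\ref{cor:proj-diff} (namely rank-$k$ target matrix and orthogonal perturbation).

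Next I would establish the two quantitative bounds feeding into Proposition~\ref{cor:proj-diff}. Weyl's inequality applied to $Y = P + (P'-P)\Pi$, together with $\norm{(P'-P)\Pi} \leq \norm{P'-P} \leq \norm{P'-P}_F \leq \alpha$, gives $\sigma_k(Y) \geq \sigma_k(P) - \alpha \geq \alpha > 0$ under the hypothesis $\sigma_k(P) \geq 2\alpha$. This guarantees that $Y$ has rank exactly $k$, so $Span(Y^T) = Span(P^T)$ and the projection onto the top-$k$ row subspace of $Y$ is precisely $\Pi$. The Frobenius norm bound $\norm{Z}_F = \norm{(P'-P)(\bbI-\Pi)}_F \leq \norm{P'-P}_F \leq \alpha$ follows because projection onto a subspace is a Frobenius-norm contraction.

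With these ingredients in place, $P' = Y + Z$ fits the setting of Proposition~\ref{cor:proj-diff}, whose Frobenius-norm part (\ref{item:proj-diff:F-norm}) yields
\[
\norm{\Pi - \Pi'}_F \;\leq\; 2\sqrt{2}\cdot \frac{\norm{Z}_F}{\sigma_k(Y)} \;\leq\; 2\sqrt{2}\cdot \frac{\alpha}{\sigma_k(P) - \alpha},
\]
which is exactly the claimed bound. The main non-routine step is identifying the ``oblique'' decomposition $(Y,Z) = (P'\Pi,\, P'(\bbI-\Pi))$, which absorbs the parallel part of $P'-P$ into $Y$ and leaves a genuinely orthogonal perturbation $Z$; once this is done, everything is Weyl plus a direct invocation of Proposition~\ref{cor:proj-diff}. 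The one small subtlety I would double-check is the perturbation-size side condition $\sigma_k(Y)^2 \geq 2\norm{Z}^2$ of Proposition~\ref{cor:proj-diff}: under $\sigma_k(P) \geq 2\alpha$ the bounds $\sigma_k(Y) \geq \sigma_k(P)-\alpha \geq \alpha$ and $\norm{Z} \leq \norm{Z}_F \leq \alpha$ give it up to constants, and any slight shortfall can be absorbed either by a mild strengthening of the hypothesis or by invoking the underlying \cref{fact:pert_bound} directly with $z_{21}=0$ and $\norm{Z_{21}}_F = 0$.
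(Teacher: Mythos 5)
Your proof takes essentially the same route as the paper: the paper decomposes $E = P' - P$ into its row-components parallel and orthogonal to $Span(P^T)$ --- precisely your $E\Pi$ and $E(\bbI-\Pi)$ --- sets $Y = P + E\Pi$, and applies \cref{cor:proj-diff}(\ref{item:proj-diff:F-norm}) to $(P', Y, E_{\bar{P}})$, exactly as you do. The side-condition concern you flag at the end is legitimate (the hypothesis $\sigma_k(P)\geq 2\alpha$ gives only $\sigma_k(Y)\geq\alpha$ and $\norm{Z}\leq\alpha$, which falls short of $\sigma_k(Y)^2\geq 2\norm{Z}^2$ at the boundary), but the paper's own proof glosses over the same point, so here you have not deviated from the paper but rather caught a small gap that it shares.
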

\begin{proof}
	Let $E = P' - P$, and divide $E$ into $E = E_{P} + E_{\bar{P}}$ where the rows of $E_P$ belong to the rows subspace of $P$ and the rows of $E_{\bar{P}}$ are orthogonal to it.
	Let $Y = P + E_{P}$, so we can write $P' = Y + E_{\bar{P}}$. Note that $\norm{E_{\bar{P}}}_F \leq \alpha$, and $\sigma_k(Y) \geq \sigma_k(P) - \norm{E_{P}} \geq \sigma_k(P) - \alpha$.
	The proof now follows by applying \cref{cor:proj-diff}(\ref{item:proj-diff:F-norm}) on $P', Y, E_{\bar{P}}$.
\end{proof}

\begin{fact}[Implied by Corollary 4.6 in \cite{SS21}]\label{fact:SS-Pips}
	Let $\Pi, \Pi_1,\ldots,\Pi_t \in \cW_{d,k}$ s.t. for all $j \in [t]$, $\norm{\Pi - \Pi_j} \leq \alpha$.   
	Let $\bp_1,\ldots,\bp_q$ be i.i.d. random vectors in $\bbR^d$ from $\cN(\vec{0}, \bbI_{d\times d})$. Then 
	\begin{align*}
		\pr{\forall i\in [q], j \in [t], \: \norm{\paren{\Pi - \Pi_j}p_i}_2 \leq O\paren{\alpha\paren{\sqrt{k} + \sqrt{\ln(qt)}}}} \geq 0.95.
	\end{align*}
\end{fact}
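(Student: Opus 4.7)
The plan is to reduce the bound to a chi-squared tail estimate via the rank-plus-spectral structure of $\Pi - \Pi_j$, and then apply a union bound. The main observation is that for every $j \in [t]$, the matrix $M_j := \Pi - \Pi_j$ is the difference of two rank-$k$ projections, hence $\mathrm{rank}(M_j) \leq 2k$, and by assumption $\|M_j\| \leq \alpha$. So only $2k$ directions can affect $M_j p_i$, and along each of them $M_j$ scales by at most $\alpha$.

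Concretely, first I would take the SVD $M_j = U_j \Sigma_j V_j^T$, where $\Sigma_j$ has at most $2k$ nonzero singular values, each bounded by $\alpha$. Since $U_j$ is unitary,
\begin{align*}
\|M_j \bp_i\|_2 \;=\; \|\Sigma_j V_j^T \bp_i\|_2.
\end{align*}
By rotational invariance of the standard Gaussian, $V_j^T \bp_i \sim \cN(\vec 0, \bbI_{d \times d})$, so its coordinates are i.i.d.\ $\cN(0,1)$. Writing them as $\bg_1,\ldots,\bg_d$ and letting $s_1,\ldots,s_d$ be the diagonal of $\Sigma_j$ (at most $2k$ of which are nonzero, each at most $\alpha$), I get
\begin{align*}
\|M_j \bp_i\|_2^2 \;=\; \sum_{\ell=1}^{d} s_\ell^2 \,\bg_\ell^2 \;\leq\; \alpha^2 \cdot \bY_{i,j},
\end{align*}
where $\bY_{i,j}$ is a sum of at most $2k$ i.i.d.\ $\chi^2_1$ random variables, i.e., stochastically dominated by $\chi^2_{2k}$.

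Next I would apply the standard Laurent--Massart tail bound: for every $s\geq 0$,
\begin{align*}
\pr{\bY_{i,j} \geq 2k + 2\sqrt{2k\, s} + 2s} \;\leq\; e^{-s}.
\end{align*}
Setting $s = \ln(40 qt)$ and taking a union bound over all $i \in [q]$, $j \in [t]$ (a total of $qt$ events), with probability at least $1 - qt \cdot e^{-s} = 1 - 1/40 \geq 0.95$, for all $i,j$ simultaneously $\bY_{i,j} \leq O(k + \ln(qt))$. Taking square roots and multiplying by $\alpha$ yields
\begin{align*}
\|M_j \bp_i\|_2 \;\leq\; \alpha \sqrt{O(k + \ln(qt))} \;=\; O\!\paren{\alpha\paren{\sqrt{k} + \sqrt{\ln(qt)}}},
\end{align*}
as desired.

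No step is truly hard here; the only thing that requires a little care is noting that $M_j$, although not a projection, really does have rank at most $2k$ (it is a difference of two rank-$k$ projections, so its column space lies in the span of the two underlying $k$-dimensional subspaces), and that the spectral-norm bound $\|M_j\|\leq \alpha$ then uniformly caps all its singular values. Once this is in hand, the computation is purely a Gaussian-in-a-low-dimensional-subspace concentration inequality followed by a union bound.
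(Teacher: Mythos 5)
Your proof is correct, and it is essentially the standard argument one would expect to underlie the cited Corollary 4.6 of \cite{SS21}. Note that the paper itself offers no proof here—the fact is stated as ``Implied by Corollary 4.6 in \cite{SS21}''—so you have actually supplied a clean, self-contained derivation where the paper relies on a citation. The two key observations (that $\Pi - \Pi_j$ is a difference of rank-$k$ projections, hence has rank $\leq 2k$ and all nonzero singular values bounded by $\|\Pi - \Pi_j\| \leq \alpha$, and that one can then reduce to a $\chi^2_{2k}$ tail plus a union bound over the $qt$ events) are exactly the right ones, the Laurent--Massart inequality is applied correctly, and the constants work out: $s = \ln(40qt)$ gives failure probability $1/40 = 0.025 \leq 0.05$. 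One small remark for completeness: the union bound is over events indexed by $(i,j)$ that share the underlying Gaussian vectors $\bp_i$, but this poses no problem since a union bound only needs the marginal tail bound for each pair, which your rotational-invariance argument provides.
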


\begin{fact}[Implied by the proof of Lemma 4.9 in  \cite{SS21}]\label{fact:SS-pi-span}
	Let $\Pi \in \cW_{d,k}$ and let $\bp_1,\ldots,\bp_q$ be i.i.d. random vectors in $\bbR^d$ from $\cN(\vec{0}, I_{d\times d})$. Let $\bP$ be the $d \times q$ matrix whose columns are $\Pi \bp_1,\ldots,\Pi \bp_q$. If $q \geq c\cdot k$ for some large enough constant $c$, then w.p. $0.95$ it hold that $\sigma_k(\bP) \geq \Omega(\sqrt{k})$ (and in particular, $Span(\bP) = Span(\Pi)$).
\end{fact}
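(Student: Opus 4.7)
The plan is to reduce $\sigma_k(\bP)$ to the smallest singular value of a square-ish Gaussian random matrix and then invoke a standard concentration bound. Since $\Pi \in \cW_{d,k}$ is a rank-$k$ projection matrix, we can write $\Pi = U U^T$ where $U \in \bbR^{d \times k}$ has orthonormal columns spanning $Span(\Pi)$. Then each column of $\bP$ satisfies $\Pi \bp_i = U(U^T \bp_i)$, and by rotational invariance of the standard Gaussian combined with $U^T U = \bbI_k$, the random vectors $U^T \bp_1, \ldots, U^T \bp_q$ are i.i.d.\ samples from $\cN(\vec{0}, \bbI_{k \times k})$. Therefore
\begin{align*}
\bP \;=\; U \bG, \qquad \text{where } \bG \in \bbR^{k \times q} \text{ has i.i.d.\ standard Gaussian entries.}
\end{align*}
Since $U$ is an isometry from $\bbR^k$ into $\bbR^d$, i.e.\ $\norm{U w}_2 = \norm{w}_2$ for every $w \in \bbR^k$, for any $v \in \bbR^q$ we have $\norm{\bP v}_2 = \norm{U \bG v}_2 = \norm{\bG v}_2$. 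By the min-max characterization (\cref{fact:min-max}) this yields $\sigma_k(\bP) = \sigma_k(\bG)$.

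Next, I would invoke the Davidson--Szarek bound for the smallest singular value of a Gaussian matrix with $q \geq k$ rows versus columns: for every $t \geq 0$,
\begin{align*}
\pr{\sigma_k(\bG) \,\leq\, \sqrt{q} - \sqrt{k} - t} \;\leq\; e^{-t^2/2}.
\end{align*}
Choosing the constant $c$ in the hypothesis large enough (e.g.\ $c = 100$), we get $\sqrt{q} - \sqrt{k} \geq (\sqrt{c} - 1)\sqrt{k} \geq 9\sqrt{k}$, so setting $t = 4\sqrt{k}$ produces $\sigma_k(\bG) \geq 5\sqrt{k} = \Omega(\sqrt{k})$ with failure probability at most $e^{-8k} \leq e^{-8} < 0.05$ for every $k \geq 1$. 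Combining with $\sigma_k(\bP) = \sigma_k(\bG)$ gives the desired $\sigma_k(\bP) \geq \Omega(\sqrt{k})$ with probability at least $0.95$.

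For the ``in particular'' claim, observe that every column of $\bP$ lies in $Span(U) = Span(\Pi)$, which is a $k$-dimensional subspace. Having $\sigma_k(\bP) > 0$ forces $\bP$ to have full column rank $k$, so its column span is a $k$-dimensional subspace of $Span(\Pi)$, hence equals $Span(\Pi)$.

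\textbf{Main obstacle.} There is no real obstacle here: the entire argument rests on the standard isotropy reduction $\bP = U\bG$ and the classical Davidson--Szarek lower tail. The only point worth being careful about is ensuring that the constant-probability bound holds uniformly in $k$ (including $k = 1$), which is why we take $t$ to be a constant multiple of $\sqrt{k}$ rather than a fixed constant: this makes the failure probability $e^{-\Omega(k)}$, uniformly bounded below the required $0.05$ slack for all $k \geq 1$.
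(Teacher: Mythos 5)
Your argument is correct, and it is the standard proof of this fact. The reduction $\bP = U\bG$ with $U^T\bp_i \sim \cN(\vec 0, \bbI_{k\times k})$ is exactly the right way to strip away the ambient dimension $d$, the identity $\sigma_k(\bP) = \sigma_k(\bG)$ follows because $U$ is an isometry on its range, and the Davidson--Szarek tail $\Pr\bigl[\sigma_k(\bG) \leq \sqrt q - \sqrt k - t\bigr] \leq e^{-t^2/2}$ with $t = \Theta(\sqrt k)$ gives both the $\Omega(\sqrt k)$ lower bound and a failure probability $e^{-\Omega(k)}$ that is uniformly below $0.05$ for all $k \geq 1$. The ``in particular'' deduction is also right: the columns of $\bP$ always lie in $Span(\Pi)$, and $\sigma_k(\bP) > 0$ forces the column span to have dimension $k$, hence to equal $Span(\Pi)$.

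Note that the paper itself does not prove this statement; it is labelled a Fact and delegated to the proof of Lemma~4.9 in \cite{SS21}. Your write-up is therefore a self-contained replacement for that citation, and it follows essentially the same route that \cite{SS21} use (a Gaussian small-ball / smallest-singular-value estimate after projecting onto the $k$-dimensional range). There is nothing to flag; the only thing I would tighten in the prose is to state explicitly that the Davidson--Szarek bound is being applied to $\bG^T \in \bbR^{q\times k}$ (a tall matrix with $q \geq k$), whose smallest singular value equals $\sigma_k(\bG)$, since the bound is usually stated for tall matrices.
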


\begin{proposition}\label{prop:F-dist-implies-estimator}
	For any $\Pi, \tPi \in \cW_{d,k}$ and $X \in (\cS_d)^n$, it holds that
	\begin{align*}
		\norm{\Pi \cdot X^T}_F^2 - \norm{\tPi \cdot X^T}_F^2 \leq 2n \cdot \norm{\Pi - \tPi}_F.
	\end{align*}
\end{proposition}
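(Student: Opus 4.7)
The plan is a short, direct calculation using the Frobenius inner product. First, for any matrix $M \in \bbR^{n \times d}$ and any symmetric idempotent $Q \in \bbR^{d \times d}$ we have $\norm{QM^T}_F^2 = \operatorname{tr}(M Q^T Q M^T) = \operatorname{tr}(M Q M^T) = \operatorname{tr}(Q M^T M)$, using $Q^T = Q$ and $Q^2 = Q$ together with the cyclic property of the trace. Applying this to $\Pi$ and $\tPi$ and subtracting gives the clean identity
\begin{align*}
	\norm{\Pi X^T}_F^2 - \norm{\tPi X^T}_F^2 = \operatorname{tr}\bigl((\Pi - \tPi)\,X^T X\bigr).
\end{align*}

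Next I would interpret the right-hand side as a Frobenius inner product of two symmetric matrices. Since $\Pi - \tPi$ is symmetric, $\operatorname{tr}((\Pi - \tPi) X^T X) = \langle \Pi - \tPi,\, X^T X\rangle_F$, so Cauchy--Schwarz yields
\begin{align*}
	\bigl|\operatorname{tr}\bigl((\Pi - \tPi)\,X^T X\bigr)\bigr| \leq \norm{\Pi - \tPi}_F \cdot \norm{X^T X}_F.
\end{align*}

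Finally, I would bound $\norm{X^T X}_F$ using that every row of $X$ is a unit vector. Writing the SVD $X = U \Sigma V^T$, we have $\norm{X^T X}_F^2 = \sum_i \sigma_i^4 \leq \sigma_1^2 \cdot \sum_i \sigma_i^2$. The rows being unit vectors gives $\sum_i \sigma_i^2 = \operatorname{tr}(X X^T) = n$, and in particular $\sigma_1^2 \leq n$, so $\norm{X^T X}_F \leq n$. Combining with the previous display produces $\norm{\Pi X^T}_F^2 - \norm{\tPi X^T}_F^2 \leq n \norm{\Pi - \tPi}_F$, which is even stronger than the stated $2n$ bound. There is no real obstacle here; the only thing to be careful about is the use of $\Pi^T \Pi = \Pi$ (symmetric idempotence of a projection) and applying Cauchy--Schwarz correctly under the trace inner product.
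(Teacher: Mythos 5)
Your proposal is correct and takes a genuinely different route from the paper's proof, and in fact yields a stronger constant. The paper factors $\norm{\Pi X^T}_F^2 - \norm{\tPi X^T}_F^2 = (\norm{\Pi X^T}_F - \norm{\tPi X^T}_F)(\norm{\Pi X^T}_F + \norm{\tPi X^T}_F)$, bounds the sum by $2\sqrt{n}$ (each term is at most $\norm{X}_F = \sqrt{n}$), uses the reverse triangle inequality to get the difference below $\norm{(\Pi - \tPi)X^T}_F$, and then submultiplicativity of the Frobenius norm; this produces $2n\norm{\Pi - \tPi}_F$. You instead rewrite both squared norms as $\operatorname{tr}(\Pi X^T X)$ and $\operatorname{tr}(\tPi X^T X)$ via the idempotence of projections, view the difference as the Frobenius inner product $\langle \Pi - \tPi,\, X^T X\rangle_F$, and apply Cauchy--Schwarz together with $\norm{X^T X}_F \leq \sigma_1 \norm{X}_F \leq n$. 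Your derivation is correct at every step (the trace identity, the cyclic property, the symmetry of both factors, and the $\sum_i \sigma_i^4 \leq \sigma_1^2 \sum_i \sigma_i^2$ bound) and gives $n\norm{\Pi - \tPi}_F$, a factor-of-two improvement. The paper's factorization argument is perhaps slightly more elementary (no trace/inner-product machinery), but your route is cleaner, avoids the reverse triangle inequality, and is sharp in a way that the factorization intrinsically is not, since the factorization throws away the cancellation between the two terms.
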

\begin{proof}
	Compute
	\begin{align*}
		\norm{\Pi X^T}_F^2 - \norm{\tPi X^T}_F^2
		&= (\norm{\Pi X^T}_F - \norm{\tPi X^T}_F) \cdot (\norm{\Pi X^T}_F + \norm{\tPi X^T}_F)\\
		&\leq (\norm{\Pi X^T}_F - \norm{\tPi X^T}_F) \cdot 2\sqrt{n}\\
		&\leq \norm{(\Pi-\tPi)X^T}_F \cdot 2\sqrt{n}\\
		&\leq \norm{\Pi-\tPi}_F\cdot \norm{X^T}_F \cdot 2\sqrt{n}\\
		&= 2n \cdot \norm{\Pi - \tPi}_F.
	\end{align*}
\end{proof}

\begin{proposition}\label{prop:gram-schmidt}
	Let $v_1,\ldots,v_k \in \cS_d$ with $\max_{i,j} \size{\ip{v_i,v_j}} \leq \alpha \leq \frac1{20}$.
	Let $u_1,\ldots,u_k$ be the result of the Gram-Schmidt process applied on  $v_1,\ldots,v_k$.
	Then for every $i \in [k]$, there exists $\lambda_{i-1} \in \bbR$ with $\size{\lambda_{i-1}} \leq \alpha(1 + 4\alpha)$ and $w_i \in \bbR^d$ with $\norm{w_i}_2 \leq 2\alpha^2$ such that
	\begin{align*}
		u_i = v_i + \lambda_{i-1} v_{i-1} + w_i.
	\end{align*}
\end{proposition}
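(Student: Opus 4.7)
The plan is to proceed by induction on $i$, using the standard Gram--Schmidt recursion $\tilde u_i = v_i - \sum_{j=1}^{i-1} \ip{v_i, u_j} u_j$ followed by normalization $u_i = \tilde u_i / \norm{\tilde u_i}_2$. The base case $i=1$ is immediate: since $v_1 \in \cS_d$ already has unit norm, Gram--Schmidt returns $u_1 = v_1$, so we may take $\lambda_0 = 0$ and $w_1 = 0$, making both bounds hold trivially.

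For the inductive step, suppose the decomposition holds for every $j < i$. First I would use the inductive representations $u_j = v_j + \lambda_{j-1} v_{j-1} + w_j$, together with the assumed bounds $\size{\ip{v_i, v_j}} \leq \alpha$, $\size{\lambda_{j-1}} \leq \alpha(1 + 4\alpha)$, and $\norm{w_j}_2 \leq 2\alpha^2$, to obtain $\ip{v_i, u_j} = \ip{v_i, v_j} + O(\alpha^2)$ with an explicitly tracked error. Summing squares gives $\sum_{j<i} \ip{v_i, u_j}^2 = O(\alpha^2)$, and a first--order Taylor expansion then yields $\norm{\tilde u_i}_2 = 1 + O(\alpha^2)$, so that $1/\norm{\tilde u_i}_2 = 1 + O(\alpha^2)$ as well.

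Next, I would isolate the $j = i-1$ summand in the Gram--Schmidt recursion, which is the term that contributes at order $\alpha$ along $v_{i-1}$. Substituting $u_{i-1} = v_{i-1} + \lambda_{i-2} v_{i-2} + w_{i-1}$ produces the clean contribution $\ip{v_i, v_{i-1}} v_{i-1}$ plus residuals of size $O(\alpha^2)$. Defining $\lambda_{i-1}$ to be the coefficient of $v_{i-1}$ that appears in $u_i$ after normalization, one obtains $\lambda_{i-1} = -\ip{v_i, v_{i-1}}\paren{1 + O(\alpha^2)}$; the bound $\size{\lambda_{i-1}} \leq \alpha(1 + 4\alpha)$ follows from $\size{\ip{v_i, v_{i-1}}} \leq \alpha$ together with the Taylor expansion of $\norm{\tilde u_i}_2^{-1}$, using $\alpha \leq 1/20$ to absorb the implicit constants into $1 + 4\alpha$. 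The vector $w_i$ is then defined as everything left over: the normalization remainder, the $O(\alpha^2)$ correction arising from each $\ip{v_i, u_j}$ substitution, the cross--terms $\ip{v_i, u_j} \lambda_{j-1} v_{j-1}$ for $j \neq i-1$, and the pieces $\ip{v_i, u_j} w_j$.

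The main obstacle will be verifying the tight residual bound $\norm{w_i}_2 \leq 2\alpha^2$. Each collected contribution is individually $O(\alpha^2)$, but the triangle inequality must be applied in a way that does not introduce dimension--dependent factors such as $\sqrt{k}$. I would exploit the pairwise near--orthogonality $\size{\ip{v_i, v_j}} \leq \alpha$ and the explicit form of each cross--term, and then plug in $\alpha \leq 1/20$ to absorb all $O(\alpha^3)$ and smaller terms into the constant $2$ on the right--hand side; this bookkeeping is the genuinely delicate part of the argument, since what one needs is a clean separation of the order--$\alpha$ direction $v_{i-1}$ from everything else at order $\alpha^2$.
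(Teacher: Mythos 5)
Your proposal anchors on the standard Gram--Schmidt recursion $\tilde u_i = v_i - \sum_{j=1}^{i-1}\ip{v_i,u_j}\,u_j$ with normalization, and that choice kills the claimed residual bound once $i \geq 3$. In that sum, each inner product $\ip{v_i,u_j}$ can have magnitude $\approx\alpha$, and only the $j=i-1$ term points (approximately) along $v_{i-1}$, where it can be charged to $\lambda_{i-1}v_{i-1}$. The terms with $j \leq i-2$ sit (approximately) along $v_1,\ldots,v_{i-2}$, directions the decomposition has no slot for, so they must all be absorbed into $w_i$, forcing $\norm{w_i}_2 = \Omega(\alpha)$ rather than $O(\alpha^2)$. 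A concrete witness: take $k=3$ with $v_2\perp v_1$, $v_2\perp v_3$ and $\ip{v_1,v_3}=\alpha$; then $u_1=v_1$, $u_2=v_2$, and $u_3 = (v_3-\alpha v_1)/\sqrt{1-\alpha^2}$, so any decomposition $u_3 = v_3 + \lambda_2 v_2 + w_3$ leaves $\norm{w_3}_2$ of order $\alpha$, which at $\alpha=1/20$ is $\approx 0.05$ and exceeds $2\alpha^2 = 1/200$ by an order of magnitude. No bookkeeping around the normalization factor can close this; the sum over $j\leq i-2$ is the genuine obstruction, not the $O(\alpha^3)$ cross--terms your plan worries about.

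The paper's proof avoids this entirely by working with a different (one--step, unnormalized) recurrence: $u_1 = v_1$ and $u_{i+1} = v_{i+1} - \ip{u_i, v_{i+1}}\,u_i$, i.e.\ it subtracts only the projection onto the immediately preceding $u_i$. Under that recursion the induction is immediate: substituting $u_i = v_i + \lambda_{i-1}v_{i-1}+w_i$ gives $u_{i+1} = v_{i+1} + \lambda_i v_i + w_{i+1}$ with $\lambda_i := -\ip{u_i,v_{i+1}}$ and $w_{i+1} := \lambda_i(\lambda_{i-1}v_{i-1}+w_i)$, and then $\size{\lambda_i} \leq \alpha + \alpha\cdot\alpha(1+4\alpha) + 2\alpha^2 \leq \alpha(1+4\alpha)$ and $\norm{w_{i+1}}_2 \leq \alpha^2(1+4\alpha)^2 + 2\alpha^3 \leq 2\alpha^2$ follow from the inductive bounds and $\alpha \leq 1/20$, with no normalization constant to expand. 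To reach the stated conclusion you would need to adopt that recurrence; the full Gram--Schmidt orthogonalization simply does not satisfy the proposition as written.
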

\begin{proof}
	We prove it by induction on $i$.
	The case $i=1$ holds since $u_1 = v_1$.
	Assume it holds for $i$, and we prove it for $i+1$. 
	Define $\lambda_i = -\ip{u_{i},v_{i+1}}$ and $w_{i+1} = \lambda_i(\lambda_{i-1} v_{i-1} + w_i)$.
	Note that
	\begin{align*}
		\size{\lambda_i}
		&= \size{\ip{v_i + \lambda_{i-1} v_{i-1} + w,\: v_{i+1}}}\\
		&= \size{\ip{v_i, v_{i+1}} + \lambda_{i-1} \ip{v_{i-1}, v_{i+1}} + \ip{w, v_{i+1}}}\\
		&\leq \alpha + \size{\lambda_{i-1}} \alpha + \norm{w}_2\\
		&\leq \alpha + \alpha^2(1 + 4\alpha) + 2\alpha^2\\
		&\leq \alpha(1 + 4\alpha),
	\end{align*}
	and that $\norm{w}_2 \leq \alpha^2(1 + 4\alpha)^2 + 2 \alpha^3 \leq 2\alpha^2$ (recall that $\alpha \leq \frac1{20}$).
	The proof now follows since 
	\begin{align*}
		u_{i+1}
		&= v_{i+1} - \ip{u_{i},v_{i+1}} u_i\\
		&= v_{i+1} + \lambda_{i} (v_i + \lambda_{i-1} v_{i-1} + w_i)\\
		&= v_{i+1} + \lambda_{i} v_{i} + w_{i+1}.
	\end{align*}
\end{proof}

\subsection{Algorithms}

Let $\Mc$ be a randomized algorithm that uses $m$ random coins. For $r \in \zo^m$ we denote by $\Mc_r$ the (deterministic) algorithm $\Mc$ after fixing its random coins to $r$.
Given an oracle-aided algorithm $\Ac$ and algorithm $\Bc$, we denote by $\Ac^{\Bc}$ the algorithm $\Ac$ with oracle access to $\Bc$.

\subsection{Differential Privacy}

\begin{definition}[Differential privacy~\citep{DMNS06,DKMMN06}]\label{def:dp} 
    A randomized mechanism $\Mc \colon \cX^n \rightarrow \cY$ is \emph{$(\eps,\delta)$-differentially private} (in short, $(\eps,\delta)$-DP) if for every neighboring databases $X=(x_1,\ldots,x_n), \: X' = (x_1',\ldots,x_n') \in \cX^n$ (i.e., differ by exactly one entry), and every set of outputs $\cT \subseteq \cY$, it holds that
    \begin{align*}
        \pr{\Mc(X) \in \cT} \leq e^{\eps} \cdot \pr{\Mc(X') \in \cT} + \delta
    \end{align*}
\end{definition}

\subsubsection{Basic Facts}

\remove{
\begin{fact}[\cite{BS16}]\label{fact:indis}
	Two random variables $X,X'$ over a domain $\cX$ are $(\eps,\delta)$-indistinguishable, if and only if there exist events $E, E' \subseteq \cX$ with $\pr{X \in E},\pr{X' \in E'} \geq 1-\delta$ such that $X|_{E}$ and $X'|_{E'}$ are $\eps$-indistinguishable.
\end{fact} 
}

\begin{fact}[Post-Processing]\label{fact:post-processing}
    If $\Mc \colon \cX^n \rightarrow \cY$ is $(\eps,\delta)$-DP then for every randomized $\Fc \colon \cY \rightarrow \cZ$, the mechanism $\Fc\circ \Mc \colon \cX^n \rightarrow \cZ$ is $(\eps,\delta)$-DP.
\end{fact}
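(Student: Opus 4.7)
The plan is to prove this standard post-processing property in two stages: first for deterministic $\Fc$, then lift to the randomized case by conditioning on $\Fc$'s randomness. Fix arbitrary neighboring datasets $X, X' \in \cX^n$ and an arbitrary (measurable) target set $\cT \subseteq \cZ$; the goal is to establish $\pr{\Fc(\Mc(X)) \in \cT} \leq e^{\eps}\cdot \pr{\Fc(\Mc(X')) \in \cT} + \delta$.

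For the deterministic case, I would define the preimage $\cS \eqdef \Fc^{-1}(\cT) = \set{y \in \cY \colon \Fc(y) \in \cT}$. Then by definition $\pr{\Fc(\Mc(X)) \in \cT} = \pr{\Mc(X) \in \cS}$, and applying the $(\eps,\delta)$-DP guarantee of $\Mc$ to the set $\cS$ yields
\begin{align*}
\pr{\Mc(X) \in \cS} \leq e^{\eps}\cdot \pr{\Mc(X') \in \cS} + \delta = e^{\eps}\cdot \pr{\Fc(\Mc(X')) \in \cT} + \delta,
\end{align*}
which is exactly what we need.

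To lift to randomized $\Fc$, I would view $\Fc$ as a deterministic function $\Fc'(\cdot, r)$ of the output $y \in \cY$ and an independent random string $r \in \zo^{\ast}$ (with $\Mc$ and $\Fc$ using independent randomness, as is standard). Conditioning on $r$ gives
\begin{align*}
\pr{\Fc(\Mc(X)) \in \cT} = \Ex_{r}\Brack{\pr{\Fc'_r(\Mc(X)) \in \cT}},
\end{align*}
and similarly for $X'$. For each fixed $r$, the deterministic case yields $\pr{\Fc'_r(\Mc(X)) \in \cT} \leq e^{\eps}\cdot \pr{\Fc'_r(\Mc(X')) \in \cT} + \delta$. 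Taking expectations over $r$ on both sides and using linearity preserves the inequality, completing the proof.

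There is no real obstacle here — this is a routine argument; the only subtlety worth stating cleanly is that the randomness of $\Fc$ is independent of the randomness of $\Mc$, so conditioning on $r$ leaves the distribution of $\Mc(X)$ (resp.\ $\Mc(X')$) unchanged, which is what makes the deterministic bound apply pointwise in $r$.
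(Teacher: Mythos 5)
Your proof is correct and is the standard argument for post-processing: reduce to the deterministic case via preimages, then lift to randomized $\Fc$ by conditioning on its (independent) randomness and averaging the pointwise bound. The paper states this as a Fact without proof (it is a well-known result from the DP literature, citable to \cite{DMNS06}), so there is no paper proof to compare against; your write-up cleanly fills that gap, and the subtlety you flag --- that independence of $\Fc$'s coins from $\Mc$'s coins is what lets the deterministic bound apply pointwise in $r$ --- is exactly the right thing to make explicit.
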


Post-processing holds when applying the function on the output of the DP mechanism.
In this work we sometimes need to apply the mechanism on the output of a function. While this process does not preserve DP in general, it does so assuming the function is \emph{neighboring-preserving}.  

\begin{definition}[Neighboring-Preserving Algorithm]\label{def:neighbor-preserving}
	We say that a randomized algorithm $\Gc \colon \cX^n \rightarrow \cY^m$ is \emph{neighboring-preserving} if for every neighboring $X, X' \in \cX^n$, the outputs $\Gc(X), \Gc(X') \in \cY^m$ are neighboring w.p. $1$.
\end{definition}

\begin{fact}\label{fact:neighboring-to-neighboring}
	If $\Gc \colon \cX^n \rightarrow \cY^m$ is neighboring-preserving and $\Mc \colon \cY^m \rightarrow \cZ$ is $(\eps,\delta)$-DP, then $\Mc \circ \Gc \colon \cX^n \rightarrow \cZ$ is $(\eps,\delta)$-DP.
\end{fact}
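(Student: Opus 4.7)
The plan is to reduce the claim to the DP guarantee of $\Mc$ by conditioning on the internal randomness of $\Gc$. Fix arbitrary neighboring databases $X, X' \in \cX^n$ and an arbitrary measurable output set $\cT \subseteq \cZ$; the goal is to show
\begin{align*}
\pr{\Mc(\Gc(X)) \in \cT} \leq e^\eps \cdot \pr{\Mc(\Gc(X')) \in \cT} + \delta.
\end{align*}

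First, I would write $\Gc$ as a deterministic function of its random coins: let $r$ denote the random coins of $\Gc$, let $\br$ be a uniformly random sample from the coin space, and let $\Gc_r$ denote the deterministic algorithm obtained by fixing those coins (as in the notation set up just before the DP subsection). Then
\begin{align*}
\pr{\Mc(\Gc(X)) \in \cT} \;=\; \Exp_{\br}\Brack{\pr{\Mc(\Gc_{\br}(X)) \in \cT}},
\end{align*}
where the inner probability is over the independent randomness of $\Mc$, and similarly for $X'$.

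Next I would invoke the neighboring-preserving property of $\Gc$ (\cref{def:neighbor-preserving}), interpreted in the natural coupled sense: with probability $1$ over $\br$, the two deterministic outputs $\Gc_{\br}(X)$ and $\Gc_{\br}(X')$ are neighboring databases in $\cY^m$. Hence, for every such $r$ (i.e., on a measure-$1$ event), the $(\eps,\delta)$-DP guarantee of $\Mc$ (\cref{def:dp}) applies to the pair $\Gc_r(X), \Gc_r(X')$, yielding
\begin{align*}
\pr{\Mc(\Gc_r(X)) \in \cT} \leq e^\eps \cdot \pr{\Mc(\Gc_r(X')) \in \cT} + \delta.
\end{align*}

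Finally, I would take expectation over $\br$ on both sides and apply linearity to obtain the desired bound. The only substantive point in the argument is the coupling interpretation in \cref{def:neighbor-preserving}: without coupling the randomness of $\Gc$ on $X$ and $X'$, the statement ``$\Gc(X),\Gc(X')$ are neighboring w.p.\ $1$'' is not well-defined, so the proof implicitly relies on reading that definition as a statement about the shared-coin coupling, which is exactly the structure used above. There is no serious technical obstacle here; the whole argument fits in a few lines.
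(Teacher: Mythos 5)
Your proof is correct, and it is the standard argument for this fact: fix the coins of $\Gc$, apply the neighboring-preserving property under the shared-coin coupling, invoke the $(\eps,\delta)$-DP guarantee of $\Mc$ pointwise in those coins, and take expectation. The paper states this as a fact without a proof, so there is nothing to compare against; your observation about the coupling reading of \cref{def:neighbor-preserving} is the one genuinely non-cosmetic point and you handle it correctly.
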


\subsubsection{Zero-Concentrated Differential Privacy (zCDP)}

Our empirical evaluation (\cref{sec:experiments}) is performed in the zCDP model  of \cite{BS16}, defined below.

\begin{definition}[R\'{e}nyi Divergence (\cite{Renyi61})]
	Let $\by$ and $\by'$ be random variables over $\cY$. For $\alpha \in (1,\infty)$, the \emph{R\'{e}nyi divergence} of order $\alpha$ between $\by$ and $\by'$ is defined by 
	\begin{align*}
		D_{\alpha}(\by || \by') = \frac1{\alpha-1} \cdot \ln \paren{\eex{y \la \by}{\paren{\frac{P(y)}{P'(y)}}^{\alpha-1}}},
	\end{align*}
	where $P(\cdot)$ and $P'(\cdot)$ are the probability mass/density functions of $\by$ and $\by'$, respectively.  
\end{definition}

\begin{definition}[zCDP Indistinguishability]\label{def:indis}
	We say that two random variable $\by,\by'$ over a domain $\cY$ are \emph{$\rho$-indistinguishable} (denote by $\by \approx_{\rho} \by'$), iff for every $\alpha \in (1,\infty)$ it holds that
	\begin{align*}
		D_{\alpha}(\by || \by'), D_{\alpha}(\by' || \by) \leq \rho \alpha.
	\end{align*}
	We say that $\by,\by'$ are \emph{$(\rho,\delta)$-indistinguishable} (denote by $\by \approx_{\rho,\delta} \by'$), iff there exist events $E,E' \subseteq \cX$ with $\pr{\by \in E}, \pr{\by' \in E'} \geq 1-\delta$ such that $\by|_{E} \approx_{\rho} \by|_{E'}$.
\end{definition}

\begin{definition}[$(\rho,\delta)$-zCDP \cite{BS16}]\label{def:zCDP}
	A mechanism $\Mc$ is \emph{$\delta$-approximate $\rho$-zCDP} (in short, $(\rho,\delta)$-zCDP), if for any neighboring databases $X,X'$ it holds that $\Mc(X) \approx_{\rho,\delta} \Mc(X')$.
\end{definition}

\paragraph{The Gaussian Mechanism}

\remove{
\begin{definition}[Gaussian distributions]
	For $\mu \in \bbR$ and $\sigma \geq 0$,  let $\cN(\mu,\sigma^2)$ be the Gaussian distribution over $\bbR$ with probability density function $p(z) = \frac1{\sqrt{2\pi}} \exp\paren{-\frac{(z-\mu)^2}{2 \sigma^2}}$.
	For higher dimension $d \in \bbN$, let $\cN(\pt{0},\sigma^2\cdot \bbI_{d \times d})$ be the spherical multivariate Gaussian distribution with variance $\sigma^2$ in each axis. That is, if $Z \sim \cN(\pt{0},\sigma^2\cdot \bbI_{d \times d})$ then $Z = (Z_1,\ldots,Z_d)$ where $Z_1,\ldots,Z_d$ are i.i.d. according to $N(0,\sigma^2)$.
\end{definition}

\begin{fact}[Concentration of One-Dimensional Gaussian]\label{fact:one-gaus-concent}
	If $X$ is distributed according to $\cN(0,\sigma^2)$, then for all $\beta > 0$ it holds that $$\pr{X \geq  \sigma \sqrt{2 \ln(1/\beta)}} \leq \beta.$$
\end{fact}
}

\begin{fact}[The Gaussian Mechanism \cite{DKMMN06,BS16}]\label{fact:Gaus}
	Let $\px, \px' \in \bbR^d$ be vectors with $\norm{\px - \px'}_2 \leq \lambda$. For $\rho > 0$, $\sigma = \frac{\lambda}{\sqrt{2\rho}}$ and $Z \sim \cN(\pt{0},\sigma^2 \cdot \bbI_{d \times d})$ it holds that $\px + Z \approx_{\rho} \px' + Z$.
\end{fact}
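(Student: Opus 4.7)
The plan is to directly verify the bound $D_\alpha(\px + Z \: || \: \px' + Z), D_\alpha(\px' + Z \: || \: \px + Z) \leq \rho \alpha$ for every $\alpha \in (1, \infty)$, which by \cref{def:indis} gives the conclusion. The two random variables are multivariate Gaussians with the same isotropic covariance $\sigma^2 \bbI_{d\times d}$ and means $\px$ and $\px'$ respectively, so this reduces to a standard closed-form Rényi divergence computation between Gaussians.

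First, I would write down the densities $p(y) = (2\pi \sigma^2)^{-d/2} \exp(-\norm{y - \px}_2^2 / (2\sigma^2))$ and $p'(y) = (2\pi \sigma^2)^{-d/2} \exp(-\norm{y - \px'}_2^2 / (2\sigma^2))$ and plug them into the definition
\begin{align*}
	D_\alpha(\px + Z \: || \: \px' + Z) = \frac{1}{\alpha - 1} \ln \int_{\bbR^d} p(y)^\alpha \, p'(y)^{1 - \alpha} \, dy.
\end{align*}
The exponent inside the integral is a quadratic form in $y$; expanding $\alpha \norm{y - \px}_2^2 + (1 - \alpha) \norm{y - \px'}_2^2$ and completing the square in $y$, the integral reduces to a standard Gaussian integral that equals $(2\pi\sigma^2)^{d/2}$ times a shift-invariant factor. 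After simplification, the classical identity
\begin{align*}
	D_\alpha(\px + Z \: || \: \px' + Z) = \frac{\alpha \norm{\px - \px'}_2^2}{2 \sigma^2}
\end{align*}
emerges, and by swapping the roles of $\px$ and $\px'$ (which is symmetric in the above expression) the same bound holds for $D_\alpha(\px' + Z \: || \: \px + Z)$.

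Finally, substituting $\sigma^2 = \lambda^2 / (2\rho)$ and using the hypothesis $\norm{\px - \px'}_2 \leq \lambda$ gives
\begin{align*}
	\frac{\alpha \norm{\px - \px'}_2^2}{2 \sigma^2} \leq \frac{\alpha \lambda^2}{2 \cdot \lambda^2/(2\rho)} = \rho \alpha,
\end{align*}
so both Rényi divergences are bounded by $\rho \alpha$ for every $\alpha > 1$, which is precisely the $\rho$-indistinguishability condition of \cref{def:indis}. The only real ``obstacle'' is the quadratic completion of squares in the integrand, which is purely mechanical; there is no step requiring ingenuity since this is a classical property of spherical Gaussians.
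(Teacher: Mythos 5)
Your derivation is correct: the closed-form Rényi divergence between two isotropic Gaussians with the same covariance is indeed $D_\alpha = \frac{\alpha\norm{\pmu_1 - \pmu_2}_2^2}{2\sigma^2}$, symmetric in the two means, and substituting $\sigma^2 = \lambda^2/(2\rho)$ with $\norm{\px-\px'}_2\le\lambda$ gives exactly the bound $\rho\alpha$ required by the paper's definition of $\rho$-indistinguishability. The paper does not prove this statement itself --- it is imported as a cited fact from \cite{DKMMN06,BS16} --- and your argument is precisely the standard derivation given in \cite{BS16} (Proposition 1.6 there), so there is nothing to contrast; it is the right proof.
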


\subsubsection{FriendlyCore Averaging}\label{sec:prelim:FriendlyCore}

We use the following DP averaging algorithm that given the diameter $\xi$ of a ball that contain most of the points, it can estimate their average.

\begin{fact}[\cite{FriendlyCore22}]\label{fact:FriendlyCoreAvg}
	Let $\lambda \geq 1$ and $\delta \leq \eps, \beta \leq 1$. There exists an $(\eps,\delta)$-DP algorithm $\FCAverage$ that gets as input a dataset $S= (x_1,\ldots,x_n) \in (\bbR^d)^n$ and a parameter $\xi > 0$ and satisfies the following utility guarantee:
	If $n \geq O\paren{\frac{\log(1/\delta)}{\eps} + \frac{\sqrt{d  \log(1/\delta) \log(1/\beta)}}{\lambda \eps}}$ and $\exists S' \subseteq S$ with $\size{S'} \geq 0.8 n$ s.t.  
	$\forall x_i, x_j \in S': \: \norm{x_i - x_j}_2 \leq \xi$, then
	\begin{align*}
		\ppr{y \sim \FCAverage(S,\xi)}{\norm{y - \mu} > \lambda \xi} \leq \beta,
	\end{align*}
	where $\mu = \frac1{\size{S'}}\sum_{x \in S'} x$.
	Furthermore, the running time of $\FCAverage(S,\cdot)$ is $\tilde{O}(d n\log (n/\delta))$ (See Appendix B in \cite{FriendlyCore22}).
\end{fact}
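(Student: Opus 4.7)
The plan is to follow the \emph{FriendlyCore} paradigm, which proceeds in two phases: (1) a private filter that extracts a ``friendly core'' $\hat{S}$ of mutually close points, and (2) a low-sensitivity noisy averaging restricted to $\hat{S}$. The key insight making both DP and useful is that under the assumption $|S'| \geq 0.8n$ of $\xi$-close points, most pairs $(x_i,x_j)$ in $S$ are $\xi$-close, and on neighboring datasets this ``pairwise-closeness graph'' changes only by one vertex, yielding low-sensitivity statistics to work with.

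First I would construct the filter. For each $x_i$, define a ``friendliness'' score $c_i = |\{j : \|x_i - x_j\|_2 \leq \xi\}|$, and retain $x_i$ in $\hat{S}$ only if a noisy version of $c_i$ exceeds a threshold around $0.7n$. The assumption on $S'$ guarantees $c_i \geq 0.8n$ for every $x_i \in S'$, so (up to a $\beta/3$ error event from the noise) $\hat{S} \supseteq S'$; conversely, any retained $x_i$ must be $\xi$-close to a large fraction of $S'$, hence within $O(\xi)$ of its mean $\mu$ by triangle inequality. The actual FriendlyCore filter uses a more refined test that achieves privacy via a stability/propose-test-release-style argument while tolerating a constant fraction of outliers, which is what yields the $O(\log(1/\delta)/\eps)$ additive term.

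Next I would perform the private averaging. Because all retained points lie in a ball of radius $O(\xi)$ around $\mu$, the map $S \mapsto \frac{1}{n}\sum_{x \in \hat{S}} x$ has $\ell_2$-sensitivity $O(\xi/n)$ under a single-entry change (dividing by the fixed denominator $n$, with a zero contribution from unfriendly points). Applying the Gaussian mechanism (\cref{fact:Gaus}) with per-coordinate noise scale $\sigma = O(\xi \sqrt{\log(1/\delta)}/(n\eps))$ gives an $(\eps,\delta)$-DP output whose $\ell_2$-error concentrates at $O(\sigma \sqrt{d \log(1/\beta)})$. Requiring this to be at most $\lambda \xi$ yields exactly $n \geq O(\sqrt{d \log(1/\delta)\log(1/\beta)}/(\lambda\eps))$, matching the claimed bound after adding the filter's cost. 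The running time follows because only pairwise closeness checks and a coordinate-wise noisy sum are needed, implementable in $\tilde{O}(dn \log(n/\delta))$.

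The main obstacle is the composition of privacy across the two phases. The filter's output $\hat{S}$ itself cannot be released, since its $\ell_2$-behavior on neighbors is stable only when the input already resides in a friendly regime; instead, one must argue privacy of the \emph{combined} filter-plus-averaging procedure directly, using the fact that on neighboring datasets the noisy averages (conditioned on the same high-probability stability event) differ only by the bounded sensitivity $O(\xi/n)$. Getting the parameters tight enough to avoid an extra $\log n$ factor in the sample complexity is the delicate step, and is exactly where the FriendlyCore framework of \cite{FriendlyCore22} provides its analysis.
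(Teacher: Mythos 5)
Your proposal correctly follows the same route as the paper: both defer to the FriendlyCore framework of \cite{FriendlyCore22}, and your sketch of the filter-plus-average architecture, the sensitivity calculation, and the composed-procedure privacy argument accurately reflects the internal structure that framework supplies. The paper's own treatment of this Fact is just a short citation note — it points to Theorem 4.11 of \cite{FriendlyCore22} (their DP paradigm, applied with $\alpha = 0.2$ to tolerate a $20\%$ outlier fraction) combined with a DP variant of their Algorithm 3.3 ($\AlgFriendlyAvg$), remarking that their averaging utility was originally analyzed in zCDP but transfers to $(\eps,\delta)$-DP by the same steps — so your from-scratch expansion is more detailed than what the paper actually writes; the only cosmetic inaccuracy is that FriendlyCore uses a probabilistic per-element filter (each $x_i$ survives with probability determined by its friendliness score) rather than the noisy hard threshold you describe first, but you already caveat that the actual test is "more refined," so this does not affect the substance.
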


\cref{fact:FriendlyCoreAvg} is not explicitly stated in \cite{FriendlyCore22} since they only analyzed the utility guarantee of their averaging in the zCDP model.
Yet, it can be achieved using similar steps.
First, we need to consider a ``friendly" DP variant of their $\AlgFriendlyAvg$ algorithm (Algorithm 3.3 in \cite{FriendlyCore22}), and as \cite{FriendlyCore22} noted, we can do it such that the probability of failure is low whenever $n \geq O\paren{\frac{\log(1/\delta)}{\eps}}$, and the additive error (given success) decreases in a rate of $\frac{\xi\sqrt{d  \log(1/\delta) \log(1/\beta)}}{n \eps}$, where $\xi$ is the diameter of the points.
\cref{fact:FriendlyCoreAvg} immediately obtained  by combining $\AlgFriendlyAvg$ with their paradigm for DP (Theorem 4.11 in \cite{FriendlyCore22} applied with $\alpha = 0.2$).

\subsubsection{Subspace Recovery Algorithm of \cite{DTTZ14}}\label{sec:prelim:DworkSubspace}

We next describe the subspace recovery algorithm of \cite{DTTZ14} that strongly takes advantage of a large additive gap $\sigma_k^2 - \sigma_{k+1}^2$ for decreasing the noise that is required for privacy.
This algorithm is only used in our empirical evaluation (\cref{sec:experiments}).

\begin{algorithm}[Algorithm 2 in  \cite{DTTZ14}]\label{alg:AnalyzeGauss:DP}
	\item Input: A dataset $X = (x_1,\ldots,x_n) \in (\cS_d)^n$.
	
	\item DP parameters: $\eps,\delta$.
	
	\item Rank parameter: $k$.

	\item Operation:~
	\begin{enumerate}
		\item Compute a projection matrix $\Pi$ to the top-$k$ rows subspace of $X$, and compute the singular values $\sigma_k, \sigma_{k+1}$.
		
		\item Compute $g = \sigma_k^2 - \sigma_{k+1}^2 + Lap(2/\eps)$.\label{step:Dwork:Lap}
		
		\item Compute $W = \Pi + E$, where $E$ is a $d\times d$ symmetric matrix where the upper triangle is i.i.d. samples from $\cN(\pt{0}, \paren{\frac{\Delta_{\eps,\delta}}{g - 2\log(1/\delta)/\eps - 2}}^2)$, where $\Delta_{\eps,\delta} = \frac{1 + \sqrt{2 \log(1/\delta)}}{\eps}$.\label{step:Dwork:Gaus}
		
		\item Output a projection matrix $\tilde{\Pi}$ to the top $k$ eigenvectors of $W$. 
		
	\end{enumerate}
	
\end{algorithm}

Note that when the additive gap $\sigma_k^2 - \sigma_{k+1}^2$ is large, the algorithm will add smaller noise per coordinate in \stepref{step:Dwork:Gaus}.

\begin{fact}[Theorem 11 in \cite{DTTZ14}]
	\cref{alg:AnalyzeGauss:DP} is $(2\eps,2\delta)$-DP.
\end{fact}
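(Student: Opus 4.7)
My plan is to decompose the mechanism into two DP primitives and compose them: step 2 releases the noisy gap $g$ via the Laplace mechanism (which I would argue is $\eps$-DP), and steps 3--4 release the noisy projection $W$ and post-process it (which I would argue is $(\eps,\delta)$-DP conditional on $g$). Then basic composition yields $(2\eps, 2\delta)$-DP.

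For the Laplace step, I would show that the global sensitivity of $\sigma_k^2(X) - \sigma_{k+1}^2(X)$ under changing one unit-norm row is at most $2$. Replacing one row $x_i$ by $x_i'$ perturbs the Gram matrix $X^T X$ by $x_i' (x_i')^T - x_i x_i^T$, a symmetric matrix of rank at most $2$ and operator norm at most $2$. By Weyl's inequality each eigenvalue $\sigma_j^2$ moves by at most $2$, and more carefully, by the Hoffman--Wielandt / interlacing inequalities the changes in $\sigma_k^2$ and $\sigma_{k+1}^2$ collectively sum in absolute value to at most $2$. Hence $|(\sigma_k^2 - \sigma_{k+1}^2)(X) - (\sigma_k^2 - \sigma_{k+1}^2)(X')| \leq 2$, and adding $\Lap(2/\eps)$ gives $\eps$-DP.

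For the Gaussian step, I would condition on the good event $E = \{|\Lap(2/\eps)| \leq 2\log(1/\delta)/\eps\}$, which by the Laplace tail bound has probability at least $1-\delta$. Under $E$, the denominator $g - 2\log(1/\delta)/\eps - 2$ satisfies two crucial inequalities: it is at most $\sigma_k^2(X) - \sigma_{k+1}^2(X) - 2$, and using that the true gap changes by at most $2$ between neighbors, it is also at most the true gap of any neighbor $X'$. Next I would invoke a Davis--Kahan $\sin\theta$ bound (using that $Z = X'^T X' - X^T X$ has operator norm $\leq 2$) to conclude that the Frobenius sensitivity of the top-$k$ projection $\Pi$ is at most $O(1)/G$, where $G$ is the true gap. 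Thus the calibrated noise variance $(\Delta_{\eps,\delta}/(g - 2\log(1/\delta)/\eps - 2))^2$ is at least $(\Delta_{\eps,\delta})^2$ times the squared Frobenius sensitivity, so adding this Gaussian noise independently to the $d(d+1)/2$ upper-triangular entries of $\Pi$ gives $(\eps,\delta)$-DP for the release of $W$ (and hence of $\tilde\Pi$, by post-processing). Composing with the $\eps$-DP release of $g$ and charging $\delta$ for conditioning on $E$ yields the stated $(2\eps, 2\delta)$-DP.

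The main obstacle is that the Gaussian noise variance is itself a function of the data through $g$, so naively the noise distributions on neighboring inputs differ. The fix, and the technical heart of the proof, is the good event $E$: conditioning on it makes $g$ a valid surrogate for the true gap simultaneously on both neighbors, and the two "$-2$" corrections in the denominator (one absorbing the Laplace deviation, one absorbing the neighboring change in the true gap) are exactly what is needed so that the data-dependent noise scale dominates the worst-case Frobenius sensitivity on both $X$ and $X'$. Careful bookkeeping of these two buffers, together with the Davis--Kahan bound which is what converts an eigenvalue gap into stability of the eigenspace, is the only subtlety; the rest is standard Laplace + Gaussian analysis and basic composition.
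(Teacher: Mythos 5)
Your decomposition and argument match the paper's proof: release the noisy gap via Laplace ($\eps$-DP using sensitivity $2$ of the eigenvalue gap), condition on the $1-\delta$ Laplace tail event so that the data-dependent denominator $g - 2\log(1/\delta)/\eps - 2$ lower-bounds what is needed to make the Gaussian noise dominate the Frobenius sensitivity of $\Pi$, invoke an eigenvalue-gap-to-eigenspace-stability bound for that sensitivity, and finish with basic composition to get $(2\eps,2\delta)$-DP. The only cosmetic difference is that the paper cites the precise bound from \cite{DTTZ14} that the Frobenius sensitivity of $\Pi$ is at most $\frac{2}{\alpha-2}$ when the gap is $\geq\alpha$, whereas you state it as $O(1)/G$ via Davis--Kahan; for the algorithm's exact noise calibration to go through one needs the specific constant, which \cite{DTTZ14} supply, but this is a matter of precision rather than a genuinely different route.
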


The privacy analysis is done by the following steps. First, by the Laplace mechanism, \stepref{step:Dwork:Lap} is $\eps$-DP. Second, by tail bound on the Laplace distribution, the probability that $g - 2 \log(1/\delta)/\eps \leq \sigma_{k}^2 - \sigma_{k+1}^2$ is at least $1 - \delta$. Furthermore, they show that if $\sigma_k^2 - \sigma_{k+1}^2 \geq \alpha$ then the Forbenius-norm sensitivity of the matrix $ \Pi$ is at most $\frac{2}{\alpha - 2}$. So conditioned on the above $1-\delta$ probability event, the Forbenius-norm sensitivity of  $\Pi$ is at most $\frac2{g - 2 \log(1/\delta)/\eps - 2}$, and therefore the Gaussian mechanism step (\cref{step:Dwork:Gaus}) guarantees $(\eps,\delta)$-DP, and by composition the entire process is $(2\eps,2\delta)$-DP.

In order to consider a zCDP version of \cref{alg:AnalyzeGauss:DP}, we replace the Laplace noise $Lap(2/\eps)$ with a Gaussian noise $\cN(0, \sigma^2)$ for $\sigma = \sqrt{2/\rho}$. Given this change, now it holds that $g - \sigma\sqrt{2 \ln(1/\delta)} \leq \sigma_{k}^2 - \sigma_{k+1}^2$ w.p. at least $1-\delta$ (by tail bound on Gaussian distribution). Finally, in \stepref{step:Dwork:Gaus} we replace $\Delta_{\eps,\delta}$ (the required standard deviation for $(\eps,\delta)$-DP) to $1/\sqrt{2 \rho}$ (what is required for $\rho$-zCDP). This results with the following $(2\rho,\delta)$-zCDP algorithm:

\begin{algorithm}[zCDP version of Algorithm 2 in  \cite{DTTZ14}]\label{alg:AnalyzeGauss:zCDP}
	\item Input: A dataset $X = (x_1,\ldots,x_n) \in (\cS_d)^n$.
	
	\item zCDP parameters: $\rho,\delta$.
	
	\item Rank parameter: $k$.

	\item Operation:~
	\begin{enumerate}
		\item Compute a projection matrix $\Pi$ to the top-$k$ rows subspace of $X$, and compute the singular values $\sigma_k, \sigma_{k+1}$.
		
		\item Compute $g = \sigma_k^2 - \sigma_{k+1}^2 + \cN(0, 2/\rho)$.
		
		\item Compute $W = \Pi + E$, where $E$ is a $d\times d$ symmetric matrix where the upper triangle is i.i.d. samples from $\cN(\pt{0}, \paren{\frac{\sqrt{1/(2 \rho)}}{g - 2\sqrt{\ln(1/\delta)/\rho} - 2}}^2)$.
		
		\item Output a projection matrix $\tilde{\Pi}$ to the top $k$ eigenvectors of $W$. 
		
	\end{enumerate}
	
\end{algorithm}

\subsubsection{Lower Bounding Tools from  \cite{PTU24}}\label{sec:PTU24}

\cite{PTU24} showed that for $d = \tilde{\Theta}(n^2)$, the distribution $\cD(n,d)$ below induces a fingerprinting codebook for $n$ users, each codeword is of length $d$.

\begin{definition}[FPC hard distribution $\cD(n,d)$ \cite{PTU24}]\label{def:D}
	Let $\rho$ be the distribution that outputs $p = (e^t-1)/(e^t+1) \in [-1,1]$ for $t \la [-\ln(5n),\ln(5n)]$.
	For $n,d \in \bbN$, let $\cD(n,d)$ be the distribution that chooses independently $p^1,\ldots,p^{d} \sim \rho$, and outputs a codebook $(x_1,\ldots,x_{n}) \in (\oo^{d})^n$ where for each $i \in [n]$ and $j \in [d]$, $x_{i}^j$ is drawn independently over $\oo$ with expectation $p^j$.
\end{definition}

\paragraph{Framework for Lower Bounds}

Consider a mechanism $\Mc \colon \cX^n \rightarrow \cW$ that satisfies some weak accuracy guarantee.
\cite{PTU24} showed that the task of proving a lower bound on $n$ is reduced to the following task:  Transform an FPC codebook $X\in \oo^{n_0 \times d_0}$ into hard instances $Y \in \cX^n$ for $\Mc$, and then extract from the output $w \in \cW$ of $\Mc(Y)$ a vector $q \in \oo^{d_0}$ that is strongly-correlated with $X$ ($n_0$ and $d_0$ are some functions of $n$ and $d$ and the weak accuracy guarantee of $\Mc$), where

\begin{definition}[Strongly Correlated]\label{def:strongly-correlated}
	We say that a random variable $\bold{q} = (\bold{q}^1,\ldots,\bold{q}^d) \in \oo^d$ is \emph{strongly-correlated} with a matrix $X \in \oo^{n \times d}$, if
	\begin{align*}
		\forall b\in \oo, \: \forall j \in \cJ_X^b: \quad \pr{\bold{q}^j = b} \geq 0.9
	\end{align*}
	(recall that $\cJ_X^b = \set{j \in [d] \colon x_i^j = b \text{ for all }i\in[n]}$).
\end{definition}
Denote by $\Gc \colon \oo^{n_0 \times d_0} \times \cZ \rightarrow \cX^n$ the algorithm that generates the hard instances using a uniformly random secret $z \la \cZ$ (i.e., $z$ could be a random permutation, a sequence of random permutations, etc). 
Denote by $\Fc \colon \cZ \times \cW \rightarrow \oo^{d_0}$ the algorithm that extracts a good $q$ using the secret $z$ and the output $w$.
We denote by $\Ac^{\Mc, \Fc, \Gc}(X)$ the entire process:


\begin{definition}[Algorithm $\Ac^{\Mc, \Fc, \Gc}$]\label{def:A_MFG}
	Let $\cZ$, $\cW$ be domains, and let $n_0, d_0, n, d \in \bbN$.
	Let $(\Mc, \Fc, \Gc)$ be a triplet of randomized algorithms of types
	$\: \Gc \colon \oo^{n_0 \times d_0} \times \cZ \rightarrow \cX^n$, $\: \Mc \colon \cX^n \rightarrow \cW$, and $\: \Fc \colon \cZ \times \cW \rightarrow [-1,1]^{d_0}$, each uses $m$ random coins. 
	Define $\Ac^{\Mc, \Fc, \Gc} \colon \oo^{n_0 \times d_0} \rightarrow [-1,1]^{d_0}$ as the randomized algorithm that on inputs $X \in \oo^{n_0 \times d_0}$, samples $z \la \cZ$,  $Y \sim \Gc(X,z)$ and $w \sim \Mc(Y)$, and outputs $q \sim \Fc(z,w)$.
\end{definition}

\begin{definition}[$\beta$-Leaking]\label{def:beta-accurate}
	Let $\Mc, \Fc, \Gc$ be randomized algorithms as in \cref{def:A_MFG}, each uses $m$ random coins, and let $\cD(n_0, d_0)$ be the distribution from \cref{def:D}.
	We say that the triplet $(\Mc, \Fc, \Gc)$ is \emph{$\beta$-leaking} if
	\begin{align*}
		\ppr{r,r', r'' \la \zo^m, \: X \la \cD(n_0,d_0)}{\Ac^{\Mc_{r}, \Fc_{r'}, \Gc_{r''}}(X) \text{ is strongly-correlated with }X} \geq \beta,
	\end{align*}
	where recall that $\Mc_{r}$ denotes the algorithm $\Mc$ when fixing its random coins to $r$ ($\Fc_{r'}, \Gc_{r''}$ are similarly defined).
\end{definition}

\begin{lemma}[Framework for Lower Bounds \cite{PTU24}]\label{lemma:framework}
	Let $\beta \in (0,1]$, $n_0, n,d_0,d \in \bbN$.
	Let $\Mc \colon \cX^n \rightarrow \cW$ be an algorithm such that there exists two algorithms  $\Gc \colon \oo^{n_0 \times d_0} \times \cZ \rightarrow \cX^n$ and $ \Fc \colon \cZ \times \cW \rightarrow [-1,1]^{d_0}$ such that the triplet $(\Mc, \Fc, \Gc)$ is $\beta$-leaking (\cref{def:beta-accurate}). If $\Mc$ is $\paren{1, \frac{\beta}{4n_0}}$-DP and $\Gc(\cdot,z)$ is neighboring-preserving (\cref{def:neighbor-preserving}) for every $z \in \cZ$, then $n_0 \geq \Omega\paren{\frac{\sqrt{d_0}}{\log^{1.5}(d_0/ \beta)}}$.
\end{lemma}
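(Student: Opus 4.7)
The plan is to reduce the statement to a standard fingerprinting-code hardness statement by showing that the composed algorithm $\Ac^{\Mc,\Fc,\Gc}$ inherits the DP guarantee of $\Mc$, and then invoking the intrinsic hardness of the distribution $\cD(n_0,d_0)$ proven in \cite{PTU24}. The high-level structure mirrors the standard FPC-to-DP reduction of \cite{BUV14}: assume for contradiction that $n_0$ is too small relative to $d_0$; then the DP property of $\Mc$, combined with the neighboring-preservation of $\Gc$, would let us construct a DP algorithm that beats the fingerprinting bound for $\cD(n_0,d_0)$, contradicting the assumption.

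First, I would fix random coins $r,r',r''$ for $\Mc,\Fc,\Gc$ and view $X \mapsto \Ac^{\Mc_r,\Fc_{r'},\Gc_{r''}}(X)$ as a deterministic composition $\Fc_{r'}(z,\cdot) \circ \Mc_r(\cdot) \circ \Gc_{r''}(\cdot,z)$ for each fixed secret $z$. Since $\Gc(\cdot,z)$ is neighboring-preserving by assumption, \cref{fact:neighboring-to-neighboring} implies that $X \mapsto \Mc\paren{\Gc(X,z)}$ is $(1,\beta/(4n_0))$-DP as a randomized function of $X$, and then post-processing by $\Fc$ preserves DP by \cref{fact:post-processing}. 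Averaging over the uniformly random secret $z \la \cZ$ (which is sampled internally and independently of $X$) preserves DP as well, so the randomized algorithm $X \mapsto \Ac^{\Mc,\Fc,\Gc}(X)$ is $(1,\beta/(4n_0))$-DP.

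Next, the $\beta$-leaking hypothesis says that when $X \sim \cD(n_0,d_0)$, the output $\Ac^{\Mc,\Fc,\Gc}(X) \in \oo^{d_0}$ is strongly correlated with $X$ with probability at least $\beta$. In particular, for each marked column $j \in \cJ_X^b$ (a column where all entries equal $b$), the $j$\th coordinate of the output equals $b$ with probability at least $0.9$. This is precisely the sort of ``column-tracing'' behavior that fingerprinting codes are designed to forbid. The core technical content of \cite{PTU24} is that $\cD(n_0,d_0)$ is a fingerprinting distribution: any algorithm that is $(1,\beta/(4n_0))$-DP on inputs $X \sim \cD(n_0,d_0)$ \emph{cannot} produce a strongly-correlated output with probability $\beta$ unless $n_0 \geq \Omega(\sqrt{d_0}/\log^{1.5}(d_0/\beta))$. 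Plugging in the DP bound from the previous paragraph and the $\beta$-leaking hypothesis yields exactly this lower bound on $n_0$.

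The main obstacle is that the quoted bound depends on $\beta$ and carries a $\log^{1.5}(d_0/\beta)$ factor, so one cannot simply cite a generic ``FPC implies DP lower bound'' black box. One must rely on the precise quantitative trace-the-codebook guarantee from \cite{PTU24} for the specific distribution $\cD$ of \cref{def:D} — in particular, the scaling of the score function $p^j = (e^t-1)/(e^t+1)$ with $t$ uniform on $[-\ln(5n_0),\ln(5n_0)]$ is what yields the precise polylogarithmic factors. Thus the proof is essentially an application of the PTU24 framework: verify the DP composition carefully in the first two paragraphs, then cite their fingerprinting theorem for $\cD(n_0,d_0)$ to conclude.
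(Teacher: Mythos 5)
The paper cites this lemma directly from \cite{PTU24} and does not reproduce its proof, so there is no in-paper argument to compare against. Your reconstruction has the correct structure that such a proof must have: for each fixed shared secret $z$, the map $X \mapsto \Fc(z,\Mc(\Gc(X,z)))$ is $(1,\beta/(4n_0))$-DP by neighboring-preservation of $\Gc(\cdot,z)$ (\cref{fact:neighboring-to-neighboring}) followed by post-processing (\cref{fact:post-processing}); mixing over the internally sampled $z$ (which is independent of $X$ and hidden from $\Mc$) preserves DP; and the resulting DP algorithm, by the $\beta$-leaking hypothesis, traces the codebook $\cD(n_0,d_0)$ with probability at least $\beta$, which PTU24's quantitative fingerprinting bound rules out unless $n_0 \geq \Omega\paren{\sqrt{d_0}/\log^{1.5}(d_0/\beta)}$. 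You correctly flag that you cannot derive the precise $\log^{1.5}(d_0/\beta)$ factor without reproducing PTU24's internal trace analysis for the distribution of \cref{def:D}; since this lemma is itself an imported result, deferring that step is appropriate, and your sketch is a faithful account of how the pieces fit together.
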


\paragraph{Padding-And-Permuting (PAP) FPC}

The main technical tool of \cite{PTU24} for generating hard instance is to sample a random fingerprinting codebook from $\cD(n,d_0)$, append many $1$-marked and $(-1)$-marked columns, and randomly permute all the columns.

\begin{definition}[$\PAP_{n, d_0, \ell}$]\label{def:PAP}
	Let $\ell, n, d_0 \in \bbN$, and let $d = d_0 + 2\ell$.
	We define $\PAP_{n, d_0, \ell} \colon \oo^{n \times d_0} \times \cP_d \rightarrow \oo^{n \times d}$ as the function that given $X\in \oo^{n \times d_0}$ and a permutation matrix $P \in \cP_d$ as inputs,
	outputs $X' = X'' \cdot P$ (i.e., permutes the columns of $X''$ according to $P$), where $X''$ is the $\oo^{n\times d}$ matrix after appending $\ell$ $1$-marked and $\ell$ $(-1)$-marked columns to $X$ (where recall that a $b$-marked is a column with all entries equal to $b$).
\end{definition}

Note that for every $n, d_0, \ell \in \bbN$ and $P \in \cP_d$, the function $\PAP_{n, d_0, \ell}(\cdot, P)$ is neighboring-preserving (\cref{def:neighbor-preserving}).

\begin{definition}[Strongly Agrees]\label{def:strongly-agree}
	We say that a vector $q = (q^1,\ldots,q^d)$ strongly-agrees with a matrix $X \in \oo^{n \times d}$, if 
	\begin{align*}
		\forall b\in \oo: \quad \size{\set{j \in \cJ_X^b \colon q^j = b}} \geq 0.9 \size{\cJ_X^b}.
	\end{align*}
\end{definition}

The following lemma capture the main technical property of the PAP technique.

\begin{lemma}[\cite{PTU24}]\label{lemma:PAP}
	Let $\ell, n, d_0 \in \bbN$ such that $d = d_0 + 2\ell$. Let $\Mc \colon \oo^{n \times d} \rightarrow [-1,1]^{d}$ be an mechanism that uses $m$ random coins, $\bP \la \cP_d$ (a random variable) and for $X \in \oo^{n \times d_0}$ let $\bY_{X} = \PAP(X,\bP)$. Then for any distribution $\cD$ over $\oo^{n \times d_0}$:
	\begin{align*}
		\lefteqn{\ppr{r \la \zo^m, \: X \sim \cD}{(\Mc_{r}(\bY_{X}) \cdot \bP^T)^{1,\ldots,d_0}\text{ is strongly-correlated with }X}}\\
		&\geq \eex{X \sim \cD}{\pr{\Mc(\bY_{X})\text{ strongly-agrees with } \bY_{X}}}.
	\end{align*}
\end{lemma}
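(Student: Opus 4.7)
My plan is to reduce the lemma to a pointwise statement for each fixed pair $(r, X)$ and then integrate. Fix $r \in \zo^m$ and $X \in \oo^{n \times d_0}$, and write $X'' \in \oo^{n \times d}$ for the padded matrix so that $\bY_X = X'' \bP$. The marked column sets partition as $\cJ_{X''}^b = \cJ_X^b \sqcup \cJ_{\mathrm{pad}}^b$ with $|\cJ_{\mathrm{pad}}^b| = \ell$ for each $b \in \oo$. Define $\bold{q} := (\Mc_r(\bY_X) \cdot \bP^T)^{1, \ldots, d_0}$ as a random variable whose sole source of randomness is $\bP$, so that $\bold{q}^j = \Mc_r(\bY_X)^{\bP(j)}$ for every $j \in [d_0]$.

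The heart of the argument is a symmetry observation: since $\bP$ is uniform on $\cP_d$ and swapping the images under $\bP$ of any two columns of $X''$ that share the same mark $b \in \oo$ leaves $X'' \bP$ unchanged, the conditional distribution of $\bP$ given $\bY_X$ is a uniform random bijection $\cJ_{X''}^b \to \cJ_{\bY_X}^b$ for each $b$, together with a uniform bijection on the unmarked columns. Hence for any $j \in \cJ_X^b \subseteq \cJ_{X''}^b$, the image $\bP(j)$ is uniformly distributed over $\cJ_{\bY_X}^b$ given $\bY_X$. Writing $T_b(Y) := \set{i \in \cJ_Y^b : \Mc_r(Y)^i = b}$, this yields
\begin{align*}
\pr{\bold{q}^j = b \mid \bY_X = Y} \;=\; \frac{|T_b(Y)|}{|\cJ_Y^b|},
\end{align*}
a quantity that is at least $0.9$ for both $b \in \oo$ exactly when $\Mc_r(Y)$ strongly-agrees with $Y$. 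Crucially, this lower bound on the conditional marginal holds uniformly over all $j \in \cJ_X^b$, so the agreement enjoyed by the padded marked columns transfers identically onto the original marked columns of $X$: the mechanism has no way to distinguish the two categories after the permutation.

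Combining these pieces, whenever $\Mc_r(\bY_X)$ strongly-agrees with $\bY_X$ the conditional marginals $\pr{\bold{q}^j = b \mid \bY_X}$ meet the $0.9$ threshold for every relevant $(b,j)$ simultaneously; integrating out $\bY_X$ and then averaging over $r \la \zo^m$ and $X \sim \cD$ lifts the sample-wise strong-agreement probability into the required lower bound on the probability that $\bold{q}$ is strongly-correlated with $X$. The main obstacle I expect is the careful alignment between the sample-wise event ``$\Mc_r(\bY_X)$ strongly-agrees with $\bY_X$'' and the distributional condition ``$\bold{q}$ is strongly-correlated with $X$'': the symmetry must be invoked so that the $0.9$ threshold is preserved exactly rather than being degraded by a multiplicative factor, and the probability spaces of $\Mc$'s coins, of the permutation $\bP$, and of the input $X$ must be combined consistently on both sides of the inequality before the final averaging step.
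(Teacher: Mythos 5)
Your core symmetry computation is correct and is indeed the heart of this lemma: conditional on $\bY_X = Y$, the coset structure of $\{P : X'' P = Y\}$ makes $\bP(j)$ uniform over $\cJ_Y^b$ for every $j \in \cJ_{X''}^b$, whence $\pr[\bold{q}^j = b \mid \bY_X = Y] = |T_b(Y)|/|\cJ_Y^b|$, the same value for padded and original marked indices alike. (The paper cites this lemma from \cite{PTU24} without reproducing a proof, so I am evaluating the argument on its own terms.) The genuine gap is the ``integrating out $\bY_X$'' step that you flag as the expected obstacle but do not actually carry out. Under the reading you adopt, where $\bold{q}$'s randomness is over $\bP$ alone for fixed $(r,X)$, strongly-correlated requires the \emph{unconditional} marginal $\pr_{\bP}[\bold{q}^j = b] = \Ex_{\bY_X}\bigl[\,|T_b(\bY_X)|/|\cJ_{\bY_X}^b|\,\bigr]$ to reach $0.9$. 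That expectation averages the $\geq 0.9$ contributions from realizations $Y$ where $\Mc_r(Y)$ strongly-agrees against possibly zero contributions from the rest, so it will generally sit well below $0.9$ when strong agreement holds only on a small-$\bP$-probability slice, which is exactly what the right-hand side of the lemma measures. A concrete failure under this reading: take $\cD$ a point mass and $\Mc$ deterministic, with $\Mc(Y)$ strongly-agreeing with $Y$ on one out of the three column-permutations of $X''$ and agreeing nowhere otherwise; then $\pr_{\bP}[\bold{q}^j = b] = 1/3$, so the left-hand event does not occur, yet the right-hand side equals $1/3$.

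What your conditional identity does deliver cleanly is: whenever $\Mc_r(Y)$ strongly-agrees with $Y$, the \emph{conditional} law of $\bold{q}$ given $\bY_X = Y$ is strongly-correlated with $X$. To recover the stated inequality you therefore have to pull the realized $\bY_X$ (equivalently $\bP$) into the outer sample space, reading the left-hand side as a probability over $(r, X, \bP)$ that the conditional law of $\bold{q}$ given the observed $\bY_X$ is strongly-correlated with $X$; with that reading your identity gives the bound immediately and essentially with equality (the inequality arising only when some $\cJ_X^b$ is empty, which makes the strong-correlation requirement vacuous for that $b$ while strong agreement with $\bY_X$ is not). That realignment of probability spaces, and not any threshold degradation, is the entire content of the final step, and the proposal as written leaves it unresolved.
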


\section{Upper Bounds}\label{sec:UB}

In this section we prove our upper bounds for subspace estimation. In \cref{sec:upper:weak} we prove \cref{thm:intro:upper_bound:weak}, and in \cref{sec:upper:strong} we prove \cref{thm:intro:upper_bound:strong}. Both algorithm share a similar structure that is defined next in \cref{sec:upper:base}.

\subsection{Base Algorithms}\label{sec:upper:base}

Similarly to \cite{SS21}, our algorithms will follow the sample and aggregate approach of \cite{NRS07}. That is, we partition the rows into $t$ subsets, compute (non-privately) the top-$k$ projection matrix of each subset, and then privately aggregate the projections. This is \cref{alg:EstSubspace} that uses oracle access to an aggregation algorithm. Unlike \cite{SS21} who assumed that the rows are i.i.d. Gaussian samples, here we take a \emph{random} partition and show that with large enough probability over the randomness of the partition, the projection matrices are indeed close to each other. 
We consider two types of aggregations: The first type, called \cref{alg:NaiveAgg}, simply treats the matrices as vectors of dimension $d^2$ and computes a DP-average of them using FriendlyCore averaging (\cref{fact:FriendlyCoreAvg}). The second type, called \cref{alg:SSAgg}, is more similar to the aggregation done by  \cite{SS21}. That is, sample reference points $p_1,\ldots,p_q$ and then aggregate the $kd$ dimensional points $\set{(\Pi_j p_1,\ldots,\Pi_j p_q)}_{j=1}^t$. The difference from \cite{SS21} is that we use FriendlyCore averaging (\cref{fact:FriendlyCoreAvg}) which simplifies the construction.

\begin{algorithm}[Algorithm $\EstSubspace$]\label{alg:EstSubspace}
	\item Input: A dataset $X = (x_1,\ldots,x_n) \in (\cS_d)^n$.
	
	\item Parameters: $k, t$.
	
	\item Oracle: A DP algorithm $\Agg$ for aggregating projection matrices.
	
	\item Operation:~
	\begin{enumerate}
		\item Randomly split $X$ into $t$ subsets, each contains (at least) $m = \floor{n/t}$ rows.
		
		Let $X_1,\ldots,X_t$ be the resulting subsets.
		
		\item For each $j \in [t]$: Compute the projection matrix $\Pi_j$ of the top-$k$ rows subset of $X_j$.\label{step:Pis}
		
		\item Output $\tPi \sim \Agg(\Pi_1,\ldots,\Pi_t)$.
		
	\end{enumerate}
	
\end{algorithm}

\begin{algorithm}[Algorithm $\NaiveAgg$]\label{alg:NaiveAgg}
	\item Input: A dataset $\vec{\Pi} = (\Pi_1,\ldots,\Pi_t) \in (\cW_{d,k})^t$. 
	
	\item Privacy parameters: $\eps,\delta \leq 1$.
	
	\item Utility parameter: $\xi \in [0,1]$.
	
	\item Operation:~
	\begin{enumerate}
		
		\item Compute $\hPi \sim \FCAverage_{\eps,\delta}(\vec{\Pi}, \xi)$ (i.e., each $\Pi_j$ is treated like a vector in $\bbR^{d^2}$).
		
		\item Output $\tPi = \argmin_{\Pi \in \cW_{d,k}} \norm{\Pi - \hPi}_F$.
		
	\end{enumerate}
	
\end{algorithm}

\begin{algorithm}[Algorithm $\SSAgg$]\label{alg:SSAgg}
	\item Input: A dataset $(\Pi_1,\ldots,\Pi_t) \in (\cW_{d,k})^t$. 
	
	\item Utility Parameters: $k, q, \xi$.
	
	\item Privacy parameters: $\eps,\delta \leq 1$.
	
	\item Operation:~
	\begin{enumerate}
		
		\item Sample $p_1,\ldots,p_q \sim \cN(\vec{0},\bbI_{d\times d})$ (i.i.d.\ samples from a standard spherical Gaussian). 
		
		\item For $j \in [t]$, compute $y^j = (\Pi_j p_1, \ldots, \Pi_j p_q) \in \bbR^{qd}$, and let $Y = (y^1,\ldots,y^t)$.\label{step:ys}
		
		\item Compute $z = (z_1,\ldots,z_{qd}) \sim \FCAverage_{\eps,\delta}(Y, \xi)$ ($z \in \bbR^{qd}$).\label{step:zs}
		
		\item Let $\tP$ be the $q \times d$ matrix whose \ith row (for $i \in [q]$) is $(z_{d(i-1) + 1}, \ldots,z_{di})$ (which estimates the projection of $p_i$ onto the top-$k$ rows subspace of $X$).\label{step:W}
		
		\item Output the projection matrix $\tPi$ of the top-$k$ rows subspace of $\tP$.\label{step:output}
		
	\end{enumerate}
	
\end{algorithm}

\subsubsection{Running Time}

We analyze the running time of $\EstSubspace_{k,t}^{\SSAgg_{k,q,\xi}}$.
Denote by $T(n,d,k)$ the running time of computing a projection matrix to the top-$k$ row subspace of an $n \times d$ matrix.
The running time of \stepref{step:Pis} in $\EstSubspace$ is $t \cdot T(n/t,d,k)$. The running time of $\SSAgg$ is $O(dqtk)$ on \stepref{step:ys}, $O(dq t \log t)$ on \stepref{step:zs} (\cref{fact:FriendlyCoreAvg}), and $T(q,d,k)$ on \stepref{step:output}. Overall it is $t\cdot T(n/t,d,k) + T(q,d,k) + O(dq t (\log t + k))$. For both our weak and strong estimators (described next) we use $n/t = \tilde{\Theta}(k)$ and $q = \tilde{O}(k)$, and therefore we obtain that the total running time is $\frac{n}{m} \cdot T(m, d, k) + \tilde{O}(d kn)$ for $m = n/t = \tilde{\Theta}(k)$.

\subsubsection{Key Property}\label{sec:key-property}

In order to claim that the (non-private) projection matrices are close to each other, we use the following lemma which states that with high enough probability over a random subset, the top-$k$ projection matrix in the subset is close to the top-$k$ projection matrix of the entire matrix. 

\begin{lemma}[Restatement of \cref{lemma:technique:Pi-distance-bounds}]\label{lemma:Pi-distance-bounds}
	Let $X = (x_1,\ldots, x_n) \in (\cS_d)^n$ with singular values $\sigma_1 \geq \ldots \geq \sigma_{\min\set{n,d}}\geq 0$ and $\sigma_k^2 \geq 0.01n/k$. Let $\bX' \in (\cS_d)^m$ be a uniformly random $m$-size subset of the rows of $X$ (without replacement). Let $\Pi$ and $\bPi'$ be the projection matrices to the top-$k$ rows subspace of $X$ and $\bX'$, respectively. 
	Then the following holds for $\gamma_1 = \frac{\sigma_{k+1}}{\sigma_k}$ and $\gamma_2 = \frac{\sqrt{\sum_{i=k+1}^{\min\set{n,d}} \sigma_i^2}}{\sigma_k}$:
	\begin{enumerate}
		
		\item If $m \geq \max\set{800 k \ln\paren{\frac{k}{4\beta}}, \: 2 \gamma_1 n}$, then $\pr{\norm{\Pi - \bPi'} \leq 4 \sqrt{\frac{2n}{m}} \cdot \gamma_1} \geq 1-\beta/2$.\label{item:Pi-Spec-diff}
		
		\item If $m \geq 800 k \ln\paren{\frac{k}{4\beta}}$ and $\beta \geq 4 \gamma_2^2$, then $\pr{\norm{\Pi - \bPi'}_F \leq 4 \sqrt{\frac{2}{\beta}} \cdot \gamma_2} \geq 1-\beta$.\label{item:Pi-F-diff}
	\end{enumerate}
\end{lemma}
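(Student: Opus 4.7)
The plan is to decompose $X$ via its SVD: write $X = Y + Z$ with $Y = \sum_{i=1}^k \sigma_i u_i v_i^T$ (the rank-$k$ top part, whose rows subspace equals the top-$k$ rows subspace of $X$, namely $Span(v_1,\ldots,v_k)$) and $Z = \sum_{i=k+1}^{\min\{n,d\}} \sigma_i u_i v_i^T$ the residual. Restricting to the sampled rows induces $\bX' = \bY + \bZ$, where $Span(\bY^T) \subseteq Span(v_1,\ldots,v_k)$ is orthogonal to $Span(\bZ^T)$. Applying \cref{cor:proj-diff} to this decomposition (once $\sigma_k(\bY)^2 > 2\|\bZ\|^2$ is verified) yields
\[
\|\Pi - \bPi'\| \leq 4\,\|\bZ\|/\sigma_k(\bY),\qquad \|\Pi - \bPi'\|_F \leq 2\sqrt{2}\,\|\bZ\|_F/\sigma_k(\bY),
\]
reducing the lemma to establishing (a) a high-probability lower bound $\sigma_k(\bY) \gtrsim \sqrt{m/n}\,\sigma_k$, and (b) upper bounds on $\|\bZ\|$ (for item~\ref{item:Pi-Spec-diff}) and on $\|\bZ\|_F$ (for item~\ref{item:Pi-F-diff}).

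For (a), the key observation is that $Zv = \vec{0}$ whenever $v \in Span(v_1,\ldots,v_k)$, so $\|\bY v\|_2 = \|\bX' v\|_2$ on this subspace, and $\|Xv\|_2^2 \geq \sigma_k^2 \geq 0.01\,n/k$ by hypothesis. \cref{prop:sigma-concent} gives pointwise concentration $\|\bX' v\|_2^2 \geq (m/(2n))\,\sigma_k^2$ for each fixed unit $v$ in the subspace; I then pass to a uniform bound over the unit sphere of $Span(v_1,\ldots,v_k)$ using a net of size $(3/\gamma)^k$ from \cref{fact:net}, a union bound, and the Lipschitz estimate $\|\bX'(u-v)\|_2 \leq \|\bX'\|\cdot\|u-v\|_2 \leq \sqrt{n}\,\|u-v\|_2$. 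The sampling requirement $m \geq 800\,k\ln(k/(4\beta))$ is precisely what is needed for the union bound to close with total failure probability $\leq \beta/2$, giving $\sigma_k(\bY) \geq \sqrt{m/(2n)}\,\sigma_k$ with probability at least $1-\beta/2$.

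For item~\ref{item:Pi-Spec-diff}, I use the deterministic bound $\|\bZ\| \leq \|Z\| = \sigma_{k+1}$ (the spectral norm of a row-submatrix is at most that of the full matrix); combined with (a), this gives $\|\Pi - \bPi'\| \leq 4\sigma_{k+1}/(\sqrt{m/(2n)}\,\sigma_k) = 4\sqrt{2n/m}\,\gamma_1$, with the hypothesis $m \geq 2\gamma_1 n$ ensuring $\sigma_k(\bY)^2 > 2\|\bZ\|^2$ in the relevant regime. For item~\ref{item:Pi-F-diff}, since $\|\bZ\|_F^2$ equals the sum of the squared row-norms of $Z$ indexed by the uniformly random $m$-subset, its expectation is $(m/n)\|Z\|_F^2 = (m/n)\gamma_2^2\sigma_k^2$, so Markov's inequality gives $\|\bZ\|_F^2 \leq (2/\beta)(m/n)\gamma_2^2\sigma_k^2$ with probability $\geq 1-\beta/2$; combining with (a) yields $\|\Pi - \bPi'\|_F \leq 4\sqrt{2/\beta}\,\gamma_2$, with the hypothesis $\beta \geq 4\gamma_2^2$ keeping the side condition in force.

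The main obstacle I anticipate is calibrating the net argument to achieve the stated $m = \Omega(k\ln(k/\beta))$ dependency rather than the looser $\Omega(k^2\ln k)$ bound that a naive net/Lipschitz estimate would produce. A cleaner alternative is to apply a matrix Chernoff inequality for sampling without replacement directly to $\bY^T\bY$ (whose expectation equals $(m/n)\,Y^T Y$), which gives the desired $\lambda_k(\bY^T\bY) \gtrsim (m/n)\sigma_k^2$ with the sharp $k\ln(k/\beta)$ dependency in a single step.
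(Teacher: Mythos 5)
Your overall strategy exactly parallels the paper: the decomposition $X = Y + Z$ with $Y = \sum_{i \le k}\sigma_i u_i v_i^T$ and $Z = \sum_{i > k}\sigma_i u_i v_i^T$, the application of \cref{cor:proj-diff} to $\bX' = \bY' + \bZ'$, the deterministic bound $\norm{\bZ'} \le \norm{Z} = \sigma_{k+1}$ for item~\ref{item:Pi-Spec-diff}, and the expectation\,+\,Markov argument for $\norm{\bZ'}_F^2$ in item~\ref{item:Pi-F-diff} all match the paper's proof. The one place where you and the paper go separate ways is your step (a), the high-probability lower bound $\sigma_k(\bY') \gtrsim \sqrt{m/n}\,\sigma_k$. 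The paper handles this very tersely: it claims \cref{prop:sigma-concent} applied to the $k$ \emph{fixed} directions $v_1,\ldots,v_k$, together with a union bound over those $k$ events, already yields $\sigma_k(\bY') \ge \sqrt{m/(2n)}\,\sigma_k$ with failure probability $\beta/2$, under $m \ge 800k\ln(k/(4\beta))$. Your instinct that something more than $k$ pointwise events is needed is well founded: those events only control the diagonal of the $k\times k$ Gram matrix $V^T\bY'^T\bY'V$ (with $V=[v_1,\ldots,v_k]$), and a PSD matrix with the right diagonal can still have a near-zero bottom eigenvalue, so some uniform control over $Span(v_1,\ldots,v_k)\cap \cS_d$ does appear necessary. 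You are also right that the naive net + Lipschitz argument you sketch first only buys $m = \Omega(k^2 + k\ln(1/\beta))$, not the $k\ln(k/\beta)$ the lemma states, so that route would not close the proof.

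Your fallback — a matrix Chernoff lower tail for $\bY'^T\bY' = \sum_{j=1}^m y_{\bi_j} y_{\bi_j}^T$ under sampling without replacement — is exactly the right tool, and is worth carrying through rather than leaving as an aside. Each summand $y_{\bi_j}y_{\bi_j}^T$ has spectral norm at most $\norm{y_{\bi_j}}_2^2 \le \norm{x_{\bi_j}}_2^2 = 1$, the intrinsic dimension is $k$, and $\lambda_{\min}\bigl(\Ex[\bY'^T\bY']\bigr) = (m/n)\sigma_k^2 \ge 0.01\,m/k$; the lower tail is then $k\exp\bigl(-\Omega(m/k)\bigr)$, and $m = \Theta(k\ln(k/\beta))$ suffices, matching the lemma's stated hypothesis. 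Two small slips to fix in the write-up: it should be $\bY'^T\bY'$ (the sub-sample), not $\bY^T\bY$, in your final sentence; and $\norm{\bX'} \le \norm{\bX'}_F = \sqrt{m}$, not $\sqrt{n}$, since $\bX'$ has $m$ unit rows.
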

\begin{proof}
	We will prove each part of the lemma by applying \cref{cor:proj-diff}.
	In the following, let $X = \sum_{i=1}^n \sigma_i u_i v_i^T$ be the SVD of $X$, and note that we can write $\bX' = (x_{\bi_1},\ldots,x_{\bi_m})$ where $\set{\bi_1,\ldots,\bi_m}$ is a random subset of $[n]$ (without replacement). 
	Let $Y = (y_1,\ldots,y_n) = \sum_{i=1}^k \sigma_i u_i v_i^T$  and let $Z = (z_1,\ldots,z_n) = \sum_{i=k+1}^{\min\set{n,d}}  \sigma_i u_i v_i^T$, and note that 
	$Span\set{y_1,\ldots,y_n} = Span\set{v_1,\ldots,v_k}$ is orthogonal to $Span\set{z_1,\ldots,z_n} = Span\set{v_{k+1},\ldots,v_{\min\set{n,d}}}$.
	Furthermore, define the random matrices $\bY' = (y_{\bi_1},\ldots,y_{\bi_m})$ and $\bZ' = (z_{\bi_1},\ldots,z_{\bi_m})$ and note that $\bX' = \bY' + \bZ'$.
	
	First, by \cref{prop:sigma-concent} (applied on $v_1,\ldots,v_k$) and the assumptions on $m, \sigma_k$, it holds by the union bound that
	\begin{align}\label{eq:sigma-k-bounds}
		\pr{\sigma_k(\bY') \geq \sqrt{\frac{m}{2n}} \sigma_k} \geq 1 - \beta/2.
	\end{align}

	In the following we assume that the event in \cref{eq:sigma-k-bounds} occurs. We first prove \cref{item:Pi-Spec-diff}.
	Note that $\norm{\bZ'} \leq \norm{\bZ} \leq \sigma_{k+1} \leq \gamma_1 \sigma_k$ and that $\sigma_k(\bY')^2 \geq \frac{m}{2n} \sigma_k^2 \geq 2 \gamma_1^2 \sigma_k^2 \geq 2 \norm{\bZ'}^2$ (the second inequality holds since $m \geq 4 \gamma_1^2 n$). By applying \cref{cor:proj-diff}(\ref{item:proj-diff:S-norm}) on $\bX',\bY',\bZ'$ we conclude that
	\begin{align*}
		\norm{\Pi - \bPi'} \leq 4 \cdot \frac{\norm{\bZ'}}{\sigma_k(\bY')} \leq 4 \sqrt{\frac{2n}{m}} \cdot \gamma_1.
	\end{align*}

	We next focus on proving \cref{item:Pi-F-diff}.
	Note that $\norm{Z}_F^2 =  \sum_{i=k+1}^{\min\set{n,d}} \sigma_i^2 = \gamma_2^2 \sigma_k^2$, and that
	$\ex{\norm{\bZ'}_F^2} = \frac{m}{n} \norm{Z}_F^2 = \frac{m}{n} \cdot \gamma_2^2 \sigma_k^2$. Therefore by Markov's inequality
	\begin{align}\label{eq:Z_prime:Markov}
		\pr{\norm{\bZ'}_F^2 \leq \frac{2 m}{\beta n}  \gamma_2^2 \sigma_k^2}  \geq 1 - \beta/2.
	\end{align}
	In the following we assume that the event in \cref{eq:Z_prime:Markov} occurs.
	Note that $\sigma_k(\bY')^2 \geq \frac{m}{2n} \sigma_k^2 \geq \frac{2 m}{\beta n}  \gamma_2^2 \sigma_k^2 \geq 2 \norm{\bZ'}^2$ (the second inequality holds since $\beta \geq 4 \gamma_2^2$). By applying \cref{cor:proj-diff}(\ref{item:proj-diff:F-norm}) on $\bX',\bY',\bZ'$ we conclude that
	\begin{align*}
		\norm{\Pi - \bPi'} \leq 2\sqrt{2} \cdot \frac{\norm{\bZ'}_F}{\sigma_k(\bY')} \leq 2 \sqrt{2} \cdot \frac{\sqrt{\frac{2m}{\beta n}} \gamma_2 \sigma_k}{\sqrt{\frac{m}{2n}} \sigma_k} \leq 4 \sqrt{\frac{2}{\beta}} \cdot \gamma_2.
	\end{align*}
\end{proof}

\subsection{Weak Estimator}\label{sec:upper:weak}

In this section, we prove \cref{thm:intro:upper_bound:weak}, stated below.

\begin{theorem}[Restatement of \cref{thm:intro:upper_bound:weak}]\label{thm:upper_bound:weak}
		Let $n,k,d \in \bbN$, $\lambda > 0$, $\eps,\delta \in (0,1]$ where $k \leq \min\set{n,d}$.  
		There exists an $(k, \lambda, \beta = 0.9, \gamma_{\max} = \Omega(\min\set{\frac1{\lambda}, 1}))$-weak subspace estimator $\Mc \colon (\cS_d)^n \times [0,1] \rightarrow \bbR^{d \times d}$ with
		\begin{align*}
			n = O\paren{k \log k \paren{\frac{\log(1/\delta)}{\eps} + \frac{\min\set{k \sqrt{d}, \: d} \sqrt{\log(1/\delta)}}{\lambda \eps}}}
		\end{align*}
		such that $\Mc(\cdot, \gamma)$ is $(\eps,\delta)$-DP for every $\gamma \in [0,1]$.
\end{theorem}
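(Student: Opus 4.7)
The plan is to instantiate the sample-and-aggregate algorithm $\EstSubspace_{k,t}$ (\cref{alg:EstSubspace}) with one of two aggregation oracles depending on the regime: $\NaiveAgg$ (\cref{alg:NaiveAgg}) when $k \geq \sqrt{d}$, and $\SSAgg$ (\cref{alg:SSAgg}) with $q = \Theta(k)$ reference vectors when $k < \sqrt{d}$. In both cases the subset size will be $m = \Theta(k \log k)$, chosen so that $t = \lfloor n/m \rfloor$ subsets suffice for $\FCAverage$ (\cref{fact:FriendlyCoreAvg}) to achieve the target accuracy.

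I first establish the per-subset closeness. Applying the Frobenius-norm case of \cref{lemma:Pi-distance-bounds} with a small absolute constant (say $\beta=0.02$), each $\Pi_j$ computed inside $\EstSubspace$ will marginally satisfy $\norm{\Pi - \Pi_j}_F \leq C\gamma_2 \leq C\gamma$ with probability at least $0.98$ over the random partition. This requires $\gamma_{\max}$ to be a sufficiently small absolute constant so that the hypothesis $\beta \geq 4\gamma_2^2$ of the lemma holds, which is one source of the $\min\set{1/\lambda,1}$ in $\gamma_{\max}$. Linearity of expectation together with Markov's inequality then yields that with probability $\geq 0.9$ over the partition at least $0.8t$ of the projections satisfy this Frobenius bound, and in particular satisfy the same $O(\gamma)$ bound in spectral norm. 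I will call these the \emph{good} subsets.

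The $\NaiveAgg$ path is the simpler of the two: treating each $\Pi_j$ as a vector in $\bbR^{d^2}$, the good $\Pi_j$'s lie within Frobenius diameter $2C\gamma$, so $\FCAverage$ with $\xi = O(\gamma)$ outputs a matrix within Frobenius distance $O(\lambda\gamma)$ of the true mean once $t = O(\log(1/\delta)/\eps + d\sqrt{\log(1/\delta)}/(\lambda\eps))$. Snapping to the nearest rank-$k$ projection at most doubles this error, and \cref{prop:F-dist-implies-estimator} converts the resulting Frobenius bound into $\lambda\gamma$-usefulness; multiplying $t$ by $m$ yields the claimed rate in the $k \geq \sqrt{d}$ regime. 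The $\SSAgg$ path needs one extra layer: I will invoke \cref{fact:SS-Pips} with spectral radius $\alpha = O(\gamma)$ to obtain $\norm{(\Pi - \Pi_j)p_i}_2 = \tilde{O}(\gamma\sqrt{k})$ uniformly over $i \in [q]$ and good $j$, whence the aggregated $y^j$'s in $\bbR^{qd}$ lie within diameter $\xi = \tilde{O}(\gamma\sqrt{qk}) = \tilde{O}(\gamma k)$. Running $\FCAverage$ with this $\xi$ produces $\tP$ satisfying $\norm{\tP - \Pi[p_1,\ldots,p_q]^T}_F \leq O(\lambda''\xi)$ for any target factor $\lambda'' \geq 1$ as long as $t = \tilde{O}(\log(1/\delta)/\eps + \sqrt{qd\log(1/\delta)}/(\lambda''\eps))$. \cref{fact:SS-pi-span} supplies $\sigma_k(\Pi[p_1,\ldots,p_q]^T) = \Omega(\sqrt{k})$, so \cref{prop:P-Pprime} gives $\norm{\tPi - \Pi}_F = O(\lambda''\xi/\sqrt{k}) = \tilde{O}(\lambda''\sqrt{k}\gamma)$. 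Choosing $\lambda'' = \lambda/\sqrt{k}$ to hit target error $O(\lambda\gamma)$ forces $t = \tilde{O}(\log(1/\delta)/\eps + k\sqrt{d\log(1/\delta)}/(\lambda\eps))$, and $n = mt$ matches the claimed $\tilde{O}(k^2\sqrt{d}/\lambda)$ bound in the $k<\sqrt{d}$ regime. The premise $\sigma_k(P) \geq 2\norm{P-P'}_F$ of \cref{prop:P-Pprime} simultaneously forces $\lambda\gamma = O(1)$, which is the other source of the $\Omega(1/\lambda)$ part of $\gamma_{\max}$.

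Privacy will follow immediately: the partition step together with the non-private top-$k$ SVDs define a neighboring-preserving map $X \mapsto (\Pi_1, \ldots, \Pi_t)$, since a single-row swap in $X$ alters exactly one $\Pi_j$; invoking \cref{fact:neighboring-to-neighboring} with the $(\eps,\delta)$-DP guarantee of $\FCAverage$ underlying both aggregators closes the argument. The hardest part will be bookkeeping the four high-probability events --- \cref{lemma:Pi-distance-bounds} applied per subset, \cref{fact:SS-Pips} for the reference Gaussians, \cref{fact:SS-pi-span} for span recovery, and the utility of $\FCAverage$ --- and union-bounding them so that the overall confidence stays at $0.9$; a smaller but necessary technicality is verifying that the global assumption $\sigma_k^2(X) \geq 0.01\, n/k$ transfers into the lower bound on $\sigma_k$ of each random subset needed by \cref{lemma:Pi-distance-bounds}, which is precisely the role played by \cref{prop:sigma-concent} inside that lemma.
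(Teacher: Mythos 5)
Your proposal is correct and follows essentially the same route as the paper's proof: the theorem is obtained by combining two instantiations of $\EstSubspace$ (with $\NaiveAgg$ in the $k\geq\sqrt{d}$ regime, and with $\SSAgg$ plus $q=\Theta(k)$ Gaussian reference points in the $k<\sqrt{d}$ regime), using \cref{lemma:Pi-distance-bounds}(\ref{item:Pi-F-diff}) for per-subset closeness, a linearity-of-expectation/Markov step to guarantee $\geq 0.8t$ good projections, \cref{fact:FriendlyCoreAvg} for the private aggregation, \cref{fact:SS-Pips}, \cref{fact:SS-pi-span}, and \cref{prop:P-Pprime} to pass from the aggregated reference projections back to a rank-$k$ projection, \cref{prop:F-dist-implies-estimator} to convert the Frobenius bound to $\lambda\gamma$-usefulness, and \cref{fact:neighboring-to-neighboring} for privacy. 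The two sources you identify for $\gamma_{\max}$ (the $\beta\geq 4\gamma_2^2$ hypothesis of the lemma and the $\sigma_k(P)\geq 2\alpha$ hypothesis of \cref{prop:P-Pprime}) are exactly the ones the paper uses; the only things left to your ``bookkeeping'' step are minor constant choices.
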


\cref{thm:upper_bound:weak} is an immediate corollary of the following \cref{lemma:naive:weak,lemma:ss:weak}.

\begin{lemma}\label{lemma:naive:weak}
	Let $t = c_1\cdot \paren{\frac{\log(1/\delta)}{\eps} + \frac{d \sqrt{\log(1/\delta) \log(20)}}{\lambda \eps}}$ (where $c_1$ is the hidden constant in \cref{fact:FriendlyCoreAvg}).
	Then for any $n \geq 800 k \ln\paren{25k} \cdot t$,
	the mechanism $\Mc \colon (\cS_d)^n \times [0,1] \rightarrow \cW_{d,k}$ defined by $\Mc(X, \gamma) \eqdef \EstSubspace_{k, t}^{\NaiveAgg_{\eps,\delta, \xi = 60 \gamma}}(X)$ is an $(k, \lambda, \beta=0.9, \gamma_{\max} = \frac1{20})$-weak-subspace-estimator.
\end{lemma}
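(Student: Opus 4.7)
My plan is to observe that $\Mc(\cdot,\gamma) = \EstSubspace_{k,t}^{\NaiveAgg_{\eps,\delta,\xi=60\gamma}}$ factors as (i) a preprocessing step that partitions $X$ randomly into $t$ blocks and replaces each by its non-private top-$k$ projection $\bPi_j \in \cW_{d,k}$, (ii) the $(\eps,\delta)$-DP mechanism $\FCAverage_{\eps,\delta}$ of \cref{fact:FriendlyCoreAvg} applied to $(\bPi_1,\ldots,\bPi_t)$ viewed as a dataset in $\bbR^{d^2}$, and (iii) a data-independent projection to the nearest rank-$k$ matrix. Changing one row of $X$ alters exactly one $\bPi_j$, so step~(i) is neighboring-preserving in the sense of \cref{def:neighbor-preserving}; combining this with \cref{fact:neighboring-to-neighboring} and the post-processing \cref{fact:post-processing} yields $(\eps,\delta)$-DP for every $\gamma \in [0,1]$.

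\textbf{Block projections concentrate around $\Pi$.} Fix $X$ with $\sigma_k^2(X) \geq 0.01 n/k$ and $\sum_{i\geq k+1}\sigma_i^2(X) \leq \gamma^2\sigma_k^2(X)$ for some $\gamma \leq \gamma_{\max} = 1/20$. Letting $\gamma_2 = \sqrt{\sum_{i\geq k+1}\sigma_i^2}/\sigma_k$, we get $\gamma_2 \leq \gamma \leq 1/20$, hence $4\gamma_2^2 \leq 1/100$. The hypothesis $n \geq 800k\ln(25k)\cdot t$ guarantees that each block has size $m \geq 800k\ln(k/(4\cdot 1/100))$, so I would apply \cref{lemma:Pi-distance-bounds}(\ref{item:Pi-F-diff}) with confidence $\beta' = 1/100$ to conclude that, for every fixed $j \in [t]$ (over the randomness of the partition), $\pr{\norm{\Pi - \bPi_j}_F \leq 4\sqrt{200}\,\gamma} \geq 99/100$, where $\Pi$ is the true top-$k$ projection of $X$. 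The expected number of blocks violating this bound is at most $t/100$, so Markov's inequality applied to that count gives, with probability at least $0.95$, a ``good core'' of size at least $0.8 t$ whose elements all lie within Frobenius distance $O(\gamma)$ of $\Pi$, and therefore within pairwise Frobenius distance at most $\xi = 60\gamma$ of one another (with the constants tuned appropriately).

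\textbf{Aggregation, rounding, and conversion to usefulness.} Conditioned on the good-core event, I would invoke \cref{fact:FriendlyCoreAvg} with ambient dimension $d^2$, confidence $\beta_{\mathrm{fc}} = 1/20$ (which produces the $\log(20)$ inside the definition of $t$), and a quality parameter proportional to $\lambda$; any constant slack is absorbed into $c_1$. This produces $\hPi$ with $\norm{\hPi - \bar\Pi}_F = O(\lambda\gamma)$ with probability at least $19/20$, where $\bar\Pi$ is the average of the good $\bPi_j$'s (so $\norm{\bar\Pi - \Pi}_F = O(\gamma)$ by convexity). Projecting $\hPi$ to the nearest rank-$k$ projection $\tPi$ can at most double the Frobenius distance to $\Pi$ (since $\Pi$ itself is rank-$k$), giving $\norm{\tPi - \Pi}_F = O(\lambda\gamma)$, which \cref{prop:F-dist-implies-estimator} converts into $(\lambda\gamma)$-usefulness of $\tPi$ for $X$. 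A union bound over the Markov and $\FCAverage$ failure events caps the overall failure probability at $0.05 + 0.05 = 0.1$, matching $\beta = 0.9$.

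\textbf{Main obstacle.} I expect the delicate step to be the three-way calibration of the confidence $\beta'$ in \cref{lemma:Pi-distance-bounds}, the Markov threshold $0.2 t$, and the diameter parameter $\xi = 60\gamma$: shrinking $\beta'$ tightens the Markov step but simultaneously loosens the bound $4\sqrt{2/\beta'}\,\gamma$ on $\norm{\Pi - \bPi_j}_F$, so $\beta'$ must be chosen so that both $4\gamma_{\max}^2 \leq \beta'$ and $2\cdot 4\sqrt{2/\beta'}\,\gamma_{\max} \leq \xi$ hold simultaneously with the value of $\gamma_{\max}$ stated in the lemma. Tracking these numerical constants through the composition of \cref{lemma:Pi-distance-bounds}, Markov, $\FCAverage$, and the rank-$k$ rounding, and then setting the FriendlyCore quality parameter to be a constant multiple of $\lambda$ absorbed into $c_1$, is mechanical but is the place where an off-by-a-constant error would break the argument.
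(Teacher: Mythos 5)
Your proposal is correct and follows essentially the same route as the paper's proof: Lemma~\ref{lemma:Pi-distance-bounds}(\ref{item:Pi-F-diff}) applied per block with $\beta' = 1/100$, a Markov-type argument to obtain a $0.8t$-fraction good core, \cref{fact:FriendlyCoreAvg} on the flattened $d^2$-dimensional vectors, rounding to the nearest rank-$k$ projection (which at most doubles the Frobenius error), and \cref{prop:F-dist-implies-estimator} to convert Frobenius closeness into $\lambda\gamma$-usefulness. The constant-calibration concern you flag at the end is legitimate but is present in the paper's own argument as well (with $\beta' = 1/100$ each good $\Pi_j$ lies within $4\sqrt{200}\,\gamma \approx 56.6\gamma$ of $\Pi$, so the \emph{pairwise} diameter of the good core is closer to $113\gamma$ than to the stated $\xi = 60\gamma$, and the term $\norm{\mu - \Pi}_F$ must also be absorbed, which implicitly restricts $\lambda$ to be at least a constant), so it reflects a rounding of numerical constants in the lemma statement rather than a gap specific to your reasoning.
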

\begin{proof}
	Let $X \in (\cS_d)^n$ with $\sqrt{\frac{\sum_{i=k+1}^{\min\set{n,d}}\sigma_{i}(X)^2}{\sigma_k(X)^2}} \leq \gamma \leq \gamma_{\max}$ and let $\Pi \in \cW_{d,k}$ be the projection of the top-$k$ rows subspace of $X$. Consider a random execution of $\Mc(X)$. Let $\set{\bPi_j}_{j=1}^t, \bhPi$ be (random variables of) the values of $\set{\Pi_j}_{j=1}^t, \hPi$ in the execution, and let $\btPi$ be the output. By
	\cref{lemma:Pi-distance-bounds}(\ref{item:Pi-F-diff}) (recall that $\gamma_{\max} \leq \frac1{20}$) and the union bound,
	\begin{align}\label{eq:naive-weak:Pi-dist}
		\forall j \in [t]: \quad \pr{\norm{\Pi - \bPi_j} \leq 60 \gamma} \geq 0.99,
	\end{align}
	Let $\ba_j = \indic{\norm{\Pi - \bPi_j} \leq 60 \gamma}$ (indicator random variable) and let $\ba = \sum_{j=1}^t \ba_j$. By \cref{eq:naive-weak:Pi-dist} it holds that $\ex{\ba} \geq 0.99 t$, and recall that $\ba \leq t$. It follows that
	\begin{align}\label{eq:naive-weak:a}
		\pr{\ba \geq 0.8 t} \geq \frac{\ex{\ba} - 0.8 t \cdot \pr{\ba < 0.8t}}{t} \geq 0.99 - 0.8(1 - \pr{\ba \geq 0.8 t} ) \:\: \implies \:\: \pr{\ba \geq 0.8 t} \geq 0.95.
	\end{align}

	In the following we assume that the event $\ba \geq 0.8 t$ occurs. Let $\bJ = \set{j \in [t] \colon \ba_j = 1}$. 
	Note that our choice of $t$ satisfies
	\begin{align*}
		t \geq c' \cdot \paren{\frac{\log(1/\delta)}{\eps} + \frac{d \sqrt{\log(1/\delta) \log(100)}}{\paren{\frac{\lambda}{500}}\cdot \eps}}.
	\end{align*}
	where $c'$ denotes the constant from \cref{fact:FriendlyCoreAvg}.
	Therefore we conclude by \cref{fact:FriendlyCoreAvg} (FriendlyCore averaging) that
	\begin{align*}
		\pr{\norm{\bPi - \bhPi}_F \leq \frac{\lambda \gamma}{4}} \geq 0.99.
	\end{align*} 
	The proof of the lemma now follows by \cref{prop:F-dist-implies-estimator} since $\norm{\bPi - \btPi}_F \leq 2 \norm{\bPi - \bhPi}_F$.
\end{proof}

\begin{lemma}\label{lemma:ss:weak}
	Let $c_1$, $c_2$, $c_3$ be the constants from \cref{fact:SS-Pips,fact:SS-pi-span,fact:SS-pi-span} (respectively), and let $c$ be a large enough constant.
	Let $t = c\cdot \paren{\frac{\log(1/\delta)}{\eps} + \frac{\paren{\sqrt{k} + \sqrt{\log\frac{dk\log(1/\delta)}{\lambda \eps}}}\sqrt{kd\log(1/\delta)}}{\lambda \eps}}$, $\: \eta = c_1 \cdot \paren{\sqrt{k} + \sqrt{\ln(qt)}}$ and $q = c_2\cdot k$.
	Then for any $n \geq 800 k \ln\paren{25k} \cdot t$,
	the mechanism $\Mc \colon (\cS_d)^n \times [0,1] \rightarrow \cW_{d,k}$ defined by $\Mc(X, \gamma) \eqdef \EstSubspace_{k, t}^{\SSAgg_{\eps,\delta, k,q,\xi = 120 \eta \sqrt{k} \gamma}}(X)$ is an $(k, \lambda, \beta=0.9, \gamma_{\max} = \Theta(\min\set{\frac1{\lambda}, 1}))$-weak-subspace-estimator.
\end{lemma}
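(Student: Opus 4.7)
My plan is to follow the blueprint of the proof of \cref{lemma:naive:weak}, replacing the direct $d^2$-dimensional averaging by an aggregation that first applies the $\bPi_j$'s to $q$ Gaussian reference points, averages the resulting $qd$-dimensional vectors, and then re-extracts a rank-$k$ projection from the private average. Fix $X \in (\cS_d)^n$ satisfying the hypothesis of the lemma, let $\Pi$ be its true top-$k$ rows projection, and let $\bPi_j$, $\bp_i$, $\by^j$, $\bz$, $\btP$, $\btPi$ denote the random intermediate values generated by $\Mc(X,\gamma)$. The proof will union-bound over four independent sources of randomness: the random partition, the Gaussian reference points (used twice, once via \cref{fact:SS-Pips} and once via \cref{fact:SS-pi-span}), and the $\FCAverage$ randomness.

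The opening exactly mirrors \cref{lemma:naive:weak}: applying \cref{lemma:Pi-distance-bounds}(\ref{item:Pi-F-diff}) with internal failure probability $0.01$ and the same Markov argument gives, with probability $\geq 0.98$ over the partition, that $\bJ := \set{j \in [t] \colon \norm{\Pi - \bPi_j}_F \leq 60\gamma}$ has size $\geq 0.8t$; in particular every good $\bPi_j$ also satisfies the spectral bound $\norm{\Pi - \bPi_j} \leq 60\gamma$. Conditioned on this, \cref{fact:SS-Pips} applied with $\alpha = 60\gamma$ yields, with probability $\geq 0.98$ over $\bp_1,\ldots,\bp_q$, that $\norm{(\Pi - \bPi_j)\bp_i}_2 \leq O(\gamma\eta)$ for all $j \in \bJ, \, i \in [q]$. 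Writing $y^\ast := (\Pi\bp_1,\ldots,\Pi\bp_q)$, this gives $\norm{\by^j - y^\ast}_2 \leq O(\sqrt{k}\,\gamma\eta)$ for each $j \in \bJ$, so any two good $\by^j,\by^{j'}$ are within distance $\xi = 120\eta\sqrt{k}\,\gamma$ of each other.

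I then invoke \cref{fact:FriendlyCoreAvg} on the $qd$-dimensional dataset $\by^1,\ldots,\by^t$ with diameter $\xi$ and utility parameter $\lambda' := c'\lambda/\eta$ for a small constant $c'$. Substituting $q = \Theta(k)$ and this $\lambda'$ into the FriendlyCore sample-complexity requirement exactly recovers the stated bound on $t$, so with probability $\geq 0.99$ the output $\bz$ satisfies $\norm{\bz - \bmu}_2 \leq \lambda'\xi$, where $\bmu$ is the average of $\set{\by^j}_{j \in \bJ}$. Combining this with $\norm{\bmu - y^\ast}_2 \leq \xi/120$ gives $\norm{\btP - \bP}_F \leq O(\lambda'\xi) = O(\lambda\sqrt{k}\,\gamma)$, where $\bP$ is the $q\times d$ matrix form of $y^\ast$. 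A further application of \cref{fact:SS-pi-span} (union-bounded with the earlier Gaussian event) gives $\sigma_k(\bP) \geq \Omega(\sqrt{k})$, and the restriction $\gamma \leq \gamma_{\max} \leq O(1/\lambda)$ is precisely what keeps $\norm{\btP - \bP}_F$ below $\sigma_k(\bP)/2$, so \cref{prop:P-Pprime} gives $\norm{\Pi - \btPi}_F \leq O\paren{\norm{\btP - \bP}_F/\sqrt{k}} = O(\lambda\gamma)$, which transfers to $(\lambda\gamma)$-usefulness via \cref{prop:F-dist-implies-estimator}. Privacy follows from $\SSAgg_{\eps,\delta,\cdot}$ being a post-processing of $\FCAverage_{\eps,\delta}$ and from the fact that the random partition together with the per-subset SVD in $\EstSubspace$ is neighboring-preserving (changing one row of $X$ perturbs exactly one $\bPi_j$ and hence exactly one $\by^j$), so \cref{fact:neighboring-to-neighboring} gives $(\eps,\delta)$-DP for every $\gamma \in [0,1]$.

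The hard part will be a coordinated bookkeeping of constants rather than any single slick step: I must simultaneously calibrate $\lambda' = \Theta(\lambda/\eta)$ so that the FriendlyCore requirement lines up exactly with the stated $t$ (which contains the implicit $\ln(qt)$ hidden inside $\eta$), control the constants flowing into the final $(\lambda\gamma)$-usefulness through the denominator $\sigma_k(\bP) - \norm{\btP - \bP}_F$ of \cref{prop:P-Pprime} (which is the place where the $\gamma_{\max} = \Theta(1/\lambda)$ restriction is actually used), and drive the failure probabilities of the four union-bounded events down so that the success probability comfortably exceeds $0.9$.
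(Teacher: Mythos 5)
Your proposal follows the same route as the paper's proof: applying \cref{lemma:Pi-distance-bounds}(\ref{item:Pi-F-diff}) plus the Markov argument of \cref{lemma:naive:weak} to get a $\geq 0.8t$ core of good $\bPi_j$'s, then \cref{fact:SS-Pips} to bound the deviations $\norm{(\Pi-\bPi_j)\bp_i}_2$, then \cref{fact:FriendlyCoreAvg}, then \cref{fact:SS-pi-span} and \cref{prop:P-Pprime}, finishing with \cref{prop:F-dist-implies-estimator}. Aside from slightly over-optimistic intermediate probability constants ($0.98$ where the Markov argument of \cref{eq:naive-weak:a} gives $0.95$, and $0.98$ where \cref{fact:SS-Pips} gives $0.95$) and the helpful but not strictly new step of explicitly introducing $y^\ast$ and bounding $\norm{\bmu - y^\ast}_2$, this is essentially the paper's argument.
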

\begin{proof}
	Let $X \in (\cS_d)^n$ with $\sqrt{\frac{\sum_{i=k+1}^{\min\set{n,d}}\sigma_{i}(X)^2}{\sigma_k(X)^2}} \leq \gamma \leq \gamma_{\max}$ and let $\Pi \in \cW_{d,k}$ be the projection of the top-$k$ rows subspace of $X$. Consider a random execution of $\Mc(X)$. Let $\set{\bPi_j}_{j=1}^t, \set{\bp_i}_{i=1}^q,  \set{\by^j}_{j=1}^{t}, \bz, \btP$ be (random variables of) the values of $\set{\Pi_j}_{j=1}^t, \set{p_i}_{i=1}^q, \set{\y^j}_{j=1}^{t}, z, \tP$ in the execution, and let $\btPi$ be the output.
	As in the proof of \cref{lemma:naive:weak}, let $\ba = \sum_{j=1}^t \ba_j$ where $\ba_j = \indic{\norm{\Pi - \bPi_j} \leq 60 \gamma}$. Then \cref{eq:naive-weak:a} imples that
	\begin{align*}
		\pr{\ba \geq 0.8 t} \geq 0.95.
	\end{align*}
	In the following we assume that the event $\ba \geq 0.8 t$ occurs. Let $\bJ = \set{j \in [t] \colon \ba_j = 1}$. 
	By \cref{fact:SS-Pips} and the definition of $\eta$ we obtain that
	\begin{align*}
		\pr{\forall i\in [q], j \in \bJ: \: \norm{\paren{\Pi - \bPi_{j}}\bp_i}_2 \leq 60 \eta  \gamma} \geq 0.95,
	\end{align*}
	In the following we assume that the above event occurs.
	This yields that 
	\begin{align*}
		\forall i ,j \in \bJ: \: \norm{\by_i - \by_j}_2 \leq 120\eta \sqrt{k} \gamma = \xi.
	\end{align*}
	Furthermore, by the definition of $t$ and $\eta$ (note that $\eta$ depends on $\log(t)$), it holds that
	\begin{align*}
		t \geq c' \cdot \paren{\frac{\log(1/\delta)}{\eps} + \frac{\sqrt{dk \log(1/\delta)}}{\paren{\frac{c_3 \lambda}{1500 \eta}} \cdot \eps}},
	\end{align*}
	where $c'$ denotes the constant from \cref{fact:FriendlyCoreAvg}. Therefore we obtain by \cref{fact:FriendlyCoreAvg} (FriendlyCore averaging) that
	\begin{align*}
		\pr{\norm{\bP - \btP}_F \leq \underbrace{\frac{c_3 \lambda \sqrt{k} \gamma}{12}}_{\alpha}} \geq 0.99,
	\end{align*}
	where $\bP$ is the $q \times d$ matrix whose rows are $\Pi \bp_1,\ldots, \Pi \bp_q$.
	By \cref{fact:SS-pi-span} we have that
	\begin{align*}
		\pr{\sigma_k(\bP) \geq c_3 \sqrt{k}} \geq 0.95,
	\end{align*}
	and in the following we assume that the above event occurs (which in particular implies that $Span(\bP^T) = Span(\Pi)$).
	Finally, since $2\alpha \leq \sigma_k(\bP)$ by assumption (and assuming $\gamma_{\max} \leq \frac{6}{c_3 \lambda}$), we conclude by \cref{prop:P-Pprime} that
	\begin{align*}
		\norm{\bPi - \btPi}_F \leq 2\sqrt{2} \cdot \frac{\alpha}{\sigma_k(P) - \alpha} \leq \lambda \gamma/2.
	\end{align*}
	We therefore conclude the proof of the lemma by \cref{prop:F-dist-implies-estimator}.
	
\end{proof}

\subsection{Strong Estimator}\label{sec:upper:strong}

In this section, we prove \cref{thm:intro:upper_bound:strong}, stated below.

\begin{theorem}[Restatement of \cref{thm:intro:upper_bound:strong}]\label{thm:upper_bound:strong}
	Let $n,k,d \in \bbN$, $\lambda > 0$, such that $k \leq \min\set{n,d}$.  
	There exists an $(k, \lambda, \beta = 0.8, \gamma_{\max})$-weak subspace estimator $\Mc \colon (\cS_d)^n \times [0,1] \rightarrow \bbR^{d \times d}$ with 
	\begin{align*}
		\gamma_{\max} = \Omega\paren{\min\set{\frac1{\lambda}, \: \frac{\lambda^2 \eps^2}{\lambda^2 \eps \log(1/\delta) + \paren{k + \log\paren{\frac{d k \log(1/\delta)}{\lambda \eps}}}  dk \log(1/\delta)}}}
	\end{align*}
	and 
	\begin{align*}
		n = O\paren{k \log k \paren{\frac{\log(1/\delta)}{\eps} + \frac{\paren{k + \log\paren{\frac{d k \log(1/\delta)}{\lambda \eps}}}  dk \log(1/\delta)}{\lambda^2 \eps^2}}}
	\end{align*}
	such that $\Mc(\cdot, \gamma)$ is $(\eps,\delta)$-DP for every $\gamma \in [0,1]$.
\end{theorem}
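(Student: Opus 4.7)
The plan is to follow the sample-and-aggregate structure of \cref{lemma:ss:weak}, but to replace the Frobenius-norm bound \cref{lemma:Pi-distance-bounds}(\ref{item:Pi-F-diff}) with the spectral-norm bound \cref{lemma:Pi-distance-bounds}(\ref{item:Pi-Spec-diff}), which is the correct tool under the strong-estimator assumption $\sigma_{k+1}/\sigma_k \leq \gamma$. Concretely, I would define $\Mc(X,\gamma) := \EstSubspace_{k,t}^{\SSAgg_{\eps,\delta,k,q,\xi}}(X)$ with $q = \Theta(k)$, $m = n/t = \tilde{\Theta}(k)$, $\eta = c_1(\sqrt{k}+\sqrt{\log(qt)})$, and diameter $\xi = \Theta(\sqrt{kt}\,\eta\,\gamma)$; the precise value of $t$ (and hence of $n$) will be pinned down self-consistently by the FriendlyCore sample-complexity bound below.

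First, apply \cref{lemma:Pi-distance-bounds}(\ref{item:Pi-Spec-diff}) to each random subset: provided $m \geq \max\{800k\ln(25k),\, 2\gamma n\}$, each $\bPi_j$ satisfies $\norm{\Pi - \bPi_j} \leq 4\sqrt{2n/m}\,\gamma = O(\sqrt{t}\,\gamma)$ with probability $\geq 0.99$. The constraint $\gamma \leq 1/(2t)$ arising from $m \geq 2\gamma n$ is precisely what will force the second term in the $\gamma_{\max}$ expression of the theorem. The identical Markov-style argument used in the proof of \cref{lemma:ss:weak} then shows that with probability at least $0.95$ the random set $\bJ$ of ``good'' block indices has size $\geq 0.8t$. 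Conditioned on this, \cref{fact:SS-Pips} (applied with error parameter $O(\sqrt{t}\gamma)$) gives, with probability $\geq 0.95$,
\begin{align*}
\forall i \in [q],\: j \in \bJ:\quad \norm{(\Pi - \bPi_j)\bp_i}_2 \leq O(\sqrt{t}\,\gamma\,\eta),
\end{align*}
which yields pairwise diameter $\xi = O(\sqrt{kt}\,\eta\,\gamma)$ for $\set{\by^j}_{j \in \bJ}$ in $\bbR^{qd}$.

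Next, invoke \cref{fact:FriendlyCoreAvg} with this $\xi$ and target Frobenius error $\alpha = c_3\lambda\sqrt{k}\,\gamma/12$ on $\bP$ (exactly the downstream target needed by \cref{prop:P-Pprime}). The required number of blocks becomes
\begin{align*}
t \geq c'\paren{\frac{\log(1/\delta)}{\eps} + \frac{\sqrt{qd\log(1/\delta)}\cdot(\xi/\alpha)}{\eps}} = c'\paren{\frac{\log(1/\delta)}{\eps} + \frac{\sqrt{kd\log(1/\delta)}\,\sqrt{t}\,\eta}{\lambda\eps}}.
\end{align*}
Solving this self-consistently (using $\eta^2 = O(k + \log(qt))$, which depends only logarithmically on $t$) yields $t = \tilde{\Theta}(kd(k+\log(dkt/(\lambda\eps)))\log(1/\delta)/(\lambda^2\eps^2))$, and $n = mt = \tilde{\Theta}(k)\cdot t$ matches the theorem's stated bound. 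Finally, \cref{fact:SS-pi-span} ensures $\sigma_k(\bP) \geq \Omega(\sqrt{k})$ with probability $\geq 0.95$, after which \cref{prop:P-Pprime} converts $\norm{\bP - \btP}_F \leq \alpha$ into $\norm{\bPi - \btPi}_F \leq \lambda\gamma/2$, and \cref{prop:F-dist-implies-estimator} lifts this to $(\lambda\gamma)$-usefulness. Union-bounding the four failure events (each of probability $\leq 0.05$) gives overall success probability $\geq 0.8$.

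The main obstacle, relative to the weak-estimator case, is the self-referential choice of $t$: the spectral-norm bound inflates the diameter $\xi$ by an extra factor of $\sqrt{n/m} = \sqrt{t}$ compared to its Frobenius-norm analogue, so the FriendlyCore condition is no longer linear in $t$ but of the form $t \geq A + B\sqrt{t}\,\eta(t)$. This is the structural reason for the $\lambda^2$ (rather than $\lambda$) dependence in both $n$ and $\gamma_{\max}$. One must also reconcile the condition $m \geq 2\gamma n$ (equivalently $\gamma \leq 1/(2t)$) with $\gamma \leq \gamma_{\max}$: plugging in the final value of $t$ recovers the second term in the stated $\gamma_{\max}$, while the first term $\gamma_{\max} \leq O(1/\lambda)$ is simply the requirement $\lambda\gamma \leq 1$ that makes the usefulness claim non-vacuous.
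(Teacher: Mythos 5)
Your proposal is correct and follows the paper's own argument essentially step for step: the paper's proof of \cref{thm:upper_bound:strong} likewise instantiates $\EstSubspace_{k,t}^{\SSAgg_{\eps,\delta,k,q,\xi}}$ with $q=\Theta(k)$, $\xi=8\sqrt{2tk}\,\eta\gamma$, and $t$ chosen self-consistently from the FriendlyCore condition, applies \cref{lemma:Pi-distance-bounds}(\ref{item:Pi-Spec-diff}) to get $\norm{\Pi-\bPi_j}\le 4\sqrt{2t}\,\gamma$ (the $\gamma\le 1/(2t)$ constraint becoming the nontrivial branch of $\gamma_{\max}$), then invokes \cref{fact:SS-Pips}, \cref{fact:FriendlyCoreAvg}, \cref{fact:SS-pi-span}, \cref{prop:P-Pprime}, and \cref{prop:F-dist-implies-estimator} in exactly the order and for exactly the purpose you describe. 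The only cosmetic deviation is that the paper's $O(1/\lambda)$ cap on $\gamma_{\max}$ arises from the requirement $2\alpha\le\sigma_k(\bP)$ in \cref{prop:P-Pprime} rather than directly from non-vacuousness of the usefulness guarantee, but both give the same $\gamma=O(1/\lambda)$ restriction.
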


\begin{lemma}
	Let $c_1$, $c_2$, $c_3$ be the constants from \cref{fact:SS-Pips,fact:SS-pi-span,fact:SS-pi-span} (respectively), and let $c$ be a large enough constant.
	Let $t = c\cdot \paren{\frac{\log(1/\delta)}{\eps} + \frac{\paren{k + \log\paren{\frac{d k \log(1/\delta)}{\lambda \eps}}}  dk \log(1/\delta)}{\lambda^2 \eps^2}}$, $\eta = c_1 \cdot \paren{\sqrt{k} + \sqrt{\ln(qt)}}$ and $q = c_2\cdot k$.
	Then for any $n \geq 800 k \ln\paren{25k} \cdot t$,
	the mechanism $\Mc \colon (\cS_d)^n \times [0,1] \rightarrow \cW_{d,k}$ defined by $\Mc(X, \gamma) \eqdef \EstSubspace_{k, t}^{\SSAgg_{\eps,\delta, k, q, \xi = 8 \sqrt{2t k} \eta \gamma}}(X)$ is an $(k, \lambda, \beta=0.8, \gamma_{\max} = \min\set{\frac1{2t}, \frac{6}{c_3 \lambda}})$-strong-subspace-estimator.
\end{lemma}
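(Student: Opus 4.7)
The plan is to follow the same template as the proof of \cref{lemma:ss:weak}, but to invoke the spectral-norm bound (\cref{item:Pi-Spec-diff} of \cref{lemma:Pi-distance-bounds}) in place of the Frobenius-norm bound. I will fix $X \in (\cS_d)^n$ with $\sigma_{k+1}(X) \leq \gamma \cdot \sigma_k(X)$ and $\gamma \leq \gamma_{\max}$, let $\Pi$ be the projection onto the top-$k$ rows subspace of $X$, and consider a random execution of $\Mc(X,\gamma)$ producing random variables $\set{\bPi_j}_{j=1}^t$, $\set{\bp_i}_{i=1}^q$, $\set{\by^j}_{j=1}^t$, $\btP$, and $\btPi$ (matching the notation in the proof of \cref{lemma:ss:weak}). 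Since $\gamma \leq \gamma_{\max} \leq 1/(2t)$, the subset size $m = \floor{n/t}$ satisfies $m \geq 2\gamma n$, so \cref{item:Pi-Spec-diff} of \cref{lemma:Pi-distance-bounds} (invoked with failure parameter $1/50$) would yield $\pr{\norm{\Pi - \bPi_j} \leq 4\sqrt{2t}\gamma} \geq 0.99$ for each $j \in [t]$, and the same Markov calculation leading to \cref{eq:naive-weak:a} gives $\pr{\size{\bJ} \geq 0.8 t} \geq 0.95$ for $\bJ = \set{j \colon \norm{\Pi - \bPi_j} \leq 4\sqrt{2t}\gamma}$.

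Conditioning on $\size{\bJ} \geq 0.8 t$, I would use the i.i.d.\ reference points $\bp_1,\ldots,\bp_q$ together with \cref{fact:SS-Pips} to obtain, with probability at least $0.95$, $\norm{(\Pi - \bPi_j)\bp_i}_2 \leq 4\sqrt{2t}\cdot \eta \cdot \gamma$ for all $i \in [q]$ and $j \in \bJ$. A triangle inequality combined with $q = O(k)$ then shows that the vectors $\set{\by^j}_{j \in \bJ}$ lie pairwise within the diameter $\xi = 8\sqrt{2tk}\eta\gamma$ passed to $\FCAverage$. Invoking \cref{fact:FriendlyCoreAvg} with target additive error $\alpha = c_3 \lambda \sqrt{k} \gamma / 12$ would yield $\norm{\bP - \btP}_F \leq \alpha$ with probability at least $0.95$, where $\bP$ is the $q \times d$ matrix with rows $\Pi \bp_1, \ldots, \Pi \bp_q$. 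Finally, \cref{fact:SS-pi-span} gives $\sigma_k(\bP) \geq c_3 \sqrt{k}$ with probability at least $0.95$; the restriction $\gamma_{\max} \leq 6/(c_3 \lambda)$ ensures $2\alpha \leq \sigma_k(\bP)$, so \cref{prop:P-Pprime} produces $\norm{\Pi - \btPi}_F \leq \lambda \gamma / 2$. Combining this with \cref{prop:F-dist-implies-estimator} shows that $\btPi$ is $\lambda\gamma$-useful for $X$, and a union bound over the four $0.95$-events gives total success probability at least $0.8 = \beta$.

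The main technical obstacle is \emph{verifying that the stated $t$ suffices} in the FriendlyCore step: since the diameter $\xi$ itself scales like $\sqrt{t}$, substituting the target relative accuracy $\lambda' = \alpha/\xi = \Theta(\lambda/(\sqrt{t}\eta))$ into the sample-complexity bound of \cref{fact:FriendlyCoreAvg} produces a self-referential inequality of the form $t \geq \tilde{\Omega}\paren{\log(1/\delta)/\eps + \sqrt{t \cdot qd \log(1/\delta)} \cdot \eta / (\lambda \eps)}$, which after squaring out the self-referential term becomes $t \geq \tilde{\Omega}(qd \log(1/\delta) \cdot \eta^2 / (\lambda^2\eps^2))$. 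Plugging in $q = c_2 k$ and $\eta^2 = O(k + \log(qt))$, and checking that for the stated value of $t$ one has $\log(qt) = O(\log(dk\log(1/\delta)/(\lambda\eps)))$, collapses this exactly to the claimed formula for $t$, completing the proof. This step is the real source of the gap between weak and strong estimators: the extra $\sqrt{n/m} = \sqrt{t}$ factor in \cref{item:Pi-Spec-diff} (as opposed to \cref{item:Pi-F-diff}) is what forces the quadratic blow-up in $1/\lambda$ compared to the weak estimator.
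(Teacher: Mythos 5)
Your proposal is correct and follows essentially the same route as the paper's own proof: invoke the spectral-norm part of \cref{lemma:Pi-distance-bounds} (using $\gamma_{\max} \leq 1/(2t)$ to ensure $m \geq 2\gamma n$), run the same Markov argument as in \cref{eq:naive-weak:a}, apply \cref{fact:SS-Pips} and \cref{fact:FriendlyCoreAvg} with diameter $\xi = 8\sqrt{2tk}\eta\gamma$, then \cref{fact:SS-pi-span}, \cref{prop:P-Pprime}, and \cref{prop:F-dist-implies-estimator}. The self-referential bookkeeping you work out to verify the stated value of $t$ is exactly the calculation the paper performs implicitly, and your closing remark correctly pinpoints the $\sqrt{n/m}=\sqrt{t}$ factor from \cref{item:Pi-Spec-diff} as the source of the $1/\lambda^2$ dependence in the strong estimator's sample complexity.
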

\begin{proof}
	Fix $X \in (\cS_d)^n$ with $\frac{\sigma_{k+1}(X)}{\sigma_k(X)} \leq \gamma \leq \gamma_{\max}$ and let $\Pi \in \cW_{d,k}$ be the projection of the top-$k$ rows subspace of $X$. Consider a random execution of $\Mc(X)$. Let $\set{\bPi_j}_{j=1}^t, \set{\bp_i}_{i=1}^q,  \set{\by^j}_{j=1}^{t}, \bz, \btP$ be (random variables of) the values of $\set{\Pi_j}_{j=1}^t, \set{p_i}_{i=1}^q, \set{\y^j}_{j=1}^{t}, z, \tP$ in the execution, and let $\btPi$ be the output. 
	By \cref{lemma:Pi-distance-bounds}(\ref{item:Pi-Spec-diff}) (recall that $\gamma_{\max} \leq \frac1{2t}$) and the union bound,
	\begin{align}\label{eq:ss-strong:Pi-dist}
		\forall j \in [t]: \quad \pr{\norm{\Pi - \bPi_j} \leq 4 \sqrt{2t} \gamma} \geq 0.99.
	\end{align}
	Let $\ba_j = \indic{\norm{\Pi - \bPi_j} \leq 4 \sqrt{2t} \gamma}$ (indicator random variable) and let $\ba = \sum_{j=1}^t \ba_j$.
	As in \cref{eq:naive-weak:a}, the above yields that
	\begin{align}\label{eq:ss-strong}
	 	\pr{\ba \geq 0.8 t} \geq 0.95.
	\end{align}
	In the following we assume that the event $\ba \geq 0.8 t$ occurs. Let $\bJ = \set{j \in [t] \colon \ba_j = 1}$. 
	By \cref{fact:SS-Pips} and the definition of $\eta$ we obtain that
	\begin{align*}
		\pr{\forall i\in [q], j \in \bJ: \: \norm{\paren{\Pi - \bPi_{j}}\bp_i}_2 \leq4 \sqrt{2t} \eta \gamma} \geq 0.95,
	\end{align*}
	In the following we assume that the above event occurs.
	This yields that 
	\begin{align*}
		\forall i ,j \in \bJ: \: \norm{\by_i - \by_j}_2 \leq 8 \sqrt{2t k} \eta \gamma = \xi.
	\end{align*}
	Furthermore, by the definition of $t$ and $\eta$, it holds that
	\begin{align*}
		t \geq c' \cdot \paren{\frac{\log(1/\delta)}{\eps} + \frac{\sqrt{dk \log(1/\delta)}}{\paren{\frac{c_3 \lambda}{150 \eta \sqrt{t}}} \cdot \eps}},
	\end{align*}
	where $c'$ denotes the constant from \cref{fact:FriendlyCoreAvg}. Therefore we obtain by \cref{fact:FriendlyCoreAvg} (FriendlyCore averaging) that
	\begin{align*}
		\pr{\norm{\bP - \btP}_F \leq \underbrace{\frac{c_3 \lambda \sqrt{k} \gamma}{12}}_{\alpha}} \geq 0.99,
	\end{align*}
	where $\bP$ is the $q \times d$ matrix whose rows are $\Pi \bp_1,\ldots, \Pi \bp_q$.
	By \cref{fact:SS-pi-span} we have that
	\begin{align*}
		\pr{\sigma_k(\bP) \geq c_3 \sqrt{k}} \geq 0.95,
	\end{align*}
	and in the following we assume that the above event occurs (which in particular implies that $Span(\bP^T) = Span(\Pi)$).
	Finally, since $2\alpha \leq \sigma_k(\bP)$ by assumption (and using $\gamma_{\max} = \frac{6}{c_3 \lambda}$), we conclude by \cref{prop:P-Pprime} that
	\begin{align*}
		\norm{\bPi - \btPi}_F \leq 2\sqrt{2} \cdot \frac{\alpha}{\sigma_k(P) - \alpha} \leq \lambda \gamma/2.
	\end{align*}
	We therefore conclude the proof of the lemma by \cref{prop:F-dist-implies-estimator}.
\end{proof}

\section{Lower Bounds}\label{sec:LB}

In this section, we prove our lower bounds.
In \cref{sec:LB:weak} we prove \cref{thm:intro:lower_bound:weak} (lower bound for weak estimators) and in \cref{sec:LB:strong} we prove \cref{thm:intro:lower_bound:strong} (lower bound for strong estimators).
Both lower bounds rely on the framework of \cite{PTU24}, described in \cref{sec:PTU24}.

Throughout this section, recall that for $d,k \in \bbN$ we denote by $\cW_{d,k}$ the set of all $d \times d$ projection matrices of rank $k$, and denote by $\cP_d$ the set of all $d \times d$ permutation matrices.

\subsection{Weak Estimators}\label{sec:LB:weak}

\begin{theorem}[Restatement of \cref{thm:intro:lower_bound:weak}]\label{thm:lower_bound:weak}
	Let $n,k,d \in \bbN$, $\lambda \geq 1$, $\beta \in (0,1]$ such that $d \geq c k$ and $\lambda^2 \leq \frac{d}{c k \log k}$ for large enough constant $c > 0$, and $n$ is a multiple of $k$.
	If $\Mc \colon (\cS_d)^n \times [0,1] \rightarrow \cW_{d,k}$ is a \emph{$(k, \lambda, \beta, \gamma_{\max} = \frac1{10^6 \lambda^2})$-weak subspace estimator} and $\Mc(\cdot, \gamma)$ is $\paren{1,\frac{\beta}{5nk}}$-DP for every $\gamma \in [0,1]$, then $n \geq \Omega\paren{\frac{\sqrt{k d}/\lambda}{\log^{1.5}(\frac{dk}{\lambda \beta})}}$.
\end{theorem}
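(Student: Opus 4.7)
The plan is to instantiate the \cite{PTU24} framework (\cref{lemma:framework}) with $n_0 = n/k$ and $d_0 = \Theta(d/(\lambda^2 k))$, exhibiting a neighboring-preserving generator $\Gc$ and an extractor $\Fc$ for which the triplet $(\Mc, \Fc, \Gc)$ is $\Omega(\beta/k)$-leaking. The framework, together with the DP hypothesis on $\Mc$, then forces $n_0 \geq \tilde\Omega(\sqrt{d_0})$, and unpacking the parameters yields $n \geq \tilde\Omega(\sqrt{kd}/\lambda)$ as claimed. The condition $\lambda^2 \leq d/(ck\log k)$ ensures $d_0 \gg \log k$, which will be needed for the concentration bounds below.

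For the generator, given an FPC codebook $X \in \oo^{(n/k)\times d_0} \sim \cD(n/k,d_0)$, I would sample $k-1$ fresh i.i.d.\ codebooks from $\cD(n/k,d_0)$, a uniform slot $i \la [k]$, and $k$ independent uniform permutations $P_1,\ldots,P_k \la \cP_d$, placing $X$ in the $i$-th slot among the $k$ codebooks. Setting the padding length $\ell$ so that $d_0/d = \alpha$ for $\alpha = \Theta(1/(\lambda^2 k))$, I apply $\PAP_{n/k,d_0,\ell}(\cdot, P_j)$ to the $j$-th codebook to produce $B_j \in \oo^{(n/k)\times d}$, vertically concatenate into $B \in \oo^{n\times d}$, and output $Y = B/\sqrt{d}$. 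The shared secret $z$ consists of $(i, P_1,\ldots,P_k)$ together with the $k-1$ auxiliary codebooks. Neighboring-preservation of $\Gc(\cdot,z)$ for each fixed $z$ follows because only the $i$-th horizontal block depends on $X$ and $\PAP$ is neighboring-preserving.

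The analytic heart is showing that with overwhelming probability over $\Gc$, the matrix $Y$ is ``$\gamma$-easy'' in the weak sense for $\gamma = 1/(1000\lambda)$. Each row of $Y$ has entries in $\set{\pm 1/\sqrt{d}}$ and is therefore a unit vector, so $\norm{Y}_F^2 = n$ and $\sum_i \sigma_i(Y)^2 = n$; it suffices to prove $\sigma_k(Y)^2 \geq (1 - O(\alpha))\cdot n/k$. I would argue via a Bernstein-type concentration on the (nearly $\pm 1$-biased) columns of each PAP codebook that each block $B_j/\sqrt{d}$ concentrates near a rank-one matrix whose top singular value is at least $(1 - O(\alpha))\sqrt{n/k}$ along a direction essentially determined by $P_j$. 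Since the $P_j$'s are independent uniform permutations, the $k$ resulting top-directions are near-orthogonal with high probability, and a min-max argument via \cref{fact:min-max} then yields the required bound on $\sigma_k(Y)$. Summing up, $\sqrt{\sum_{i \geq k+1}\sigma_i(Y)^2}/\sigma_k(Y) \leq O(\sqrt{\alpha k}) \leq 1/(1000\lambda)$ whenever the hidden constant in $\alpha$ is small enough, so the utility hypothesis guarantees that $\Mc(Y,\gamma)$ outputs a $0.001$-useful rank-$k$ projection $\tPi$ with probability $\geq \beta$.

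Finally, the extractor $\Fc$ adapts the strategy of \cite{DTTZ14} to $k \geq 2$. Because the $k$ blocks of $Y$ are exchangeable from $\Mc$'s viewpoint and the planted slot $i$ is part of the secret, whenever $\tPi$ is $0.001$-useful for $Y$ it is $0.01$-useful for the block $Y_i$ with probability at least $1/k$ over $i$; combined with the $\beta$-success of $\Mc$ this yields an $\Omega(\beta/k)$ good event. Given this, $\Fc$ applies $P_i^{-1}$ to the columns of $\tPi$ to form $\tPi_i$, searches within $Span(\tPi_i)$ for a unit vector $u$ achieving maximal sign-agreement with one half of the (known) padding coordinates of $Y_i$, and outputs $q = \sign(u^{1,\ldots,d_0})$. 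Indistinguishability of the original marked columns of $X$ from the padded marked columns (from $\Mc$'s perspective, since $P_i$ is hidden) then implies $q$ is strongly-correlated with $X$ in the sense of \cref{def:strongly-correlated}. The main obstacle I anticipate is precisely this last step: usefulness of a rank-$k$ projection for $Y_i$ does not directly yield a single near-rank-$1$ direction aligned with $B_i$, and disentangling the $i$-th block direction from the $k-1$ ``spurious'' directions inside $Span(\tPi_i)$ will require combining the near-orthogonality established above with a Gram-Schmidt-type argument (e.g.\ via \cref{prop:gram-schmidt}) to extract a vector whose restriction to the first $d_0$ coordinates still inherits the agreement guarantee.
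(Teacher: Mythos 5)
Your proposal is structurally the same as the paper's: same choice $n_0 = n/k$ and $d_0 = \Theta(d/(\lambda^2 k))$, same generator (plant one codebook among $k$, pad-and-permute each block independently, concatenate vertically), same extractor (apply $P_s^{-1}$ to $\tPi$, maximize sign-agreement with half the padding coordinates, output the first $d_0$ signs), and essentially the same spectral-gap target $\sigma_k(Y)^2 \geq (1-O(\alpha))n/k$ combined with $\sum_i\sigma_i^2 = n$.

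Where you over-anticipate difficulty is the extractor step, which you flag as the main obstacle. The paper does \emph{not} need to ``disentangle the $i$-th block direction from $k-1$ spurious directions'' and \cref{prop:gram-schmidt} is not used there (it is used only inside the spectral-gap claim, to orthonormalize the $k$ permuted padding directions $\bv_1,\ldots,\bv_k$ before applying min-max). The extractor argument is more direct: by an averaging/pigeonhole step plus the hidden random slot $\bs$, the $0.001$-useful $\tPi$ is, with probability $\Omega(\beta/k)$, also $\approx 0.001$-useful for the block $\bY_{\bs}$; since every row of $\bB'_{\bs}$ decomposes as $v + \xi_i$ with the same deterministic padding component $v$ of norm $\sqrt{2\ell}\approx\sqrt{d}$ and a small random part $\xi_i$ supported on the first $d_0 = \alpha d$ coordinates, the Frobenius bound $\norm{\tPi_{\bs}(\bB'_{\bs})^T}_F^2 \gtrsim (1-\alpha)\,dn/k$ forces $\norm{\tPi_{\bs} v}_2^2 \geq (1 - O(\alpha)) d$, i.e.\ $v$ already projects almost entirely into $Span(\tPi_{\bs})$. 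So a single vector $\tv = \tPi_{\bs}v \in Span(\tPi_{\bs})$ does the job; the maximizer $\bu$ in $\Fc$ is then at least as good, and the remaining work is the indistinguishability-plus-net argument showing agreement on $\cH_b$ propagates to $\bF_b \setminus \cH_b$. One smaller point: the near-orthogonality of the permuted padding directions $\bv_s, \bv_t$ in the gap claim is established via hypergeometric (Hoeffding) tail bounds on $\ip{\bv_s,\bv_t}$ over the randomness of the permutations, not by Bernstein concentration on codebook columns --- the codebook entries play no role there, only the permuted $\pm1$ padding patterns.
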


\cref{thm:intro:lower_bound:weak} is an immediate corollary of \cref{lemma:framework} (Framework for lower bounds) and the following \cref{lemma:LB:weak}.

\begin{lemma}\label{lemma:LB:weak}
	Let $n,k,d,\lambda,\beta$ and $\Mc$ as in \cref{thm:lower_bound:weak}. Let $\alpha =  \frac1{5\cdot 10^6 \lambda^2 k}$, $\: n_0 = n/k$, $\: \ell = 2\cdot \ceil{\frac14(1-\alpha) d}$, $\; d_0 = d - 2\ell$, $\: \cX = \cS_d$, $\: \cZ = [k] \times (\cP_d)^k$ and $\: \cW = \cW_{d,k}$. Let $\Gc\colon \oo^{n_0 \times d_0} \times \cZ \rightarrow \cX^n$ be \cref{alg:G}, and let $\Fc \colon \cZ \times \cW \rightarrow [-1,1]^{d_0}$ be \cref{alg:F}. Then the triplet $(\Mc, \Fc, \Gc)$ is $\frac{0.8 \beta}{k}$-leaking (\cref{def:beta-accurate}).
\end{lemma}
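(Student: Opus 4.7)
The plan is to instantiate the framework of~\cref{lemma:framework}: I will show that $Y = \Gc(X, z)$ almost surely satisfies the ``$\gamma$-easy'' hypothesis of a weak subspace estimator for $\gamma := 1/(1000\lambda) \leq \gamma_{\max}$, so that $\Mc$ is forced to output a $10^{-3}$-useful projection $\tPi$, and then invoke~\cref{lemma:PAP} on the planted block to extract a vector strongly-correlated with $X$.

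First I will analyze the spectrum of $Y = B/\sqrt{d}$, where $B \in \oo^{n\times d}$ is the vertical concatenation of the $k$ PAP-FPC blocks $B_1,\ldots,B_k$.  Each $B_j$ has $2\ell\geq (1-\alpha)d$ $\pm 1$-marked columns, so it lies within squared Frobenius distance at most $\alpha\, n_0\, d$ of a rank-$1$ matrix $\mathbf{1}_{n_0}\tilde s_j^T$, where $\tilde s_j \in \oo^d$ encodes the marked signs of block $j$.  A Chernoff bound on random sign vectors permuted by the independent components $\bP_1,\ldots,\bP_k$ of $z$, together with a union bound over the $\binom{k}{2}$ pairs, shows that $\max_{j \neq j'}\size{\ip{\tilde s_j,\tilde s_{j'}}} \leq \tilde{O}(\sqrt{d})$ except with probability $e^{-\tilde\Omega(d)}$. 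Combining these two estimates via~\cref{fact:min-max} and a Pythagorean decomposition of $Y$ into its approximate rank-$k$ part (spanned by the $\tilde s_j$'s) plus a Frobenius-small residual, I will obtain $\sigma_k(Y)^2 \geq (1-O(\alpha))\,n/k$ and $\sum_{i>k}\sigma_i(Y)^2 \leq O(\alpha)\,n$ simultaneously with probability at least $0.99$. The constant in $\alpha = 1/(5\cdot 10^6 \lambda^2 k)$ is calibrated so that these bounds imply both $\sqrt{\sum_{i>k}\sigma_i^2}/\sigma_k \leq \gamma$ and $\sigma_k^2 \geq 0.01\, n/k$, so~\cref{def:intro:weak-subspace-est} yields that $\tPi$ is $10^{-3}$-useful for $Y$ with probability at least $0.99\beta$ over the whole experiment.

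Next I will transfer usefulness to the planted block and then extract.  Decomposing $\|\tPi Y^T\|_F^2 = \sum_{j=1}^k \|\tPi Y_j^T\|_F^2$ and combining $10^{-3}$-usefulness for $Y$ with the Step~1 bound $\sum_{i=1}^k \sigma_i(Y)^2 \geq (1-O(\alpha)) n$ gives $\sum_j (n/k - \|\tPi Y_j^T\|_F^2) \leq (10^{-3} + O(\alpha))\,n$, so by Markov $\tPi$ is $10^{-1}$-useful for $Y_j$ for at least a $0.99$ fraction of $j\in[k]$. Since the planted location $\bi \la [k]$ is an independent component of $z$ hidden from $\Mc$ and the joint law of $(A_1,\ldots,A_k)$ is exchangeable, the event ``$\tPi$ is $10^{-1}$-useful for $Y_\bi$'' has probability at least $0.95\beta$. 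Conditioned on it, $\Fc$ applies the strategy of~\cite{DTTZ14} on the block-$\bi$-inverse-permuted $\tPi_\bi$: it finds a unit vector $u$ in the span of $\tPi_\bi$ whose signs agree with a vast majority of \emph{one} of the two halves of the un-permuted padding columns of $B_\bi$ (such a $u$ exists because the useful projection restricted to $Y_\bi$ must approximately contain the shared-sign direction of $B_\bi$), rounds to $\signn(u)\in\oo^d$, and outputs its first $d_0$ coordinates. Since $\Mc$ cannot distinguish the $X$-columns from the padding columns of $B_\bi$ (they are permuted together by $\bP_\bi$), \cref{lemma:PAP} applied to block $\bi$ yields that the output is strongly-correlated with $X$ with probability at least $0.8\beta/k$ after absorbing the failure probabilities.

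The main obstacle is Step~1: the simultaneous bounds $\sigma_k(Y)^2 \geq (1-O(\alpha))n/k$ and $\sum_{i>k}\sigma_i(Y)^2 \leq O(\alpha)n$ with the precise constants that make $\gamma \leq \gamma_{\max}$ cannot be obtained from controlling a single singular value; one must carefully combine the intra-block rank-$1$ approximation error $\|B_j - \mathbf{1}_{n_0}\tilde s_j^T\|_F^2 \leq \alpha n_0 d$ with the inter-block near-orthogonality of the $\tilde s_j$'s. A second subtlety, specific to $k \geq 2$, is the extraction step: one must argue that $\Span(\tPi_\bi)$ contains a \emph{particular} direction aligned with a fixed half of the padding coordinates, not merely some direction in the span --- this is guaranteed by the $10^{-1}$-usefulness of $\tPi$ for the single near-rank-$1$ block $Y_\bi$, but the exchangeability argument must be applied carefully so that this direction, once un-permuted, still reveals $X$.
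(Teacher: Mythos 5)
Your plan has the right high‐level shape — spectral analysis of $Y$ giving the $\gamma$‐easiness, transferring usefulness from $Y$ to the planted block, then a DTTZ‐style extraction fed into \cref{lemma:PAP} — and your spectral analysis in Step~1 and the exchangeability argument are essentially what the paper does (the paper works directly with the unit vectors $\bv_t = \frac{1}{\sqrt{2\ell}}(\mathbf{0}_{d_0},\mathbf{1}_\ell,-\mathbf{1}_\ell)\bP_t$ rather than rank‐1 Frobenius approximations, but the content is the same).

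The gap is in Step~2, the transfer of usefulness to the planted block. You apply Markov to get ``$\tPi$ is $10^{-1}$‐useful for $Y_j$ for at least a $0.99$ fraction of $j$,'' then claim that for such a block ``the useful projection restricted to $Y_\bi$ must approximately contain the shared‐sign direction of $B_\bi$.'' This does not follow at $10^{-1}$ accuracy. Tracking the constants: per‐block error $\approx 0.1\cdot n/k$ gives $\|\tPi Y_\bs^T\|_F^2\geq (0.9-O(\alpha))n/k$, hence $\|\tPi_\bs\bv^T\|_2^2\geq(0.9-O(\alpha))d$ and $\|\bv-\tilde\bv\|_2^2\leq (0.1+O(\alpha))d$. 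But the extraction (the $(\cH_b,\eta)$‐goodness plus the net/indistinguishability argument) needs $\|\bv-\tilde\bv\|_2^2\leq\eta\,|\cH_b|\approx 0.25\eta d$ with $5\eta<0.1$, i.e., $\|\bv-\tilde\bv\|_2^2\leq \Theta(0.005)\,d$. A per‐block error of $0.1\,n/k$ misses this by an order of magnitude, so the unit vector $u\in\Span(\tPi_\bi)$ maximizing agreement on $\cH_1$ and $\cH_{-1}$ need not even $90\%$‐agree with those halves, and \cref{lemma:PAP} never kicks in.

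The paper avoids this by \emph{not} trying to get most blocks approximately useful: since the total per‐block error terms sum to $\leq n/1000$, an averaging argument gives at least \emph{one} block $\bt$ with error $\leq n/(1000k)$, i.e.\ essentially the full $10^{-3}$‐level usefulness per block, and then pays $1/k$ for the probability that $\bt=\bs$. Your Markov approach can in fact be salvaged — taking the threshold to be $\Theta(n/(1000k))$ gives a \emph{constant} fraction of good blocks and would even beat the lemma's stated $0.8\beta/k$ constant — but the $10^{-1}$ threshold you picked destroys the extraction. As a minor point, your claimed $e^{-\tilde\Omega(d)}$ failure probability for near‐orthogonality of the $\tilde s_j$'s is optimistic; the paper only gets a $1/\poly(k)$‐type bound from Hoeffding over the $\binom{k}{2}$ pairs, which is what's needed for the $0.99$ success probability.
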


Note that by \cref{lemma:framework,lemma:LB:weak}, we obtain that $n_0 \geq \Omega\paren{\frac{\sqrt{d_0}}{\log^{1.5}(d_0/ \beta)}}$, but since $n_0 = n/k$ and $d_0 = \Theta(\alpha d) = \Theta\paren{\frac{d}{\lambda^2 k}}$, the proof of \cref{thm:lower_bound:weak} follows.
We next prove \cref{lemma:LB:weak}.

\begin{algorithm}[Algorithm $\Gc$]\label{alg:G}
	\item Parameters: $n_0,n,d_0,d,\ell \in \bbN$.
	
	\item Inputs: $z = (s, (P_1,\ldots,P_k))$ for $s \in [k]$ and $P_1,\ldots,P_k \in \cP_d$, and 
	a matrix $X \in \oo^{n_0 \times d_0}$.
	
	\item Operation:~
	\begin{enumerate}
		
		\item Sample $A = (A_1,\ldots,A_k) \sim \cD(n_0,d_0)^k$, and set $A_s = X$.
		
		\item For $t \in [k]$, compute $B_t = \PAP_{n_0,d_0,\ell}(A_t, P_t) \in \oo^{n_0 \times d}$ (\cref{def:PAP}), and let $B \in \oo^{n \times d}$ be the vertical concatenation of $B_1,\ldots,B_k$.
		
		\item Output $Y = \frac1{\sqrt{d}} B \in (\cS_d)^n$.
		
	\end{enumerate}
	
\end{algorithm}

\begin{algorithm}[Algorithm $\Fc$]\label{alg:F}
	\item Parameters: $n_0,n,d_0,d,\ell \in \bbN$.
	
	\item Inputs: $z = (s, (P_1,\ldots,P_k))$ for $s \in [k]$ and $P_1,\ldots,P_k \in \cP_d$, and 
	a rank-$k$ projection matrix $\tPi \in \cW$ (which is the output of $\Mc(Y, \: \gamma = \frac1{1000 \lambda})$ ).
	
	\item Operation:~
	\begin{enumerate}
		
		\item Compute a vector $u = (u^1,\ldots,u^d) \in Span(\tPi \cdot P_s^T)$ that maximizes $\min\set{\sum_{j=d_0+1}^{d_0 + \ell/2} \sign(u^j), \: -\sum_{j= d_0 + \ell + 1}^{d_0 + 3\ell/2} \sign(u^j)}$.\label{step:F:u}
		
		\item Output $q = \sign(u)^{1,\ldots, d_0} \in \oo^{d_0}$.
		
	\end{enumerate}
	
\end{algorithm}

\subsubsection{Proving \cref{lemma:LB:weak}}

In the following, we define random variables $\bX \sim \cD(n_0, d_0)$ (\cref{def:D}) and $\bz = (\bs, (\bP_1,\ldots,\bP_k)) \la \cZ$, and consider a random execution of $\Ac^{\Mc,\Fc,\Gc}(\bX) = \Fc(\bz, \Mc(\Gc(\bX, \bz)))$.
Let $\bB_{1},\ldots,\bB_k ,\bB, \bY$ be the values of $B_1,\ldots,B_k \in \oo^{n_0 \times d}$, $B \in \oo^{n \times d}$ and $Y \in (\cS_d)^n$ in the execution of $\Gc$, and let $\bY_1 = \frac1{\sqrt{d}} \bB_1, \ldots, \bY_k = \frac1{\sqrt{d}} \bB_k$ (note that $\bY$ is a vertical concatenation of $\bY_1,\ldots,\bY_k$).
Let $\bu$ be the value of $u \in \cS_d$ in the execution of $\Fc$.
For $b \in \oo$ let $\bF_b$ be the set of $b$-marked columns of $\bB_{\bs} \cdot \bP_s^T$ (note that $\bF_1$ includes $d_0+1,\ldots,d_0+\ell$ and $\bF_{-1}$ includes $d_0+ \ell + 1,\ldots,d$). Let $\cH_1 = \set{d_0+1, \ldots, d_0 + \ell/2} \subseteq \bF_1$ and $\cH_{-1} = \set{d_0 + \ell + 1, \ldots,d_0 + 3\ell/2} \subseteq \bF_{-1}$, and let $\cH = \cH_1 \cup \cH_{-1}$.
For $t \in [k]$, define 
\begin{align}\label{def:vt}
	\bv_t = \frac1{\sqrt{2\ell}}\cdot (\underbrace{0\dots,0}_{d_0},\underbrace{1,\ldots \ldots \ldots ,1}_{\ell},\underbrace{-1,\ldots \ldots \ldots ,-1}_{\ell})\cdot \bP_t \: \in \: \cS_d
\end{align}

The following claim holds under our assumption that $\lambda^2 \leq \frac{d}{c k \log k}$ for large enough constant $c$.
\begin{claim}\label{claim:large_spec_gap:weak}
	Let $\gamma =  \frac1{1000 \lambda}$. It holds that
	\begin{align*}
		\pr{\sum_{i=k+1}^n \sigma_i^2(\bY) \leq \gamma^2 \cdot \sigma_k^2(\bY)} \geq 0.9.
	\end{align*}
\end{claim}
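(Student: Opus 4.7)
The plan is to combine two facts about $\bY = \bB/\sqrt d$. \emph{Fact A (deterministic):} every entry of $\bB$ has magnitude one, so $\norm{\bY}_F^2 = \frac{1}{d}\norm{\bB}_F^2 = n$, hence $\sum_i \sigma_i^2(\bY) = n$. \emph{Fact B (probabilistic, the main work):} with probability $\geq 0.9$, $\sigma_k^2(\bY) \geq (1-5\alpha)\cdot n/k$. Combining them yields $\sum_{i>k}\sigma_i^2(\bY) \leq n - k(1-5\alpha)n/k = 5\alpha n$, so
\begin{align*}
\frac{\sum_{i>k}\sigma_i^2(\bY)}{\sigma_k^2(\bY)} \;\leq\; \frac{5\alpha n}{(1-5\alpha)n/k} \;=\; \frac{5\alpha k}{1-5\alpha} \;=\; \frac{1}{(1-5\alpha)\cdot 10^6\lambda^2} \;\leq\; \gamma^2,
\end{align*}
after adjusting the constant in the choice $\alpha = 1/(5\cdot 10^6\lambda^2 k)$ (using $\gamma^2 = 1/(10^6\lambda^2)$).

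For Fact B, I would apply the min-max principle $\sigma_k^2(\bY) = \max_{\dim E = k}\min_{v\in E\cap \cS_d}\norm{\bY v}_2^2$ to the subspace $E = \mathrm{span}(\bv_1,\ldots,\bv_k)$, where $\bv_t$ is as in \cref{def:vt}. The pivotal structural property is the deterministic identity $\bY_t\bv_t^T = \sqrt{2\ell/d}\,\vec{1}_{n_0}$: the support of $\bv_t$ is exactly the $2\ell$ padded columns of $\bB_t$, and the $\pm 1$ entries there match the signs of $\bv_t$. Define the ``ideal'' matrix $\tilde\bY$ with block $t$ equal to $\sqrt{2\ell/d}\,\vec{1}_{n_0}\bv_t$; then $\bY - \tilde\bY$ has block $t$ equal to $\bD_t/\sqrt d$ with $\bD_t = (\bA_t,\vec{0})\bP_t$, and satisfies the key vanishing identity $(\bY-\tilde\bY)_t \bv_t^T = \vec{0}$ since the nonzero columns of $\bD_t$ lie in $\bP_t^{-1}([d_0])$, disjoint from the support of $\bv_t$. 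The analysis then splits into three parts: (i) bound $|\ip{\bv_s,\bv_t}| \leq \eta = O(\sqrt{\log(k)/d})$ uniformly for $s\neq t$ by writing $\ip{\bv_s,\bv_t} = \bv_0(\bP_s\bP_t^T)\bv_0^T$ and invoking \cref{fact:hyperHoeffding}, and then use Gershgorin on the $k\times k$ Gram matrix of the $\bv_t$'s to obtain $\sigma_k(\tilde\bY)^2 \geq (1-\alpha)(1-k\eta)\cdot n/k$; (ii) orthonormalize via \cref{prop:gram-schmidt} and expand $\norm{\bY v}_2^2$ block-by-block, where each block of $(\bY-\tilde\bY)v$ simplifies (by the vanishing identity) to $\bD_{t'}\bigl(\sum_{t\neq t'}c_t\bv_t^T\bigr)/\sqrt d$---a random bilinear form in independent $\bA_{t'}$-entries and random permutations, whose entries concentrate to $O(\sqrt{\alpha\log(nk)/d})$ via \cref{fact:sub-exp-concent}; (iii) sum the contributions and control the cross-term $\ip{\tilde\bY v,(\bY-\tilde\bY)v}$, which is small because each block of $\tilde\bY v$ is proportional to $\vec{1}_{n_0}$ and the corresponding block of $(\bY-\tilde\bY)v$ has essentially zero-mean entries.

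The main obstacle will be step (iii): keeping the aggregate perturbation error at scale $O(\alpha)\cdot n/k$, rather than the $O(\sqrt\alpha)\cdot n/k$ error produced by a naive Weyl bound $\sigma_k(\bY) \geq \sigma_k(\tilde\bY) - \norm{\bY-\tilde\bY}$ combined with the easy Frobenius estimate $\norm{\bY-\tilde\bY}^2 \leq \alpha n$ (a loss of a factor $\sqrt{\alpha k}\approx 1/\lambda$, which is fatal for large $\lambda$). The tighter bound relies on (a) the vanishing identity $(\bY-\tilde\bY)_t\bv_t^T=\vec{0}$, which eliminates the dominant within-block contribution to the noise; (b) the mutual independence of $\bA_{t'}$'s and $\bP_{t'}$'s across blocks, so cross-block contributions cancel rather than accumulate coherently; and (c) the hypothesis $d \geq c\lambda^2 k\log k$, which renders the per-entry concentration bounds sharp enough to survive the $O(kn)$ union bounds required to track all pointwise noise terms simultaneously.
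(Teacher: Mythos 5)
Your proposal follows the same skeleton as the paper's proof: use the trace identity $\sum_i\sigma_i^2(\bY) = n$ together with a lower bound on $\sigma_k(\bY)$ built from the padding vectors $\bv_t$, their near-orthogonality (via \cref{fact:hyperHoeffding}), and the deterministic identity $\bY_t\bv_t^T = \sqrt{2\ell/d}$ times the all-ones vector. The difference is the route to $\sigma_k^2(\bY)\gtrsim n/k$: the paper Gram--Schmidts the $\bv_t$'s into $\bu_t$'s (\cref{prop:gram-schmidt}) and proves $\norm{\bY\bu_t}_2^2\geq\norm{\bY_t\bu_t}_2^2\geq(1-4\alpha)n_0$ by examining only block $t$, then reads off the singular-value bound; you instead introduce the rank-$k$ ideal matrix $\widetilde{\bY}$ and set up a block-by-block min-max perturbation analysis. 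Your diagnosis that a naive Weyl bound $\sigma_k(\bY)\geq\sigma_k(\widetilde{\bY})-\norm{\bY-\widetilde{\bY}}$ loses a $\sqrt{\alpha}$ factor is correct, and it identifies exactly the obstacle the paper's per-block computation (equivalently, your vanishing identity) is designed to sidestep. However, your last step --- controlling the cross-term $\ip{\widetilde{\bY}v,(\bY-\widetilde{\bY})v}$, which you yourself flag as the main obstacle --- is a plan rather than a carried-out argument, so the proposal as written does not close Fact B. It is also worth noting the paper's own leap from the per-$\bu_t$ inequalities to $\sigma_1^2(\bY),\ldots,\sigma_k^2(\bY)\geq(1-4\alpha)n_0$ is terse: having $\norm{\bY\bu_t}_2^2\geq c$ on an orthonormal family does not by itself lower-bound $\sigma_k^2(\bY)$ by $c$ (the off-diagonal Gram entries $\ip{\bY\bu_s,\bY\bu_t}$ for $s\neq t$ must also be small), so both arguments hinge on the same cross-block control, which you articulate explicitly but leave unproved and the paper leaves implicit.
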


\begin{proof}
	
	Recall that $d = d_0 + 2\ell$ and $\ell \geq \frac12(1-\alpha)d$ for $\alpha = \frac1{5 \cdot 10^6 \lambda^2 k}$.
	First, note that 
	\begin{align}\label{eq:vi_indep}
		\pr{\bv_1,\ldots,\bv_k\text{ are linearly independent}} \geq 1 - \paren{2^{-2\ell} + 2^{-2\ell+1} + \ldots + 2^{-2\ell + (k-2)}} \geq 0.99,
	\end{align}
	where the last inequality holds since $\ell \approx d/2 \geq c \cdot k/2$ for large enough constant $c>0$.
	Furthermore, note that for every $s,t \in [k]$, $2\ell\cdot \ip{\bv_s, \bv_t} = \sum_{j \colon \sign(\bv_s^j) = 1} \sign(\bv_t^j) - \sum_{j \colon \sign(\bv_s^j) = -1} \sign(\bv_t^j)$ where each sum has Hypergeometric distributions $\HG_{2\ell, 0, \ell}$ (\cref{def:Hyper}). Therefore by \cref{fact:hyperHoeffding} and the union bound, it holds that
	\begin{align}\label{eq:vs_vt}
		\pr{\forall s,t \in [k]: \abs{\ip{\bv_s, \bv_t}} \leq O\paren{\sqrt{\frac{\log k}{d}}}} \geq 0.99.
	\end{align}
	(i.e., $\bv_1,\ldots,\bv_k$ are almost orthogonal).
	
	In the following, we assume that the events in \cref{eq:vi_indep,eq:vs_vt} occur.
	Using the Gram–Schmidt process on $\bv_1,\ldots,\bv_k$, we obtain orthogonal basis $\bu_1,\ldots,\bu_k$ to $Span\set{\bv_1,\ldots,\bv_k}$ such that
	for every $t \in [k]$, $\bu_t = \bv_t + \lambda_{t-1} \bv_{t-1} +  \bw_t$, where $\size{\lambda_{t-1}} \leq O\paren{\sqrt{\frac{\log k}{d}}}$ and $\norm{\bw_t}_2 \leq O\paren{\frac{\log k}{d}}$ (holds by \cref{prop:gram-schmidt}).
	Recall that $\bY$ is a vertical concatenation of $\frac1{\sqrt{d}} \bB_1,\ldots, \frac1{\sqrt{d}}\bB_k$ and for every $t \in [k]$, the rows of $\bB_t$ are all in 
	\begin{align*}
		\oo^{d_0} \times  (\underbrace{1,\ldots   ,1}_{\ell},\underbrace{-1,\ldots   ,-1}_{\ell}) \cdot \bP_t
		\: = \oo^{d_0} \times  (\underbrace{0,\ldots  \ldots ,0}_{2\ell}) \cdot \bP_t + \sqrt{\frac{2\ell}{d}}\cdot \bv_t 
	\end{align*}
	
	Therefore, we obtain that
	\begin{align}\label{eq:ui}
		\norm{\bY \cdot \bu_t}_2^2 
		&\geq \norm{\bY_t \cdot \bu_t}_2^2\\
		&\geq \paren{\ip{\sqrt{\frac{2\ell}{d}} \bv_t, \bu_t}^2 - \frac{d_0}{d}} \cdot n_0\nonumber\\
		&\geq \paren{(1 - \alpha) \cdot \paren{1 - O\paren{\frac{\log k}{d}}}^2 - \alpha} \cdot n_0\nonumber\\
		&\geq (1 - 4\alpha) \cdot n_0,\nonumber
	\end{align}
	where the last inequality holds whenever $\alpha \geq \Theta(\log k/d)$, which holds by the assumption on $\lambda$.
	We therefore obtain that $\sigma_1^2(\bY), \ldots, \sigma_k^2(\bY) \geq (1-4\alpha)\cdot n_0$ which yields that $\sum_{i=k+1}^n \sigma_i^2(\bY) \leq n - k (1-4\alpha)\cdot n_0 = 4\alpha n$. Hence
	\begin{align}\label{eq:gap}
		\frac{\sum_{i=k+1}^n \sigma_i^2(\bY)}{\sigma_k^2(\bY)} \leq \frac{4\alpha n}{(1-4\alpha)\cdot \frac{n}{k}} = \frac{4 \alpha k}{1 - 4\alpha} \leq \frac1{10^6 \lambda^2} = \gamma^2,
	\end{align}
	where the second inequality holds since $\alpha = \frac{1}{5 \cdot 10^6 \lambda^2 k}$.
	
\end{proof}

The following claim holds under our assumption that $d \geq c k$ for large enough constant $c > 0$.
\begin{claim}\label{claim:u_strongly_agrees}
	It holds that
	\begin{align*}
		\pr{\sign(\bu)^{[d] \setminus \cH}\text{ strongly-agrees with }\paren{\bB_{\bs} \bP_{\bs}^T}^{[d] \setminus \cH}} \geq \frac{0.8 \beta}{k}.
	\end{align*}
	where ``strongly-agrees" is according to \cref{def:strongly-agree}.
\end{claim}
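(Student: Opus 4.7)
The plan is first to reduce the claim to a strong usefulness of $\tilde\Pi$ on the planted block $\bY_{\bs}$, then to construct a concrete vector in the extractor's subspace whose sign pattern already matches the marked columns, and finally to argue that the extractor's actual maximizer inherits this structure. For the reduction, I invoke \cref{claim:large_spec_gap:weak} at $\gamma = 1/(1000\lambda)$: with probability $\geq 0.9$, $\bY$ is $\gamma$-easy, and the calculation in that proof also yields $\sigma_i^2(\bY) \geq (1-4\alpha) n_0$ for $i \in [k]$, so in particular $\sigma_k^2(\bY) \geq 0.01\,n/k$. The utility guarantee of the weak subspace estimator then gives, with conditional probability $\geq \beta$, that $\norm{\tilde\Pi \bY^T}_F^2 \geq (1 - 4\alpha - 1/1000)\,n$. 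Writing this as $\sum_{t=1}^{k}\norm{\tilde\Pi \bY_t^T}_F^2$ with each summand in $[0, n_0]$, and using that $\bs$ is uniform over $[k]$ and independent of $(\bY,\Mc\text{'s coins})$ by the exchangeability of the blocks, a Markov bound yields that conditionally on usefulness, $\norm{\tilde\Pi \bY_{\bs}^T}_F^2 \geq 0.99\,n_0$ with probability $\geq 0.9$ over $\bs$. Call this combined event $E$; it holds with probability $\geq 0.8\beta \geq 0.8\beta/k$.

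Conditional on $E$, I move to un-permuted coordinates $\bY_{\bs}'' = \bY_{\bs}\bP_{\bs}^T$ and let $W_{\bs}$ denote the corresponding un-permuted form of the extractor's subspace. Take the row-mean $\bar y = n_0^{-1}\sum_i y_i''$; then $\bar y^j = b/\sqrt d$ on every $b$-marked column $j$ of $\bB_{\bs}''$ and $|\bar y^j| < 1/\sqrt d$ otherwise. Since the Frobenius norm is invariant under column permutations, $E$ translates to $\norm{(I - \Pi_{W_{\bs}})\bY_{\bs}''^T}_F^2 \leq 0.01\,n_0$, so writing $\pi = \Pi_{W_{\bs}}\bar y^T \in W_{\bs}$ and applying Cauchy--Schwarz,
\[
	\norm{\pi - \bar y^T}_2 \;=\; \tfrac{1}{n_0}\norm{(I - \Pi_{W_{\bs}})\bY_{\bs}''^T \vec{1}}_2 \;\leq\; \tfrac{1}{\sqrt{n_0}}\norm{(I - \Pi_{W_{\bs}})\bY_{\bs}''^T}_F \;\leq\; 0.1.
\]
Hence at most $\norm{\pi - \bar y^T}_2^2 \cdot d \leq 0.01\,d$ coordinates satisfy $|\pi^j - \bar y^j| \geq 1/\sqrt d$, and on every other marked coordinate $\sign(\pi^j)$ matches the column value. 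Since each marked class of $(\bB_{\bs}\bP_{\bs}^T)^{[d]\setminus\cH}$ has size $\geq \ell/2 \geq (1-\alpha)d/4 \gg 0.01\,d$, the vector $\sign(\pi)^{[d]\setminus\cH}$ already strongly-agrees with $(\bB_{\bs}\bP_{\bs}^T)^{[d]\setminus\cH}$.

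The main obstacle is to transfer this strong-agreement from the candidate $\pi$ to the extractor's actual maximizer $\bu$. The same estimate shows $\pi$ achieves $\cH$-objective value at least $\ell/2 - O(0.01\,d)$, so $\bu$ does too, but high $\cH$-objective does not a priori force sign-agreement on the disjoint region $[d]\setminus\cH$. To close this gap I plan to use an exchangeability argument: conditional on $\bY$, the posterior of $\bP_{\bs}$ is uniform over the coset of permutations that permute the $+1$-marked (respectively $-1$-marked) columns of $\bB_{\bs}''$ among themselves, so $\cH_1$ in un-permuted coordinates corresponds to a uniformly random size-$\ell/2$ subset of the $+1$-marked columns of $\bB_{\bs}''$; marginalizing over this posterior randomness transfers high $\cH$-agreement of $\bu$ into comparable agreement on every fixed disjoint subset of marked columns, yielding the claimed strong-agreement on $[d]\setminus\cH$.
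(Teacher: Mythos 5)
Your first two steps are sound and in places cleaner than the paper's own argument: in particular, deducing $\norm{\tilde\Pi \bY_{\bs}^T}_F^2 \geq 0.99\,n_0$ directly by a Markov bound over the $k$ blocks (rather than the paper's ``there exists $\bt$, and $\bs=\bt$ with probability $1/k$'') works because $\bs$ is independent of $(\bY,\tilde\Pi)$, and it even gives a bound without the $1/k$ loss. Your candidate vector $\pi = \Pi_{W_{\bs}}\bar y^T$ (projection of the row-mean) plays essentially the same role as the paper's $\btv^T = \btPi_{\bs}\bv^T$ (projection of the padding-indicator $\bv$), and your computation that $\sign(\pi)^{[d]\setminus\cH}$ strongly-agrees, and that $\pi$ therefore certifies a high $\cH$-objective, is correct.

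The gap is in the last step, and it is not cosmetic. Conditionally on $\bY$ and $\tilde\Pi$, as the posterior permutation $\bP_{\bs}$ varies over the coset you describe, \emph{both} the subspace $Span(\tilde\Pi\bP_{\bs}^T)$ \emph{and} the maximizer $\bu$ change; $\bu$ is chosen adversarially (by the extractor's $\argmax$) with full knowledge of $\cH$. If $\bu$ were unconstrained, high $\cH$-agreement would say nothing about $\bF_b\setminus\cH_b$ --- one can always agree perfectly on a fixed set $\cH_b$ and disagree on its complement. What saves the argument is that $\bu$ is confined to a $k$-dimensional subspace whose realization, viewed through the posterior, cannot be specially aligned with $\cH$; but turning that into a theorem requires a \emph{uniform} statement over the subspace, not a pointwise one. ``Marginalizing over the posterior'' only gives the pointwise bound (for a single fixed vector, high $\cH_b$-agreement implies high $\bF_b\setminus\cH_b$-agreement with probability $1-\exp(-\Omega(\eta^2 d))$). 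The paper then covers the $k$-dimensional sphere $Span(\btPi_{\bs})\cap\sqrt d\cdot\cS_d$ by an $\exp(O(k))$-point net, takes a union bound, and uses $d \geq ck$ to absorb the net size, and only then deduces the property for the data-dependent $\bu$ via its nearest net point. Your sketch names the obstacle but does not supply this uniform-convergence step; as written the transfer from $\pi$ to $\bu$ does not go through.
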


\begin{proof}

In the following we assume that the $0.9$ probability event in \cref{claim:large_spec_gap:weak} occurs.
Since $\Mc$ is $(k,\lambda,\beta, \gamma_{\max} = \frac1{1000 \lambda})$-subspace estimator, it follows from \cref{eq:gap} that w.p. $\beta$, the output $\btPi$ of $\Mc(\bY)$ satisfy 
\begin{align}\label{eq:M_out}
	\norm{\btPi \cdot \bY^T}_F^2 \geq \norm{\bPi \cdot \bY^T}_F^2 - \frac{n}{1000},
\end{align}
where we denote by $\bPi$ the projection matrix onto $Span\set{\bv_1,\ldots,\bv_k}$ (defined in \cref{def:vt}).
In the following, we assume that the event in (\ref{eq:M_out}) occurs.
This yields that there must exists $\bt \in [k]$ such that
\begin{align*}
	\norm{\btPi \cdot \bY_{\bt}^T}_F^2
	&\geq \norm{\bPi \cdot \bY_{\bt}^T}_F^2 - \frac{n}{1000 k}\\
	&\geq \frac{2\ell}{d} \cdot \norm{\bv_{\bt}}_2^2 \cdot \frac{n}{k} - \frac{n}{1000 k}\\
	&\geq (0.999 - \alpha) \cdot \frac{n}{k}
\end{align*}

Since $\bs$ (part of $\pz$) is chosen at random and does not change the distribution of $\bY$ (the input of the mechanism), w.p. $1/k$ the above holds for $\bt = \bs$, i.e.,
\begin{align}\label{eq:event-t-s}
	\norm{\btPi \cdot \bY_{\bs}^T}_F^2 \geq (0.999 - \alpha) \cdot \frac{n}{k}.
\end{align}

In the following we assume that the $1/k$-probability event in \cref{eq:event-t-s} occurs.

Recall that $\bB_{\bs} = \sqrt{d} \cdot \bY_{\bs}$,  and let $\btPi_{\bs} = \btPi \cdot \bP_{\bs}^T$ and  $\bB'_{\bs} = \bB_{\bs} \cdot \bP_{\bs}^T$.
It follows that
\begin{align}
	\norm{\btPi_{\bs} \cdot (\bB'_{\bs})^T}_F^2 = d \cdot \norm{\btPi \cdot \bY_{\bs}^T}_F^2
	\geq (0.999 - \alpha) \cdot \frac{d n}{k}
\end{align}

In the following, define

\begin{align}\label{eq:v}
	\bv = (\bv^1,\ldots,\bv^d)\in \set{-1,0,1}^{d} \quad \text{where} \quad \bv^j = \begin{cases} 1 & j \in \bF_1 \\ -1 & j \in \bF_{-1} \\ 0  & \text{o.w.} \end{cases}.
\end{align}

Note that each row $i$ of $\bB_{\bs}'$ can be written as $\pv + \bold{\xi}_i$ where $\xi_i \in \set{-1,0,1}^{d_0} \times (\underbrace{0,\ldots,0}_{2\ell})$. 
This yields that 
\begin{align*}
	\norm{\btPi_{\bs} \cdot \bv^T}_2^2
	&= \frac{k}{n} \cdot \norm{\btPi_{\bs} \cdot (\bB_{\bs}')^T}_F^2 - \frac{k}{n} \sum_{i=1}^{n/k} \norm{\btPi_{\bs} \cdot \xi_i^T}_2^2\\
	&\geq  (0.999 - \alpha) d - d_0 \geq (0.999 - 2\alpha)d
\end{align*}

Now, since 
\begin{align*}
	\norm{\bv}_2^2 = \ip{\bv,\bv} = \ip{\btPi_t \bv^T + (I - \btPi_t) \bv^T, \btPi_t \bv^T + (I - \btPi_t) \bv^T} = \norm{\btPi_t \bv^T}_2^2 + \norm{(I - \btPi_t) \bv^T}_2^2,
\end{align*}
we conclude that for $\btv^T = \btPi_t \bv^T \in Span(\btPi_t)$:
\begin{align}\label{eq:exists_good_tv}
	\norm{\bv - \btv}_2^2  = \norm{\bv}_2^2 -  \norm{\btPi_t \bv^T}_2^2  \leq d - (0.999 - 2\alpha) d \leq \frac{d}{500}.
\end{align}

We next define \emph{$(\cI, \eta)$-good} vectors.

\begin{definition}
	We say that a vector $\bw \in \oo^{d}$ is $(\cI,\eta)$-good iff for $(1-\eta)$ fraction of the indices $j \in \cI$ it holds that
	$\sign(\bw^j) = \sign(\bv^j)$ (for the $\bv$ defined in \cref{eq:v}).
\end{definition}

We use the following trivial fact:
\begin{observation}\label{claim:agrees}
	If $\bw$ is not $(\cI,\eta)$-good, then $\norm{\bw-\bv}_2^2  \geq \eta \size{\cI}$.
\end{observation}

Since for both $b \in \oo$, $\size{\cH_b} \geq \frac14 (1 - \alpha) d \geq d/5$ and $\norm{\bv - \btv}_2^2  \leq \frac{d}{500}$, \cref{claim:agrees} implies that $\btv$ is $(\cH_{1}, \eta)$-good and $(\cH_{-1}, \eta)$-good for $\eta = \frac1{100}$. Therefore, the vector $\bu$ (computed in $\Fc$) is also $(\cH_{1}, \eta)$-good and $(\cH_{-1}, \eta)$-good.

In the following, we use similar arguments to \cite{DTTZ14} for claiming that because $\bu$ is good on half of the padding location, then it should also be good on the rest of the marked locations.

From the point of view of the algorithm $\Mc$ (which does not know $\bP_{\bs}$), the locations in $\cH_b$ are indistinguishable from those in $\bF_b \setminus \cH_b$. 
Therefore, for any point that is not $(\bF_b, 3\eta)$-good, the probability (taken over the random choice of $\bP_{\bs}$) that it is $(\cH_b, 2\eta)$-good  is at most $\exp(-\Omega(\eta^2 d))$.
Now let $\bN$ be an $(\eta \sqrt{d})$-net of the $\sqrt{d}$-sphere in $Span(\btPi_{\bs})$ (\cref{def:net}).
Taking a union bound over a $\exp\paren{O(k \log\paren{1/\eta})} = \exp(O(k))$ points in $\bN$ (\cref{fact:net}), and recall that $d \geq c k$ for large enough constant $c$, we conclude that except with probability $\exp(O(k))\cdot \exp(-\Omega(\eta^2 d)) \leq 0.01$, any given vector in $\bN$ that is $(\cH_b, 2\eta)$-good is also $(\bF_b \setminus \cH_b, 4\eta)$-good. Since $\bu$ is $(\cH_{b}, \eta)$-good, then its nearest net point $\bu'$ is $(\cH_b, 2\eta)$-good. Thus $\bu'$ is $(\bF_b \setminus \cH_b, 4\eta)$-good which implies that $\bu$ is  $(\bF_b \setminus \cH_b, 5\eta)$-good except w.p. $\exp(-\Omega(\eta^2 d)) \leq 0.01$.
But by definition of $\bv$, it perfectly agrees with the marked columns of $\bB_{\bs} \cdot \bP_{\bs}^T$. Since $5\gamma < 0.1$ the above implies that $\sign(\bu)^{\bF_b \setminus \cH_b}$ strongly-agrees (\cref{def:strongly-agree}) with the matrix $\paren{\bB_{\bs} \cdot \bP_{\bs}^T}^{\bF_b \setminus \cH_b}$ which implies that $\sign(\bu)^{[d] \setminus \cH_b}$ strongly-agrees with the matrix $\paren{\bB_{\bs} \cdot \bP_{\bs}^T}^{[d] \setminus \cH_b}$, as required.

\end{proof}

We now ready to prove the final claim that concludes the proof of \cref{lemma:LB:weak}.

\begin{claim}\label{claim:LB:final}
	It holds that
	\begin{align*}
			\ppr{r,r' \la \zo^m, \: X \sim \cD}{\Ac^{\Mc_r,\Fc,\Gc_{r'}}(X)\text{ is strongly-correlated with }X} \geq \frac{0.8 \beta}{k}.
	\end{align*}
\end{claim}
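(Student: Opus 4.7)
My plan is to derive the claim from \cref{claim:u_strongly_agrees} by the same kind of symmetry argument that underpins the proof of the PAP lemma (\cref{lemma:PAP}): the uniformity of $\bP_{\bs}$ converts the strong-\emph{agreement} event already established in \cref{claim:u_strongly_agrees} into the per-column strong-\emph{correlation} required by \cref{def:strongly-correlated}.

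The first step is the structural observation that $\bB_{\bs}\bP_{\bs}^T$ is the \emph{unpermuted} padded codebook, whose first $d_0$ columns equal $\bA_{\bs}=\bX$ and whose last $2\ell$ columns are the $1$- and $(-1)$-padding. Consequently the $b$-marked columns of $(\bB_{\bs}\bP_{\bs}^T)^{[d]\setminus\cH}$ are exactly $\cJ_{\bX}^b\cup(\bF_b\setminus\cH_b)$, and the conclusion of \cref{claim:u_strongly_agrees}, which holds with probability $\geq \frac{0.8\beta}{k}$, asserts that $\sign(\bu)$ correctly labels at least a $0.9$-fraction of positions in this union, for each $b\in\oo$.

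To convert this fractional statement into a per-column marginal guarantee, I invoke the uniformity of $\bP_{\bs}$. From $\Mc$'s viewpoint, every $b$-marked column of $\bY_{\bs}$ is literally the same vector (entrywise equal to $b/\sqrt{d}$), so the distribution of the input $\bY$ is invariant under any relabeling that permutes columns within $\cJ_{\bX}^b\cup(\bF_b\setminus\cH_b)$ in the unpermuted frame. Since $\bu$ is a deterministic function of $\btPi=\Mc(\bY)$ and of $\bP_{\bs}$, this invariance propagates: conditionally on $\bX$, the random variables $\set{\sign(\bu)^{j'}}_{j'\in\cJ_{\bX}^b\cup(\bF_b\setminus\cH_b)}$ are exchangeable over the internal randomness of $\Ac$. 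This is exactly the symmetry exploited in the proof of \cref{lemma:PAP}, and it equates all the marginals $\Pr[\sign(\bu)^{j}=b]$ across that set.

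Combining the two ingredients, the strong-agreement event forces the \emph{average} of these marginals (over $j$ in the union) to be at least $0.9$, and the exchangeability then forces each individual marginal to be at least $0.9$ as well, in particular for each $j\in\cJ_{\bX}^b\subseteq[d_0]$. Since $\bq^j=\sign(\bu)^j$ for such $j$, this is precisely the strong correlation of $\bq$ with $\bX$ required by \cref{def:strongly-correlated}, holding on the same $\frac{0.8\beta}{k}$-probability event, and finishing the proof of \cref{lemma:LB:weak} via \cref{lemma:framework}. The main potential obstacle is verifying that the $k$-codebook construction and the hidden index $\bs$ do not disrupt the symmetry; this is immediate because $\Fc$ uses only $\bP_{\bs}$ (and not the other permutations), while the matrices $\bA_{t\ne\bs}$ can be integrated out without affecting the per-column marginals of $\sign(\bu)$.
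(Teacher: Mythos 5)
Your high-level instinct is right — the claim does follow from \cref{claim:u_strongly_agrees} via the PAP symmetry — but the execution has a genuine gap where you try to re-derive the content of \cref{lemma:PAP} inline rather than reducing to it. The problematic step is: ``the strong-agreement event forces the average of these marginals (over $j$ in the union) to be at least $0.9$, and the exchangeability then forces each individual marginal to be at least $0.9$ \ldots holding on the same $\frac{0.8\beta}{k}$-probability event.'' Strong agreement is a statement about the \emph{realized} fraction $\frac{1}{|S|}\sum_{j\in S}\mathds{1}[\sign(\bu)^j=b]$ being $\geq 0.9$, and it holds only with probability $\approx \frac{0.8\beta}{k}$ (which can be far below $1$). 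Exchangeability gives you $\Pr[\sign(\bu)^{j}=b]=\E\bigl[\frac{1}{|S|}\sum_{j\in S}\mathds{1}[\sign(\bu)^j=b]\bigr]$ for each $j\in S$, but combining this with the strong-agreement bound yields only $\Pr[\sign(\bu)^{j}=b]\geq 0.9\cdot\frac{0.8\beta}{k}$, which is nowhere near the $0.9$ threshold that \cref{def:strongly-correlated} requires. Moreover, ``strongly-correlated'' is itself a condition on conditional marginals (over the secret $z$) quantified outside by $(r,r',X)$, so it is a different type of object than an event in the same space as strong agreement; the two cannot ``hold on the same event'' in the way you assert. Whatever bridges this gap is exactly the nontrivial content of \cref{lemma:PAP}, and you cannot recover it by a one-line averaging argument.

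The paper's proof sidesteps this by packaging things so that \cref{lemma:PAP} applies verbatim to a \emph{smaller} PAP instance. Concretely, it splits $\bP_{\bs}$ into the restriction $\overline{\bP'}$ to the coordinates $\cH$ and the complementary restriction $\bP'\in\cP_{d_0+\ell}$, and makes three observations you should replicate: (i) $\bB'=\bB_{\bs}^{[d]\setminus\cH}$ is exactly $\PAP_{n_0,d_0,\ell/2}(\bX,\bP')$ — i.e., a fresh PAP codebook with padding length $\ell/2$ and hidden permutation $\bP'$; (ii) the intermediate vector $\bq'=\sign(\bu)^{[d]\setminus\cH}\cdot\bP'$ is a function of $\btPi'$ and $\overline{\bP'}$ only, hence \emph{independent of $\bP'$}, precisely because the maximization in Step~\ref{step:F:u} of \cref{alg:F} reads only coordinates in $\cH$; and (iii) the final output satisfies $\bq=(\bq'\cdot(\bP')^T)^{1,\ldots,d_0}$. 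Point (ii) is essential and is not verified in your sketch: you observe that $\Fc$ uses only $\bP_{\bs}$ (not $\bP_t$ for $t\ne\bs$), but you never check that the part of $\bP_{\bs}$ acting outside $\cH$ — which is the permutation that must remain ``hidden'' for the PAP argument — does not leak into the computation of $\bu$. With (i)--(iii) in hand, the claim drops out of \cref{lemma:PAP} applied to the composite mechanism mapping $\bB'$ to $\bq'$, combined with \cref{eq:bqbB-strong-agreement}. Rewriting your argument as this explicit reduction rather than an informal exchangeability heuristic would close the gap.
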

\begin{proof}
	In the following we assume that the event from the statement of \cref{claim:u_strongly_agrees} occurs, and let $\cH = \cH_1 \cup \cH_{-1}$.
	Define the permutation matrix $\bP' \in \cP_{d_0 + \ell}$ that is obtained by removing the rows $\cH$ and the columns $\bP_{\bs}(\cH)$ from $\bP_{\bs}$ (i.e., $\bP'$ is the permutation induced by $\bP_{\bs}$ between $[d]\setminus \cH$ and $[d] \setminus \bP_{\bs}(\cH)$).
	Similarly, define the permutation matrix $\overline{\bP}' \in  \cP_{\ell}$ that is obtained by removing the rows $\overline{\cH} = [d]\setminus\cH$ and the columns $\bP_{\bs}(\overline{\cH})$ from $\bP_{\bs}$ (i.e., $\overline{\bP}'$ is the permutation induced by $\bP_{\bs}$ between $\cH$ and $\bP_{\bs}(\cH)$).
	Note that $\bP'$ is distributed uniformly over $\cP_{d_0 + \ell}$ for any choice of $\overline{\bP}'$. 
	In the following, let $\btPi' = \btPi^{[d] \setminus \cH}$,
	$\: \bq' = \sign(\bu)^{[d] \setminus \cH} \cdot \bP'$, and $\: \bB' = \bB_{\bs}^{[d] \setminus \cH}$. 
	By \cref{claim:u_strongly_agrees} it holds that
	\begin{align}\label{eq:bqbB-strong-agreement}
		\pr{\bq' \text{ strongly-agrees with }\bB'} \geq \frac{0.8 \beta}{k}.
	\end{align}
	But note that $\bB' = \PAP_{n_0, d_0, \ell/2}(\bX, \bP')$ and also note that $\bq'$ is just a function of $\btPi'$ and $\overline{\bP'}$ (i.e., independent of $\bP'$) since it equals to $\sign(\bw)^{\overline{\bP}'([d]\setminus \cH)}$ where $\bw$ is the vector in $Span(\btPi)$ that maximizes $\min\set{\sum_{j\in \overline{\bP}'(\cH_1)} \sign(\bw^j), \: -\sum_{j \in \overline{\bP}'(\cH_{-1})} \sign(\bw^j)}$.
	Furthermore, note that $(\bq' \cdot (\bP')^T)^{1,\ldots,d_0} = \bq$, where $\bq$ is the final output of $\Ac^{\Mc,\Fc,\Gc}(\bX) = \Fc(\bz, \Mc(\Gc(\bX,\bz)))$. 
	Thus by \cref{lemma:PAP,eq:bqbB-strong-agreement} we conclude that
	\begin{align*}
		\ppr{r, r' \la \zo^m, \: X \sim \cD}{\Ac^{\Mc_r,\Fc,\Gc_{r'}}(X)\text{ is strongly-correlated with }X} \geq  \frac{0.8 \beta}{k}.
	\end{align*}
\end{proof}

\subsection{Strong Estimators}\label{sec:LB:strong}

\begin{theorem}[Restatement of \cref{thm:intro:lower_bound:strong}]\label{thm:lower_bound:strong}
	Let $n,k,d \in \bbN$, $\lambda \geq 1$, $\beta \in (0,1]$ such that $d \geq c k$ and $\lambda^2 \leq \frac{d}{c \log k}$ for large enough constant $c > 0$, and $n$ is a multiple of $k$.
	If $\Mc \colon (\cS_d)^n \times [0,1] \rightarrow \bbR^{d \times d}$ is an \emph{$(k, \lambda, \beta, \gamma_{\max} = \frac1{10^6 \lambda^2})$-strong subspace estimator} and $\Mc(\cdot, \gamma)$ is $\paren{1,\frac{\beta}{5nk}}$-DP for every $\gamma \in [0,1]$, then $n \geq \Omega\paren{\frac{k\sqrt{d}/\lambda}{\log^{1.5}(\frac{dk}{\lambda \beta}) \sqrt{\log(d k) \log(2n/k)}}}$.
\end{theorem}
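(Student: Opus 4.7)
The plan is to mirror the proof of \cref{thm:lower_bound:weak}, applying the framework of \cite{PTU24} (\cref{lemma:framework}) with $n_0 = n/k$ but now choosing a \emph{larger} padding parameter $\alpha = \Theta(1/\lambda^2)$ (a factor of $k$ larger than in the weak case), so that $d_0 = \Theta(d/\lambda^2)$ and the bound $n_0 \geq \tilde{\Omega}(\sqrt{d_0})$ from \cref{lemma:framework} translates to the target $n \geq \tilde{\Omega}(k\sqrt{d}/\lambda)$. I would reuse the same generator $\Gc$ and extractor $\Fc$ from \cref{alg:G,alg:F}, only shrinking the padding length $\ell$ to match the new $\alpha$. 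The extraction analysis (\cref{claim:u_strongly_agrees,claim:LB:final}) and the lower bound $\sigma_k^2(\bY) \geq (1-O(\alpha))\cdot n/k$ carry over essentially unchanged, since those arguments use only that $\Mc$ produces an $O(1)$-useful rank-$k$ projection and the combinatorial structure of $\bY$, not the particular form of the singular-value-gap predicate. The hypothesis $\lambda^2 \leq d/(c\log k)$ is exactly what is needed so that the Gram-Schmidt step inside \cref{claim:large_spec_gap:weak} still yields $\sigma_k^2(\bY) \geq (1-4\alpha)\cdot n/k$ for the enlarged $\alpha$.

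The only genuinely new ingredient, and the main obstacle, is verifying the \emph{strong-estimator} predicate $\sigma_{k+1}(\bY) \leq \gamma\,\sigma_k(\bY)$ for $\gamma = 1/(1000\lambda)$. With $\alpha = \Theta(1/\lambda^2)$, the crude bound $\sum_{i>k}\sigma_i^2(\bY) \leq O(\alpha)\,n$ used in the weak case is far too weak to conclude $\sigma_{k+1}^2 \leq \gamma^2\sigma_k^2 = \Theta(n/(k\lambda^2))$; I would instead upper-bound $\sigma_{k+1}^2(\bY)$ directly. By the min-max principle (\cref{fact:min-max}),
\begin{align*}
    \sigma_{k+1}^2(\bY) \;\leq\; \sup_{v \,\in\, \cE\,\cap\,\cS_d}\norm{\bY v}_2^2, \qquad \cE \;:=\; Span(\bv_1,\ldots,\bv_k)^\perp,
\end{align*}
a $(d-k)$-dimensional subspace. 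For fixed $v\in\cE$ the padding contribution vanishes because $\ip{\bv_t,v}=0$, and therefore
\begin{align*}
    \norm{\bY v}_2^2 \;=\; \frac1d\sum_{t=1}^k\sum_{i=1}^{n_0}\ip{\ba^t_i,\, w^{(t)}}^2,
\end{align*}
where $\ba^t_i$ is the $i$-th row of the codebook $\bA_t$ and $w^{(t)}\in\bbR^{d_0}$ is the restriction of $v P_t^T$ to its first $d_0$ coordinates.

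For each fixed $v$ I would show $\norm{\bY v}_2^2 \leq \tilde{O}(n/(\lambda^2 k))$ with probability $1-\exp(-\tilde{\Omega}(d))$, exploiting the two structural properties of $\cD(n_0,d_0)$ highlighted in \cref{sec:techniques:LB}: its columns are independent and every entry has marginal mean $0$. Conditioning on the column biases $p^{t,j}$, each $\ip{\ba^t_i,w^{(t)}}$ is a bounded sum of independent Rademacher-like variables (sub-Gaussian with variance proxy $\norm{w^{(t)}}_2^2$), so $\ip{\ba^t_i,w^{(t)}}^2$ is sub-exponential; the sub-exponential Bernstein inequality (\cref{fact:sub-exp-concent}) applied to the $n_0 k$ independent $(i,t)$-terms controls the concentration around the conditional mean, and a sub-Gaussian bound on $\sum_j p^{t,j}w^{(t),j}$ (using $p^{t,j}\in[-1,1]$ with mean $0$) controls the conditional mean itself. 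A separate Chernoff-plus-union bound over $j\in[d]$ handles the geometric fact that after the random permutations $P_1,\ldots,P_k$, the coefficient $\sup_{v\in\cS_d}\sum_t\norm{w^{(t)}}_2^2 = \max_j \sum_t \indic{P_t(j)\leq d_0}$ concentrates around $k\alpha = k/\lambda^2$ rather than being as large as $k$, yielding the correct expectation $\Ex[\norm{\bY v}_2^2] \approx n\alpha/d \ll \alpha n/k$.

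The final step is to promote the pointwise bound to a uniform bound over $\cE\cap\cS_d$ via a standard net argument: by \cref{fact:net}, $\cE\cap\cS_d$ admits an $\varepsilon$-net of size at most $(3/\varepsilon)^{d-k} = \exp(O(d\log(1/\varepsilon)))$, and a union bound together with the usual Lipschitz conversion from net to supremum lets the failure probability $\exp(-\tilde{\Omega}(d))$ absorb the net's cost precisely when $\lambda^2 \leq d/(c\log k)$, matching the theorem's hypothesis; the residual polylogarithmic factors (including the $\sqrt{\log(dk)\log(2n/k)}$ appearing in the conclusion) originate from these concentration and net-granularity parameters. Combining the new $\sigma_{k+1}$ upper bound with the unchanged $\sigma_k$ lower bound and extraction analysis establishes that $(\Mc,\Fc,\Gc)$ is $(0.8\beta/k)$-leaking, and \cref{lemma:framework} then yields $n_0 \geq \tilde{\Omega}(\sqrt{d_0}/\log^{1.5}(d_0/\beta))$, i.e.\ $n \geq \tilde{\Omega}(k\sqrt{d}/\lambda)$ up to the stated polylog factors, completing the proof of \cref{thm:lower_bound:strong}.
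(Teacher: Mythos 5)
Your overall strategy is exactly the one the paper follows: you reuse the generator and extractor from \cref{alg:G,alg:F}, enlarge the parameter $\alpha$ to $\tilde{\Theta}(1/\lambda^2)$ so that $d_0 \approx \alpha d$ yields $n = k\,n_0 \geq \tilde{\Omega}(k\sqrt{d_0}) = \tilde{\Omega}(k\sqrt{d}/\lambda)$, note that the $\sigma_k(\bY)$ lower bound and the extraction analysis (\cref{claim:u_strongly_agrees,claim:LB:final}) are insensitive to the change in $\alpha$, and isolate the genuinely new ingredient as an upper bound on $\sigma_{k+1}(\bY)$ via min-max and a net argument over the orthogonal complement of $Span\set{\bv_1,\ldots,\bv_k}$. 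This matches the structure of the paper's \cref{lemma:LB:strong} and its \cref{claim:large_spec_gap:strong} essentially step for step, down to the decomposition $\norm{\bY v}_2^2 = \tfrac{1}{d}\sum_{t}\norm{\bA_t v}_2^2$ once $v \perp \bv_t$ kills the padding contribution.

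The gap is in the concentration step. Your plan relies on the claim that, after a Chernoff-plus-union bound over $j\in[d]$, the quantity $\max_j \sum_t \indic{P_t(j)\leq d_0}$ ``concentrates around $k\alpha$'' so that $\sum_t\norm{w^{(t)}}_2^2$ is uniformly small. This is not true in general: when $k\alpha<1$ (which happens throughout the regime $\lambda^2 \gtrsim k$) the quantity is a nonnegative integer with mean below $1$, so it obviously cannot concentrate around $k\alpha$, and when $d$ is very large the union over $j\in[d]$ forces $m$ to grow like $\log d$, not $k\alpha$; bounding the conditional mean contribution by $\tfrac{n}{dk}\cdot\max(k\alpha,\log d)$ then does \emph{not} uniformly land below the target $\tfrac{\alpha n}{k}\cdot\polylog$. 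The paper avoids this obstacle entirely by not trying to concentrate $\norm{u_\bJ}_2^2$ around $\alpha$ at all: it observes that each row of $\bA_t$ is marginally a vector of i.i.d.\ Rademacher entries (since the column biases have a symmetric mean-zero law), applies \cref{fact:ip-concentration} \emph{conditionally} on $\bJ$, and then passes to an unconditional sub-exponential tail via Jensen's inequality using the concavity of $x\mapsto e^{-c/x}$: $\Ex_\bJ\bigl[\exp(-c/\norm{u_\bJ}_2^2)\bigr]\leq\exp\bigl(-c/\Ex[\norm{u_\bJ}_2^2]\bigr)\leq\exp(-c/\alpha)$. With that tail in hand, the paper does a simple union bound over the $n_0$ rows of each $\bA_t$ (the rows are \emph{not} independent unconditionally, so Bernstein over all $n_0k$ row-level terms is not available without further conditioning), which produces the $\log(2n/k)$ factor, and only then applies sub-exponential Bernstein (\cref{fact:sub-exp-concent}) to the $k$ independent per-matrix quantities $\norm{\bY_t u}_2^2$. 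You would need to replace your Chernoff-over-$j$ argument with this Jensen trick (or an equivalent) for the proof to go through across the full parameter range allowed by $\lambda^2\leq d/(c\log k)$.
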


We prove \cref{thm:lower_bound:strong} using a similar technical lemma to \cref{lemma:LB:weak}, but now since $\Mc$ is a strong subspace estimator, we can use $\alpha = \tilde{\Theta}\paren{\frac1{\lambda^2}}$ (rather than $\Theta\paren{\frac1{\lambda^2 k}}$ as in \cref{lemma:LB:weak}).

\begin{lemma}\label{lemma:LB:strong}
	There exists large enough constant $c>0$ such that the following holds:
	Let $n,k,d,\lambda,\beta$ and $\Mc$ as in \cref{thm:lower_bound:strong}, let $\alpha = \frac1{c\cdot  \log(d k) \log(2n/k) \cdot   \lambda^2}$, $\: n_0 = n/k$, $\: \ell = 2\cdot \ceil{\frac14(1-\alpha) d}$, $\; d_0 = d - 2\ell$, $\: \cX = \cS_d$, $\: \cZ = [k] \times (\cP_d)^k$ and $\: \cW = \cW_{d,k}$. Let $\Gc\colon \oo^{n_0 \times d_0} \times \cZ \rightarrow \cX^n$ be \cref{alg:G}, and let $\Fc \colon \cZ \times \cW \rightarrow [-1,1]^{d_0}$ be \cref{alg:F}. Then the triplet $(\Mc, \Fc, \Gc)$ is $\frac{0.8 \beta}{k}$-leaking (\cref{def:beta-accurate}).
\end{lemma}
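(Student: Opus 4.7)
The plan is to follow the three-step template of \cref{lemma:LB:weak} verbatim, deviating only in the spectral-gap analysis: since $\Mc$ is now a \emph{strong} subspace estimator, the analogue of \cref{claim:large_spec_gap:weak} must verify the stronger hypothesis $\sigma_{k+1}(\bY)\le \gamma\, \sigma_k(\bY)$ for $\gamma = 1/(1000\lambda)$, rather than the weak-case budget on $\sum_{i>k}\sigma_i^2(\bY)$. The lower bound $\sigma_k^2(\bY)\ge (1-O(\alpha))\,n/k$ follows verbatim from the first part of \cref{claim:large_spec_gap:weak}: with $\bv_1,\ldots,\bv_k$ defined as in (\ref{def:vt}), the Gram-Schmidt estimates of \cref{prop:gram-schmidt} together with the Hypergeometric concentration of \cref{fact:hyperHoeffding} yield, with probability at least $0.99$, an almost-orthonormal basis $\bu_1,\ldots,\bu_k$ on which $\bY$ acts with squared length at least $(1-O(\alpha))\,n/k$ per vector.

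The new ingredient, and the main obstacle, is the upper bound $\sigma_{k+1}^2(\bY)\le \widetilde O(\alpha)\,n/k$. By the min-max principle (\cref{fact:min-max}) with witness subspace $\cE=Span\{\bv_1,\ldots,\bv_k\}$, it suffices to bound $\sup_{v\in \cE^{\perp}\cap\cS_d}\|\bY v\|_2^2$. Writing each row of $\bB_s$ as $\sqrt{2\ell}\,\bv_s+\xi_i^{(s)}$, where $\xi_i^{(s)}$ is supported on $\cC_s=\bP_s([d_0])$ and its non-zero coordinates form the $i$-th row of the planted codebook $\bA_s$, the orthogonality $v\perp\bv_s$ gives
\begin{align*}
\|\bY v\|_2^2 \;=\; \frac{1}{d}\sum_{s=1}^{k}\sum_{i=1}^{n_0}\langle \xi_i^{(s)}, v\rangle^{2}.
\end{align*}
For fixed $v$, I condition on the biases $\{p_s^{j}\}$ drawn inside the $\cD(n_0,d_0)^k$-sample and on the permutations $\bP_1,\ldots,\bP_k$; then the codebook entries are mutually independent and bounded in $\oo$, so by Hoeffding each $\langle \xi_i^{(s)},v\rangle$ is $O(\|v^{\cC_s}\|_2)$-sub-Gaussian about its conditional mean $\mu_s(p,\bP_s,v)$, and its square is sub-exponential (\cref{def:sub-exp}). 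A short calculation using the randomness of $\bP_s$ and $p$ shows that, with probability at least $0.99$, the relevant conditional mean and variance inputs to Bernstein's inequality (\cref{fact:sub-exp-concent}) are of size $O(n\alpha)$ and $O(n\alpha^2)$ respectively; plugging these in yields
\begin{align*}
\Pr\!\left[\,\|\bY v\|_2^2 > \widetilde O(\alpha)\cdot n/k\,\right]\;\le\; \exp\!\left(-\widetilde\Omega(d)\right).
\end{align*}
A $(1/\poly)$-net of $\cE^{\perp}\cap\cS_d$ of size $\exp(O(d\log(nd)))$ (\cref{fact:net}), combined with the Lipschitz extension $\bigl|\|\bY v\|_2-\|\bY v'\|_2\bigr|\le \sqrt n\,\|v-v'\|_2$, then upgrades this bound to a uniform one over $\cE^{\perp}\cap\cS_d$. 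The two polylog factors $\log(dk)\log(2n/k)$ hidden in $\alpha$ are calibrated precisely so that the Bernstein tail beats the net cardinality; this is what forces the extra $\sqrt{\log(dk)\log(2n/k)}$ factor in the lower bound of \cref{thm:lower_bound:strong}.

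Once this strong-gap analogue is in place, the remainder of the proof is \cref{claim:u_strongly_agrees} followed by \cref{claim:LB:final}, applied with no structural changes: invoke the utility of $\Mc(\bY,1/(1000\lambda))$ to obtain a $(\lambda\gamma)$-useful $\btPi$ with probability $\beta$; use the uniform random planted index $\bs$ to transfer usefulness to the $\bs$-th block $\bY_{\bs}$ with probability $\beta/k$; run the same net-plus-rounding argument to show that $\sign(\bu)$ strongly-agrees with $\bB_{\bs}\bP_{\bs}^T$ on the non-held-out marked columns; and finally apply \cref{lemma:PAP} to turn this into strong correlation of $\bq$ with $\bX$, giving the claimed $0.8\beta/k$ leakage rate. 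Plugging the result into \cref{lemma:framework} with $n_0=n/k$ and $d_0=\Theta(\alpha d)$, and substituting $\alpha=\Theta(1/(\log(dk)\log(2n/k)\lambda^2))$, delivers the lower bound of \cref{thm:lower_bound:strong}.
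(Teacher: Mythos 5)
Your high-level road map matches the paper's: keep $\Gc$ and $\Fc$ unchanged, port the usefulness-to-agreement and agreement-to-correlation steps (\cref{claim:u_strongly_agrees}, \cref{claim:LB:final}) verbatim, and replace \cref{claim:large_spec_gap:weak} with a strong analogue that certifies $\sigma_{k+1}(\bY)\le\gamma\,\sigma_k(\bY)$ via the min-max principle, a per-vector sub-exponential tail, and a net union bound. The only place where the proofs actually diverge is the per-vector tail bound on $\norm{\bY v}_2^2$ for $v\perp Span\{\bv_1,\dots,\bv_k\}$, and this is where your sketch has a genuine gap.

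You propose to condition on the biases $\{p_s^j\}$ and the permutations $\bP_s$, after which each $\langle\xi_i^{(s)},v\rangle$ is sub-Gaussian with parameter $\norm{v^{\cC_s}}_2$ around a conditional mean $\mu_s(p,\bP_s,v)$. The problem is twofold. First, neither the conditional mean $\mu_s$ (which is of the form $\sum_{j\in\cC_s} p_s^{\,\cdot}\,v^j$ and can be as large as $\Theta(\sqrt{d_0})$ for adversarial $p,v$) nor the conditional sub-Gaussian parameter $\norm{v^{\cC_s}}_2^2$ (which can be as large as $1$) are bounded by $O(\alpha)$ for all $(p,\bP_s,v)$; they only have expectation $O(\alpha)$ over the random permutation. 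Second, you claim they are of the right size ``with probability at least $0.99$'' over the randomness of $\bP_s$ and $p$, but a constant-probability bound cannot be union-bounded against a net of cardinality $\exp(\Theta(d\log(k/\alpha)))$; you would need these quantities controlled with probability $1-\exp(-\Omega(d\log(k/\alpha)))$, and that does not hold for a fixed $v$. As written, the intermediate Bernstein input is only established with constant probability, so the chain to $\Pr[\norm{\bY v}_2^2>\widetilde O(\alpha)n/k]\le\exp(-\widetilde\Omega(d))$ is broken.

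The paper avoids this by \emph{not} conditioning on $p$ and $\bP_s$: marginally (integrating out the biases), each row of $\bA_t$ is an i.i.d.\ uniform $\oo$ vector, so \cref{fact:ip-concentration} applies directly with zero mean, and the tail bound is then averaged over the random index set $\bJ=\bP_t([d_0])$ using Jensen's inequality (the map $x\mapsto e^{-1/x}$ is concave), giving $\Pr[\langle\ba_{i,t},u\rangle^2\ge\xi]\le 2\exp(-d\xi/(2\alpha))$ with $\alpha$ appearing through $\ex{\norm{u_\bJ}_2^2}=d_0/d$ rather than through a high-probability bound on $\norm{u_\bJ}_2^2$. From there a union bound over rows yields a sub-exponential $\psi_1$-norm of order $\mu=\Theta(\alpha n\log(2n/k)/(dk))$ for each $\norm{\bY_t u}_2^2$, and Bernstein over the $k$ independent blocks then gives a $\exp(-\Omega(d\log(k/\alpha)))$ tail, which beats the net. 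If you want to keep a conditioning-based argument you would have to integrate the permutation/bias randomness back in (e.g., exactly the Jensen step above) before reaching for Bernstein; as it stands, the ``short calculation'' is not short and the $0.99$ bar is too low.

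Minor: your Lipschitz step uses constant $\sqrt n$, whereas the operator norm $\norm{\bY}$ is $\Theta(\sqrt{n/k})$; the paper instead chooses the net resolution $\sqrt{\alpha/k}$ directly so no Lipschitz constant is needed. This is cosmetic and does not affect correctness.
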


By \cref{lemma:LB:strong,lemma:framework}, it holds that $n_0 \geq \Omega\paren{\frac{\sqrt{d_0}}{\log^{1.5}(d_0/\beta)}}$. The proof of \cref{thm:lower_bound:strong} now follows since $n_0 = n/k$ and $d_0 \approx \alpha d$ for the $\alpha$ defined in the lemma.

\subsubsection{Proving \cref{lemma:LB:strong}}

As in the proof of \cref{lemma:LB:weak}, we define random variables $\bX \sim \cD(n_0, d_0)$ (\cref{def:D}) and $\bz = (\bs, (\bP_1,\ldots,\bP_k)) \la \cZ$, and consider a random execution of $\Ac^{\Mc,\Fc,\Gc}(\bX) = \Fc(\bz, \Mc(\Gc(\bX, \bz)))$. Let $\set{\bA_t}, \set{\bB_t}, \bY$ be the values of $\set{A_t},\set{B_t}, Y\in (\cS_d)^n$ in the  execution of $\Gc$, and recall that $\bY$ is a vertical concatenation of $\bY_1, \ldots, \bY_k$ where $\bY_t = \frac1{\sqrt{d}} \bB_t$.

The only difference from proving \cref{lemma:LB:weak} is to prove a different version of \cref{claim:large_spec_gap:weak} that only considers the gap between $\sigma_{k}$ and $\sigma_{k+1}$ (which will meet the requirements of the strong estimator $\Mc$).
Namely, is suffices to prove the following claim:

\begin{claim}\label{claim:large_spec_gap:strong}
	It holds that
	\begin{align*}
		\pr{\sigma_{k+1}(\bY) \leq \gamma \cdot \sigma_k(\bY)} \geq 0.9,
	\end{align*}
	for $\gamma = \frac1{1000 \lambda}$.
\end{claim}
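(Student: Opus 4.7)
My plan is to combine two bounds on the singular values of $\bY$. The argument of \cref{claim:large_spec_gap:weak} already establishes, with probability $\geq 0.99$, the lower bound $\sigma_k(\bY)^2 \geq (1-4\alpha)n_0$ (see \cref{eq:ui}). I will complement it with a new upper bound $\sigma_{k+1}(\bY)^2 \leq O(\alpha \log(dk)\log(n/k)) \cdot n_0$. With $\alpha = 1/(c \log(dk)\log(n/k)\lambda^2)$ for sufficiently large $c$, the ratio then satisfies $\sigma_{k+1}(\bY)/\sigma_k(\bY) \leq 1/(1000\lambda) = \gamma$, which is the claim. The lower bound on $\sigma_k$ comes for free, so the task reduces entirely to the new upper bound on $\sigma_{k+1}(\bY)$.

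For this, I will apply the min-max principle (\cref{fact:min-max}) with the subspace $W := Span(\bv_1,\ldots,\bv_k)^{\perp}$ of dimension $d-k$, after conditioning on the linear-independence event of \cref{eq:vi_indep} (which holds w.p.\ $\geq 0.99$): this gives $\sigma_{k+1}(\bY)^2 \leq \max_{v \in \cS_d \cap W} \norm{\bY v}_2^2$. For any $v \in W$, the identity $\ip{v,\bv_t}=0$ cancels the padding contribution in $\bB_t v$, leaving $\bB_t v = \bA_t\, w_t$, where $w_t$ is the restriction of $\bP_t v$ to its first $d_0$ coordinates (i.e., $v$ restricted to codebook $t$'s FPC positions). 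Hence
\[
\norm{\bY v}_2^2 \;=\; \frac{1}{d} \sum_{t=1}^k \sum_{i=1}^{n_0} \ip{\bA_{t,i}, w_t}^2 .
\]
The strategy will be to prove a single-vector tail bound of the form $\pr{\norm{\bY v}_2^2 > C\alpha\log(dk)\log(n/k)\cdot n_0} \leq 2\exp(-\Omega(d))$ for each fixed $v \in \cS_d \cap W$, with an $\Omega$-constant strictly exceeding $\ln 12$. Union-bounding over a $(1/4)$-net of $\cS_d \cap W$ of size $\leq 12^{d-k}$ (\cref{fact:net}), followed by the standard net-to-sphere passage, will then give the uniform bound with probability $\geq 0.95$.

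The hard part will be the single-vector tail bound. It exploits two properties of $\cD$ (\cref{def:D}) not used in \cref{claim:large_spec_gap:weak}: first, since $\rho$ is symmetric about $0$, each $p_t^j$ has mean $0$ and hence each entry $\bA_{t,i}^j$ has unconditional mean $0$ (and magnitude $\leq 1$); second, across distinct pairs $(t,j)$, the entries are independent (the $p_t^j$ are independent, and conditional on them the entries are too). I will decompose $\ip{\bA_{t,i}, w_t} = \mu_t + Z_{t,i}$ with $\mu_t := \ip{\vec{p}_t, w_t}$ and $Z_{t,i}$ the row-conditional fluctuation given $\vec{p}_t$, and then apply iterated concentration: Hoeffding for the sub-Gaussian tail of $\mu_t$ (parameter $\norm{w_t}_2$); Hoeffding conditional on $\vec{p}_t$ for $\sum_i Z_{t,i}$; sub-exponential Bernstein (\cref{fact:sub-exp-concent}) for $\sum_i Z_{t,i}^2$; and a final Bernstein across the $k$ independent codebooks, together with a concentration of $\sum_t \norm{w_t}_2^2$ about its mean $kd_0/d = k\alpha$ over the random permutations $\bP_t$. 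The polylogarithmic factor $\log(dk)\log(n/k)$ in $1/\alpha$ is designed precisely to absorb the logarithmic losses incurred at each step of this chain, so that the tail exponent stays of order $d$ and the union bound over the net survives.
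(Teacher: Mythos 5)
Your high-level plan matches the paper's: inherit $\sigma_k^2(\bY) \geq (1-4\alpha)n_0$ from the argument of \cref{claim:large_spec_gap:weak}, reduce the $\sigma_{k+1}$ bound via the min-max principle to a uniform bound over $\bar{\bE}\cap\cS_d$, prove a per-vector tail bound at scale $\exp(-\Omega(d))$, and union-bound over a net. The paper also uses Bernstein (\cref{fact:sub-exp-concent}) across the $k$ independent codebooks, so that step is shared. The divergence is in the per-vector tail bound. The paper's computation is simpler than what you propose: it never decomposes $\ip{\ba_{t,i},w_t}$ into a bias $\mu_t$ plus fluctuation. Since $\rho$ is symmetric, each row of $\bA_t$ is \emph{marginally} (over $\vec p_t$) a vector of i.i.d.\ Rademachers, so a single application of \cref{fact:ip-concentration}, a union bound over the $n_0$ rows, and a sub-exponential-norm estimate on $\norm{\bY_t u}_2^2$ suffices. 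Your iterated chain (Hoeffding for $\mu_t$, Hoeffding for $\sum_i Z_{t,i}$, Bernstein for $\sum_i Z_{t,i}^2$, then Bernstein across $t$) does more work than is needed.

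More importantly, there is a structural gap in how you propose to handle the random restriction $w_t = v|_{\bJ_t}$. You want to take a $(1/4)$-net of $\cS_d\cap W$ with $W = Span(\bv_1,\ldots,\bv_k)^\perp$, and \emph{also} concentrate $\sum_t\norm{w_t}_2^2$ about $k\alpha$ ``over the random permutations $\bP_t$.'' These two steps conflict: $W$, the net points, and the $w_t$'s are all determined by the permutations, so once you condition on the permutations to fix the net, $\sum_t\norm{w_t}_2^2$ is a deterministic quantity with nothing left to concentrate over. And it genuinely can be $\Theta(k)$ rather than $\Theta(k\alpha)$ for net points: if some coordinate $j$ lands in $\bJ_t$ for every $t$ (equivalently $e_j\perp\bv_t$ for all $t$), then $e_j\in W$ and $\sum_t\norm{(e_j)_{\bJ_t}}_2^2 = k$, which wrecks the per-vector tail since then $\ex{\norm{\bY e_j}_2^2} \approx n_0 k/d \gg \alpha\,\polylog\cdot n_0$ in the regime $\lambda^2 \approx d/(c\log k)$ allowed by \cref{thm:lower_bound:strong}. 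The paper sidesteps this by taking a net of \emph{all} of $\cS_d$ (independent of the permutations) and folding the permutation randomness into the per-vector probability itself: the tail on $\ip{\ba_{t,i},u}^2$ is computed as an expectation over $\bJ$ of the conditional Rademacher tail $2\exp(-d\xi/(2\norm{u_\bJ}_2^2))$, which it collapses via a Jensen-type step using $\ex{\norm{u_\bJ}_2^2} = d_0/d\approx\alpha$. It never isolates $\norm{w_t}_2^2$ as a fixed parameter needing its own uniform deviation bound. To salvage your route you would either need to prove the tail bound for fixed $v\in\cS_d$ (averaging over $\bP_t$ as the paper does) and then net $\cS_d$, or else show that $W$ avoids such concentrated vectors with probability $\geq 0.99$, neither of which your sketch addresses.
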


\begin{proof}[Proof of \cref{claim:large_spec_gap:strong}]
	
	As in \cref{def:vt}, for $t \in [k]$ we define
	
	\begin{align}\label{def:vt:storng}
		\bv_t = \frac1{\sqrt{2\ell}}\cdot (\underbrace{0\dots,0}_{d_0},\underbrace{1,\ldots \ldots \ldots ,1}_{\ell},\underbrace{-1,\ldots \ldots \ldots ,-1}_{\ell})\cdot \bP_t \: \in \: \cS_d
	\end{align}

	Let $\bE = Span\set{\bv_1,\ldots,\bv_k}$. Note that by construction, $\set{\bv_t}, \bE$ are independent of $\set{\bA_t}$.
	
	Similarly to \cref{eq:vs_vt} it holds that 
	
	\begin{align}\label{eq:vs_vt:strong}
		\pr{\forall s,t \in [k]: \abs{\ip{\bv_s, \bv_t}} \leq O\paren{\sqrt{\frac{\log k}{d}}}} \geq 0.99.
	\end{align} 

	This yields (using similar steps as in the proof of \cref{claim:large_spec_gap:weak}) that
	w.p. $0.98$, $\dim(\bE) = k$ and every unit vector $\bu \in \bE$ has $\norm{\bY \bu}_2^2 \geq (1 - 4\alpha)\frac{n}{k}$. This in particular implies that $\sigma_k^2(\bY) \geq (1 - 4\alpha)\frac{n}{k}$.

	Our goal is to prove that  w.p. $0.99$ it also holds that $\sigma_{k+1}^2(\bY) \leq \tilde{O}(\alpha) \frac{n}{k}$.
	Let $\bar{\bE}$ be the orthogonal subspace to $\bE$.
	Our goal is reduced to showing that there exists a constant $c$ such that
	\begin{align}\label{eq:LB:strong:goal}
		\pr{\forall u \in \bar{\bE} \cap \cS_d: \quad \norm{\bY u}_2^2 \leq \alpha \cdot  c  \log(k/\alpha) \log(2n/k)\cdot  \frac{n}{k}} \geq 0.99.
	\end{align}
	Given that the event in \cref{eq:LB:strong:goal} holds we conclude that $\sigma_{k+1}^2(\bY) \leq \alpha \cdot c \log(k/\alpha) \log(2n/k)\cdot \frac{n}{k}$ and hence
	\begin{align*}
		\frac{\sigma_{k+1}^2(\bY)}{\sigma_k^2(\bY)} \leq \frac{\alpha \cdot c \log(k/\alpha) \log(2n/k) \cdot \frac{n}{k}}{(1-4\alpha)\cdot \frac{n}{k}} \leq \frac1{10^6 \lambda ^2} = \gamma^2,
	\end{align*}
	where the last inequality holds by taking
	\begin{align*}
		\alpha = \frac{1}{2\cdot 10^6 \cdot  c \log(k/\alpha) \cdot \log(2n/k) \cdot \lambda^2}.
	\end{align*}
	Note that for every $\bu \in \bar{\bE}\cap \cS_d$ is holds that 
	\begin{align*}
		\norm{\bY \bu}_2^2 =\sum_{t=1}^k \norm{\bY_t \bu}_2^2 = \frac1{d} \sum_{t=1}^k \norm{\bA_t \bu}_2^2,
	\end{align*}
	where the last equality holds since $\bu$ is orthogonal to $\bv_1,\ldots,\bv_k$.
	Therefore, we can prove \cref{eq:LB:strong:goal} by proving that there exists a constant $c'$ such that
	\begin{align}\label{eq:LB:strong:second-goal}
		\forall u \in \cS_d: \quad \pr{\frac1{d}\cdot \sum_{t=1}^k \norm{\bA_t u}_2^2 > \alpha \cdot c' \log(k/\alpha) \log(2n/k) \cdot \frac{n}{k}} \leq \exp(-d \ln(3k/\alpha) - 10).
	\end{align}
	Given that \cref{eq:LB:strong:second-goal} holds, we prove the claim using a net argument.
	By taking an $\exp\paren{d \ln(3k/\alpha)}$-size $\sqrt{\alpha/k}$-net of $\cS_{d}$ (\cref{fact:net}), \cref{eq:LB:strong:goal} follows by \cref{eq:LB:strong:second-goal} and the union bound over all the net points, which concludes the proof of the claim.
	
	In the following we focus on proving \cref{eq:LB:strong:second-goal}.
	Fix a columns vector $u \in \cS_d$. 
	Recall that $\bA_t \sim \cD(n_0,d_0)$ (\cref{def:D}), and $\bA_t$ is located in $d_0$ random columns out of $d$, which are the columns $\bJ = \bP_t([d_0])$.
	 Let $\ba_{t,i} \in \set{\pm \frac1{\sqrt{d}}}^{d_0}$ be the \ith row of $\bA_t$. By \cref{def:D}, the coordinates of $\ba_{t,i}$ are i.i.d. Bernoulli distribution over $\oo$, each takes $1$ w.p. $1/2$ and $-1$ w.p. $1/2$. Therefore by \cref{fact:ip-concentration} it holds that
	\begin{align*}
		\forall \xi \geq 0: \quad  \pr{\ip{\ba_{i,t}, u}^2 \geq \xi}
		&= \eex{\bJ}{\pr{\ip{\ba_{i,t}, u_{\bJ}}^2 \geq \xi}}
		\leq \eex{\bJ}{2 \exp\paren{-\frac{d \xi}{2\norm{u_{\bJ}}_2^2}}}\\
		&\leq 2 \exp\paren{-\frac{d \xi}{\ex{2\norm{u_{\bJ}}_2^2}}}
		\leq 2 \exp\paren{-\frac{d \xi}{2 \alpha}},
	\end{align*}
	where the first inequality holds by  \cref{fact:ip-concentration}, the second one holds by Jensen's inequality (and since the function $f(x) = e^{-1/x}$ is concave), and the last one hold since $\ex{\norm{u_{\bJ}}_2^2} = \frac{d_0}{d} \leq \alpha$. 
	By the union bound over the $n/k$ rows of $\bA_t$ we obtain that
	\begin{align*}
		\forall \xi \geq 0: \quad \pr{\norm{\bA_t w}_2^2 \geq \frac{\xi n}{k}} \leq \frac{2n}{k}\cdot \exp\paren{-\frac{d \xi}{2\alpha}},
	\end{align*}
	or equivalently 
	\begin{align}\label{eq:almost-sub-exp}
		\forall \xi \geq 0: \quad \pr{\norm{\bY_t u}_2^2 \geq \xi} = \pr{\norm{\bA_t w}_2^2 \geq \xi} \leq \frac{2n}{k}\cdot \exp\paren{-\frac{d k}{2 \alpha n}\cdot \xi}.
	\end{align}

	In the following, let $\bb_t =  \norm{\bY_t u}_2^2$, and define $\bb_t' = \bb_t - \ex{\bb_t}$, and $\mu = \frac{2\alpha n \ln(2n/k)}{dk}$.
	First, note that 
	\begin{align*}
		\ex{\bb_t} 
		&= \int_{0}^{\infty} \pr{\bb_t > \xi} d\xi = \mu + \int_{\mu}^{\infty} \pr{\bb_t > \xi}d\xi\\
		&\leq \mu + \int_{\mu}^{\infty} \frac{2n}{k}\cdot \exp\paren{-\frac{d k}{2 \alpha n}\cdot \xi}d\xi\\
		&= \mu + \underbrace{\left[-\frac{2n}{k}\cdot \frac{2 \alpha n}{dk}\cdot \exp\paren{-\frac{dk}{2 \alpha n}\cdot \xi}\right]_{\mu}^{\infty}}_{\leq \frac{2\alpha n}{dk}}\\
		&\leq 2\mu.
	\end{align*}
	In addition, it holds that 
	\begin{align*}
		\ex{\exp\paren{\frac{\size{\bb'_t}}{8\mu}}}
		&= \int_{0}^{\infty} \pr{\exp\paren{\frac{\size{\bb'_t}}{8\mu}} > \xi} d\xi\\
		&\leq 3/2 +  \int_{3/2}^{\infty} \pr{\size{\bb'_t} > 8\mu \ln(\xi)} d\xi\\
		&= 3/2 + \int_{3/2}^{\infty} \paren{\pr{\bb_t > 8\mu \ln(\xi) + \ex{\bb_t}} + \pr{\bb_t < -8\mu \ln(\xi) + \ex{\bb_t}}}d\xi\\
		&\leq 3/2 + \int_{3/2}^{\infty} \paren{\pr{\bb_t' > 8\mu \ln(\xi)} + \underbrace{\pr{\bb_t < -8\mu \ln(3/2) + 2\mu}}_0}d\xi\\
		&\leq 3/2+ \int_{3/2}^{\infty} \frac{2n}{k}\cdot \exp\paren{-\frac{d k}{2\alpha n}\cdot 8\mu \ln(\xi)} d\xi\\
		&= 3/2 + \int_{3/2}^{\infty} \xi^{-8}d\xi\\
		&\leq 2.
	\end{align*}
	Namely, $\bb'_t$ is a Sub-Exponential random variable (\cref{def:sub-exp}) with $\norm{\bb'_t}_{\psi_1} \leq 8 \mu$.
	Since $\set{\bb'_t}$ are independent, each has zero mean, we obtain by \cref{fact:sub-exp-concent} that
	\begin{align*}
		\forall \xi \geq 0 : \quad \pr{\sum_{t=1}^k \bb_t \geq 2k\mu + \xi}
		\leq \pr{\sum_{t=1}^k \bb_t' \geq \xi}
		\leq 2\exp\paren{-\Omega\paren{ \min\paren{\frac{\xi^2}{k 64\mu^2}, \: \frac{\xi}{8\mu}}}}.
	\end{align*}
	We now take $\xi = c \cdot \mu d \ln(k/\alpha) \geq c \alpha\cdot \log(k/\alpha)\log(2n/k)\cdot \frac{n}{k}$ for large enough constant $c$.
	Recall that by our assumption on $d$ it holds that $\xi \geq 4k\mu$.
	Hence
	\begin{align*}
		\pr{\sum_{t=1}^k \norm{\bY_t u_t}_2^2 \geq \xi}
		&\leq  \pr{\sum_{t=1}^k \bb_t \geq 2k\mu + \xi/2}\\
		&\leq \exp\paren{-15 d \cdot \ln(k/\alpha)},\\
		&\leq \exp\paren{-d \cdot \ln(3k/\alpha) - 10},
	\end{align*}
	where the second inequality holds assuming that $c$ is large enough. 
	This concludes the proof of \cref{eq:LB:strong:second-goal} and therefore the proof of the lemma.
\end{proof}

\section{Empirical Evaluation} \label{sec:experiments}

We implemented a zCDP (\cref{def:zCDP}) variant of our subspace estimation algorithm in Python (denoted by $\EstSubspace$), and in this section we present empirical results for the fundamental task of privately estimating the average of $d$ dimensional points that approximately lie in a much smaller $k$-dimensional subspace. Namely, given a dataset $X = (x_1,\ldots,x_n) \in \cS_d^n$, a parameter $k$, and zCDP parameters $\rho, \delta$, we perform the following steps: (a) Compute a $(\rho/2,\delta)$-zCDP rank-$k$ projection matrix $\tilde{\Pi}$ using $\EstSubspace$ that estimates the projection onto the top-$k$ rows subspace of $X$, (b) Compute a $\rho/2$-zCDP estimation of the average of $X$ using the Gaussian Mechanism:  $\:\tilde{x} = \frac1n \sum_{i=1}^n x_i + \cN(\pt{0}, \sigma^2\cdot \bbI_{d \times d})$ for $\sigma =  \frac{2}{n \sqrt{\rho}}$, and (c) Output $\hat{x} = \tilde{\Pi} \cdot \tilde{x}$. 

\remove{
\begin{enumerate}[a.]
	\item Compute a $(\\rho/2, \delta)$-zCDP rank-$k$ projection matrix $\tilde{\Pi}$ using $\EstSubspace$ that estimates the projection onto the top-$k$ rows subspace of $X$.\label{step:exp:Pi}
	
	\item Compute a $\rho/2$-zCDP estimation of the average of $X$ using the Gaussian Mechanism w.r.t. $\ell_2$-sensitivity $2/n$. I.e., compute $\tilde{x} = \frac1n \sum_{i=1}^n x_i + \cN(\pt{0}, \sigma^2\cdot \bbI_{d \times d})$ for $\sigma =  \frac{2}{n \sqrt{\rho}}$.\label{step:exp:Gaus}
	
	\item Output $\hat{x} = \tilde{\Pi} \cdot \tilde{x}$.\label{step:exp:output}
\end{enumerate}
}

The accuracy is measured by the $\ell_2$ error from the average: $\norm{\hat{x} -  \frac1n \sum_{i=1}^n x_i}_2$.

In all our experiments, we use $\rho = 2$ and $\delta = 10^{-5}$,  $t = 125$ (the number of subsets in the sample-and-aggregate process), $n = 2tk$ data points, $q = 10\cdot k$ (the number of reference points in the aggregation), and use the zCDP implementation of the FriendlyCore-based averaging algorithm of \cite{FriendlyCore22}.\footnote{Their source code is publicly available at  \url{https://media.icml.cc/Conferences/ICML2022/supplementary/tsfadia22a-supp.zip}.} All experiments were tested on a MacBook Pro Laptop with 8-core Apple M1 CPU with 16GB RAM.

Rather than using \cite{FriendlyCore22}'s algorithm for the known-diameter case, we use their unknown-diameter implementation with $\xi_{\min} = 10^{-6}$ and $\xi_{\max} = 100$ (see \cref{remark:unknown_gamma_k} for details).
Furthermore, we reduced the space complexity of our implementation from $\tilde{\Theta}(d^2)$ to $\tilde{\Theta}(k d)$.\footnote{We do not explicitly compute a $d\times d$ rank-$k$ projection matrix in each subset, but rather only compute a good approximation of the top-$k$ rows $V = (v_1,\ldots,v_k) \in \cS_d^k$ using the Python function \textsf{randomized\_svd} (provided in the \textsf{sklearn} library). We then compute the projection of any vector $u \in \bbR^d$ onto $Span\set{v_1,\ldots,v_k}$, given by $V^T V u$, from right to left, which only involves $O(kd)$ time and space computation cost. We do the same thing w.r.t. to the output projection $\tilde{\Pi}$ (i.e., represent it using only $k$ vectors).}

In order to generate a synthetic dataset that approximately lie in a $k$-dimensional subspace, we initially sample uniformly random $b_1,\ldots,b_k \la \oo^d$ and perform the following process to generate each data point: (i) Sample a random unit vector $u$ in $Span\set{b_1,\ldots,b_k}$, (ii) Sample a random noise vector $\nu \la \set{1/\tau, -1/\tau}^d$, and (iii) Output $\frac{u + \nu}{\norm{u + \nu}}$ (note that higher $\tau$ results with data points that are closer to a $k$-dimensional subspace).

We compare our averaging method to two other approaches: The first one simply applies the Gaussian mechanism directly on $X = (x_1,\ldots,x_n)$ using the entire privacy budget $\rho$ (i.e., without computing a projection matrix). The second one replaces our Step (a) by computing the projection matrix $\tilde{\Pi}$ using a $(\rho/2,\delta)$-zCDP variant of the additive-gap based algorithm of \cite{DTTZ14} (see \cref{sec:prelim:DworkSubspace} for more details).
\footnote{We remark that unlike $\EstSubspace$, \cite{DTTZ14}'s algorithm requires $O(d^2)$ time and space complexity as it requires an explicit access to the $d \times d$ projection matrix onto the top-$k$ rows subspace, and therefore is limited to moderate values of $d$. Still, we were able to use it as baseline since we saw the advantage of our approach in terms of accuracy even when $d$ is not extremely high.}
The empirical results are presented in  \cref{figures}. In all experiments, we perform $30$ repetitions for generating each graph point which represents the trimmed average of values between the $0.1$ and $0.9$ quantiles.
We show the $\ell_2$ error of our estimate on the $Y$-axis.
The first graph illustrates the inherent dependency on $d$ that \cite{DTTZ14}'s algorithm has, while our algorithm $\EstSubspace$ takes advantage of the closeness of the points to dimension $k$ in order to eliminate this dependency. The second graph illustrates that when $d$ is fixed, increasing $k$ and $n$ in the same rate a has similar affect on both $\EstSubspace$ and \cite{DTTZ14}'s algorithm. In the last graph we compare the accuracy of $\EstSubspace$ and \cite{DTTZ14}'s algorithm as a function of the closeness to a subspace $k$ (measured in our experiments by the parameter $\tau$), and show in what regimes $\EstSubspace$ outperforms \cite{DTTZ14}'s algorithm.

\begin{figure*}
		\centerline{
			\includegraphics[scale=.35]{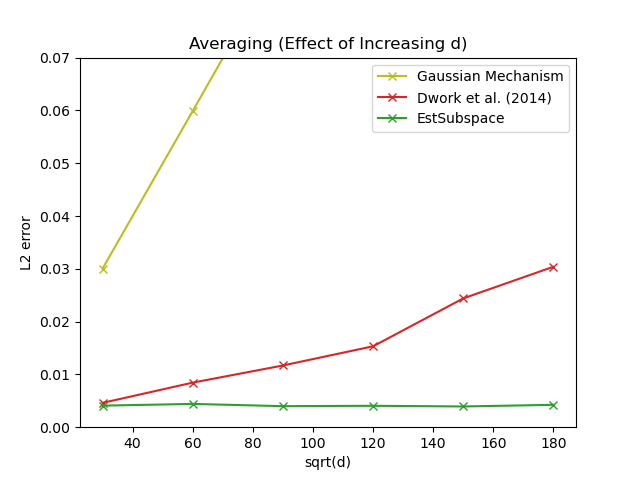}
			\includegraphics[scale=.35]{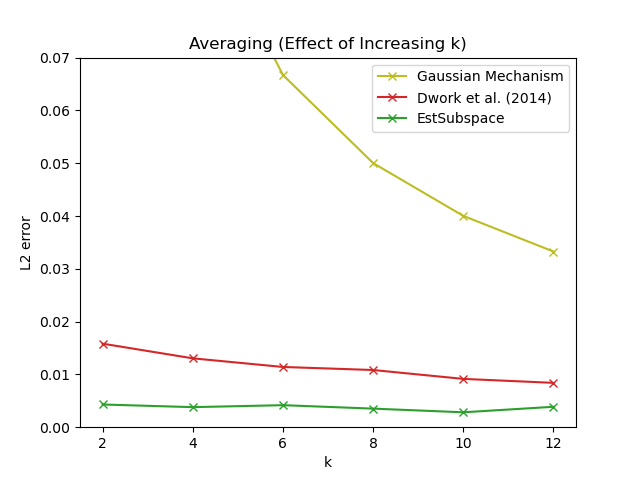}
			\includegraphics[scale=.35]{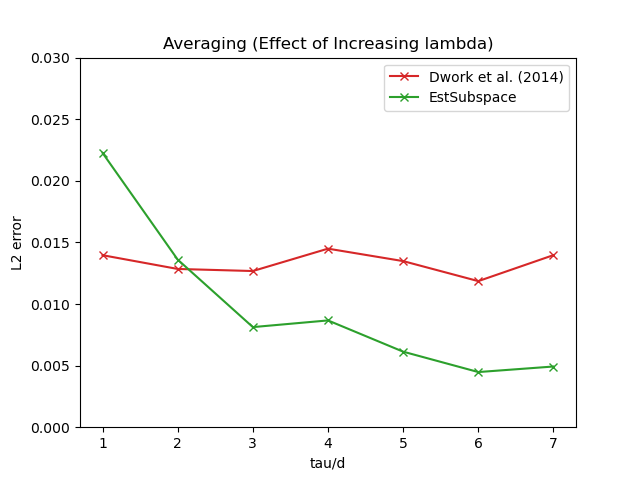}		
		}
		\caption{From Left to Right: (1) The case $k=4$ and $\tau=10 d$, varying $d$ (the $X$-axis is $\sqrt{d}$). (2) The case $d = 10^4$ and $\tau=10 d$, varying $k$. (3) The case $d = 10^4$ and $k = 4$, varying $\tau$ (the $X$-axis is $\tau/d$). In all the experiments, we use $n = 250\cdot k$ data points.}
		\label{figures}
\end{figure*}

\ifdefined\IsAnonymous
\else
\section*{Acknowledgments}

The author would like to thank Edith Cohen and Jonathan Ullman for very useful discussions.

\fi

\printbibliography


\end{document}